\newtheorem{theorem}{Theorem}
\newtheoremstyle{styledef}
  {6pt}
  {6pt}
  {\itshape}
  {0em}
  {\bfseries}
  {}
  {1.5em}
  {}
\theoremstyle{styledef}
\newtheorem{thm}{Theorem}
\crefname{thm}{theorem}{theorems}
\Crefname{thm}{Theorem}{Theorems}
\newtheorem{ex}{Example}
\crefname{ex}{example}{examples}
\Crefname{ex}{Example}{Examples}
\newtheorem{lem}{Lemma}
\crefname{lem}{Lemma}{lemmas}
\Crefname{lem}{Lemma}{Lemmas}
\newtheorem{prop}{Proposition}
\crefname{prop}{Proposition}{Propositions}
\Crefname{Prop}{Proposition}{Propositions}
\newtheorem{coro}{Corollary}
\crefname{coro}{corollary}{corollaries}
\Crefname{Coro}{Corollary}{Corollaries}
\crefname{defi}{definition}{definitions}
\Crefname{Def}{Definition}{Definitions}
\newtheorem{rem}{Remark}
\crefname{rem}{remark}{remarks}
\Crefname{rem}{Remark}{Remarks}
\newenvironment{hyp}[1]{
  \begin{enumerate}[label=\textbf{\sf(#1\arabic*)},resume=hyp#1]\begin{sf}}
  {\end{sf}\end{enumerate}}
  \crefname{hyp}{}{ass}
  \Crefname{hyp}{}{Ass}
\newenvironment{hypxi}[2]{
    \begin{enumerate}[label=\textbf{\sf(#1-#2)},resume=hypxi#1]\begin{sf}}
    {\end{sf}\end{enumerate}}
\crefname{hyp}{}{ass}
\Crefname{hyp}{}{Ass}
\renewenvironment{leftbar}[2][\hsize]
{
    
    \MakeFramed{\hsize#1\advance\hsize-\width\FrameRestore}
}
{\endMakeFramed}
\newcommandx{\barw}[2][1=\theta, 2=\phi]{\overline{w}_{#1, #2}}
\newcommandx{\barwa}[2][1=\theta, 2=\phi]{\overline{w}^{(\alpha)}_{#1, #2}}
\newcommand{\data}{\mathcal{D}}
\newcommand{\eqdef}{:=}
\newcommand{\lr}[1]{\left(#1 \right)}
\newcommand{\lrb}[1]{\left[#1 \right]}
\newcommand{\lrcb}[1]{\left\{#1 \right\}}
\newcommandx{\liwae}[1][1=N]{\ell^{(\mathrm{IWAE})}_{#1}}
\newcommandx{\liren}[2][1=\alpha, 2=N]{\ell^{(#1)}_{#2}}
\newcommandx{\lirenBB}[2][1=\alpha, 2=N]{\tilde{\ell}^{(#1)}_{#2}}
\newcommand{\PE}{\mathbb E}
\newcommand{\PP}{\mathbb P}
\newcommand{\rmd}{\mathrm d}
\newcommand{\rset}{\mathbb{R}}
\newcommand{\Ralpha}{R_{\alpha}}
\newcommandx{\RalphaN}[1][1=\alpha]{R_{#1,N}}
\newcommandx{\Rtalpha}[1][1=\alpha]{\overline{w}_{\theta, \phi}^{(#1)}}
\newcommand{\sd}{\boldsymbol{S}_d}
\newcommand{\sigmapert}{\sigma_{\mathrm{perturb}}}
\newcommandx{\w}[2][1=\theta, 2=\phi]{w_{#1, #2}}
\DeclareMathOperator{\cov}{Cov}
\begin{document}
\title{Alpha-divergence Variational Inference Meets Importance Weighted Auto-Encoders: Methodology and Asymptotics}

\author{Kamélia Daudel 
\And Joe Benton* 
\And Yuyang Shi* 
\And Arnaud Doucet 
\AND \normalfont{Department of Statistics, University of Oxford, United Kingdom}}
\maketitle
{\small{}{}{}}{\small\par}

\def\thefootnote{*}\footnotetext{: Equal contribution}

\begin{abstract}
Several algorithms involving the Variational Rényi (VR) bound have been proposed to minimize an alpha-divergence between a target posterior distribution and a variational distribution. Despite promising empirical results, those algorithms resort to biased stochastic gradient descent procedures and thus lack theoretical guarantees. In this paper, we formalize and study the VR-IWAE bound, a generalization of the Importance Weighted Auto-Encoder (IWAE) bound. We show that the VR-IWAE bound enjoys several desirable properties and notably leads to the same stochastic gradient descent procedure as the VR bound in the reparameterized case, but this time by relying on unbiased gradient estimators. We then provide two complementary theoretical analyses of the VR-IWAE bound and thus of the standard IWAE bound. Those analyses shed light on the benefits or lack thereof of these bounds. Lastly, we illustrate our theoretical claims over toy and real-data examples.
\end{abstract}

\keywords{Variational Inference \and Alpha-Divergence \and Importance Weighted Auto-encoder \and High dimension \and Weight collapse}

\section{Introduction}

Variational inference methods aim at finding the best approximation to a target posterior density within a so-called variational family of probability densities. This best approximation is traditionally obtained by minimizing the exclusive Kullback--Leibler divergence \citep{wainwright2008graphical, blei2017variational}, however this divergence is known to have some drawbacks \citep[for instance variance underestimation, see][]{minka2005divergence}.

As a result, alternative divergences have been explored \citep{minka2005divergence, li2016renyi, Bui2016BlackboxF, dieng2017variational, li2017dropout, wang2018variational, daudel2021infinite, https://doi.org/10.48550/arxiv.2103.05684, daudel2021mixture,  RODRIGUEZSANTANA2022260}, in particular the class of alpha-divergences. This family of divergences is indexed by a scalar $\alpha$. It provides additional flexibility that can in theory be used to overcome the obstacles associated to the exclusive Kullback--Leibler divergence (which is recovered by letting $\alpha \to 1$).

Among those methods, techniques involving the Variational Rényi (VR) bound introduced in \cite{li2016renyi} have led to promising empirical results and have been linked to key algorithms such as the Importance Weighted Auto-encoder (IWAE) algorithm \citep{burda2015importance} in the special case $\alpha = 0$ and the Black-Box Alpha (BB-$\alpha$) algorithm \citep{hernandez2016black}. 

Yet methods based on the VR bound are seen as lacking theoretical guarantees. This comes from the fact that they are classified as biased in the community: by selecting the VR bound as the objective function, those methods indeed resort to biased gradient estimators \cite[]{li2016renyi, hernandez2016black, Bui2016BlackboxF, li2017dropout, GeffnerDomke2020Biased, geffner21a, pmlr-v130-zhang21o, RODRIGUEZSANTANA2022260}.

\cite{GeffnerDomke2020Biased} have recently provided insights from an empirical perspective regarding the magnitude of the bias and its impact on the outcome of the optimization procedure when the (biased) reparameterized gradient estimator of the VR bound is used. They observe that the resulting algorithm appears to require an impractically large amount of computations to actually optimise the VR bound as the dimension increases (and otherwise seems to simply return minimizers of the exclusive Kullback--Leibler divergence). They postulate that this effect might be due to a weight degeneracy behavior \citep{bengtsson2008curse}, but this behavior is not quantified precisely from a theoretical point of view. 

In this paper, our goal is to (i) develop theoretical guarantees for VR-based variational inference methods and (ii) construct a theoretical framework elucidating the weight degeneracy behavior that has been empirically observed for those techniques. The rest of this paper is organized as follows:

\begin{itemize}
  \item In \Cref{sec:background}, we provide some background notation and we review the main concepts behind the VR bound.
  
  \item In \Cref{sec:VR-IWAEbound}, we introduce the VR-IWAE bound. We show in \Cref{lem:generalBound} that this bound, previously defined by \cite{li2016renyi} as the expectation of the biased Monte Carlo approximation of the VR bound, can be  actually interpreted as a variational bound which depends on an hyperparameter $\alpha$ with $\alpha \in [0,1)$. In addition, we obtain that the VR-IWAE bound leads to the same stochastic gradient descent procedure as the VR bound in the reparameterized case. Unlike the VR bound, the VR-IWAE bound relies on \textit{unbiased} gradient estimators and coincides with the IWAE bound for $\alpha = 0$, fully bridging the gap between both methodologies. 
  
  We then generalize the approach of \cite{rainforth2018tighter} -- which characterizes the Signal-to-Noise Ratio (SNR) of the reparameterized gradient estimators of the IWAE -- to the VR-IWAE bound and establish that the VR-IWAE bound with $\alpha \in (0,1)$ enjoys better theoretical properties than the IWAE bound (\Cref{prop:SNRconvergence}). To further tackle potential SNR difficulties, we also extend the doubly-reparameterized gradient estimator of the IWAE \citep{Tucker2019DoublyRG} to the VR-IWAE bound (\Cref{prop:drepIwaeAlpha}). 

  \item In \Cref{sec:HighDim}, we provide a thorough theoretical study of the VR-IWAE bound. Following \cite{domke2018}, we start by investigating the case where the dimension of the latent space $d$ is fixed and the number of Monte Carlo samples $N$ in the VR-IWAE bound goes to infinity (\Cref{prop:GenDomke}). Our analysis shows that the hyperparameter $\alpha$ allows us to balance between an error term depending on both the encoder and the decoder parameters $(\theta, \phi)$ and a term going to zero at a $1/N$ rate. This suggests that tuning $\alpha$ can be beneficial to obtain the best empirical performances.
  
  However, the relevance of such analysis can be limited for a high-dimensional latent space $d$ (Examples \ref{lem:VarianceExponentialWithD} and \ref{ex:LinGaussThm3}). We then propose a novel analysis where $N$ does not grow as fast as exponentially with $d$ (Theorems \ref{thm:MainGauss} and \ref{thm:MainGauss2}) or sub-exponentially with $d^{1/3}$ (\Cref{thm:iidRv}), which we use to revisit Examples \ref{lem:VarianceExponentialWithD} and \ref{ex:LinGaussThm3} in Examples \ref{ex:GaussianLogNormal} and \ref{ex:linGauss} respectively. This analysis suggests that in these regimes the VR-IWAE bound, and hence in particular the IWAE bound, are of limited interest.

  \item In \Cref{sec:relatedWork}, we detail how our work relates to the existing litterature.
   
  \item Lastly, \Cref{sec:numericalExperiments} provides empirical evidence illustrating our theoretical claims for both toy and real-data examples.
\end{itemize}

\section{Background}

\label{sec:background}

Given a model with joint distribution $p_\theta(x, z)$ parameterized by $\theta$, where $x$ denotes an observation and $z$ is a latent variable valued in $\rset^d$, one is interested in finding the parameter $\theta$ which best describes the observations $\data = \{x_1, \ldots, x_T \}$. This will be our running example. The corresponding posterior density satisfies:
\begin{align} \label{eq:posteriorOne}
p_\theta(\boldsymbol{z}|\data) \propto \prod_{i=1}^T p_\theta(x_i, z_i)
\end{align}
with $\boldsymbol{z} = (z_1, \ldots, z_T )$, so that the marginal log likelihood reads
\begin{align}  \label{eq:def:loglik}
  \ell(\theta; \data) = \sum_{i = 1}^T \ell(\theta; x_i) \quad \mbox{with} \quad 
\ell(\theta; x) \eqdef \log p_\theta(x) =  \log \lr{\int p_\theta(x,z)\rmd z}.
\end{align}
Unfortunately as this marginal log likelihood is typically intractable, finding $\theta$ maximizing it is difficult. Variational bounds are then designed to act as surrogate objective functions more amenable to optimization. 

Let $q_\phi(z|x)$ be a variational encoder parameterized by $\phi$, common variational bounds are the Evidence Lower BOund (ELBO) and the IWAE  bound \citep{burda2015importance}:
\begin{align*} 
&\mathrm{ELBO} (\theta, \phi; x) = \int q_\phi(z|x) \log \w(z;x)~ \rmd z, \\ 
&\liwae (\theta,\phi; x) =\int\int\prod_{i=1}^{N}q_{\phi}(z_{i}|x)\log\left(\frac{1}{N}\sum_{j=1}^{N} \w(z_{j}; x)\right)\rmd z_{1:N} , \quad  N \in \mathbb{N}^\star 
\end{align*}
where for all $z \in \rset^d$,
\begin{align*}
\w(z;x) & =\frac{p_{\theta}(x,z)}{q_{\phi}(z|x)}.
\end{align*}
The IWAE bound generalizes the ELBO (which is recovered for $N = 1$) and acts as a lower bound on $\ell(\theta; x)$ that can be estimated in an unbiased manner. Instead of maximizing $ \ell(\theta; \data)$ defined in \eqref{eq:def:loglik}, one then considers the surrogate objective
$$
\sum_{i = 1}^T \liwae (\theta,\phi; x_i)
$$
which is optimized by performing stochastic gradient descent steps w.r.t. $(\theta, \phi)$ on it combined to mini-batching. Optimizing this objective w.r.t. $\phi$ is difficult due to high-variance gradients with low Signal-to-Noise Ratio \citep{rainforth2018tighter}. To mitigate this problem,  reparameterized \citep{kingma2014autoencoding, burda2015importance} and doubly-reparameterized gradient estimators \citep{Tucker2019DoublyRG} have been proposed.

Crucially, stochastic gradient schemes on the IWAE bound (and hence on the ELBO) only resort to unbiased estimators in both the reparameterized \citep{kingma2014autoencoding, burda2015importance} and the doubly-reparameterized \citep{Tucker2019DoublyRG} cases, providing theoretical justifications behind those approaches. In particular, under the assumption that $z$ can be reparameterized (that is $z = f(\varepsilon, \phi; x) \sim q_\phi(\cdot|x)$ where $\varepsilon \sim q$) and under common differentiability assumptions, the reparameterized gradient w.r.t. $\phi$ of the IWAE bound is given by
\begin{align*}
  \frac{\partial}{\partial \phi} \liwae  (\theta,\phi; x)  =  \int \int \prod_{i = 1}^N q(\varepsilon_i) \lr{\sum_{j = 1}^N  \frac{\w(z_j; x)}{\sum_{k = 1}^N \w(z_k; x)}  \frac{\partial}{\partial \phi} \log \w(f(\varepsilon_j, \phi; x); x)}   \rmd \varepsilon_{1:N}
\end{align*} 
and the doubly-reparameterized one by
\begin{multline} \label{doublyreparam:IWAE}
  \frac{\partial}{\partial \phi} \liwae  (\theta,\phi; x) \\ = \int \int \prod_{i = 1}^N q(\varepsilon_i) \lr{\sum_{j=1}^N ~\lr{\frac{\w(z_j;x) }{\sum_{k = 1}^N \w(z_k;x)}}^2 \frac{\partial}{\partial \phi} \log \w[\theta][\phi'](f(\varepsilon_j, \phi;x);x)|_{\phi' = \phi}}  \rmd \varepsilon_{1:N}.
  \end{multline}
Unbiased Monte Carlo estimators of both gradients are hence respectively given by 
\begin{align}\label{reparam:IWAE:MC}
  \sum_{j = 1}^N  \frac{\w(z_j; x) }{\sum_{k = 1}^N \w(z_k; x)}  \frac{\partial}{\partial \phi} \log \w(f(\varepsilon_j, \phi;x);x)
\end{align} 
and
\begin{align*} 
\sum_{j=1}^N ~\lr{\frac{\w(z_j;x)}{\sum_{k = 1}^N \w(z_k;x)}}^2 \frac{\partial}{\partial \phi} \log \w[\theta][\phi'](f(\varepsilon_j, \phi;x);x)|_{\phi' = \phi},
\end{align*}
with $\varepsilon_1, \ldots, \varepsilon_N$ being i.i.d. samples generated from $q$ and $z_j = f(\varepsilon_j, \phi;x)$ for all $j = 1 \ldots N$. \cite{maddison2017} and \cite{domke2018} in particular established that the variational gap - that is the difference between the IWAE bound and the marginal log-likelihood - goes to zero at a fast $1/N$ rate when the dimension of the latent space $d$ is fixed and the number of samples $N$ goes to infinity.

Another example of variational bound is the Variational Rényi (VR) bound introduced by \cite{li2016renyi}: it is defined for all $\alpha \in \rset \setminus \lrcb{1}$ by
\begin{align}\label{eq:DefiVRBound}
\mathcal{L}^{(\alpha)}(\theta, \phi; x) = \frac{1}{1-\alpha} \log \lr{\int q_\phi(z|x)~\w(z;x)^{1-\alpha}~\rmd z}
\end{align}
and it generalizes the ELBO \citep[which corresponds to the extension by continuity of the VR bound to the case $\alpha = 1$, see][Theorem 1]{li2016renyi}. It is also a lower (resp. upper) bound on the marginal log-likelihood $\ell(\theta; x)$ for all $\alpha > 0$ (resp. $\alpha < 0$).

In the spirit of the IWAE bound optimisation framework, the VR bound is used for variational inference purposes in \cite[Section 4.1, 4.2 and 5.2]{li2016renyi} to optimise the marginal log-likelihood $\ell(\theta, \data)$ defined in \eqref{eq:def:loglik} by considering the global objective function
$$
\sum_{i=1}^T \mathcal{L}^{(\alpha)}(\theta, \phi; x_i)
$$
and by performing stochastic gradient descent steps w.r.t. $(\theta, \phi)$ on it paired up with mini-batching and reparameterization. This VR bound methodology has provided positive empirical results compared to the usual case $\alpha = 1$ and has been widely adopted in the literature \cite[]{li2016renyi,Bui2016BlackboxF, hernandez2016black, li2017dropout, pmlr-v130-zhang21o, RODRIGUEZSANTANA2022260}. As discussed in the remark below, this methodology is obviously not limited to the choice of posterior density defined in \eqref{eq:posteriorOne} and is more broadly applicable.
\begin{rem}[Black-box Alpha energy function] \label{rem:BBalphaEnergyFunc}
  Let $p_0(z)$ be a prior on a latent variable $z$ valued in $\rset^d$ and by $p(x|z)$ the likelihood of the observation $x$ given $z$, we might consider the posterior density
  \begin{align} \label{eq:modelRenyi}
  p(z|\data) \propto p_0(z) \prod_{i=1}^T p(x_i| z),
  \end{align} 
leading to the marginal log-likelihood   
$$
\tilde{\ell}(\data) = \log \lr{\int p(\data, z) \rmd z} = \log \lr{ p_0(z) \prod_{i=1}^T p(x_i| z) \rmd z}. 
$$
Here, the latent variable $z$ valued in $\rset^d$ is shared across all the observations. Now further assume that the prior density $p_0(z) = \exp(s(z)^T \phi_0 - \log Z(\phi_0))$ has an exponential form, with $\phi_0$ and $s$ being the natural parameters and the sufficient statistics respectively and $Z(\phi_0)$ being the normalizing constant ensuring that $p_0$ is a probability density function. 

In order to find the best approximation to the posterior density \eqref{eq:modelRenyi}, \cite{hernandez2016black} offers to minimize the Black-Box Alpha (BB-$\alpha$) energy function, which is defined by: for all $\alpha \in \rset \setminus \lrcb{1}$,
\begin{align*} 
    \mathcal{E}(\phi) = \log Z(\phi_0) - \log Z(\tilde{\phi}) - \frac{1}{1-\alpha} \sum_{i = 1}^T \log \lr{ \int q_\phi(z) \lr{\frac{p(x_i|z)}{f_\phi(z)}}^{1-\alpha}  \rmd z }
\end{align*}
where $f_\phi(z) = \exp(s(z)^T \phi)$ is within the same exponential family as the prior and $q_\phi(z) = \exp(s(z)^T \tilde{\phi} - \log Z(\tilde{\phi}))$ with $\tilde{\phi} = T \phi + \phi_0$ denoting the natural parameters of $q_\phi$ and $Z(\tilde{\phi})$ its normalizing constant. Here, the minimisation is carried out via stochastic gradient descent w.r.t. $\phi$ combined with mini-batching and reparameterization.

As observed in \cite{li2017dropout}, minimizing $\mathcal{E}(\phi)$ w.r.t. $\phi$ is equivalent to maximizing the sum of VR bounds
  $$
  \sum_{i=1}^T \frac{T}{1-\alpha} \log \lr{ \int q_\phi(z) w_{\theta, \phi}(z; x)^{\frac{1-\alpha}{T}}  \rmd z }
  $$
w.r.t. $\phi$, where this time $w_{\theta, \phi}(z; x) = {p(x_i|z)^T p_0(z)}/{q_\phi(z)}$.
\end{rem}
However, the stochastic gradient descent scheme originating from having selected the VR bound as the objective function suffers from one important shortcoming: it relies on biased gradient estimators for all $\alpha \notin \{0, 1\}$, meaning that there exists no convergence guarantees for the whole scheme. Indeed, \cite{li2016renyi} show that the gradient of the VR bound w.r.t. $\phi$ satisfies
  \begin{align*} 
    \frac{\partial}{\partial \phi} \mathcal{L}^{(\alpha)}(\theta, \phi; x)  & = \frac{\int q(\varepsilon)~\w(z;x)^{1-\alpha}~  \frac{\partial}{\partial \phi} \log \w(f(\varepsilon, \phi;x);x) ~\rmd \varepsilon}{\int q(\varepsilon)~\w(z;x)^{1-\alpha}~\rmd \varepsilon},
  \end{align*} 
  with $z = f(\varepsilon, \phi;x) \sim q_\phi(\cdot|x)$ where $\varepsilon \sim q$. The gradient above being intractable, they approximate it using
  \begin{align}\label{eq:reparamVR} 
    \sum_{j = 1}^N  \frac{\w(z_j; x)^{1-\alpha} }{\sum_{k = 1}^N \w(z_k; x)^{1-\alpha}} \frac{\partial}{\partial \phi} \log \w(f(\varepsilon_j, \phi;x);x),
  \end{align} 
  where $\varepsilon_1, \ldots, \varepsilon_N$ are i.i.d. samples generated from $q$ and $z_j = f(\varepsilon_j, \phi;x)$ for all $j = 1 \ldots N$. The cases $\alpha = 0$ and $\alpha = 1$ recover the stochastic reparameterized gradients of the IWAE bound \eqref{reparam:IWAE:MC} and of the ELBO (consider \eqref{reparam:IWAE:MC} with $N = 1$). As a result, we can trace them back to unbiased stochastic gradient descent schemes for IWAE bound and ELBO optimisation respectively. Yet, this is no longer the case when $\alpha \notin \{0, 1 \}$, hence impeding the theoretical guarantees of the scheme. 
  
  In addition, due to the log function, the VR bound itself can only be approximated using biased Monte Carlo estimators, with \citep[Section 4.1]{li2016renyi} using 
  \begin{align} \label{eq:biasedVRbound}
  \frac{1}{1-\alpha} \log \lr{ \frac{1}{N} \sum_{j = 1}^N \w(Z_j;x)^{1-\alpha} }
  \end{align}
  where $Z_1, \ldots, Z_N$ are i.i.d. samples generated from $q_\phi$. Furthermore, while the VR bound and the IWAE bound approaches are linked via the gradient estimator \eqref{eq:reparamVR}, the VR bound does not recover the IWAE bound when $\alpha = 0$.

  The next section aims at overcoming the theoretical difficulties regarding the VR bound mentioned above.
  
\section{The VR-IWAE bound}
\label{sec:VR-IWAEbound}

For all $\alpha \in \rset \setminus \lrcb{1}$, let us introduce the quantity
\begin{align}\label{eq:liren:bound}
\liren(\theta, \phi; x) \eqdef \frac{1}{1-\alpha} \int \int \prod_{i = 1}^N q_\phi (z_i|x) \log \lr{ \frac{1}{N} \sum_{j = 1}^N \w(z_j;x) ^{1-\alpha} } \rmd z_{1:N},
\end{align}
which we will refer to as the \textit{VR-IWAE bound}. Note that
 for the VR-IWAE bound to be well-defined we will assume that the following assumption holds in the rest of the paper.
\begin{hyp}{A} 
\item \label{hyp:VRIWAEwell-defined} It holds that $0 < p_\theta(x) < \infty$ and the support of $q_\phi(\cdot|x)$ and of $p_\theta(\cdot|x)$ are equal.
\end{hyp}
We may omit the dependency on $x$ in $z \mapsto q_\phi(z|x)$ and $z \mapsto \w(z;x)$ for notational convenience and we now make two remarks regarding the VR-IWAE bound defined in \eqref{eq:liren:bound}.

\begin{itemize}
  \item Contrary to the VR bound, VR-IWAE bound (i) can be approximated using an unbiased Monte Carlo estimator and (ii) recovers the IWAE bound by setting $\alpha = 0$. Under common differentiability assumptions, we also have that
  $$
  \lim_{\alpha \to 1} \liren  (\theta,\phi;x) = \mathrm{ELBO}(\theta, \phi;x)
  $$
  (see \Cref{subsec:extAlpha1} for details), meaning that the VR-IWAE bound interpolates between the IWAE bound and the ELBO.
  \item  \cite{li2016renyi} interpreted the quantity defined in \eqref{eq:liren:bound} as the expectation of the biased Monte Carlo approximation of the VR bound \eqref{eq:biasedVRbound}. They established in \cite[Theorem 2]{li2016renyi} some properties on this quantity. In particular, they showed that (i) for all $\alpha \leq 1$ and all $N \in \mathbb{N}^\star$,
$$
\liren(\theta, \phi;x) \leq \liren[\alpha][N+1](\theta, \phi;x) \leq \mathcal{L}^{(\alpha)}(\theta,\phi;x)
$$
and (ii) for all $\alpha \in \rset$, $\liren(\theta, \phi;x)$ approaches the VR bound $\mathcal{L}^{(\alpha)}(\theta,\phi;x)$ as $N$ goes to infinity if the function $z \mapsto \w(z)$ is assumed to be bounded. 

\end{itemize}
Based on the two previous remarks, $\liren (\theta, \phi;x)$ seems to be an interesting candidate as a variational bound which generalizes the IWAE bound. We take here another perspective on the quantity $\liren (\theta, \phi;x)$ by wanting to frame it as a variational bound with ties to the Rényi's $\alpha$-divergence variational inference methodology of \cite{li2016renyi} and to the IWAE bound, hence the name VR-IWAE bound. We now need to check that the VR-IWAE bound can indeed be used as a variational bound for marginal log-likelihood optimisation in the context of our running example. 

\subsection{The VR-IWAE bound as a variational bound}
\label{subsec:debiasing}

As underlined in the following proposition, the VR-IWAE bound is a variational bound for all $\alpha \in [0,1)$ which enjoys properties akin to those obtained for the IWAE bound \cite[]{burda2015importance} and which becomes looser as $\alpha$ increases.

\begin{prop}[Properties of the VR-IWAE bound]\label{lem:generalBound} The following properties hold for the VR-IWAE bound.
\begin{enumerate}
  \item For all $\alpha \in [0,1)$ and all $N \in \mathbb{N}^\star$,
  \begin{align}\label{eq:ineqELBOLnAlpha}
  \mathrm{ELBO} (\theta, \phi; x) \leq \liren(\theta, \phi;x) \leq \liren[\alpha][N+1](\theta, \phi;x) \leq \mathcal{L}^{(\alpha)}(\theta, \phi; x) \leq \ell(\theta;x).
  \end{align}
  Moreover, if the function $z \mapsto \w(z)$ is bounded, then $\liren(\theta, \phi;x)$ approaches the VR bound $\mathcal{L}^{(\alpha)}(\theta,\phi;x)$ as $N$ goes to infinity.
  \item For all $\alpha_1, \alpha_2 \in (0,1)$ such that $\alpha_1 > \alpha_2$ and all $N \in \mathbb{N}^\star$,
  \begin{align}\label{eq:looseness}
  \liren[\alpha_1](\theta, \phi;x) \leq \liren[\alpha_2](\theta, \phi;x) \leq \liwae (\theta, \phi;x),
  \end{align}
  where the case of equality is reached if and only if $z \mapsto \w(z)$ is constant for $\nu$-almost all $z \in \rset^d$ (with $\nu$ denoting the Lebesgue measure).
  \item Further assuming that $z$ can be reparameterized, that is $z = f(\varepsilon, \phi) \sim q_\phi$ where $\varepsilon \sim q$, we have under common differentiability assumptions that
    \begin{multline}\label{reparam:liren}
      \frac{\partial}{\partial \phi} \liren  (\theta,\phi;x) \\ =  \int \int \prod_{i = 1}^N q(\varepsilon_i) \lr{\sum_{j = 1}^N  \frac{\w(z_j)^{1-\alpha} }{\sum_{k = 1}^N \w(z_k)^{1-\alpha}}  \frac{\partial}{\partial \phi} \log \w(f(\varepsilon_j, \phi))}   \rmd \varepsilon_{1:N}
    \end{multline} 
  and an unbiased estimator of $\partial \liren (\theta,\phi;x) / {\partial \phi} $ is given by
  \begin{align}\label{reparam:liren:MC}
    \sum_{j = 1}^N  \frac{\w(z_j)^{1-\alpha} }{\sum_{k = 1}^N \w(z_k)^{1-\alpha}}  \frac{\partial }{\partial \phi}\log \w(f(\varepsilon_j, \phi))
  \end{align} 
  where $\varepsilon_1, \ldots, \varepsilon_N$ are i.i.d. samples generated from $q$ and $z_j = f(\varepsilon_j, \phi)$ for all $j = 1 \ldots N$. \looseness = -1
\end{enumerate}
\end{prop}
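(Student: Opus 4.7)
The plan is to assemble the three parts of the proposition from a small set of standard tools: Jensen's inequality, Hölder's inequality, the power-mean inequality, and differentiation under the integral sign. I start with the chain of inequalities in \eqref{eq:ineqELBOLnAlpha}. For $\mathrm{ELBO}(\theta,\phi;x) \leq \liren(\theta,\phi;x)$, Jensen applied to the concave function $\log$ yields $\log\bigl(\tfrac{1}{N}\sum_{j=1}^N \w(z_j)^{1-\alpha}\bigr) \geq \tfrac{1-\alpha}{N}\sum_{j=1}^N \log \w(z_j)$; dividing by $1-\alpha > 0$ and integrating against $\prod_i q_\phi(z_i|x)$ gives the bound. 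For monotonicity in $N$, I follow the Burda--Grosse--Salakhutdinov argument: rewrite $\tfrac{1}{N+1}\sum_{i=1}^{N+1}\w(z_i)^{1-\alpha}$ as the average of the $N+1$ leave-one-out sub-averages $Y_j = \tfrac{1}{N}\sum_{i \neq j}\w(z_i)^{1-\alpha}$, apply Jensen to $\log$, and use that each $Y_j$ has the same law as the original $N$-sample mean. The inequality $\liren(\theta,\phi;x) \leq \mathcal{L}^{(\alpha)}(\theta,\phi;x)$ is a direct application of Jensen, pulling the expectation inside $\log$. To close the chain with $\mathcal{L}^{(\alpha)}(\theta,\phi;x) \leq \ell(\theta;x)$, I use $p_\theta(x,z) = p_\theta(x)\,p_\theta(z|x)$ to rewrite $\int q_\phi(z|x)\w(z;x)^{1-\alpha}\,\rmd z = p_\theta(x)^{1-\alpha}\int q_\phi(z|x)^\alpha p_\theta(z|x)^{1-\alpha}\,\rmd z$ and bound the remaining integral by $1$ via Hölder with conjugate exponents $1/\alpha$ and $1/(1-\alpha)$.

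For the $N \to \infty$ limit under boundedness of $z \mapsto \w(z)$, the strong law of large numbers applied to the i.i.d.\ sequence $\w(z_j)^{1-\alpha}$ combined with continuity of $\log$ yields almost sure convergence of the integrand to $\log \PE_{q_\phi}\bigl[\w(z)^{1-\alpha}\bigr]$. A uniform upper bound by $(1-\alpha)\log\|\w\|_\infty$ and the pointwise Jensen-type lower bound $\tfrac{1-\alpha}{N}\sum_j \log \w(z_j)$ -- whose negative part is uniformly integrable provided $\mathrm{ELBO}(\theta,\phi;x)$ is finite -- let me invoke dominated convergence. Part~2 then follows from the power-mean inequality: for positive reals $a_1,\dots,a_N$, the map $\beta \mapsto \bigl(\tfrac{1}{N}\sum_j a_j^\beta\bigr)^{1/\beta}$ is non-decreasing on $(0,\infty)$. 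Setting $\beta = 1-\alpha$ and $a_j = \w(z_j)$ yields the pointwise inequality $\tfrac{1}{1-\alpha_1}\log\bigl(\tfrac{1}{N}\sum_j \w(z_j)^{1-\alpha_1}\bigr) \leq \tfrac{1}{1-\alpha_2}\log\bigl(\tfrac{1}{N}\sum_j \w(z_j)^{1-\alpha_2}\bigr)$ whenever $\alpha_1 \geq \alpha_2$, and the boundary value $\alpha_2 = 0$ (i.e.\ $\beta_2 = 1$) delivers the second inequality in \eqref{eq:looseness}. Integrating against $\prod_i q_\phi(z_i|x)$ yields \eqref{eq:looseness}, and the equality case of the power-mean inequality forces $\w(z_1) = \cdots = \w(z_N)$ for $q_\phi^{\otimes N}$-almost every $(z_1,\dots,z_N)$, which under the common support assumption forces $\w$ to be $\nu$-a.e.\ constant.

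Part~3 is obtained by differentiating \eqref{eq:liren:bound} under the integral sign after the change of variable $z_i = f(\varepsilon_i,\phi;x)$, which turns $\prod_i q_\phi(z_i|x)\,\rmd z_{1:N}$ into $\prod_i q(\varepsilon_i)\,\rmd \varepsilon_{1:N}$ and makes the $\phi$-dependence explicit through $f$. Applying the chain rule to $\w(f(\varepsilon_j,\phi;x);x)^{1-\alpha}$ and using $\partial \w/\partial \phi = \w \cdot \partial \log \w/\partial \phi$ reproduces the normalized-weight expression \eqref{reparam:liren}, from which the unbiasedness of \eqref{reparam:liren:MC} is immediate. The step I expect to demand the most care is the $N\to\infty$ convergence in Part~1: passing from almost sure convergence to convergence in expectation requires controlling the integrand from below when the empirical average is small, and the cleanest route is to combine the ELBO lower bound just established with a Jensen-type domination argument that uniformly tames the negative part of $\log\bigl(\tfrac{1}{N}\sum_j \w(z_j)^{1-\alpha}\bigr)$.
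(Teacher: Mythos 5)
Your proposal is correct, and the mathematical core coincides with the paper's: the key inequality in Part~2 via the power-mean inequality is exactly Jensen applied to the concave map $u \mapsto u^{(1-\alpha_1)/(1-\alpha_2)}$, which is precisely what the paper does, and Part~3 (reparameterization, differentiation under the integral, and the identity $\partial_\phi \w = \w\,\partial_\phi \log \w$) is identical. The genuine difference is in Part~1: the paper disposes of the entire chain \eqref{eq:ineqELBOLnAlpha} and the $N\to\infty$ claim by citing Theorems~1 and~2 of Li and Turner (2016) and Burda et al.\ for $\alpha=0$, whereas you reprove those ingredients from scratch — the leave-one-out averaging argument for monotonicity in $N$, Hölder's inequality with exponents $1/\alpha$ and $1/(1-\alpha)$ for $\mathcal{L}^{(\alpha)}(\theta,\phi;x)\leq \ell(\theta;x)$, and the strong law plus a uniform-integrability/domination argument for the limit. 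What your route buys is self-containedness and an explicit identification of where each hypothesis enters (e.g.\ boundedness of $\w$ controls the positive part of the integrand, while finiteness of the ELBO controls the negative part via uniform integrability of Cesàro averages of $(\log\w)^-$ — a caveat the paper's citation-based proof leaves implicit). The only point worth tightening is that last step: a single dominating function independent of $N$ is not available, so you should state explicitly that you are using uniform integrability of the family $\lrcb{N^{-1}\sum_{j}(\log \w(z_j))^-}_{N}$ together with almost sure convergence, rather than dominated convergence in the strict sense; as written, your own flagging of this issue shows you are aware of it, and the argument goes through.
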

The proof of \Cref{lem:generalBound} is deferred to \Cref{sec:prooflemgeneralbound} and we now comment on \Cref{lem:generalBound}. 
Observe that both the VR and VR-IWAE bounds share the same estimated reparameterized gradient w.r.t. $\phi$, that is \eqref{eq:reparamVR} is exactly \eqref{reparam:liren:MC}, hence they lead to the same stochastic gradient descent algorithm. However, when $\alpha \in (0,1)$, a key difference is that in the VR bound case this estimator is biased while it is unbiased for VR-IWAE bound. 

This motivates the VR-IWAE bound as a generalization of the IWAE bound that overcomes the theoretical difficulties of the VR bound, as unbiased gradient estimates provide the convergence of the stochastic gradient descent procedure (under proper conditions on the learning rate). In fact, and as we shall see next, the estimated reparameterized gradient w.r.t. $\phi$ written in \eqref{eq:reparamVR} and \eqref{reparam:liren:MC} - that we have now properly justified using the VR-IWAE bound - also enjoys an advantageous Signal-to-Noise Ratio behavior when $\alpha \in (0,1)$.

\subsection{Signal-to-Noise Ratio (SNR) analysis}
\label{subsecSNRanalysis}

\cite{rainforth2018tighter} identified some issues associated to using reparameterized gradient estimators of the IWAE bound. They did so by looking at the Signal-to-Noise Ratio (SNR) of those estimates: their main theorem \cite[Theorem 1]{rainforth2018tighter} shows that while increasing $N$ leads to a tighter IWAE bound and improves the SNR for learning $\theta$, it actually worsens the SNR for learning $\phi$. 

Let us now investigate if and how the conclusions of \cite[Theorem 1]{rainforth2018tighter} extend to the VR-IWAE bound. To this end, we first recall the definition of the SNR used in \cite{rainforth2018tighter}. Given a random vector $X = (X_1 , \ldots, X_L)$ of dimension $L \in \mathbb{N}^\star$, the SNR may be defined as follows: 
$$
\mathrm{SNR}[X] = \lr{ \frac{|\PE(X_1)|}{\sqrt{\mathbb{V}(X_1)}}, \ldots,  \frac{|\PE(X_L)|}{\sqrt{\mathbb{V}(X_L)}}}.
$$
Writing $\theta = (\theta_1, \ldots, \theta_L)$ and $\phi = (\phi_1, \ldots, \phi_{L'})$ and with $L, L' \in \mathbb{N}^\star$, we now consider for all $\ell = 1 \ldots L$ and all $\ell' = 1 \ldots L'$ the unbiased estimates of the reparameterized gradient of the VR-IWAE bound w.r.t. $\theta_\ell$ and w.r.t. $\phi_{\ell'}$ given by: for all $M, N \in \mathbb{N}^\star$ and all $\alpha \in [0,1)$, 
\begin{align}
  & \delta_{M, N}^{(\alpha)}(\theta_\ell) = \frac{1}{(1-\alpha)M} \sum_{m=1}^M \frac{\partial}{\partial \theta_\ell} \log \lr{\frac{1}{N} \sum_{j=1}^N \w(f(\varepsilon_{m,j}, \phi))^{1-\alpha}}, \label{eq:deltaMNthetaell} \\
  & \delta_{M, N}^{(\alpha)}(\phi_{\ell'}) = \frac{1}{(1-\alpha)M} \sum_{m=1}^M \frac{\partial}{\partial \phi_{\ell'}} \log \lr{\frac{1}{N} \sum_{j=1}^N \w(f(\varepsilon_{m,j}, \phi))^{1-\alpha}}, \label{eq:deltaMNphiell}
 \end{align}
where $(\varepsilon_{m,j})_{1 \leq m \leq M, 1 \leq j \leq N}$ are i.i.d. samples generated from $q$ and $z_{m,j} = f(\varepsilon_{m, j}, \phi)$ for all $m = 1 \ldots M$ and all $n = 1 \ldots N$. Note that the link with the reparameterized gradient estimator \eqref{reparam:liren:MC} from \Cref{lem:generalBound} can be made by considering the case $M = 1$ in \eqref{eq:deltaMNphiell}. 
We then have the following theorem.

\begin{thm}[SNR analysis] \label{prop:SNRconvergence} Let $\alpha \in [0,1)$ and for all $N \in \mathbb{N}^\star$ and all $j = 1 \ldots N$, define $\tilde{w}_{1,j} = \w(f(\varepsilon_{1,j}, \phi))$ and $\hat{Z}_{1, N, \alpha} = N^{-1} \sum_{j=1}^N \tilde{w}_{1,j}^{1-\alpha}$. Assume that the eighth moments of $\tilde{w}_{1,1}^{1-\alpha}$, ${\partial} \tilde{w}_{1,1}^{1-\alpha} / {\partial \theta_\ell}$ and ${\partial} \tilde{w}_{1,1}^{1-\alpha} / {\partial \phi_{\ell'}}$ are finite, where $\ell$ is an integer between $1$ and $L$ and $\ell'$ is an integer between $1$ and $L'$. Furthermore, assume that there exists some $N \in \mathbb{N}^\star$ for which $\PE((1/\hat{Z}_{1, N, \alpha})^4) < \infty$. Lastly, assume that $\partial \PE(\tilde{w}_{1,1}^{1-\alpha}) / \partial \theta_\ell \neq 0$ and that
  \begin{align}
    & {\partial} \mathbb{V}(\tilde{w}_{1,1}^{1-\alpha}) / {\partial \phi_{\ell'}} > 0, \quad \mbox{if $\alpha = 0$} \nonumber \\
    & \partial \PE(\tilde{w}_{1,1}^{1-\alpha}) / \partial \phi_{\ell'} \neq 0 , \quad \mbox{if $\alpha \in (0,1)$}. \label{eq:leadingOrder}
  \end{align}
  Then, under common differentiability assumptions, the SNR of the VR-IWAE bound reparameterized gradient estimates w.r.t $\theta_\ell$ and w.r.t $\phi_{\ell'}$ defined in \eqref{eq:deltaMNthetaell} and \eqref{eq:deltaMNphiell} respectively satisfy
  \begin{align}
    & \mathrm{SNR}[\delta_{M, N}^{(\alpha)}(\theta_\ell)] =  \Theta(\sqrt{MN}) \label{eq:SNRbehavior1} \\
    & \mathrm{SNR}[\delta_{M, N}^{(\alpha)}(\phi_{\ell'})] = \begin{cases}
      \Theta(\sqrt{M/N}) & \mbox{if $\alpha = 0$}, \\
      \Theta(\sqrt{MN}) & \mbox{if $\alpha \in (0,1)$}.
    \end{cases} \label{eq:SNRbehavior2}
  \end{align}
\end{thm}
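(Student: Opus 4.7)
The plan is to reduce to the single-sample case ($M = 1$) and then to exploit a second-order Taylor expansion of $\log \hat{Z}_{1,N,\alpha}$ about its limit $Z \eqdef \PE[\tilde{w}_{1,1}^{1-\alpha}]$. Since $\delta_{M,N}^{(\alpha)}(\cdot)$ is the empirical average of $M$ i.i.d.\ copies of $\delta_{1,N}^{(\alpha)}(\cdot)$, one has $\PE[\delta_{M,N}^{(\alpha)}] = \PE[\delta_{1,N}^{(\alpha)}]$ and $\mathbb{V}(\delta_{M,N}^{(\alpha)}) = M^{-1}\mathbb{V}(\delta_{1,N}^{(\alpha)})$, so the SNR acquires an overall factor $\sqrt{M}$ and only the asymptotics in $N$ remain to be established.

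Set $Y_N \eqdef \hat{Z}_{1,N,\alpha}/Z - 1$. The moment hypotheses imply $Y_N = O_p(N^{-1/2})$ with $\PE[Y_N] = 0$ and $\PE[Y_N^2] = \mathbb{V}(\tilde{w}_{1,1}^{1-\alpha})/(NZ^2)$. Writing $(1-\alpha)\,\delta_{1,N}^{(\alpha)}(\cdot) = \partial_{\cdot}\log Z + \partial_{\cdot}\log(1 + Y_N)$ and plugging in $\log(1+y) = y - y^2/2 + O(y^3)$, after interchanging differentiation with expectation one obtains
\begin{align*}
  \PE[\delta_{1,N}^{(\alpha)}(\cdot)] = \frac{1}{1-\alpha}\frac{\partial}{\partial \cdot}\log Z - \frac{1}{2(1-\alpha)N}\frac{\partial}{\partial \cdot}\lr{\frac{\mathbb{V}(\tilde{w}_{1,1}^{1-\alpha})}{Z^2}} + O(N^{-2}).
\end{align*}
The crucial distinction between $\theta$ and $\phi$ is that $Z = p_\theta(x)$ does not depend on $\phi$ when $\alpha = 0$, so $\partial_{\phi_{\ell'}}\log Z$ vanishes and the leading signal is given by the second-order term, which is $\Theta(1/N)$ under the first branch of \eqref{eq:leadingOrder}. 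In every other case (any $\theta_\ell$, and $\phi_{\ell'}$ with $\alpha \in (0,1)$), the leading term is already $\Theta(1)$ by the non-vanishing hypotheses $\partial_{\theta_\ell}Z \neq 0$ and $\partial_{\phi_{\ell'}}Z \neq 0$.

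For the noise, Taylor-expanding $\partial_{\cdot}\log \hat{Z}_{1,N,\alpha} = Z^{-1}\partial_{\cdot}\hat{Z}_{1,N,\alpha} + O_p(N^{-1})$ reduces the computation to an empirical mean of $N$ i.i.d.\ random variables, whose variance is of order $1/N$ with leading constant $\mathbb{V}(\partial_{\cdot}\tilde{w}_{1,1}^{1-\alpha})/Z^2$. Dividing the signal and noise orders established above yields $\mathrm{SNR}[\delta_{1,N}^{(\alpha)}(\theta_\ell)] = \Theta(\sqrt{N})$, $\mathrm{SNR}[\delta_{1,N}^{(\alpha)}(\phi_{\ell'})] = \Theta(\sqrt{N})$ for $\alpha \in (0,1)$, and $\mathrm{SNR}[\delta_{1,N}^{(0)}(\phi_{\ell'})] = \Theta(N^{-1/2})$; multiplying by $\sqrt{M}$ then delivers \eqref{eq:SNRbehavior1}--\eqref{eq:SNRbehavior2}.

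The hardest step will be making the Taylor expansion and the expectation--derivative interchange fully rigorous with explicit remainder control. The inverse-moment hypothesis $\PE[(1/\hat{Z}_{1,N,\alpha})^4] < \infty$ combined with the finite eighth moments of $\tilde{w}_{1,1}^{1-\alpha}$ and of its relevant partial derivatives is what enables a Cauchy--Schwarz argument to bound the Taylor remainder in $L^2$ at order $O(N^{-3/2})$, which is negligible relative to both the signal ($\Theta(1)$ or $\Theta(1/N)$) and the noise ($\Theta(N^{-1/2})$) scales identified above. These same integrability conditions are what justify differentiating the integrals defining $Z$ under the integral sign, an interchange on which the entire signal computation rests.
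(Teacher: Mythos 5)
Your proposal follows essentially the same route as the paper's proof: reduce to $M=1$ via the i.i.d. averaging over $m$, Taylor-expand $\log \hat{Z}_{1,N,\alpha}$ about $Z_\alpha=\PE(\tilde w_{1,1}^{1-\alpha})$ to second order for the mean and to first order for the variance, control the remainders using the eighth-moment and inverse-fourth-moment hypotheses (the paper does this via an explicit integral form of the remainder and a combinatorial moment lemma extending Rainforth et al.'s Lemma 1), and read off the leading orders from the fact that $\partial Z_\alpha/\partial\phi_{\ell'}$ vanishes exactly when $\alpha=0$. The only small imprecision is your stated leading variance constant $\mathbb{V}(\partial\tilde w_{1,1}^{1-\alpha})/Z^2$, which omits the correction $-\tilde w_{1,1}^{1-\alpha}\,\partial Z_\alpha/Z_\alpha^2$ coming from differentiating the denominator $\hat Z_{1,N,\alpha}$; this does not affect the $\Theta(1/N)$ rate, so the conclusion stands.
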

The proof of \Cref{prop:SNRconvergence} can be found in \Cref{subsec:SNRanalysis:app}. \Cref{prop:SNRconvergence} states that for $\alpha \in (0,1)$, the SNR for learning the generative network ($\theta$) and for learning the inference network ($\phi$) both improve as $N$ increases, unlike the IWAE bound case $\alpha = 0$ where the second SNR worsens as $N$ increases. This provides theoretical support suggesting that taking $\alpha > 0$ in the VR-IWAE bound may help to ensure a good training signal, thus leading to improved empirical performances compared to the IWAE bound.

In the following, we investigate another way to provide gradient estimators of the VR-IWAE bound with an advantageous SNR behavior in practice. 
\subsection{Doubly-reparameterized gradient for the VR-IWAE bound}

To remedy the SNR issue identified in \cite{rainforth2018tighter}, \cite{Tucker2019DoublyRG} proposed a new estimator of the gradient of the IWAE bound \eqref{doublyreparam:IWAE} under the name doubly-reparameterized gradient estimator. As written in the theorem below, the doubly-reparameterized gradient estimator of the IWAE bound \eqref{doublyreparam:IWAE} in fact generalizes to the case $\alpha \in (0,1)$.

\begin{thm}[Generalized doubly-reparameterized gradient] \label{prop:drepIwaeAlpha} Under common differentiability assumptions and assuming that $z$ can be reparameterized, that is $z = f(\varepsilon, \phi) \sim q_\phi$ where $\varepsilon \sim q$, we have that: for all $\alpha \in [0,1]$,
  \begin{align} \label{eq:partialVRIWAEdrep}
    \frac{\partial}{\partial \phi} \liren  (\theta,\phi;x) = \int \int \prod_{i = 1}^N q(\varepsilon_i) \lr{\sum_{j=1}^N h_j(\alpha) \frac{\partial}{\partial \phi} \log \w[\theta][\phi'](f(\varepsilon_j, \phi))|_{\phi' = \phi}}  \rmd \varepsilon_{1:N},
    \end{align}
    with $z_j = f(\varepsilon_j, \phi)$ for all $j = 1 \ldots N$ and 
    $$
    h_j(\alpha) = \alpha ~ \frac{\w(z_j)^{1-\alpha} }{\sum_{k = 1}^N \w(z_k)^{1-\alpha}} + (1-\alpha)~\lr{\frac{\w(z_j)^{1-\alpha} }{\sum_{k = 1}^N \w(z_k)^{1-\alpha}}}^2.
    $$
  An unbiased estimator of ${\partial \liren  (\theta,\phi;x)}/{\partial \phi} $ is then given by
  \begin{align} \label{dreparam:liren:MC}
  {\sum_{j=1}^N h_j(\alpha) \frac{\partial}{\partial \phi} \log \w[\theta][\phi'](f(\varepsilon_j, \phi))|_{\phi' = \phi}} 
  \end{align}
  where $\varepsilon_1, \ldots, \varepsilon_N$ are i.i.d. samples generated from $q$ and $z_j = f(\varepsilon_j, \phi)$ for all $j = 1 \ldots N$.
\end{thm}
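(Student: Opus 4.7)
The plan is to generalize the doubly-reparameterized derivation of \cite{Tucker2019DoublyRG} from $\alpha=0$ to all $\alpha\in[0,1)$, and handle $\alpha=1$ by continuity at the end. Starting from the reparameterized expression in \Cref{lem:generalBound}(3), and writing $z_k = f(\varepsilon_k,\phi)$ and $\tilde w_k^{(\alpha)} = \w(z_k)^{1-\alpha}/\sum_m \w(z_m)^{1-\alpha}$, I have
\begin{equation*}
\frac{\partial}{\partial \phi}\liren(\theta,\phi;x) = \int \prod_{i=1}^N q(\varepsilon_i)\lrb{\sum_{k=1}^N \tilde w_k^{(\alpha)}\,\frac{\partial}{\partial \phi}\log \w(f(\varepsilon_k,\phi))}\rmd \varepsilon_{1:N}.
\end{equation*}
The total derivative inside the bracket splits into a density part and a sample-path part,
\begin{equation*}
\frac{\partial}{\partial \phi}\log \w(f(\varepsilon_k,\phi)) = -\nabla_\phi \log q_\phi(z_k|x) + D_k,
\end{equation*}
with $D_k \eqdef \nabla_z \log \w(z_k)\cdot\nabla_\phi f(\varepsilon_k,\phi) = \frac{\partial}{\partial \phi}\log \w[\theta][\phi'](f(\varepsilon_k,\phi))|_{\phi'=\phi}$, which is exactly the stop-gradient quantity appearing in \eqref{eq:partialVRIWAEdrep}.

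The heart of the proof is the key identity
\begin{equation*}
\PE\lrb{\sum_{k=1}^N \tilde w_k^{(\alpha)}\,\nabla_\phi \log q_\phi(z_k|x)} = (1-\alpha)\,\PE\lrb{\sum_{k=1}^N \tilde w_k^{(\alpha)}(1-\tilde w_k^{(\alpha)})\,D_k}.
\end{equation*}
To prove it, I apply the reparameterization--score duality \emph{conditionally} on $z_{-k}$, for each $k$. Holding $z_{-k}$ fixed, $\tilde w_k^{(\alpha)}$ depends on $z_k$ through the rational map $u\mapsto u/(u+S_{-k})$ with $u = \w(z_k)^{1-\alpha}$ and $S_{-k} = \sum_{m\ne k}\w(z_m)^{1-\alpha}$ constant, which yields the closed-form
\begin{equation*}
\nabla_{z_k}\tilde w_k^{(\alpha)} = (1-\alpha)\,\tilde w_k^{(\alpha)}\lr{1-\tilde w_k^{(\alpha)}}\,\nabla_z \log \w(z_k).
\end{equation*}
The conditional duality $\PE[g\,\nabla_\phi \log q_\phi(z_k|x)\mid z_{-k}] = \PE[\nabla_{z_k}g\cdot \nabla_\phi f(\varepsilon_k,\phi)\mid z_{-k}]$ holds for any $g(z_k,z_{-k},\phi)$ under standard regularity, since the reparameterized and score representations of the conditional $\nabla_\phi \PE[g\mid z_{-k}]$ differ only by a common $\nabla_\phi^{\mathrm{explicit}} g$ term that cancels. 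Applying this duality with $g = \tilde w_k^{(\alpha)}$, then taking an outer expectation over $z_{-k}$ and summing over $k$, gives the displayed identity.

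Substituting the identity back into the split form of $\partial\liren/\partial\phi$ and using
\begin{equation*}
\tilde w_k^{(\alpha)} - (1-\alpha)\tilde w_k^{(\alpha)}(1-\tilde w_k^{(\alpha)}) = \alpha\,\tilde w_k^{(\alpha)} + (1-\alpha)(\tilde w_k^{(\alpha)})^2 = h_k(\alpha)
\end{equation*}
produces $\partial\liren/\partial\phi = \PE[\sum_k h_k(\alpha) D_k]$, which is \eqref{eq:partialVRIWAEdrep}; the boundary case $\alpha=1$ follows by continuity, since $\liren$ then reduces to the ELBO and $h_k(1)=1/N$ recovers the standard $N$-averaged reparameterized ELBO gradient. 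The main obstacle is identifying the right conditioning: applied jointly to the whole $\vec z$ vector, the reparameterization--score duality only reproduces the reparameterized form of \Cref{lem:generalBound}(3), whereas conditioning on $z_{-k}$ and transferring the $k$-th score term alone is precisely what makes $\nabla_{z_k}\tilde w_k^{(\alpha)}$ admit a closed form and produces the $(1-\tilde w_k^{(\alpha)})$ factor underlying $h_k(\alpha)$; the common differentiability assumptions are used throughout to justify the exchanges of differentiation and integration.
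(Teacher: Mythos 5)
Your proposal is correct and follows essentially the same route as the paper's proof: both split the total derivative of $\log \w(f(\varepsilon_j,\phi))$ into a score term and a path term, transfer the score term to a path term via the per-coordinate reparameterization identity (your ``conditional duality'' on $z_{-k}$ is exactly the paper's integration over $z_j$ with the other samples held fixed), compute $\nabla_{z_j}\tilde w_j^{(\alpha)} = (1-\alpha)\tilde w_j^{(\alpha)}(1-\tilde w_j^{(\alpha)})\nabla_{z_j}\log\w(z_j)$, and recombine to obtain $h_j(\alpha)$. The only cosmetic difference is that you isolate the key cancellation as a standalone identity and treat $\alpha=1$ by continuity, which is consistent with the paper's remark that this case recovers the ELBO gradient.
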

The proof of \Cref{prop:drepIwaeAlpha} is deferred to \Cref{subsec:proof:drepIwaeAlpha}. One can then check that we recover the usual doubly-reparameterized gradient estimator of the IWAE bound (resp. the ELBO) when $\alpha = 0$ (resp. $\alpha = 1$). Like the reparameterized  gradient estimator \eqref{reparam:liren:MC}, which we have studied in \Cref{subsecSNRanalysis} as it corresponds to the special case $M = 1$ in \Cref{prop:SNRconvergence}, this second (doubly-reparameterized) gradient estimator too may lead to improved empirical performances. From there, large-scale learning occurs by using that \Cref{lem:generalBound} implies
  \begin{align*} 
  \ell(\theta; \data) = \sum_{i = 1}^T \ell(\theta; x_i) \geq \sum_{i = 1}^T  \mathcal{L}^{(\alpha)}(\theta, \phi;x_i) \geq \sum_{i=1}^T \liren (\theta,\phi; x_i)
  \end{align*}
  and by following the training procedure for the IWAE bound. Indeed, we have access to an unbiased estimator of the lower bound of the full dataset $\sum_{i=1}^T \liren (\theta,\phi; x_i)$ (as well as an unbiased estimator of its reparameterized/doubly-reparameterized gradient) using mini-batching. Seeking to maximize the objective function $\sum_{i = 1}^T  \mathcal{L}^{(\alpha)}(\theta, \phi;x_i)$ by optimising $\sum_{i=1}^T \liren (\theta,\phi; x_i)$ in fact amounts to seeking to minimize a specific Rényi's $\alpha$-divergence with a mean-field assumption on the variational approximation (see \Cref{rem:objfuncRenyi} for detail).
  
  \begin{rem} \label{rem:objfuncRenyi} Define $\boldsymbol{z} = (z_1, \ldots, z_T )$ and $q_\phi(\boldsymbol{z}) = \prod_{i=1}^T q_\phi(z_i)$. 
   Then, for all $\alpha \in (0,1)$: 
  \begin{align*}
    \sum_{i = 1}^T  \mathcal{L}^{(\alpha)}(\theta, \phi;x_i) & = \sum_{i=1}^T \frac{1}{1-\alpha} \log\lr{  \int q_\phi(z) \w(z; x_i)^{1-\alpha} \rmd z} \\
    & = \sum_{i=1}^T \frac{1}{1-\alpha} \log\lr{  \int q_\phi(z_i) \w(z_i; x_i)^{1-\alpha} \rmd z_i} \\
    & = \frac{1}{1-\alpha} \log\lr{ \int \int \prod_{i=1}^T q_\phi(z_i) \prod_{j=1}^T \w(z_j; x_j)^{1-\alpha} \rmd z_{1:T}} \\
    & = \frac{1}{1-\alpha}  \log\lr{ \int q_\phi(\boldsymbol{z}) \w(\boldsymbol{z}; \data)^{1-\alpha} \rmd \boldsymbol{z}}
  \end{align*}
  where $p_\theta(\data, \boldsymbol{z}) = \prod_{i=1}^T p(x_i, z_i)$  and $\w(\boldsymbol{z}; \data) = p_\theta(\data, \boldsymbol{z}) / q_\phi(\boldsymbol{z})$. Observe that the last equality is a VR bound, meaning that maximizing the global objective function $\sum_{i = 1}^T  \mathcal{L}^{(\alpha)}(\theta, \phi;x_i)$ is equivalent to minimizing the Rényi's $\alpha$-divergence between the two probability distributions with associated probability densities $q_\phi(\boldsymbol{z})$ and $p(\boldsymbol{z}|\data)$ respectively w.r.t. the Lebesgue measure. Hence, this approach belongs to alpha-divergence variational inference methods with the particularity that it makes a mean-field assumption on the variational approximation $q_\phi(\boldsymbol{z})$.
\end{rem}
At this stage, we have formalized and motivated the VR-IWAE bound. We now want to get an understanding of its theoretical properties.
\section{Theoretical study of the VR-IWAE bound}

\label{sec:HighDim}

The starting point of our approach is to exploit the fact that prior theoretical works study the particular case $\alpha = 0$ (corresponding to the IWAE bound) when the dimension of the latent space $\dim(z)=d$ is fixed and the number of samples $N$ goes to infinity. 

\subsection{Behavior of the VR-IWAE bound when $d$ is fixed and $N$ goes to infinity}
\label{subsec:behaviourfixedd}

A quantity that has been of interest to assess the quality of the IWAE bound is the \textit{variational gap}, which is defined as the difference between the IWAE bound and the marginal log-likelihood:
\begin{align}
\Delta_N(\theta,\phi; x):=\liwae (\theta,\phi;x) -\ell(\theta;x)=\int\int\prod_{i=1}^{N}q_{\phi}(z_{i})\log\left(\frac{1}{N}\sum_{j=1}^{N}\barw(z_{j})\right) \rmd z_{1:N}\label{eq:variationalgap}
\end{align}
where for all $z \in \rset^d$
\begin{align*} 
\barw(z) & \eqdef \frac{\w(z)}{\mathbb{E}_{Z\sim q_{\phi}}\left(\w(Z)\right)}=\frac{\w(z)}{p_{\theta}(x)},
\end{align*}
so that $\barw(z_{1}), \ldots, \barw(z_{N})$ correspond to the relative weights. The analysis of the variational gap \eqref{eq:variationalgap}, first performed in \cite{maddison2017} and then refined in \cite{domke2018}, investigated the case where $\dim(z)=d$ is fixed and $N$ goes to infinity. 
Informally, they obtained in their Theorem 3 that the variational gap behaves as follows
\begin{align*}
\Delta_N(\theta,\phi;x)=-\frac{\gamma_0^{2}}{2N}+o\left(\frac{1}{N}\right) 
\end{align*}
with $\gamma_0$ denoting the variance of the relative weights, that is
\begin{equation*}
    \gamma_0^2:=\mathbb{V}_{Z \sim q_\phi}(\barw(Z)).
\end{equation*}
This result suggests that using $N$ is very beneficial to reduce the variational gap, as it goes to zero at a fast $1/N$ rate. It motivates a study - in a regime where $d$ is fixed and $N$ goes to infinity - of the more general variational gap defined for all $\alpha \in [0,1)$ by
\begin{align*}
  \Delta_{N}^{(\alpha)}(\theta, \phi; x) & \eqdef \liren  (\theta,\phi;x) - \ell(\theta;x).
\end{align*}
The following result generalizes \cite[Theorem 3]{domke2018} to the VR-IWAE bound.

\begin{thm}\label{prop:GenDomke}
Let $\alpha \in [0,1)$. Then, it holds that
\begin{align} \label{eq:boundedExpectationWeights}
0 < \PE_{Z \sim q_\phi}(\w(Z)^{1-\alpha}) < \infty.
\end{align}
Further assume that there exists $\beta > 0$ such that
\begin{align}\label{eq:conditionVariance}
\PE_{Z \sim q_\phi}(|\Rtalpha(Z)  - 1|^{2+\beta}) < \infty,
\end{align}
where we have defined $\Rtalpha(z) = {\w(z)^{1-\alpha}}/{\PE_{Z \sim q_\phi}(\w(Z)^{1-\alpha})}$ for all $z \in \rset^d$. Lastly, assume that the following condition holds
\begin{align}\label{eq:conditionLimsup}
\limsup_{N \to \infty} \PE(1/\RalphaN) < \infty,
\end{align}
where, for all $N \in \mathbb{N}^\star$, $\RalphaN = N^{-1} \sum_{i=1}^N \w(Z_i)^{1-\alpha}$ and $Z_1, \ldots, Z_N$ are i.i.d. samples generated according to $q_\phi$. Then, denoting $\gamma_\alpha^2 = (1-\alpha)^{-1}\mathbb{V}_{Z \sim q_\phi}(\Rtalpha(Z))$, we have:
\begin{align} \label{eq:OneOverNGenDomke}
\Delta_{N}^{(\alpha)}(\theta, \phi; x) = \mathcal{L}^{(\alpha)}(\theta, \phi; x) - \ell(\theta;x) - \frac{\gamma_{\alpha}^2}{2 N}+o\left(\frac{1}{N}\right).
\end{align}
\end{thm}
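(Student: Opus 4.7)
The starting point is the identity
\begin{align*}
\Delta_{N}^{(\alpha)}(\theta, \phi; x) - \bigl(\mathcal{L}^{(\alpha)}(\theta, \phi; x) - \ell(\theta; x)\bigr) = \liren(\theta, \phi; x) - \mathcal{L}^{(\alpha)}(\theta, \phi; x) = \frac{1}{1-\alpha}\PE\bigl[\log \hat{R}_{\alpha,N}\bigr],
\end{align*}
where $\hat{R}_{\alpha,N} \eqdef N^{-1} \sum_{j=1}^N \Rtalpha(Z_j) = \RalphaN / \PE_{Z \sim q_\phi}[\w(Z)^{1-\alpha}]$ with $Z_1, \ldots, Z_N$ i.i.d.\ from $q_\phi$, so that $\PE[\hat{R}_{\alpha,N}] = 1$ and $\mathbb{V}(\hat{R}_{\alpha,N}) = (1-\alpha)\gamma_\alpha^2/N$. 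Establishing \eqref{eq:OneOverNGenDomke} therefore reduces to proving
$$
\PE\bigl[\log \hat{R}_{\alpha,N}\bigr] = -\frac{(1-\alpha)\gamma_\alpha^2}{2N} + o\lr{\frac{1}{N}}.
$$
I would first verify \eqref{eq:boundedExpectationWeights}: positivity is immediate since $\w > 0$ on the support of $q_\phi$ by assumption~\ref{hyp:VRIWAEwell-defined}, and Jensen's inequality applied to the concave map $t \mapsto t^{1-\alpha}$ yields $\PE[\w(Z)^{1-\alpha}] \leq \PE[\w(Z)]^{1-\alpha} = p_\theta(x)^{1-\alpha} < \infty$.

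\textbf{Delta method.} Setting $Y_N \eqdef \hat{R}_{\alpha,N} - 1$ and introducing the cubic remainder $R_3(x) \eqdef \log x - (x-1) + (x-1)^2/2$, whose first and second derivatives vanish at $x = 1$ and whose third derivative equals $2/x^3$, one obtains
\begin{align*}
\PE[\log \hat{R}_{\alpha,N}] = \PE[Y_N] - \tfrac{1}{2}\PE[Y_N^2] + \PE[R_3(\hat{R}_{\alpha,N})] = -\frac{(1-\alpha)\gamma_\alpha^2}{2N} + \PE[R_3(\hat{R}_{\alpha,N})],
\end{align*}
so the task reduces to $\PE[R_3(\hat{R}_{\alpha,N})] = o(1/N)$. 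I would split this expectation along the event $A_N \eqdef \{|Y_N| \leq 1/2\}$ and its complement.

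\textbf{Controlling the remainder.} On $A_N$, Taylor's inequality gives $|R_3(\hat{R}_{\alpha,N})| \leq C|Y_N|^3$, which I bound further by $C'|Y_N|^{2+\beta}$ using $|Y_N|^{1-\beta} \leq (1/2)^{1-\beta}$ when $\beta \in (0,1]$ (and the trivial bound when $\beta \geq 1$). Applying the Marcinkiewicz--Zygmund inequality to the i.i.d.\ centered variables $\Rtalpha(Z_i) - 1$, whose $(2+\beta)$-th absolute moment is finite by~\eqref{eq:conditionVariance}, yields $\PE|Y_N|^{2+\beta} = O\lr{N^{-(1+\beta/2)}} = o(1/N)$, disposing of this part. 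On $A_N^c$ I would treat the two signs of $Y_N$ separately: the positive excursion uses $\log(1+y) \leq y$ together with the elementary bound $|Y_N|\mathbf{1}_{|Y_N|>1/2} \leq 2^{1+\beta}|Y_N|^{2+\beta}$ to produce an $o(1/N)$ contribution; the negative excursion, where $|\log(1+Y_N)| = |\log \hat{R}_{\alpha,N}|$ with $\hat{R}_{\alpha,N} < 1/2$, is handled via $|\log x| \leq C_\epsilon x^{-\epsilon}$ on $(0, 1/2]$, combined with Hölder's inequality, the sub-first inverse moment bound $\PE[\hat{R}_{\alpha,N}^{-\epsilon}] \leq \PE[1/\hat{R}_{\alpha,N}]^\epsilon$ (Jensen applied to the concave map $t \mapsto t^\epsilon$), and assumption~\eqref{eq:conditionLimsup}, together with Markov applied to $\PE|Y_N|^{2+\beta}$ to bound $\mathbb{P}(\hat{R}_{\alpha,N} < 1/2)$.

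\textbf{Main obstacle.} The delicate step is the negative-excursion tail: since~\eqref{eq:conditionLimsup} only controls the first inverse moment of $\hat{R}_{\alpha,N}$, a naive Cauchy--Schwarz argument yields a rate of $N^{-(1+\beta/2)/2}$, which is weaker than $o(1/N)$ for small $\beta$. The resolution is to exploit that any small positive power of $1/\hat{R}_{\alpha,N}$ remains integrable by Jensen and to choose a Hölder exponent that balances the integrability of $|\log \hat{R}_{\alpha,N}|^p$ against the Markov decay of $\mathbb{P}(|Y_N|>1/2)$, yielding $o(1/N)$ for any $\beta > 0$. Once assembled, this completes the proof of~\eqref{eq:OneOverNGenDomke}, following the template of \cite[Theorem~3]{domke2018} adapted to the $(1-\alpha)$-weighted setting.
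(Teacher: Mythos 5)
Your proposal is correct and follows the same architecture as the paper's proof: the same decomposition of the variational gap into $[\liren-\mathcal{L}^{(\alpha)}]+[\mathcal{L}^{(\alpha)}-\ell]$, the same verification of \eqref{eq:boundedExpectationWeights} via Jensen, and the same second-order expansion of $\log \hat{R}_{\alpha,N}$ about $1$ — the paper's integral remainder $\int_0^{\delta_{\alpha,N}} x^2/(1+x)\,\rmd x$ is exactly your $R_3(1+\delta_{\alpha,N})$ — with the leading $-\gamma_\alpha^2/(2N)$ term extracted identically from $\PE[Y_N]=0$ and $\PE[Y_N^2]=\mathbb{V}_{Z\sim q_\phi}(\Rtalpha(Z))/N$. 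The only genuine divergence is how the cubic remainder is shown to be $o(1/N)$: the paper imports the pointwise bound of \cite[Lemma 7]{domke2018}, namely $C_\varepsilon|1/(1+\delta_{\alpha,N})|^{\varepsilon/(1+\varepsilon)}|\delta_{\alpha,N}|^{(2+3\varepsilon)/(1+\varepsilon)}+D_\beta|\delta_{\alpha,N}|^{2+\beta}$, and then applies H\"older with exponents $((1+\varepsilon)/\varepsilon,\,1+\varepsilon)$ and the choice $\varepsilon=\beta/3$, whereas you build a self-contained substitute by splitting on $\{|Y_N|\le 1/2\}$, using a Lagrange bound on the good event and, on the negative excursion, $|\log x|\le C_\epsilon x^{-\epsilon}$ together with Jensen for the fractional inverse moment and a tuned H\"older/Markov balance. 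Both resolutions consume exactly the same three inputs in the same roles — the $(2+\beta)$-moment condition \eqref{eq:conditionVariance} through a Marcinkiewicz--Zygmund bound (\cite[Lemma 5]{domke2018}), the inverse-moment condition \eqref{eq:conditionLimsup}, and a H\"older exponent chosen so the deviation probability decays strictly faster than $1/N$ — and your ``main obstacle'' paragraph correctly identifies why a naive Cauchy--Schwarz split fails for small $\beta$ and how pushing the H\"older exponent toward $1$ fixes it. So your route is sound; it buys a proof that does not rely on Domke's Lemma 7 as a black box, at the cost of a slightly longer case analysis.
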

The proof of this result is deferred to \Cref{subsec:proofGenDomke} and we now aim at interpreting \Cref{prop:GenDomke}, starting with the conditions \eqref{eq:conditionVariance} and \eqref{eq:conditionLimsup}.

\subsubsection{Conditions (\ref{eq:conditionVariance}) and (\ref{eq:conditionLimsup})}

A first remark is that the conditions \eqref{eq:conditionVariance} and \eqref{eq:conditionLimsup} stated in \Cref{prop:GenDomke} exactly generalize the ones from \cite[Theorem 3]{domke2018}, which are recovered by setting $\alpha = 0$. This then prompt us to investigate in the following proposition how restrictive the conditions \eqref{eq:conditionVariance} and \eqref{eq:conditionLimsup} are as a function of $\alpha$. 

\begin{prop} \label{lem:discussConditions}
Let $\alpha_1, \alpha_2 \in [0,1)$ with $\alpha_1 > \alpha_2$. Then, the two following assertions hold.
\begin{enumerate}
  \item \label{discussConditionsitemOne} If \eqref{eq:conditionVariance} holds with $\alpha = \alpha_2$, then \eqref{eq:conditionVariance} holds with $\alpha = \alpha_1$.
  \item \label{discussConditionsitemTwo} If \eqref{eq:conditionLimsup} holds with $\alpha = \alpha_2$, then \eqref{eq:conditionLimsup} holds with $\alpha = \alpha_1$.
\end{enumerate}
\end{prop}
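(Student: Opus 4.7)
Set $s := (1-\alpha_1)/(1-\alpha_2) \in (0,1)$, and for $Z \sim q_\phi$ (with i.i.d.\ copies $Z_1,\dots,Z_N$) write $W := \w(Z)^{1-\alpha_2}$, $W_i := \w(Z_i)^{1-\alpha_2}$ and $c_\alpha := \PE(\w(Z)^{1-\alpha})$, which lies in $(0,\infty)$ for all $\alpha \in [0,1)$ by \eqref{eq:boundedExpectationWeights}. Since $\w(Z)^{1-\alpha_1} = W^s$, a direct computation yields the common starting point
\[
\Rtalpha[\alpha_1](Z) = K\,\Rtalpha[\alpha_2](Z)^s,\qquad K := c_{\alpha_2}^s / c_{\alpha_1}.
\]

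For \ref{discussConditionsitemOne}, let $Y := \Rtalpha[\alpha_2](Z)$. Writing $Y = (Y-1) + 1$ and applying the $c_r$-inequality gives $Y^{2+\beta} \leq 2^{1+\beta}(|Y-1|^{2+\beta} + 1)$, so \eqref{eq:conditionVariance} at $\alpha_2$ forces $\PE(Y^{2+\beta}) < \infty$. Since $s \in (0,1)$, Lyapunov's inequality yields $\PE(Y^{(2+\beta)s}) \leq \PE(Y^{2+\beta})^s < \infty$. A second application of the $c_r$-inequality to $|KY^s - 1|^{2+\beta} \leq 2^{1+\beta}(K^{2+\beta} Y^{(2+\beta)s} + 1)$, followed by taking expectations, establishes \eqref{eq:conditionVariance} at $\alpha_1$ with the same $\beta$.

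For \ref{discussConditionsitemTwo}, the plan is to split $\PE(1/\RalphaN[\alpha_1])$ according to whether many of the $W_i$ are ``not too small''. By \ref{hyp:VRIWAEwell-defined}, $W > 0$ a.s., so I can fix $A > 0$ with $p := \PP(W > A) > 0$ and set $K_N := \#\{i \leq N : W_i > A\} \sim \mathrm{Binomial}(N,p)$. The pointwise bound $\sum_i W_i^s \geq A^s K_N$ gives the uniform-in-$N$ estimate $1/\RalphaN[\alpha_1] < 2/(A^s p)$ on the typical event $\{K_N > pN/2\}$. On the complementary event, the (very loose) subadditivity inequality $\sum_i W_i^s \geq \lr{\sum_i W_i}^s$ gives $1/\RalphaN[\alpha_1] \leq N^{1-s}/\RalphaN[\alpha_2]^s$, and H\"older's inequality with conjugate exponents $(1/s, 1/(1-s))$ yields
\[
\PE\lr{\RalphaN[\alpha_2]^{-s}\,\mathbf{1}_{\{K_N \leq pN/2\}}} \leq \PE(\RalphaN[\alpha_2]^{-1})^s\,\PP(K_N \leq pN/2)^{1-s}.
\]
Combining this with Hoeffding's inequality $\PP(K_N \leq pN/2) \leq e^{-Np^2/2}$ and the hypothesis $\PE(\RalphaN[\alpha_2]^{-1}) \leq C_2$ (eventually in $N$) controls the ``bad'' contribution by $N^{1-s} C_2^s e^{-Np^2(1-s)/2}$, which vanishes as $N \to \infty$. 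Hence $\limsup_N \PE(1/\RalphaN[\alpha_1]) \leq 2/(A^s p) < \infty$.

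\textbf{Main obstacle.} The delicate point in \ref{discussConditionsitemTwo} is that the only pointwise comparison available, coming from subadditivity, gives a bound $1/\RalphaN[\alpha_1] \leq N^{1-s}/\RalphaN[\alpha_2]^s$ that blows up polynomially in $N$ and is therefore useless in expectation by itself. It only becomes usable once it is confined to the exponentially rare event $\{K_N \leq pN/2\}$; the exponential tail from Hoeffding is crucial here, since a polynomial tail bound (e.g.\ Chebyshev) would not beat the factor $N^{1-s}$.
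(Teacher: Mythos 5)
Your proof is correct, but it follows a genuinely different route from the paper's on both parts. For part 1, the paper bounds $|\Rtalpha[\alpha_1](z)-1|$ directly via the pointwise inequality $|u^{\eta}-1|\leq|u-1|$ with $\eta=(1-\alpha_1)/(1-\alpha_2)$ and then applies Minkowski's inequality to recentre at the right constant; you instead pass through absolute moments ($c_r$-inequality to get $\PE(Y^{2+\beta})<\infty$, Lyapunov to push the exponent down by $s$, and $c_r$ again to recentre), which is slightly more elementary and avoids the $|u^\eta-1|\le|u-1|$ trick, at the cost of losing the clean pointwise domination. For part 2, the paper fixes a single $N$ for which $\PE(1/\RalphaN[\alpha_2])<\infty$ (via its \Cref{lem:equivlimsupconditionGen}), conditions on the event $\{\w(Z_i)\leq p_\theta(x)\ \forall i\}$, exploits the coordinatewise monotonicity of $N/\sum_i\w(Z_i)^{1-\alpha_1}$ to compare with the $\alpha_2$ case on that event, and then converts the resulting single-$N$ finiteness back into the $\limsup$ condition by invoking \Cref{lem:equivlimsupconditionGen} a second time; your argument bypasses that lemma and the monotone-conditioning step entirely, instead splitting on a binomial concentration event and combining subadditivity of $u\mapsto u^s$, H\"older, and Hoeffding to control the $\limsup$ directly. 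Your version buys a self-contained, quantitative bound $\limsup_N\PE(1/\RalphaN[\alpha_1])\leq 2/(A^sp)$ and correctly identifies why the exponential tail is essential against the $N^{1-s}$ blow-up; the paper's version is shorter because it can lean on the monotonicity-in-$N$ of $\PE(1/\RalphaN)$ established separately. Both are valid.
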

The proof of this result is deferred to \Cref{subsec:prooflem:discussConditions}. It notably relies the fact that the condition \eqref{eq:conditionLimsup} is equivalent to the statement that there exists some $N \in \mathbb{N}^\star$ for which $\PE(1/\RalphaN) < \infty$, which follows from \Cref{lem:equivlimsupconditionGen} in \Cref{subsec:SNRanalysis:app} with $k = 1$. Notice that this provides an interesting equivalent condition to \eqref{eq:conditionLimsup} that might be easier to check empirically.

\Cref{lem:discussConditions} then states that the conditions \eqref{eq:conditionVariance} and \eqref{eq:conditionLimsup} with $\alpha = \alpha_1$ are at worse as restrictive as the case $\alpha = \alpha_2$, where $\alpha_1 > \alpha_2$. 
Putting this into perspective with \cite{domke2018}, the conditions \eqref{eq:conditionVariance} and \eqref{eq:conditionLimsup} when $\alpha > 0$ are hence not more restrictice than the conditions presented in \cite[Theorem 3]{domke2018} for the more usual IWAE bound case $\alpha = 0$. In fact, one would even be inclined to think that those conditions become easier to satisfy as $\alpha$ increases, motivating once again the use of $\alpha \in (0,1)$ in practice to be in the conditions of application of \Cref{prop:GenDomke}.

\subsubsection{Interpreting (\ref{eq:OneOverNGenDomke})}

Under the assumptions of \Cref{prop:GenDomke}, \eqref{eq:OneOverNGenDomke} states: for all $\alpha \in [0,1)$, 
\begin{align*}
\Delta_{N}^{(\alpha)}(\theta, \phi; x) = \mathcal{L}^{(\alpha)}(\theta, \phi; x) - \ell(\theta;x) - \frac{\gamma_{\alpha}^2}{2 N}+o\left(\frac{1}{N}\right).
  \end{align*}
The variational gap $\Delta_{N}^{(\alpha)}(\theta, \phi; x)$ is hence composed of two main terms: 
\begin{itemize}
  \item A term going to zero at a $1/N$ rate that depends on $\gamma_\alpha^2$. Here $\gamma_\alpha^2$ is controlled thanks to \eqref{eq:conditionVariance}, as \eqref{eq:conditionVariance} implies that $\mathbb{V}_{Z \sim q_\phi}(\Rtalpha(Z)) < \infty$ or equivalently that $\gamma_\alpha^2 < \infty$.

  \item An error term $\mathcal{L}^{(\alpha)}(\theta, \phi; x) - \ell(\theta;x)$. This term decreases away from zero as $\alpha$ increases due to the fact that $\mathcal{L}^{(\alpha)}(\theta, \phi; x)$ decreases away from its upper bound $\ell(\theta;x)$ as $\alpha$ increases \cite[see for example][Theorem 1]{li2016renyi}. It is equal to zero when $\alpha = 0$ or when the posterior and the encoder distributions are equal to one another. 
  
  Unless $\alpha = 0$ or the posterior and encoder distributions are matching, the error term $\mathcal{L}^{(\alpha)}(\theta, \phi; x) - \ell(\theta;x)$ hence maintains a dependency in $(\theta,\phi)$ in the variational gap even as $N$ goes to infinity. This is coherent with \Cref{prop:SNRconvergence}, in the sense that the case $\alpha \in (0,1)$ might ensure a better learning of both $\theta$ and $\phi$ in practice compared to the case $\alpha = 0$ (as the latter does not keep a dependency in $\phi$ as $N$ goes to infinity). 
\end{itemize}
Since the error term $\mathcal{L}^{(\alpha)}(\theta, \phi; x) - \ell(\theta;x)$ is decreasing away from zero as $\alpha$ increases and the term going to zero at a $1/N$ rate depends on the behavior of $\gamma_\alpha^2$ (with $\gamma_\alpha^2$ going to $0$ as $\alpha$ goes to $1$, see \Cref{lem:gammaAlphaToZero} of \Cref{subsec:behaviorGammaAlpha}), there might then be a tradeoff to achieve when choosing $\alpha$ in order to obtain the best empirical performances.

To the best of our knowledge, and by appealing to the link between the VR-bound and the VR-IWAE bound methodologies established in \Cref{subsec:debiasing}, \Cref{prop:GenDomke} is the first result shedding light via \eqref{eq:OneOverNGenDomke} on how the quantity $\gamma_\alpha^2$ alongside with the error term $\mathcal{L}^{(\alpha)}(\theta, \phi; x) - \ell(\theta;x)$ may play a role to guarantee the success of gradient-based methods involving the VR-bound. 
While the result obtained in \Cref{prop:GenDomke} is encouraging and might further motivate the use of $\alpha \in (0,1)$ in practice, one may seek to identify potential limitations of \Cref{prop:GenDomke}.

\subsubsection{Limitations of \Cref{prop:GenDomke}}
\label{subsubsecLimitation}

To investigate the limitations of \Cref{prop:GenDomke}, let us provide below two insightful examples in which all the terms appearing in \eqref{eq:OneOverNGenDomke} are tractable.

\begin{ex}\label{lem:VarianceExponentialWithD} Let $\sigma > 0$, $S_1, \ldots, S_N$ be i.i.d. normal random variables and assume that the distribution of the relative weights $\barw(z_1), \ldots,$ $\barw(z_N)$ is log-normal of the form
  \begin{align} \label{eq:logNormalInLem}
  \log \barw(z_i)  ={-\frac{\sigma^2 d}{2}- \sigma \sqrt{d} {S_i}}, \quad i = 1 \ldots N,
  \end{align}  
where the relationship between mean and variance ensures that the relative weights have expectation~$1$. Then, we can apply \Cref{prop:GenDomke}: 
for all $\alpha \in [0,1)$, 
\begin{align*} 
  \Delta_{N}^{(\alpha)}(\theta, \phi; x) = \mathcal{L}^{(\alpha)}(\theta, \phi; x) - \ell(\theta;x) - \frac{\gamma_\alpha^2}{2 N}+o\left(\frac{1}{N}\right)
  \end{align*}
with
\begin{align*}
& \mathcal{L}^{(\alpha)}(\theta, \phi; x) - \ell(\theta;x) 
= - \frac{\alpha \sigma^2 d}{2} \quad \mbox{and} \quad \gamma_\alpha^2 = \frac{\exp\lrb{(1-\alpha)^2\sigma^2 d} - 1}{1-\alpha}.
\end{align*}
In particular, we can write the weights under the form \eqref{eq:logNormalInLem} with $\sigma = 1$ by setting $p_\theta(z|x) = \mathcal{N}(z;\theta, \boldsymbol{I}_d)$, $q_\phi(z|x) = \mathcal{N}(z; \phi, \boldsymbol{I}_d)$, $\theta = 0 \cdot \boldsymbol{u}_d$ and $\phi = \boldsymbol{u}_d$, where $\boldsymbol{I}_d$ is the $d$-dimensional identity matrix and $\boldsymbol{u}_d$ the $d$-dimensional vector whose coordinates are all equal to $1$.  
\end{ex}
The proof of \Cref{lem:VarianceExponentialWithD} is deferred to \Cref{subsec:lem:VarianceExponentialWithD} and we now comment on \Cref{lem:VarianceExponentialWithD}. A first comment is that as $\alpha$ increases, the error term $\mathcal{L}^{(\alpha)}(\theta, \phi; x) - \ell(\theta;x)$ worsens linearly with $\alpha$ while $\gamma_\alpha^2$ decreases with $\alpha$, which supports our claim that there might exist an optimal $\alpha$ that balances between the two terms appearing in the variational gap as a rule of thumb. 

Furthermore, the variance of the relative weights and more generally $\gamma_\alpha^2$ is exponential with $d$. This means that the analysis of \cite{domke2018} -- that we extended to $\alpha \in [0,1)$ in \Cref{prop:GenDomke} -- may not capture what is happening in some high-dimensional scenarios as we may never use $N$ large enough in high-dimensional settings for the asymptotic regime of \Cref{prop:GenDomke} to kick in. We now present our second example. 

\begin{ex} \label{ex:LinGaussThm3} We consider the linear Gaussian example from \cite{rainforth2018tighter}, that is $p_{\theta}(z)=\mathcal{N}(z;\theta, \boldsymbol{I}_d)$, $p_\theta(x|z)=\mathcal{N}(x;z, \boldsymbol{I}_d)$ with $\theta \in \rset^d$, and $q_\phi(z|x)=\mathcal{N}(z;Ax+b, 2/3 ~ \boldsymbol{I}_d)$ with $A=\mathrm{diag}(\tilde{a})$ and $\phi=(\tilde{a},b) \in \rset^d \times \rset^d$. Here, the optimal parameter values $(\theta^\star, \phi^\star)$ are given by $\theta^\star = T^{-1} \sum_{t = 1}^T x_t$ and $\phi^\star = (a^\star, b^\star)$ with $a^\star = {1}/{2} \boldsymbol{u}_d$ and $b^\star = {\theta^\star}/{2}$ \cite[see][Appendix B]{rainforth2018tighter}. Furthermore, the true marginal likelihood and true posterior density are given by $p_\theta(x)=\mathcal{N}(x;\theta, 2\boldsymbol{I}_d)$ and $p_\theta(z|x)=\mathcal{N}(z; (\theta+x)/2, 1/2 ~ \boldsymbol{I}_d)$ respectively. Then, we can apply \Cref{prop:GenDomke}: for all $\alpha \in [0,1)$, 
\begin{align*} 
\Delta^{(\alpha)}_{N}(\theta, \phi;x) =   \mathcal{L}^{(\alpha)}(\theta, \phi; x) - \ell(\theta; x) - \frac{\gamma_{\alpha}^2}{2N} +o\left(\frac{1}{N}\right), 
\end{align*}
with  
\begin{align*}
    &\mathcal{L}^{(\alpha)}(\theta, \phi; x) - \ell(\theta; x) =    \frac{d}{2} \lrb{\log \lr{\frac{4}{3}} + \frac{1}
    {1-\alpha} \log \lr{\frac{3}{4-\alpha}}} - \frac{3\alpha}{4-\alpha}\Big\|Ax+b-\frac{\theta+x}{2}\Big\|^2 \\
  &\gamma_{\alpha}^2 = \frac{1}{1-\alpha}\left[(4-\alpha)^{d}(15-6\alpha)^{-\frac{d}{2}}\exp\left(\frac{24(1-\alpha)^2}{(5-2\alpha)(4-\alpha)}\Big\|Ax+b-\frac{\theta+x}{2}\Big\|^2\right)-1\right].
\end{align*}
\end{ex}
The proof of \Cref{ex:LinGaussThm3} is deferred to \Cref{subsec:linGaussExApp2}. To interpret \Cref{ex:LinGaussThm3}, observe that the case of optimality is particularly telling in this example, since when $(\theta, \phi) = (\theta^\star, \phi^\star)$ it holds that 
$\gamma_{\alpha}^2 = (1-\alpha)^{-1}[(4-\alpha)^{d}(15-6\alpha)^{-{d}/{2}}-1]$ and $\gamma_\alpha^2$ is thus exponential in $d$ despite the parameters $(\theta, \phi)$ being optimal for the setting considered. 

Hence, and in line with our conclusions for \Cref{lem:VarianceExponentialWithD}, the relevance of \Cref{prop:GenDomke} can be limited for a high-dimensional latent space $d$. This calls for an in-depth study of the variational gap as both $d$ and $N$ go to infinity.
\subsection{Behavior of the VR-IWAE bound when both $d$ and $N$ go to infinity}
\label{subsec:VRIWAEndInfty}

To better capture what is happening to the VR-IWAE bound in high-dimensional scenarios, we now let $d,N \rightarrow \infty$ in the variational gap
\begin{align*}
  \Delta_{N,d}^{(\alpha)}(\theta, \phi; x) & \eqdef \liren[\alpha][N,d]  (\theta,\phi;x) - \ell_d(\theta;x),
\end{align*}
where we have emphasized notationally the dependence on $d$ in the VR-IWAE bound \eqref{eq:liren:bound}, the log-likelihood \eqref{eq:def:loglik} and in the variational gap. 
We will consider the two cases:
\begin{align*}
\mbox{(i)} & \quad d,N \rightarrow \infty \quad \mbox{with} \quad \frac{\log N}{d}\rightarrow 0, \\ 
\mbox{(ii)} & \quad d,N \rightarrow \infty \quad \mbox{with} \quad \frac{\log N}{d^{1/3}}\rightarrow 0,
\end{align*}
that is, $N$ grows slower than exponentially with $d$ as in (i) or slower than sub-exponentially with $d^{1/3}$ as in (ii). As we shall see, those two cases will rely on a different set of assumptions each in order to carry out the analysis. In both scenarios, we will prove that a single importance weight dominates all the others, which strongly impacts the variational gap. To this end, let us rewrite the variational gap $\Delta_{N,d}^{(\alpha)}(\theta, \phi;x)$ under a more convenient form. Writing $\overline{w}_{i}=\barw(z_{i})$ for all $i = 1 \ldots N$, we first re-order the weights $\overline{w}_1, \ldots, \overline{w}_N$ as 
\begin{align*}
\overline{w}^{\left(1\right)} < \overline{w}^{\left(2\right)}< & \cdots < \overline{w}^{\left(N-1\right)} < \overline{w}^{\left(N\right)},
\end{align*}
where we have made the assumption that the weights have no tie almost surely. Now denoting by $q^{(N)}_{\phi}$ the density of $\overline{w}^{\left(N\right)}$ and defining for all $\alpha \in [0,1)$
  \begin{align}
  T^{(\alpha)}_{N,d} = \sum_{j = 1}^{N-1} \lr{\frac{\overline{w}^{(j)}}{\overline{w}^{(N)}}}^{1-\alpha} \label{eq:expressionT_NdExt}
\end{align}
(we have dropped the dependency in $x$ appearing in $ T^{(\alpha)}_{N,d}$ for notational ease here), we then have the following proposition.

\begin{prop}\label{lem:rewritingrenyi} For all $\alpha \in [0,1)$, the variational gap $\Delta^{(\alpha)}_{N,d}(\theta,\phi;x)$ can be rewritten as
  \begin{align} \label{eq:DeltaNDSum}
    \Delta_{N,d}^{(\alpha)}(\theta,\phi;x) & =\Delta^{(\alpha,MAX)}_{N,d}(\theta,\phi;x)+R_{N,d}^{(\alpha)}(\theta,\phi;x)
    \end{align}
  where
  \begin{align}
  &\Delta^{(\alpha, MAX)}_{N,d}(\theta,\phi;x) = \int q^{(N)}_{\phi}(\overline{w}^{\left(N\right)})\log\left(\overline{w}^{\left(N\right)}\right)\rmd\overline{w}^{\left(N\right)} +\frac{\log N}{\alpha -1}\label{eq:DeltaMaxNDExt} \\ 
  &0 \leq R^{(\alpha)}_{N,d}(\theta,\phi;x) \leq \frac{1}{1-\alpha} \mathbb{E}(T^{(\alpha)}_{N,d}) \label{eq:BoundRNdExt}.
  \end{align}
  \end{prop}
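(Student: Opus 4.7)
The plan is to normalize the weights by $p_\theta(x)$, factor out the largest weight from the sum inside the logarithm, and then bound the remaining logarithmic term using elementary inequalities.

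First, I would rewrite the variational gap in terms of relative weights. Since $\w(z) = p_\theta(x) \barw(z)$, we have
\begin{align*}
\frac{1}{N}\sum_{j=1}^N \w(z_j)^{1-\alpha} = p_\theta(x)^{1-\alpha} \cdot \frac{1}{N}\sum_{j=1}^N \barw(z_j)^{1-\alpha},
\end{align*}
so taking $(1-\alpha)^{-1}$ times the logarithm and subtracting $\ell_d(\theta;x) = \log p_\theta(x)$ inside the expectation yields
\begin{align*}
\Delta_{N,d}^{(\alpha)}(\theta,\phi;x) = \frac{1}{1-\alpha}\mathbb{E}\!\left[\log\!\left(\frac{1}{N}\sum_{j=1}^N \barw(z_j)^{1-\alpha}\right)\right].
\end{align*}

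Next, I would factor out the largest order statistic. Using the ordering $\overline{w}^{(1)} < \cdots < \overline{w}^{(N)}$ (with ties excluded almost surely by assumption) and the definition \eqref{eq:expressionT_NdExt} of $T^{(\alpha)}_{N,d}$, we have
\begin{align*}
\frac{1}{N}\sum_{j=1}^N \barw(z_j)^{1-\alpha} = \frac{(\overline{w}^{(N)})^{1-\alpha}}{N}\left(1 + T^{(\alpha)}_{N,d}\right).
\end{align*}
Taking logarithms, dividing by $1-\alpha$, and using that $-\log N /(1-\alpha) = \log N/(\alpha-1)$, this gives
\begin{align*}
\Delta_{N,d}^{(\alpha)}(\theta,\phi;x) = \mathbb{E}\!\left[\log \overline{w}^{(N)}\right] + \frac{\log N}{\alpha-1} + \frac{1}{1-\alpha}\,\mathbb{E}\!\left[\log\!\left(1 + T^{(\alpha)}_{N,d}\right)\right].
\end{align*}
The first two terms are precisely $\Delta^{(\alpha,MAX)}_{N,d}(\theta,\phi;x)$ (rewriting $\mathbb{E}[\log \overline{w}^{(N)}]$ as an integral against the density $q_\phi^{(N)}$ of $\overline{w}^{(N)}$), and the third term is $R^{(\alpha)}_{N,d}(\theta,\phi;x)$, which establishes the decomposition \eqref{eq:DeltaNDSum}.

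Finally, the bounds \eqref{eq:BoundRNdExt} on $R^{(\alpha)}_{N,d}$ follow from elementary estimates on $\log(1+u)$ for $u \geq 0$. Since $T^{(\alpha)}_{N,d} \geq 0$ (each ratio $\overline{w}^{(j)}/\overline{w}^{(N)}$ is nonnegative) and $\alpha \in [0,1)$, we have $\log(1+T^{(\alpha)}_{N,d}) \geq 0$, giving the lower bound $R^{(\alpha)}_{N,d} \geq 0$. For the upper bound, the inequality $\log(1+u) \leq u$ for $u \geq 0$ yields $\log(1+T^{(\alpha)}_{N,d}) \leq T^{(\alpha)}_{N,d}$, and dividing by $1-\alpha > 0$ and taking expectations gives $R^{(\alpha)}_{N,d} \leq (1-\alpha)^{-1}\mathbb{E}(T^{(\alpha)}_{N,d})$. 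There is no real obstacle here; the result is essentially an algebraic rearrangement followed by a one-line convexity estimate, and the only care needed is in the bookkeeping around the order statistic density $q^{(N)}_\phi$ and confirming that the no-tie assumption ensures $\overline{w}^{(N)}$ is strictly the maximum almost surely so that the factorization is well-defined.
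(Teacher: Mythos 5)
Your proposal is correct and follows essentially the same route as the paper's proof: factor out the largest order statistic so that the sum becomes $(\overline{w}^{(N)})^{1-\alpha}(1+T^{(\alpha)}_{N,d})/N$, split the logarithm into the $\Delta^{(\alpha,MAX)}_{N,d}$ term plus the remainder, and bound the remainder using $0 \leq \log(1+u) \leq u$. Your explicit normalization step from $\w$ to $\barw$ is a harmless elaboration of what the paper does implicitly.
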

  The proof of \Cref{lem:rewritingrenyi} can be found in \Cref{sec:prooflemrewriting}. To continue the analysis, the key intuition will be that the log weights typically satisfy a central limit theorem (CLT), hence the weights are approximately log-normal as the dimension $d$ increases. One such case for instance arises when the posterior and variational distributions are such that the log weights satisfy \looseness=-1
\begin{align} \label{eq:weightsIIDstart}
\log \overline{w}_{i} = \sum_{j=1}^d X_{i,j}, \quad i = 1 \ldots N,
\end{align}
where, for all $i = 1 \ldots N$, $X_{i,1}, \ldots X_{i,d}$ are i.i.d. random variables and $\PE(\exp(\sum_{j=1}^d X_{i,j})) = 1$ (since the relative weights satisfy $\PE(\overline{w}_i) = 1$). Indeed, denoting $\xi_{i,j} = -(X_{i,j} - \PE(X_{1,1}))$, $\sigma^2 = \mathbb{V}(\xi_{1, 1})$ and $S_i = \sum_{j = 1}^d \xi_{i,j}/(\sigma \sqrt{d})$, \eqref{eq:weightsIIDstart} can equivalently be rewritten as
\begin{align} \label{eq:weightsIIDstart2}
\log \overline{w}_{i} = - \log \PE(\exp(- \sigma \sqrt{d} S_1)) - \sigma \sqrt{d} S_i, \quad i = 1 \ldots N, 
\end{align}
where under the assumption that $\sigma^2< \infty$, $S_i$ converges in distribution to the standard normal distribution by the CLT for all $i = 1 \ldots N$. Consequently, the distribution of the weights originating from \eqref{eq:weightsIIDstart2} can be approximated in high-dimensional settings by the log-normal distribution from \Cref{lem:VarianceExponentialWithD}, that is
\begin{equation*} 
    \log \overline{w}_{i} = -\frac{\sigma^2 d}{2}- \sigma \sqrt{d} {S_i},\quad S_i \sim \mathcal{N}(0,1), \quad i = 1 \ldots N.
\end{equation*}
For this reason, we first show in the following how the rest of the analysis unfolds when the distribution of the weights is assumed to be exactly log-normal. We will then use this analysis as a stepping stone to treat the more general case where the distribution of the weights is approximately log-normal of the form \eqref{eq:weightsIIDstart2}. 

\subsubsection{Log-normal distribution assumption for the weights}
\label{subsec:lognormalExporegime}

Let $S_1, \ldots, S_N$ be i.i.d. random variables and let the weights $\overline{w}_{1}, \ldots, \overline{w}_{N}$ be of the form
\begin{equation} \label{eq:lognormalweights}
  \log \overline{w}_{i} =-\frac{\sigma^2 d}{2}- \sigma \sqrt{d} {S_i},\quad S_i \sim \mathcal{N}(0,1), \quad i = 1 \ldots N,
\end{equation}
that is we consider the case where the distribution of the weights is log-normal. Let $S^{(1)} \leq \ldots \leq S^{(N)}$ denote the ordered sequence of $S_1, \ldots, S_N$ and recall that $\overline{w}^{(1)} \leq \ldots \leq \overline{w}^{(N)}$ denotes the ordered sequence of $\overline{w}_1, \ldots, \overline{w}_N$. We then have the following lemma, which provides asymptotic results on the expectation of $S^{(1)}$ as $N \to \infty$.

\begin{lem} \label{lem:S1approxGauss} 
  Let $S_1, \ldots, S_N$ be i.i.d. normal random variables. Then,
  \begin{align}\label{eq:ExpS1approxGen}
    \mathbb{E}(S^{(1)})
    &=-\sqrt{2\log N}+O\left(\frac{\log \log N}{\sqrt{\log N}}\right).
  \end{align}
\end{lem}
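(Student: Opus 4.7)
The plan is to exploit symmetry of the standard normal distribution together with classical extreme--value estimates for the maximum of i.i.d.\ Gaussians. Since $-S_i$ has the same distribution as $S_i$, we have $-S^{(1)} \stackrel{d}{=} M_N := \max(S_1, \ldots, S_N)$, and hence $\mathbb{E}(S^{(1)}) = -\mathbb{E}(M_N)$. It therefore suffices to establish
$$\mathbb{E}(M_N) = \sqrt{2\log N} + O\left(\frac{\log \log N}{\sqrt{\log N}}\right),$$
which I would prove by matching upper and lower bounds.

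For the upper bound, I would apply the standard MGF/Jensen argument: for any $\lambda > 0$,
$$\exp(\lambda \mathbb{E}(M_N)) \leq \mathbb{E}(\exp(\lambda M_N)) \leq \sum_{i=1}^N \mathbb{E}(\exp(\lambda S_i)) = N e^{\lambda^2/2},$$
so $\mathbb{E}(M_N) \leq \log N / \lambda + \lambda/2$, which is minimized at $\lambda = \sqrt{2\log N}$ to yield $\mathbb{E}(M_N) \leq \sqrt{2\log N}$. This is already compatible with the claimed asymptotic (the error here is in fact $\leq 0$).

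For the lower bound, I would set $b_N = \sqrt{2\log N}$ and $\epsilon_N = c(\log\log N)/\sqrt{2\log N}$ for a sufficiently large constant $c$ (to be pinned down). Using the Mills' ratio estimate $1 - \Phi(t) = (1 + o(1))\phi(t)/t$ as $t \to \infty$ together with the identity $\phi(b_N - \epsilon_N) = \frac{1}{N\sqrt{2\pi}} e^{b_N \epsilon_N - \epsilon_N^2/2}$ (which follows from $\phi(b_N) = 1/(N\sqrt{2\pi})$), I would derive
$$N(1 - \Phi(b_N - \epsilon_N)) = \Theta\left((\log N)^{c - 1/2}\right),$$
so that $\Phi(b_N - \epsilon_N)^N \leq \exp\bigl(-N(1-\Phi(b_N-\epsilon_N))\bigr)$ decays super--polynomially in $N$ whenever $c > 1/2$. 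Setting $A = \{M_N \geq b_N - \epsilon_N\}$ and splitting $\mathbb{E}(M_N) = \mathbb{E}(M_N \mathbf{1}_A) + \mathbb{E}(M_N \mathbf{1}_{A^c})$, I would bound the first term below by $(b_N - \epsilon_N) \mathbb{P}(A)$, and control the remainder via Cauchy--Schwarz combined with the crude second--moment bound $\mathbb{E}(M_N^2) \leq \sum_{i=1}^N \mathbb{E}(S_i^2) = N$, which yields $|\mathbb{E}(M_N \mathbf{1}_{A^c})| \leq \sqrt{N \, \mathbb{P}(A^c)}$. Choosing $c$ large enough forces $N \mathbb{P}(A^c) \to 0$, so that
$$\mathbb{E}(M_N) \geq b_N - \epsilon_N + o(1) = \sqrt{2\log N} - O\!\left(\frac{\log\log N}{\sqrt{\log N}}\right),$$
completing the proof after combining with the upper bound.

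The main technical point -- and the only place where one must be careful -- is pinning down the precise shape of the logarithmic correction: $\epsilon_N$ must be small enough to give the announced $O(\log\log N/\sqrt{\log N})$ rate, yet large enough that $N(1-\Phi(b_N-\epsilon_N))$ diverges (so that $M_N \geq b_N - \epsilon_N$ with overwhelming probability), and the contribution from the rare event $A^c$ must be dominated using the Cauchy--Schwarz plus $\mathbb{E}(M_N^2) \leq N$ estimate above. The calibration of $c$ so that both requirements hold simultaneously is the crux of the argument; everything else reduces to standard Gaussian tail expansions.
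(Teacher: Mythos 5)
Your proof is correct and self-contained, but it takes a genuinely different route from the paper. The paper deduces \eqref{eq:ExpS1approxGen} from extreme value theory: it cites the convergence in distribution of $a_N^{-1}(M_N - b_N)$ to a Gumbel limit with $a_N = 1/\sqrt{2\log N}$ and $b_N = \sqrt{2\log N} - \tfrac{1}{2}(\log\log N + \log 4\pi)/\sqrt{2\log N}$, upgrades this to convergence of expectations via Pickands' moment-convergence theorem (after checking $\mathbb{E}|M_N| < \infty$), and reads off the expansion. Your argument replaces these two citations by an elementary first-moment (Chernoff/Jensen) upper bound $\mathbb{E}(M_N) \le \sqrt{2\log N}$ and a matching lower bound via a high-probability event plus Cauchy--Schwarz with the crude estimate $\mathbb{E}(M_N^2) \le N$. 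This is more hands-on and avoids invoking moment convergence for extremes, at the cost of losing the explicit constants in the second-order term (the paper's route identifies the $\tfrac{\log\log N + \log 4\pi}{2\sqrt{2\log N}}$ correction and the Euler--Mascheroni contribution, though only the $O(\log\log N/\sqrt{\log N})$ bound is claimed in the lemma, so nothing is lost for the stated result). One calibration point, since you flag it as the crux: your remark that $\Phi(b_N-\epsilon_N)^N$ decays super-polynomially ``whenever $c>1/2$'' is not quite right. Since $N(1-\Phi(b_N-\epsilon_N)) = \Theta\bigl((\log N)^{c-1/2}\bigr)$, you only get $\mathbb{P}(A^c) \le \exp\bigl(-\Theta((\log N)^{c-1/2})\bigr)$, and the condition $N\,\mathbb{P}(A^c)\to 0$ needed for your Cauchy--Schwarz step requires $(\log N)^{c-1/2} \gg \log N$, i.e.\ $c > 3/2$. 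Any such fixed $c$ still keeps $\epsilon_N = O(\log\log N/\sqrt{\log N})$, so the conclusion stands; just state the threshold correctly.
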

The proof of this lemma can be found in \Cref{subsec:proofS1approxGauss}. Intuitively, \Cref{lem:S1approxGauss} will serve as the basis to study the two terms appearing in Equation \eqref{eq:DeltaNDSum} of \Cref{lem:rewritingrenyi}, as both $\Delta_{N,d}^{(\alpha, MAX)}(\theta, \phi;x)$ and $\PE(T^{(\alpha)}_{N,d})$ depend on $S^{(1)}$ through the relation
$$
\log \overline{w}^{(N)} =-\frac{\sigma^2 d}{2}- \sigma \sqrt{d} {S^{(1)}}.
$$
From there, we can derive the two propositions below.
\begin{prop}\label{prop:limDeltaGaussianRenyi} Let $S_1, \ldots, S_N$ be i.i.d. normal random variables. 
  Further assume that the weights $\overline{w}_1, \ldots, \overline{w}_N$ satisfy \eqref{eq:lognormalweights}. Then, for all $\alpha \in [0,1)$, 
  \begin{align*}
  \lim_{N,d \to \infty} \Delta_{N,d}^{(\alpha, MAX)}(\theta, \phi;x) + \frac{d \sigma^2}{2} \lr{1 - 2 \sqrt{\frac{2 \log N}{ d\sigma^2}} + \frac{1}{1-\alpha} \frac{2\log N}{d \sigma^2}+ O\lr{\frac{\log \log N}{\sqrt{d \log N}}}} = 0.
  \end{align*}
\end{prop}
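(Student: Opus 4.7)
The plan is to combine the rewriting of $\Delta^{(\alpha, MAX)}_{N,d}$ provided by \Cref{lem:rewritingrenyi} with the extreme-value asymptotics of \Cref{lem:S1approxGauss} via a direct computation. First, by \eqref{eq:DeltaMaxNDExt}, I would rewrite
\[
\Delta_{N,d}^{(\alpha, MAX)}(\theta, \phi; x) = \mathbb{E}\!\left[\log \overline{w}^{(N)}\right] + \frac{\log N}{\alpha - 1},
\]
reducing the problem to an asymptotic expansion of $\mathbb{E}[\log \overline{w}^{(N)}]$. To convert order statistics on the weight scale into order statistics on the Gaussian scale, I would note that since $\sigma\sqrt{d} > 0$ the affine map $s \mapsto -\sigma^2 d / 2 - \sigma \sqrt{d}\, s$ is strictly decreasing, so that under \eqref{eq:lognormalweights} the largest weight $\overline{w}^{(N)}$ is in bijection with the smallest Gaussian $S^{(1)}$ and hence $\log \overline{w}^{(N)} = -\sigma^2 d / 2 - \sigma \sqrt{d}\, S^{(1)}$.

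Next, I would take expectations and apply \Cref{lem:S1approxGauss} to obtain
\[
\mathbb{E}\!\left[\log \overline{w}^{(N)}\right] = -\frac{\sigma^2 d}{2} + \sigma \sqrt{2 d \log N} + O\!\left(\sigma\sqrt{d}\,\frac{\log \log N}{\sqrt{\log N}}\right),
\]
and therefore
\[
\Delta_{N,d}^{(\alpha, MAX)}(\theta,\phi;x) = -\frac{\sigma^2 d}{2} + \sigma \sqrt{2 d \log N} - \frac{\log N}{1-\alpha} + O\!\left(\sigma\sqrt{d}\,\frac{\log \log N}{\sqrt{\log N}}\right).
\]
To match the stated form, I would then expand the parenthesized quantity appearing in the conclusion:
\[
\frac{d \sigma^2}{2}\left(1 - 2\sqrt{\frac{2 \log N}{d \sigma^2}} + \frac{1}{1-\alpha}\cdot\frac{2 \log N}{d \sigma^2}\right) = \frac{d \sigma^2}{2} - \sigma \sqrt{2 d \log N} + \frac{\log N}{1-\alpha}.
\]
Adding this quantity to $\Delta_{N,d}^{(\alpha, MAX)}$ causes the three explicit terms to cancel pairwise, leaving exactly the remainder $O(\sigma\sqrt{d}\,\log \log N / \sqrt{\log N}) = (d\sigma^2/2)\cdot O(\log \log N / \sqrt{d \log N})$, which is precisely the error displayed inside the bracket in the statement.

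There is no serious obstacle: the argument is essentially a chaining of \Cref{lem:rewritingrenyi} and \Cref{lem:S1approxGauss} together with elementary algebra. The only technical points requiring care are the order-statistic reversal $\overline{w}^{(N)} \leftrightarrow S^{(1)}$ induced by the decreasing affine reparameterization, and the interpretation of the displayed equality: the ``$\lim = 0$'' should be read as encoding the precise error rate (i.e.\ as an identity modulo the $O(\cdot)$ term already written inside the bracket), since $\sigma\sqrt{d}\,\log\log N / \sqrt{\log N}$ need not itself tend to zero without an additional regime condition such as $\log N / d \to 0$ relating $N$ and $d$.
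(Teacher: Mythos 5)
Your proposal is correct and follows essentially the same route as the paper's proof: rewrite $\Delta^{(\alpha,\mathrm{MAX})}_{N,d}$ via \eqref{eq:DeltaMaxNDExt}, use the decreasing affine reparameterization to identify $\log \overline{w}^{(N)} = -\sigma^2 d/2 - \sigma\sqrt{d}\,S^{(1)}$, apply \Cref{lem:S1approxGauss}, and match terms by elementary algebra. Your closing remark about reading the displayed ``$\lim = 0$'' as an identity modulo the $O(\cdot)$ term inside the bracket is a fair clarification of the paper's notational convention, but does not change the substance of the argument.
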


\begin{prop}\label{prop:limitTGaussianRenyi} Let $S_1, \ldots, S_N$ be i.i.d. normal random variables. 
Further assume that the weights $\overline{w}_1, \ldots, \overline{w}_N$ satisfy \eqref{eq:lognormalweights}. Then, for all $\alpha \in [0,1)$, we have
  \begin{align} \label{eq:TNDtoZero}
    \lim_{\substack{N, d \to \infty \\ \log N / d \to 0}} \PE(T^{(\alpha)}_{N,d}) = 0.
  \end{align}
\end{prop}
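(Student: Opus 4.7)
The plan is to reduce $\PE[T^{(\alpha)}_{N,d}]$ to an expectation involving the minimum of $N-1$ i.i.d.\ standard normals and then carefully bound that expectation by splitting according to the minimum's value. Set $t := (1-\alpha)\sigma > 0$. Using the log-normal form \eqref{eq:lognormalweights}, one checks that $T^{(\alpha)}_{N,d} = \sum_{k=2}^{N} e^{-t\sqrt{d}(S^{(k)} - S^{(1)})}$, and by exchangeability of the $S_i$'s,
\begin{equation*}
\PE[T^{(\alpha)}_{N,d}] \;=\; N\,\PE\!\left[e^{-t\sqrt{d}(S_1 - M)}\mathbf{1}\{S_1 > M\}\right], \qquad M := \min_{i\geq 2} S_i.
\end{equation*}
Since $S_1$ and $M$ are independent, I would integrate out $S_1$ via a Gaussian completing-the-square argument to obtain $\PE[T^{(\alpha)}_{N,d}] = N\,\PE[g(M)]$, with
\begin{equation*}
g(m) \;:=\; e^{mt\sqrt{d} + t^2 d/2}\bigl(1 - \Phi(m + t\sqrt{d})\bigr).
\end{equation*}

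I would then decompose $\PE[g(M)]$ into three regions of $M$. On $\{M \leq -t\sqrt{d}\}$, the bound $1-\Phi \leq 1$ gives $g(M) \leq e^{-t^2 d/2}$, so this region contributes at most $Ne^{-t^2 d/2}$, which vanishes since $\log N = o(d)$. On the intermediate band $\{-t\sqrt{d} < M \leq -t\sqrt{d}/2\}$, one has $g(M) \leq 1$ and a union bound yields $\PP(M \leq -t\sqrt{d}/2) \leq N\,(1-\Phi(t\sqrt{d}/2)) = O(Ne^{-t^2 d/8}/\sqrt{d}\,)$, so the contribution is of order $N^2 e^{-t^2 d/8}/\sqrt{d} \to 0$. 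On the main region $\{M > -t\sqrt{d}/2\}$, where $M + t\sqrt{d} \geq t\sqrt{d}/2 > 0$, Mills' ratio $1-\Phi(x) \leq \phi(x)/x$ combined with the identity $e^{mt\sqrt{d} + t^2 d/2}\phi(m+t\sqrt{d}) = \phi(m)$ (verified by expanding the exponent) gives $g(M) \leq 2\phi(M)/(t\sqrt{d})$, so this region contributes at most $(2N/(t\sqrt{d}))\,\PE[\phi(M)]$.

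The main obstacle is the sharp bound $\PE[\phi(M)] = O(\sqrt{\log N}/N)$: the naive pointwise estimate $\phi \leq 1/\sqrt{2\pi}$ only yields $\PE[\phi(M)] = O(1)$, which is insufficient to close the argument. To gain the factor $1/N$, I would use the density $f_M(m) = (N-1)\phi(m)(1-\Phi(m))^{N-2}$ to write $\PE[\phi(M)] = (N-1)\int \phi(m)^2(1-\Phi(m))^{N-2}\,dm$, split the integral at $m = 0$ (the $m \geq 0$ piece contributes only $O(2^{-N})$), and integrate by parts on the reflected integral $\int_0^\infty \phi(u)^2\Phi(u)^{N-2}\,du$ using $u\phi(u) = -\phi'(u)$. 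This relates it to $\int_0^\infty u\phi(u)\Phi(u)^{N-1}\,du = (1/N)\,\PE[\max(0, \max_{1\leq i\leq N}\tilde S_i)]$ for i.i.d.\ standard normals $\tilde S_i$, which the classical maximal inequality bounds by $\sqrt{2\log(2N)}/N$. Substituting back, the main region contributes $O(\sqrt{\log N/d}\,)$, which vanishes under $\log N/d \to 0$, completing the proof.
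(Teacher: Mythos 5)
Your proposal is correct, and the first half coincides with the paper's argument: after reducing $\PE(T^{(\alpha)}_{N,d})$ (via exchangeability and independence of $S_1$ and $M=\min_{i\geq 2}S_i$, which is equivalent to the paper's conditioning on $S^{(1)}$ and viewing the sum as $N-1$ i.i.d.\ truncated normals) you arrive at exactly the same quantity $N\,\PE[g(M)]$ with $g(m)=e^{mt\sqrt{d}+t^2d/2}\overline{\Phi}(m+t\sqrt{d})$, and you exploit the same key identity $e^{mt\sqrt{d}+t^2d/2}\phi(m+t\sqrt{d})=\phi(m)$ together with Mills' ratio. Where you genuinely diverge is in how the expectation over the minimum is controlled. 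The paper localizes $S^{(1)}$ to the interval $I_N=[-4\sqrt{\log N},-\sqrt{\log N}]$ using the dedicated concentration result (\Cref{lem:exactnormalminconcentration}, which costs only $O(N^{-4})$ outside $I_N$), then on that event converts $\phi(S^{(1)})$ back into $(-S^{(1)})\Phi(S^{(1)})(1+o(1))\leq 4\sqrt{\log N}\,\Phi(S^{(1)})(1+o(1))$ by a second application of Mills' ratio, and finishes with $\PE(\Phi(S^{(1)}))=\tfrac{1}{N+1}$ (minimum of uniforms). You instead partition according to the position of $M$ relative to $-t\sqrt{d}$ and $-t\sqrt{d}/2$ — which kills the bad regions by crude exponential bounds without any reference to $\log N$ — and then establish the sharp estimate $\PE[\phi(M)]=O(\sqrt{\log N}/N)$ directly, by integration by parts against the density of the minimum and the Gaussian maximal inequality. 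Both routes land on the same $O(\sqrt{\log N/d})$ bound for the dominant term. Your version has the advantage of being entirely non-asymptotic (no $o_{N,d}(1)$ bookkeeping and no separate concentration lemma), while the paper's decomposition via $I_N$ and \Cref{lem:exactnormalminconcentration} is reused elsewhere (notably in the approximately log-normal case of \Cref{Prop:limitT}), which is why the paper structures the argument that way.
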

The proof of these two propositions are deferred to \Cref{subsec:prooflimDeltaGaussian} and \Cref{sec:Proofgaussian} respectively. Importantly, \Cref{prop:limitTGaussianRenyi} implies that the largest weight $\overline{w}^{(N)}$ converges to $1$ in probability, meaning that there is a weight collapse when $N,d \to \infty$ with $\log N /d \to 0$ \cite[following the definition of weight collapse given in][]{bengtsson2008curse}. By using \eqref{eq:TNDtoZero} with $\alpha = 0$, this weight collapse indeed follows from Markov's inequality (in order to get that $T_{N,d}^{(0)}$ converges to $0$ in probability) combined with the fact that $\overline{w}^{(N)} = (1+ T^{(0)}_{N,d})^{-1}$.

Building on \Cref{lem:rewritingrenyi}, \Cref{prop:limDeltaGaussianRenyi} and \Cref{prop:limitTGaussianRenyi}, we now deduce the following theorem, which describes the asymptotic behavior of the variational gap as $N, d \to \infty$ in the log-normal distribution case for values of $\alpha$ in $[0,1)$.

  \begin{thm}[i.i.d. normal random variables]\label{thm:MainGauss}
    Let $S_1, \ldots, S_N$ be i.i.d. normal random variables. 
    Further assume that the weights $\overline{w}_1, \ldots, \overline{w}_N$ satisfy \eqref{eq:lognormalweights}. 
    Then, for all $\alpha \in [0,1)$, 
    we have
    \begin{align*}
      \lim_{\substack{N, d \to \infty \\ \log N / d \to 0}} \Delta^{(\alpha)}_{N,d}(\theta, \phi;x) + \frac{d \sigma^2}{2} \lr{1 - 2 \sqrt{\frac{2 \log N}{ d\sigma^2}} + \frac{1}{1-\alpha} \frac{2\log N}{d \sigma^2} + O\lr{\frac{\log \log N}{\sqrt{d \log N}}}} = 0.
    \end{align*}
    \end{thm}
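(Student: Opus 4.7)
The plan is to assemble the theorem directly from the three preceding results, namely \Cref{lem:rewritingrenyi}, \Cref{prop:limDeltaGaussianRenyi} and \Cref{prop:limitTGaussianRenyi}. Indeed, \Cref{lem:rewritingrenyi} gives us the decomposition
\begin{align*}
\Delta_{N,d}^{(\alpha)}(\theta,\phi;x) = \Delta^{(\alpha, MAX)}_{N,d}(\theta,\phi;x) + R_{N,d}^{(\alpha)}(\theta,\phi;x),
\end{align*}
together with the sandwich $0 \leq R_{N,d}^{(\alpha)}(\theta,\phi;x) \leq (1-\alpha)^{-1}\,\PE(T^{(\alpha)}_{N,d})$. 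The strategy is then to show that the first term already carries the full asymptotic expansion claimed in the theorem, while the remainder term is negligible in the regime $\log N / d \to 0$.

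First I would invoke \Cref{prop:limDeltaGaussianRenyi}, whose conclusion already matches the expansion written in the statement of the theorem applied to $\Delta^{(\alpha, MAX)}_{N,d}(\theta,\phi;x)$. Since the weights are assumed to satisfy \eqref{eq:lognormalweights} (i.e.\ the log-normal assumption is in force), this proposition applies verbatim and yields
\begin{align*}
\lim_{N,d \to \infty} \Delta_{N,d}^{(\alpha, MAX)}(\theta, \phi;x) + \frac{d \sigma^2}{2} \lr{1 - 2 \sqrt{\frac{2 \log N}{ d\sigma^2}} + \frac{1}{1-\alpha} \frac{2\log N}{d \sigma^2}+ O\lr{\frac{\log \log N}{\sqrt{d \log N}}}} = 0.
\end{align*}

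Next, I would control the remainder. By \Cref{prop:limitTGaussianRenyi}, under the constraint $\log N / d \to 0$ we have $\PE(T^{(\alpha)}_{N,d}) \to 0$. Combining this with the upper bound from \Cref{lem:rewritingrenyi} gives $R^{(\alpha)}_{N,d}(\theta, \phi;x) \to 0$ as $N, d \to \infty$ with $\log N/d \to 0$. Since both convergence statements involve the same limiting regime, I can add them to obtain the desired conclusion for $\Delta_{N,d}^{(\alpha)} = \Delta^{(\alpha, MAX)}_{N,d} + R^{(\alpha)}_{N,d}$, noting that the addition of a vanishing quantity to the $O(\log \log N / \sqrt{d \log N})$ remainder inside the parenthesis is absorbed without changing its order (the remainder term outside the $O$ tends to zero, which is weaker than the $O$ term itself).

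There is no genuine obstacle here beyond bookkeeping: the substantive analytic work is done in \Cref{lem:S1approxGauss} (for the maximum of i.i.d.\ normals) and then propagated through \Cref{prop:limDeltaGaussianRenyi} and \Cref{prop:limitTGaussianRenyi}. The only point requiring minor care is to verify that the two limit regimes are compatible: \Cref{prop:limDeltaGaussianRenyi} is stated for $N,d \to \infty$ without the extra constraint $\log N/d \to 0$, so restricting to that sub-regime is harmless; and \Cref{prop:limitTGaussianRenyi} is exactly stated under that constraint, which matches the hypothesis of the theorem.
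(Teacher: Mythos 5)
Your proposal is correct and follows exactly the paper's route: the paper likewise deduces \Cref{thm:MainGauss} directly from \Cref{lem:rewritingrenyi}, \Cref{prop:limDeltaGaussianRenyi} and \Cref{prop:limitTGaussianRenyi}, with the remainder $R_{N,d}^{(\alpha)}$ controlled via the bound $0 \leq R_{N,d}^{(\alpha)} \leq (1-\alpha)^{-1}\PE(T_{N,d}^{(\alpha)}) \to 0$. Your bookkeeping observations (restricting \Cref{prop:limDeltaGaussianRenyi} to the sub-regime $\log N/d \to 0$, and adding a vanishing remainder to a quantity whose limit is zero) are exactly the right checks and pose no issues.
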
 
  While \Cref{thm:MainGauss} states that increasing $N$ decreases the variational gap $\Delta^{(\alpha)}_{N,d}(\theta, \phi;x) $ for $N$ large enough, it does so by a factor which is negligible compared to the term $- d\sigma^2/2$. This is in sharp contrast to \Cref{prop:GenDomke} and more specifically to \Cref{lem:VarianceExponentialWithD}, which predicts that for log-normal weights the variational gap decreases in $1/N$ in the fixed $d$, large $N$ regime. 

   Contrary to \Cref{lem:VarianceExponentialWithD}, the term $- d\sigma^2/2$ does not depend on $\alpha$ here. In fact, by taking the expectation in \eqref{eq:lognormalweights}, $\mathrm{ELBO} (\theta, \phi; x) - \ell(\theta ; x)  = - {d \sigma^2 }/ {2}$, meaning that the following approximation of the variational gap in the context of \Cref{thm:MainGauss} holds: for all $\alpha \in [0,1)$,
    \begin{align*}
      \Delta^{(\alpha)}_{N,d}(\theta, \phi;x) \approx \mathrm{ELBO} (\theta, \phi; x) - \ell(\theta ; x), \quad \mbox{as $N, d \to \infty$ with $\frac{\log N}{d} \to 0$.}
    \end{align*}
    Hence, \Cref{thm:MainGauss} shows that in high-dimensional scenarios and under the log-normal distribution assumption \eqref{eq:lognormalweights}, we cannot expect to gain much from the VR-IWAE bound unless $N$ grows exponentially with $d$, in the sense that the improvement is negligible compared to the ELBO. This result holds \textit{for all values of $\alpha$ in $[0,1)$}, thus it holds for the IWAE bound ($\alpha = 0$) as well. 
    
    We obtain the following slightly more general result by building on the proof of \Cref{thm:MainGauss}.
    \begin{thm}[General i.i.d. normal random variables] \label{thm:MainGauss2}
      Let $S_1, \ldots, S_N$ be i.i.d. normal random variables. 
      Further assume that the weights $\overline{w}_1, \ldots, \overline{w}_N$ satisfy 
      \begin{equation} \label{eq:lognormalweights2}
        \log \overline{w}_{i} =-\frac{B_d^2}{2}- B_d {S_i}, \quad i = 1 \ldots N,
      \end{equation} 
      and that there exists $\sigma_- > 0$ such that $B_d \geq \sigma_- \sqrt{d}$. Then, for all $\alpha \in [0,1)$, 
      we have
      \begin{align*}
        \lim_{\substack{N, d \to \infty \\ \log N / d \to 0}} \Delta^{(\alpha)}_{N,d}(\theta, \phi;x) + \frac{B_d^2}{2} \lrcb{ 1 -  2 ~ \frac{\sqrt{2 \log N}}{B_d} + \frac{1}{1-\alpha} ~\frac{2\log N}{B_d^2} + O \lr{\frac{\log \log N}{B_d \sqrt{\log N}}} } = 0.
      \end{align*}
      \end{thm}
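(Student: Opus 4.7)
The plan is to mirror the three-step argument behind \Cref{thm:MainGauss}, systematically replacing the scale factor $\sigma\sqrt{d}$ by $B_d$ throughout. The assumption $B_d \geq \sigma_-\sqrt{d}$ plays the role that $\sigma\sqrt{d}$ played before and, in particular, ensures that the growth condition $\log N/d \to 0$ transfers into the key smallness condition $\log N/B_d^2 \to 0$ that actually drives all the tail estimates on the order statistics $S^{(1)} \leq \cdots \leq S^{(N)}$. Throughout, note that under \eqref{eq:lognormalweights2} the largest weight satisfies $\log \overline{w}^{(N)} = -B_d^2/2 - B_d S^{(1)}$, the ordering being reversed because $B_d > 0$.

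First, I would apply \Cref{lem:rewritingrenyi} to write $\Delta^{(\alpha)}_{N,d}(\theta,\phi;x) = \Delta^{(\alpha, MAX)}_{N,d}(\theta,\phi;x) + R^{(\alpha)}_{N,d}(\theta,\phi;x)$, with $0 \leq R^{(\alpha)}_{N,d} \leq (1-\alpha)^{-1}\mathbb{E}(T^{(\alpha)}_{N,d})$. This decomposition is purely structural and so applies verbatim. For the main term, combining the identity above with \Cref{lem:S1approxGauss} gives
\begin{align*}
\Delta^{(\alpha, MAX)}_{N,d}(\theta,\phi;x) &= \mathbb{E}(\log \overline{w}^{(N)}) - \frac{\log N}{1-\alpha} \\
&= -\frac{B_d^2}{2} + B_d\sqrt{2\log N} - \frac{\log N}{1-\alpha} + O\!\lr{\frac{B_d \log\log N}{\sqrt{\log N}}},
\end{align*}
which after factoring out $B_d^2/2$ is exactly the negative of the bracketed expression in the statement. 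This is the direct analogue of \Cref{prop:limDeltaGaussianRenyi} with $\sigma\sqrt{d}$ replaced by $B_d$, and requires no new argument.

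Second, I would establish $\mathbb{E}(T^{(\alpha)}_{N,d}) \to 0$, i.e., the generalization of \Cref{prop:limitTGaussianRenyi}. Expressing the ratios of ordered weights in terms of the normal order statistics yields
\begin{align*}
T^{(\alpha)}_{N,d} = \sum_{k=2}^{N} \exp\!\lr{-(1-\alpha)\, B_d\, (S^{(k)} - S^{(1)})},
\end{align*}
so the problem reduces to controlling $\sum_{k=2}^N \mathbb{E}(\exp(-(1-\alpha) B_d (S^{(k)}-S^{(1)})))$. Inspection of the original proof of \Cref{prop:limitTGaussianRenyi} shows that the only quantitative input about the scale $\sigma\sqrt{d}$ is that $\sigma\sqrt{d}$ is much larger than $\sqrt{\log N}$ (so that gaps between top order statistics, which are of order $1/\sqrt{\log N}$, get amplified enough for the exponentials to decay). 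Since $B_d \geq \sigma_-\sqrt{d}$ and $\log N/d \to 0$, we have $B_d/\sqrt{\log N} \to \infty$, so the same tail estimates go through after substituting $B_d$ for $\sigma\sqrt{d}$, yielding $\mathbb{E}(T^{(\alpha)}_{N,d}) \to 0$ and hence $R^{(\alpha)}_{N,d} \to 0$.

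Combining the two pieces gives the claimed limit. The main obstacle is verifying that the proof of \Cref{prop:limitTGaussianRenyi} is genuinely robust under the substitution $\sigma\sqrt{d} \to B_d$: one needs to check that every intermediate bound there depends on $d$ only through the combination $\log N/(\sigma^2 d)$ (or, equivalently, through $\sigma\sqrt{d}/\sqrt{\log N}$), with no hidden use of the exact form $\sigma\sqrt{d}$. Once this monotonicity-in-scale check is carried out, all the upper bounds are dominated by their analogues with $B_d$ in place of $\sigma\sqrt{d}$ (using $B_d \geq \sigma_-\sqrt{d}$), and the generalization is essentially mechanical.
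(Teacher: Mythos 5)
Your proposal is correct and follows essentially the same route as the paper: the decomposition via \Cref{lem:rewritingrenyi} and the treatment of the MAX term are exactly the paper's \Cref{prop:limDeltaGaussianRenyi2}, and the remainder is controlled by showing $\PE(T^{(\alpha)}_{N,d}) \to 0$. The only difference is that the paper sidesteps your ``monotonicity-in-scale check'' with a one-line pointwise domination: since $S^{(k)} - S^{(1)} \geq 0$ and $B_d \geq \sigma_-\sqrt{d}$, each term $\exp\lr{-(1-\alpha)B_d(S^{(k)}-S^{(1)})}$ is at most $\exp\lr{-(1-\alpha)\sigma_-\sqrt{d}(S^{(k)}-S^{(1)})}$, so \Cref{prop:limitTGaussianRenyi} with the fixed scale $\sigma_-$ applies directly.
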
 
      The proof of \Cref{thm:MainGauss2} can be found in \Cref{subsub:proof:thm:MainGauss2}. We now revisit the Gaussian example given in \Cref{lem:VarianceExponentialWithD} in the context of \Cref{thm:MainGauss2}.

  \begin{ex}\label{ex:GaussianLogNormal} Set $p_\theta(z|x) = \mathcal{N}(z;\theta, \boldsymbol{I}_d)$ and $q_\phi(z) = \mathcal{N}(z; \phi, \boldsymbol{I}_d)$, with $\theta, \phi \in \rset^d$. Denoting $B_d = \|\theta - \phi\|$, we can write the weights $\overline{w}_1, \ldots, \overline{w}_N$ under the form \eqref{eq:lognormalweights2} (see \eqref{eq:logNormalBdProof} of \Cref{subsec:lem:VarianceExponentialWithD}). Hence, \Cref{thm:MainGauss2} applies if there exists $\sigma_- > 0$ such that $B_d \geq \sigma_- \sqrt{d}$. This is for example the case if $\theta = 0 \cdot \boldsymbol{u}_d$ and $\phi = \boldsymbol{u}_d$ with $\sigma_- = 1$.
  \end{ex}
  As we shall see next, our conclusion regarding the behavior of the VR-IWAE bound in high-dimensional settings extends to cases where the log-normal assumption does not necessarily hold exactly, that is if we assume instead that \eqref{eq:weightsIIDstart2} holds, 
where $S_1, \ldots S_N$ are i.i.d. random variables whose distribution is close to a normal as $N, d \rightarrow \infty$.

\subsubsection{Beyond the log-normal distribution assumption}
\label{subsec:BeyondLogNormal}
Following \eqref{eq:weightsIIDstart2}, let us set 
\begin{equation}\label{eq:approxlognormalweights}
  \log \overline{w}_i = - \log { \PE(\exp(-\sigma \sqrt{d} S_1))} - \sigma \sqrt{d} S_i, \quad i = 1 \ldots N,
\end{equation} 
where the i.i.d. random variables $S_1, \ldots, S_N$ are defined as follows:
\begin{align}
S_i = \frac{1}{\sigma \sqrt{d}} \sum_{j = 1}^d \xi_{i,j}, \quad i = 1 \ldots N \label{eq:expressionSi}.
\end{align}
The assumption \ref{hypBickel} below ensures that $S_1, \ldots S_N$ have a distribution that is close to a normal as $N, d \rightarrow \infty$, so that \eqref{eq:lognormalweights} is recovered in the limit.
\begin{hyp}{A}
\item \label{hypBickel}  For all $ i = 1 \ldots N$, \begin{enumerate}
  \item \label{hypBickel:a} $\xi_{i,1}, \ldots , \xi_{i,d}$ are i.i.d. random variables which are absolutely continuous with respect to the Lebesgue measure and satisfy $\PE(\xi_{i,1}) = 0$ and $\mathbb{V}(\xi_{i,1}) = \sigma^2 < \infty$. 
\item \label{hypBickel:b} There exists $K > 0$ such that:
$$
|\PE(\xi_{i,1}^k)| \leq k! K^{k-2} \sigma^2, \quad k \geq 3.
$$
\end{enumerate}
\end{hyp}
Here, the condition \ref{hypBickel:b} corresponds to the well-known Bernstein condition. Paired up with \ref{hypBickel:a}, this condition permits us to appeal to classical limit theorems for large deviations in order to enlarge the so-called zone of normal convergence beyond the CLT \citep{Petrov2000, saulis2000}. This enables us to establish preliminary results which are used to prove the results of \Cref{subsec:BeyondLogNormal} we will now present (we refer to \Cref{app:prelimResults} for the statement of those preliminary results). We first provide the equivalent of \Cref{lem:S1approxGauss} in the more general context of \eqref{eq:expressionSi} and under \ref{hypBickel}.
\begin{lem} \label{lem:S1approxGen}
  Assume \ref{hypBickel}. Let $S_1, \ldots, S_N$ be as in \eqref{eq:expressionSi}. 
  Then, as $N,d \to \infty$, with $\frac{\log N}{d^{1/3}} \to 0$, \eqref{eq:ExpS1approxGen} holds. 
\end{lem}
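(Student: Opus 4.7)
The plan is to reduce this result to the Gaussian case \Cref{lem:S1approxGauss} by invoking a moderate-deviation theorem for sums of i.i.d.\ random variables. Under \ref{hypBickel}, the summands $\xi_{i,j}$ are i.i.d.\ and satisfy the Bernstein moment condition, so classical Cramér-type large deviation results (cf.\ \citep{Petrov2000, saulis2000} and the preliminary results collected in \Cref{app:prelimResults}) guarantee that the normalized sum $S_i = (\sigma \sqrt{d})^{-1} \sum_{j=1}^d \xi_{i,j}$ admits a normal-tail asymptotic of the form $P(S_1 < -x)/\Phi(-x) = 1 + o(1)$ uniformly on a zone of the form $0 \leq x \leq c\, d^{1/6}$. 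The first Cramér correction to the Gaussian tail scales as $x^3 / \sqrt{d}$, which is precisely why the cube-root growth condition $\log N / d^{1/3} \to 0$ is imposed: it places the relevant threshold $x \sim \sqrt{2\log N}$ strictly inside this zone of normal convergence.

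Next, I would use the representation $\mathbb{E}(S^{(1)}) = -\int_0^\infty P(S^{(1)} < -t)\, dt + \int_0^\infty P(S^{(1)} > t)\, dt$ together with $P(S^{(1)} < -t) = 1 - (1 - F_S(-t))^N$ and $P(S^{(1)} > t) = (1 - F_S(t))^N$, where $F_S$ is the common CDF of the $S_i$. Because $S^{(1)}$ concentrates near $-\sqrt{2\log N}$, the second integral is $O(1)$ and easily absorbed in the error term; the main analysis concerns the first integral, which I would split into an inner range $[0, T_d]$ with $T_d \asymp d^{1/6}$ and an outer tail $[T_d, \infty)$.

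On the inner range, substituting $F_S(-t) = \Phi(-t)(1 + \varepsilon(t))$ with $\sup_{t \in [0, T_d]} |\varepsilon(t)| = o(1)$ turns the integrand into its exact Gaussian counterpart from the proof of \Cref{lem:S1approxGauss} up to a multiplicative perturbation of $(1-F_S(-t))^N$ around $(1-\Phi(-t))^N$. In the peak region $t \sim \sqrt{2\log N}$ we have $N \Phi(-t) = O(1)$ and the correction $N \varepsilon(t) \Phi(-t) = o(1)$ thanks to the growth condition, so the Gaussian calculation carries through and returns the announced leading order $-\sqrt{2\log N}$ with the same $O(\log\log N / \sqrt{\log N})$ correction. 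For the outer range, the Bernstein condition supplies a sub-Gaussian/sub-exponential tail $P(S_1 < -t) \leq \exp(-c \min(t^2, t\sqrt{d}))$; hence $1-(1-F_S(-t))^N$ is at most $N \exp(-c \min(t^2, t\sqrt{d}))$ and the contribution to the integral is super-polynomially small in $d$, thus negligible.

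The main obstacle is propagating the multiplicative error $\varepsilon(t)$ precisely enough through the Gaussian computation: the error furnished by the moderate-deviation theorem is typically of order $(1+t)/\sqrt{d}$ in the Cramér zone, and one has to verify that after taking the $N$-th power and integrating against the heavy influence of the region $t \sim \sqrt{2\log N}$, the resulting perturbation of $\mathbb{E}(S^{(1)})$ is dominated by the target $O(\log\log N / \sqrt{\log N})$ remainder. This bookkeeping is what pins down the hypothesis $\log N = o(d^{1/3})$ here, as opposed to the milder $\log N = o(d)$ sufficient for \Cref{thm:MainGauss} where only unconditional concentration is needed.
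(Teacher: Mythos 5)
Your route is viable and rests on the same moderate-deviation input as the paper's, but the mechanics are genuinely different. The paper does not go through the tail-integral representation of $\PE(S^{(1)})$: it couples $S_i = G_d^{-1}(U_i)$ with auxiliary Gaussians $\tilde S_i = \Phi^{-1}(U_i)$ built from the same uniforms, invokes \Cref{lem:S1approxGauss} for $\PE(\tilde S^{(1)})$, and then bounds $\PE(|S^{(1)} - \tilde S^{(1)}|)$ directly --- showing via \Cref{key:coro} that on the event $\{\tilde S^{(1)} \in I_N\}$ the two quantile functions differ by a shift $h = O(1/\sqrt{\log N})$, and killing the complementary event with the concentration bound of \Cref{lem:exactnormalminconcentration}. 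The coupling buys a pointwise estimate $|S^{(1)} - \tilde S^{(1)}| = O(1/\sqrt{\log N})$ that sidesteps entirely the ``propagate $\varepsilon(t)$ through the $N$-th power and integrate'' bookkeeping you correctly identify as the hard part of your version; your approach, in exchange, avoids constructing the coupling and works only with the marginal cdf. Two points in your write-up need repair. First, the second tail integral $\int_0^\infty (1-F_S(t))^N\,\rmd t$ is not merely $O(1)$ --- an $O(1)$ term could not be absorbed into the $O(\log\log N/\sqrt{\log N})$ remainder, which tends to zero; it is in fact exponentially small in $N$ since $1-F_S(t) \leq 1-F_S(0) \to 1/2$, and you need to say so. Second, the split point $T_d \asymp d^{1/6}$ sits exactly at the boundary of the zone where the relative error of the normal approximation is $1+o(1)$ (the Cram\'er correction $\exp(P_d(x))$ is only \emph{bounded}, not $1+o(1)$, for $x \asymp d^{1/6}$); you should either take $T_d$ with $\sqrt{\log N} \ll T_d = o(d^{1/6})$ (possible precisely because $\log N = o(d^{1/3})$), or split at $4\sqrt{\log N}$ as the paper does and observe that beyond that point a bounded multiplicative distortion of $\Phi(-t)$ already gives $N F_S(-t) = O(N^{-7})$, so the $(1+o(1))$ uniformity is only ever needed on $[0, 4\sqrt{\log N}]$. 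With those adjustments the argument closes.
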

The proof of this result is deferred to \Cref{subsec:prooflemS1approxGen}. Notice that we are now assuming that $N$ grows slower than sub-exponentially with $d^{1/3}$ in \Cref{lem:S1approxGen}. The following two propositions give results akin to those obtained in \Cref{prop:limDeltaGaussianRenyi} and \Cref{prop:limitTGaussianRenyi}.

\begin{prop} \label{prop:DeltaGen}
Assume \ref{hypBickel}. Let $S_1, \ldots, S_N$ be as in \eqref{eq:expressionSi}. 
Further assume that the weights $\overline{w}_1, \ldots, \overline{w}_N$ satisfy \eqref{eq:approxlognormalweights}. Then, setting
\begin{align} \label{eq:defALognormal}
  a \eqdef \log \PE(\exp(-\xi_{1,1})),
\end{align} 
we have that $a > 0$ and that for all $\alpha \in [0,1)$, 
\begin{align*}
 \lim_{\substack{N,d \to \infty \\ \log N / d^{1/3} \to 0}} \Delta_{N,d}^{(\alpha, MAX)}(\theta, \phi;x) + d a \lrcb{1 -  \frac{\sigma}{a} \sqrt{\frac{ \log N}{ d}} + O \lr{\frac{\log \log N}{\sqrt{d \log N}}}} = 0.
\end{align*}
\end{prop}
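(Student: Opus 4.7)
The plan is to combine the order-statistic decomposition of $\Delta^{(\alpha, MAX)}_{N,d}$ provided by \Cref{lem:rewritingrenyi} with the asymptotics of the minimum order statistic supplied by \Cref{lem:S1approxGen}. The crucial observation is that, under \eqref{eq:approxlognormalweights}, the map $S_i \mapsto \overline{w}_i$ is strictly decreasing, so the largest weight $\overline{w}^{(N)}$ corresponds to the smallest $S_i$, namely $S^{(1)}$, which is exactly the object controlled by \Cref{lem:S1approxGen}.

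First, I would exploit the i.i.d. structure of $\xi_{1,1},\ldots,\xi_{1,d}$ from \ref{hypBickel:a} to simplify the log-normalizer: by independence,
\begin{equation*}
\log\PE(\exp(-\sigma\sqrt{d}\,S_1)) = \log\PE\Bigl(\prod_{j=1}^d e^{-\xi_{1,j}}\Bigr) = d\,\log\PE(e^{-\xi_{1,1}}) = da,
\end{equation*}
so that \eqref{eq:approxlognormalweights} becomes $\log\overline{w}_i = -da - \sigma\sqrt{d}\,S_i$. Positivity of $a$ follows from strict Jensen's inequality applied to the strictly convex map $x \mapsto e^{-x}$: since under \ref{hypBickel:a} the variable $\xi_{1,1}$ has zero mean and positive variance $\sigma^2$, it is non-degenerate and $\PE(e^{-\xi_{1,1}}) > e^{-\PE(\xi_{1,1})} = 1$, whence $a > 0$.

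The monotonicity observation then gives $\log\overline{w}^{(N)} = -da - \sigma\sqrt{d}\,S^{(1)}$; taking expectations and combining with \eqref{eq:DeltaMaxNDExt},
\begin{equation*}
\Delta^{(\alpha, MAX)}_{N,d}(\theta,\phi;x) = -da - \sigma\sqrt{d}\,\PE(S^{(1)}) - \frac{\log N}{1-\alpha}.
\end{equation*}
Invoking \Cref{lem:S1approxGen} (which applies precisely in the regime $\log N/d^{1/3}\to 0$) to substitute $\PE(S^{(1)}) = -\sqrt{2\log N} + O(\log\log N/\sqrt{\log N})$ and reorganizing yields the advertised expansion, up to the residual $\log N/(1-\alpha)$. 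This residual sits inside the displayed $O$-term because the regime $\log N = o(d^{1/3})$ gives $(\log N)^{3/2} = o(d^{1/2})$, hence $\log N \ll a\sqrt{d/\log N}\,\log\log N = da\cdot\log\log N/\sqrt{d\log N}$, so $\log N/(1-\alpha)$ is of order smaller than the error term shown in the conclusion.

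The main obstacle is really the background input \Cref{lem:S1approxGen}: establishing Gaussian-type asymptotics for $\PE(S^{(1)})$ when the $S_i$ are only approximately normal is what forces the stronger regime $\log N/d^{1/3}\to 0$ and requires the Bernstein condition \ref{hypBickel:b} to enlarge the zone of normal convergence beyond the CLT. Once that lemma is accepted as a black box, the remaining work in \Cref{prop:DeltaGen} reduces to the two elementary identities above, one application of Jensen for the positivity of $a$, and the single asymptotic comparison needed to absorb $\log N/(1-\alpha)$ into the $O$-term.
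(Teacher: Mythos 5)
Your proposal is correct and follows essentially the same route as the paper's proof: factor the log-normalizer into $da$ via independence, get $a>0$ from strict Jensen, identify $\log\overline{w}^{(N)}=-da-\sigma\sqrt{d}\,S^{(1)}$, plug in \Cref{lem:S1approxGen}, and absorb the $\log N/(1-\alpha)$ term into the $O(\log\log N/\sqrt{d\log N})$ remainder using $(\log N)^{3/2}=o(\sqrt{d})$. Note only that your expansion (correctly) produces $\sqrt{2\log N/d}$, matching the paper's proof and \Cref{thm:iidRv}, whereas the displayed statement of \Cref{prop:DeltaGen} omits the factor $2$ — an apparent typo in the proposition rather than an issue with your argument.
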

\begin{prop}\label{Prop:limitT}
  Assume \ref{hypBickel}. Let $S_{1}, \ldots, S_{N}$ be as in \eqref{eq:expressionSi}. 
  Further assume that the weights $\overline{w}_1, \ldots, \overline{w}_N$ satisfy \eqref{eq:approxlognormalweights}. Then, for all $\alpha \in [0,1)$, 
$$
\lim_{\substack{N,d \to \infty \\ \log N / d^{1/3} \to 0}} \mathbb{E}(T^{(\alpha)}_{N,d}) = 0.
$$
\end{prop}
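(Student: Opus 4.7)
The plan is to mirror the log-normal proof of \Cref{prop:limitTGaussianRenyi}, replacing exact Gaussian computations by the refined large-deviation and normal-convergence bounds that \ref{hypBickel} makes available through \Cref{app:prelimResults}. Using \eqref{eq:approxlognormalweights} and the fact that $\overline{w}_i$ is strictly decreasing in $S_i$, I would first rewrite
$$T^{(\alpha)}_{N,d} = \sum_{k=2}^{N} \exp\!\bigl(-(1-\alpha)\sigma\sqrt{d}\,(S^{(k)} - S^{(1)})\bigr),$$
where $S^{(1)} \leq \cdots \leq S^{(N)}$ are the order statistics of $S_1, \ldots, S_N$; exchangeability together with the absolute continuity granted by \ref{hypBickel:a} then give
$$\PE\bigl(T^{(\alpha)}_{N,d}\bigr) = N\,\PE\bigl[\exp(-(1-\alpha)\sigma\sqrt{d}\,(S_1 - M'))\,\mathbb{1}_{S_1 > M'}\bigr],$$
with $M' = \min_{i \geq 2} S_i$ independent of $S_1$.

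Motivated by \Cref{lem:S1approxGen} I would then introduce a deterministic threshold $t_N = -\sqrt{2(1-\varepsilon_N)\log N}$ with a slowly vanishing $\varepsilon_N$ and split according to $\{M' \leq t_N\}$ versus $\{M' > t_N\}$. For the typical piece, conditioning on $M' = m$ and bounding $\PE[\exp(-(1-\alpha)\sigma\sqrt{d}(S_1 - m))\mathbb{1}_{S_1 > m}]$ uniformly in $m \leq t_N$ reduces, in the exact Gaussian case, to a Mills-ratio estimate of order $e^{-m^2/2}/\sqrt{d}$; a direct analog under \ref{hypBickel} follows either from an Esscher tilt of $S_1$ combined with the Petrov/Saulis-type estimates of \Cref{app:prelimResults} or from a two-region split of the $S_1$-integral at zero. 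After multiplying by $N$, the cancellation $N \cdot e^{-m^2/2} \leq N^{\varepsilon_N}$ against a $1/\sqrt{d}$ factor is what drives this piece to zero. For the atypical piece, independence yields $\PP(M' > t_N) = (1 - F_{S_1}(t_N))^{N-1}$, and the refined CLT bound $1 - F_{S_1}(t_N) = 1 - \Phi(t_N)(1 + o(1))$ from \Cref{app:prelimResults} gives $\PP(M' > t_N) \leq \exp(-c N^{\varepsilon_N})$; combined with the trivial bound $T^{(\alpha)}_{N,d} \leq N$, this piece contributes at most $O(N^2 \exp(-c N^{\varepsilon_N}))$, which is negligible.

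The main obstacle — and the origin of the rate restriction $\log N / d^{1/3} \to 0$ — is the validity of the refined normal-convergence estimate at the threshold $-t_N \sim \sqrt{2\log N}$: under the Bernstein condition \ref{hypBickel:b}, such estimates hold only in the enlarged zone of normal convergence, whose radius is of order $d^{1/6}$, forcing $\sqrt{2\log N} = o(d^{1/6})$, i.e.\ $\log N = o(d^{1/3})$. Ensuring these approximations remain uniformly valid throughout the conditioning over $m$ (both for $F_{S_1}$ itself and for the tilted quantities appearing in the bound on the typical piece) is the technically delicate step that replaces the closed-form Mills-ratio computation available in the log-normal proof.
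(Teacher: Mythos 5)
Your skeleton matches the paper's: exchangeability reduces $\PE(T^{(\alpha)}_{N,d})$ to $N$ times a single-sample exponential-tail integral conditioned on the minimum of the remaining variables, a concentration event for that minimum isolates the typical regime, and the rate restriction $\log N/d^{1/3}\to 0$ is correctly traced to the $o(d^{1/6})$ zone of normal convergence. However, the step you defer to ``an Esscher tilt \ldots or a two-region split'' is exactly where the work lies, and neither route closes as stated. Under \ref{hypBickel} the only quantitative control available is on the \emph{cdf} $G_d$ (\Cref{lem:A1bickelcurated} and \Cref{key:coro}); there is no local limit theorem for the density $g_d$, so $\int_m^\infty e^{-(1-\alpha)\sigma\sqrt{d}(z-m)}g_d(z)\,\rmd z$ cannot be compared to its Gaussian counterpart pointwise. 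The Esscher tilt rewrites this integral as $e^{(1-\alpha)\sigma\sqrt{d}\,m}\,\PE(e^{-(1-\alpha)\sigma\sqrt{d}\,S_1})\,\PP(\tilde S>m)$ where the tilted sum $\tilde S$ has mean shifted by $\Theta(\sqrt d)$; the required tail probability then sits $\Theta(\sqrt d)$ standardized units from the tilted mean, far outside the zone in which the Saulis-type expansions of \Cref{app:prelimResults} are valid, so the crucial Mills-ratio factor $1/((1-\alpha)\sigma\sqrt d)$ cannot be extracted this way. The paper's device is to integrate by parts twice, converting the density-level integral into cdf differences $|G_d-\Phi|$ evaluated only at arguments in $[S^{(1)},0]$ (where \Cref{key:coro} gives a uniform $o(1)\,\Phi$ bound), plus an exactly Gaussian piece and a $[0,\infty)$ remainder killed by $e^{(1-\alpha)\sigma\sqrt d\, S^{(1)}}$. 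Some such conversion to cdf level is essential and is absent from your argument.

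A second, more minor issue is the threshold bookkeeping. Bounding the typical piece pointwise by $N e^{-m^2/2}/\sqrt d\le N^{\varepsilon_N}/\sqrt d$ requires $N^{\varepsilon_N}=o(\sqrt d)$, while making $\PP(M'>t_N)$ beat the trivial bound $T^{(\alpha)}_{N,d}\le N$ requires roughly $N^{\varepsilon_N}\gg(\log N)^{3/2}$; these two constraints are compatible only because $(\log N)^{3/2}=o(\sqrt d)$ under the hypothesis, and only if the threshold is allowed to depend on $d$ as well as $N$ --- a verification you omit. You also need a lower cutoff on $M'$ (as in $I_N=[-4\sqrt{\log N},-\sqrt{\log N}]$ of \Cref{lem:approxnormalminconcentration}), since the uniform Mills-ratio and cdf approximations fail once $|m|$ is no longer $o(d^{1/6})$. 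The paper sidesteps the tuning parameter entirely by keeping $\Phi(S^{(1)})$ inside the expectation and using that $G_d(S^{(1)})$ is distributed as the minimum of $N$ uniforms, so that $\PE(\Phi(S^{(1)})\mathbbm{1}_{\{S^{(1)}\in I_N\}})\le(1+o(1))/(N+1)$ cancels the factor $N$ exactly and leaves $O(\sqrt{\log N}/\sqrt d)$.
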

The proof of \Cref{prop:DeltaGen} and \Cref{Prop:limitT} can be found in \Cref{subsec:proof:prop:deltaGen} and \Cref{subsec:ProofLimitT} respectively.

\begin{rem}

The log-normal case corresponds to setting $a = \sigma^2/2$ in \Cref{prop:DeltaGen} (this can be checked using the definition of $a$ in \eqref{eq:defALognormal} combined with \eqref{eq:checkALognormal} from the proof of \Cref{prop:DeltaGen} in \Cref{subsec:proof:prop:deltaGen}). Contrary to \Cref{prop:limDeltaGaussianRenyi}, the $(1-\alpha)^{-1} (d \sigma^2)^{-1} 2 \log N $ term is now subsumed by the final $O(\log \log N / \sqrt{d \log N})$ term in \Cref{prop:DeltaGen}, which comes from the fact that \Cref{prop:DeltaGen} makes the additional assumption $\log N/d^{1/3} \to 0$ as $N,d \to \infty$. Hence, \Cref{prop:limDeltaGaussianRenyi} and \Cref{prop:DeltaGen} agree with each other in the log-normal case. 
\end{rem}
\Cref{lem:rewritingrenyi}, \Cref{prop:DeltaGen} and \Cref{Prop:limitT} lead to the theorem below, which characterizes the asymptotics of the variational gap as $N, d \to \infty$ for $\alpha \in [0,1)$ in the more general case where the distribution of the weights is approximately log-normal according to \eqref{eq:approxlognormalweights}.  
\begin{thm}[i.i.d. random variables] \label{thm:iidRv}
  Assume \ref{hypBickel}. Let $S_1, \ldots, S_N$ be as in \eqref{eq:expressionSi}. 
  Further assume that the weights $\overline{w}_1, \ldots, \overline{w}_N$ satisfy \eqref{eq:approxlognormalweights} and let $a > 0$ be defined as in \eqref{eq:defALognormal}. Then, for all $\alpha \in [0,1)$,
  \begin{align*}
    \lim_{\substack{N,d \to \infty \\ \log N / d^{1/3} \to 0}} \Delta^{(\alpha)}_{N,d}(\theta, \phi;x) + d a \lr{1 - \frac{\sigma}{a} \sqrt{\frac{2 \log N}{ d}} + O \lr{\frac{\log \log N}{\sqrt{d \log N}}}} = 0. 
  \end{align*}
  \end{thm}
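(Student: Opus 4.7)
The plan is to combine the decomposition of the variational gap from \Cref{lem:rewritingrenyi} with the asymptotic analyses of its two pieces provided by \Cref{prop:DeltaGen} and \Cref{Prop:limitT}. This mirrors exactly the strategy used in the log-normal setting to derive \Cref{thm:MainGauss} from \Cref{prop:limDeltaGaussianRenyi} and \Cref{prop:limitTGaussianRenyi}; the only difference is that all underlying limits now hold under the assumption \ref{hypBickel} and in the slightly more restrictive regime $\log N / d^{1/3} \to 0$.

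Concretely, I would first apply \Cref{lem:rewritingrenyi} to write
$$\Delta^{(\alpha)}_{N,d}(\theta,\phi;x) = \Delta^{(\alpha, MAX)}_{N,d}(\theta,\phi;x) + R^{(\alpha)}_{N,d}(\theta,\phi;x),$$
together with the sandwich bound $0 \leq R^{(\alpha)}_{N,d}(\theta,\phi;x) \leq (1-\alpha)^{-1} \PE(T^{(\alpha)}_{N,d})$. Since $\alpha \in [0,1)$ is fixed, \Cref{Prop:limitT} gives $\PE(T^{(\alpha)}_{N,d}) \to 0$ as $N,d \to \infty$ with $\log N / d^{1/3} \to 0$, so the sandwich forces $R^{(\alpha)}_{N,d}(\theta,\phi;x) \to 0$ in the same regime.

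The leading behavior of $\Delta^{(\alpha)}_{N,d}(\theta,\phi;x)$ is then entirely carried by $\Delta^{(\alpha, MAX)}_{N,d}(\theta,\phi;x)$, for which \Cref{prop:DeltaGen} delivers exactly the asymptotic expansion appearing inside the parentheses of the theorem statement, with the constant $a$ as in \eqref{eq:defALognormal}. Adding the two limits termwise — since each one holds in the regime $\log N / d^{1/3} \to 0$ — and absorbing the vanishing $R^{(\alpha)}_{N,d}$ into the final error then yields the claimed limit.

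The main obstacle is not located in the assembly above, which is essentially a bookkeeping exercise, but in the two ingredient propositions on which it rests. In particular, to obtain \Cref{prop:DeltaGen} and \Cref{Prop:limitT} outside the exactly log-normal case one has to control the tail behavior of the maximum and of the ratios $(\overline{w}^{(j)} / \overline{w}^{(N)})^{1-\alpha}$ using a large-deviations extension of the CLT (via Bernstein's condition \ref{hypBickel:b}) in order to enlarge the zone of normal convergence beyond the CLT. It is precisely this enlargement that forces us to restrict to $\log N / d^{1/3} \to 0$ here, whereas the exact log-normal results of \Cref{thm:MainGauss} and \Cref{thm:MainGauss2} go through under the looser condition $\log N / d \to 0$.
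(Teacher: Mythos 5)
Your proposal is correct and follows exactly the paper's route: the paper itself states that \Cref{thm:iidRv} is obtained by combining the decomposition of \Cref{lem:rewritingrenyi} with \Cref{prop:DeltaGen} (for the dominant $\Delta^{(\alpha,MAX)}_{N,d}$ term) and \Cref{Prop:limitT} (to kill the remainder via the sandwich bound), with no further argument needed. Your observation that the real difficulty lives in the two ingredient propositions, via the Bernstein-condition large-deviations control that forces the $\log N/d^{1/3}\to 0$ regime, is also consistent with the paper.
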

We have thus obtained that, under the assumptions of \Cref{thm:iidRv}, the VR-IWAE bound is of limited interest for all values of $\alpha \in [0,1)$ unless $N$ grows at least sub-exponentially with $d^{1/3}$. In fact, by taking the expectation in the expression of the log-weights, we have that $\mathrm{ELBO} (\theta, \phi; x) - \ell(\theta ; x)  = - d a$ (using for example \eqref{eq:WNGen} from the proof of \Cref{prop:DeltaGen} in \Cref{subsec:proof:prop:deltaGen}). Hence, the following approximation of the variational gap holds in the context of \Cref{thm:iidRv}: for all $\alpha \in [0,1)$,
\begin{align*}
  \Delta^{(\alpha)}_{N,d}(\theta, \phi;x) \approx \mathrm{ELBO} (\theta, \phi; x) - \ell(\theta ; x), \quad \mbox{as $N, d \to \infty$ with $\frac{\log N}{d^{1/3}} \to 0$.}
\end{align*}
Since the weights are assumed to be \textit{approximately} log-normal this time as opposed to \Cref{subsec:lognormalExporegime}, the condition that $N$ should grow at least exponentially with $d$ to avoid a weight collapse effect has now been replaced by the less restrictive yet still stringent condition that $N$ should grow at least sub-exponentially with $d^{1/3}$.

As described below, the assumptions on the distribution of the weights -- that is, on the ratio between the posterior and the variational distributions -- appearing in \Cref{thm:iidRv} are met for the linear Gaussian setting from \Cref{ex:LinGaussThm3}.
\begin{ex}
\label{ex:linGauss}
We consider the linear Gaussian setting from \cite{rainforth2018tighter} that we recalled in \Cref{ex:LinGaussThm3}. Denoting $\lambda = \big\| \frac{\theta + x}{2} - Ax - b \big\| / \sqrt{d}$, the weights can be written in the form of \eqref{eq:approxlognormalweights} with $\sigma^2 = {1}/{18} + {8}/{3} \lambda^2$ and we also have $a = \lambda^2 + 1/6 + 1/2 \log (3/4)$. As a result, we can apply \Cref{thm:iidRv} if \ref{hypBickel} holds. This is for example the case at optimality when $(\theta, \phi) = (\theta^\star, \phi^\star)$ (and the derivation details for this example can be found in \Cref{subsec:ex:linGaussApp}).
\end{ex}
\Cref{ex:linGauss} states that we are in the conditions of application of \Cref{thm:iidRv} when the parameters $(\theta, \phi)$ are optimal, with corresponding optimal posterior density $p_{\theta^\star}(z|x) = \mathcal{N}(z; (\theta^\star + x)/2, 1/2 \boldsymbol{I}_d)$ and optimal variational density $q_{\phi^\star}(z|x) = \mathcal{N}(z; (\theta^\star + x)/2, 2/3 \boldsymbol{I}_d)$. 

This example showcases how, in some instances where the variational family is not large enough to contain the target density, there can be a weight collapse phenomenon as $d$ increases that severely impacts the VR-IWAE bound, even when the parameters $(\theta, \phi)$ are set to be the optimal ones for the problem considered. This concludes our theoretical study of the VR-IWAE bound, which sheds lights on the conditions behind the success or failure of this bound. 
In the next section, we describe how our theoretical results relate to the existing literature.

\section{Related work}
\label{sec:relatedWork}

\textbf{Alpha-divergence variational inference.} Our work provides the theoretical grounding behind VR-bound gradient-based schemes \citep{hernandez2016black,Bui2016BlackboxF, li2016renyi,dieng2017variational, li2017dropout, pmlr-v130-zhang21o,RODRIGUEZSANTANA2022260}. It also unifies the VR and IWAE bound methodologies \citep{burda2015importance,rainforth2018tighter,Tucker2019DoublyRG,domke2018,maddison2017} and serves as a foundation for improving on both methodologies. \newline

\noindent \textbf{Proof techniques.} Several of our theoretical results generalize known findings from the literature in order to build the VR-IWAE bound methodology and to characterize its asymptotics. Some of our proofs are straightforwardly derived from existing ones, such as the proofs of Theorems \ref{prop:drepIwaeAlpha} and \ref{prop:GenDomke} (which are established by directly adapting the proofs written in \cite{Tucker2019DoublyRG} and in \cite{domke2018} respectively). However, a number of our proof techniques differs significantly from/alter parts of known proofs (see \Cref{app:sec:relatedWork} for details). Lastly, the derivations made in \Cref{subsec:VRIWAEndInfty} for the asymptotics of the VR-IWAE bound when $N, d \to \infty$ are, to the best of our knowledge, the first of their kind. \newline 

\noindent \textbf{Importance sampling.} Common variational bounds and their gradients can often be expressed in terms of the importance weights $\w$ (with our novel VR-IWAE bound being no exception to that rule). As such, the success of gradient-based variational inference has been known to depend on the behavior of the importance weights and there has been a growing interest in understanding this behavior through the use of insights and tools from the importance sampling (IS) literature \citep{maddison2017, domke2018, dhaka2021challenges, geffner21a}. In particular, it is well-known that IS can perform poorly in high dimensions unless the target and reference/proposal distributions are close. \looseness=-1

\cite{picklands1975statistical} for instance showed that, under commonly satisfied assumptions, the right tail of the importance weights distribution approximates a generalized Pareto distribution, that is, a heavy-tailed distribution with three parameters $(u, \sigma, k)$ and moments of order up to $\lfloor 1/k \rfloor$. This behavior is typical in high dimensions and it makes IS fail, as the IS estimators are dominated by the few largest terms. Leveraging this result, \cite{dhaka2021challenges} considered the case of black-box variational inference and viewed the importance weights as approximately drawn from a generalized Pareto distribution with tail index $k$. The importance weights taken to an exponent $1-\alpha$ are then approximately distributed according to a generalized Pareto distribution with tail index $(1-\alpha)k$ and they deduced that the estimates should be more stable as $\alpha$ increases towards $1$ due to lighter tails. 

The analysis from \cite{dhaka2021challenges} goes hand in hand with our findings, as (i) Theorems~\ref{prop:SNRconvergence} and \ref{prop:GenDomke} predict improvements in terms of SNR and variance as $\alpha$ increases, at the cost of an increasing bias and (ii) our results from \Cref{subsec:VRIWAEndInfty} show that, as $d$ increases, the VR-IWAE bound fails regardless of the value of $\alpha \in [0,1)$ and provides negligible improvements compared to the ELBO $(\alpha = 1)$. However, one main specificity of our work is our precise characterization of how the distribution of the importance weights impacts the tightness of the VR-IWAE bound. Specifically, \Cref{prop:GenDomke} generalizes \cite{domke2018} to the VR-IWAE bound, while the results from \Cref{subsec:VRIWAEndInfty} provide the first theoretical justification behind the empirical findings from \cite{geffner21a} regarding the impact of weight collapse on the tightness of variational bounds. 

The next section is devoted to illustrating the theoretical claims we have made thus far over toy and real-data experiments.

\section{Numerical Experiments}
\label{sec:numericalExperiments}

In this section, our goal is to verify the validity of the theoretical results we established over several numerical experiments, starting with a Gaussian example in which the distribution of the weights is exactly log-normal.

\subsection{Gaussian example}
\label{subsec:Toy}

We consider the Gaussian example described in \Cref{ex:GaussianLogNormal}, for which the weights $\overline{w}_1, \ldots, \overline{w}_N$ can be written under the form \eqref{eq:lognormalweights2} 
with $B_d = \| \theta - \phi \|$, meaning that the distribution of the weights is log-normal. On the one hand, \Cref{prop:GenDomke} predicts that for all $\alpha \in [0,1)$,
\begin{align} 
&\Delta^{(\alpha)}_{N,d}(\theta, \phi;x) = - \frac{\alpha B_d^2}{2}  - \frac{\exp\lrb{(1-\alpha)^2 B_d^2} - 1}{2(1-\alpha)N} +o\left(\frac{1}{N}\right) \label{eq:OneOverNGenDomkeToy}
\end{align}
(this follows from a straightforward adaptation of \Cref{lem:VarianceExponentialWithD}). On the other hand, \Cref{thm:MainGauss2} tells us that if there exists $\sigma_- > 0$ such that $B_d \geq \sigma_- \sqrt{d}$, then: for all $\alpha \in [0,1)$,
\begin{align} \label{eq:MainResultGaussToy}
  \lim_{\substack{N, d \to \infty \\ \log N / d \to 0}} \Delta^{(\alpha)}_{N,d}(\theta, \phi;x) + \frac{B_d^2}{2} \lrcb{ 1 -  2 ~ \frac{\sqrt{2 \log N}}{B_d} + \frac{1}{1-\alpha} ~\frac{2\log N}{B_d^2} + O \lr{\frac{\log \log N}{B_d \sqrt{\log N}}} } = 0. 
\end{align}
We now want to check the validity of the two asymptotic results above. To do so, we need to be able to approximate the variational gap $\Delta^{(\alpha)}_{N,d}(\theta, \phi;x)$, which can be done using the unbiased Monte Carlo (MC) estimator given for all $N \in \mathbb{N}^\star$ by
$$
\frac{1}{1-\alpha} \log\left(\frac{1}{N}\sum_{j=1}^{N}\barw(Z_{j})^{1-\alpha}\right), 
$$
with $Z_1, \ldots, Z_N$ being i.i.d. samples generated according to $q_\phi$. As for the approximation returned by \Cref{prop:GenDomke}, we will represent it according to \eqref{eq:OneOverNGenDomkeToy} through functions of the form 
\begin{align} \label{eq:functionFormDomke}
c_1 \mapsto - \frac{\alpha B_d^2}{2} - \frac{\exp\lrb{(1-\alpha)^2 B_d^2} - 1}{2(1-\alpha) N} + \frac{c_1}{N}
\end{align}
and for the approximation returned by \Cref{thm:MainGauss2}, we will represent it according to \eqref{eq:MainResultGaussToy} through functions of the form 
\begin{align} \label{eq:functionFormLogNormal}
c_2 \mapsto - \frac{B_d^2}{2} + B_d \sqrt{2 \log N} + \frac{\log N}{\alpha-1} + \frac{c_2 B_d \log \log N}{\sqrt{\log N}}.
\end{align}
We first consider the case where $\theta = 0 \cdot \boldsymbol{u}_d$ and $\phi = \boldsymbol{u}_d$. In that setting, $B_d = \sqrt{d}$ and we have that (i) the $1/N$ term from \eqref{eq:OneOverNGenDomkeToy} is exponential in $(1-\alpha)^2 d$ and (ii) we are in the conditions of application of \Cref{thm:MainGauss2} by setting $\sigma_- = 1$. 

Consequently, for this choice of $(\theta, \phi)$ and \textit{regardless of the value of $\alpha \in [0,1)$}, we are expecting \Cref{thm:MainGauss2} to capture the behavior of the variational gap as $d$ and $N$ increase in such a way that $\log N/d$ decreases. This is indeed what we observe in Figure~\ref{fig:ToyLogNormal}, in which we let $d \in \lrcb{10, 100, 1000}$, $\alpha \in \lrcb{0., 0.2, 0.5}$, $N \in \lrcb{2^j ~ : ~ j = 1 \ldots 9}$ and we compare the behavior of the variational gap to the behavior predicted by \Cref{thm:MainGauss2} through curves of the form \eqref{eq:functionFormLogNormal}. 

Unsurprisingly, although valid in low dimensions for a proper choice of $\alpha$, the analysis of \Cref{prop:GenDomke} requires an unpractical amount of samples $N$ to properly capture the behavior of the variational gap as $d$ increases (additional plots providing the comparison with \Cref{prop:GenDomke} are made available in \Cref{subseb:app:toy} for the sake of completeness). \newline

\begin{figure}[!ht]
  \begin{tabular}{ccc} 
    \includegraphics[scale=0.3]{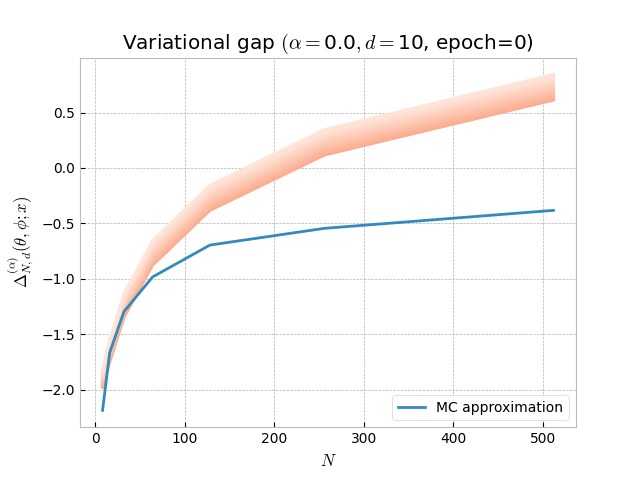} &
    \includegraphics[scale=0.3]{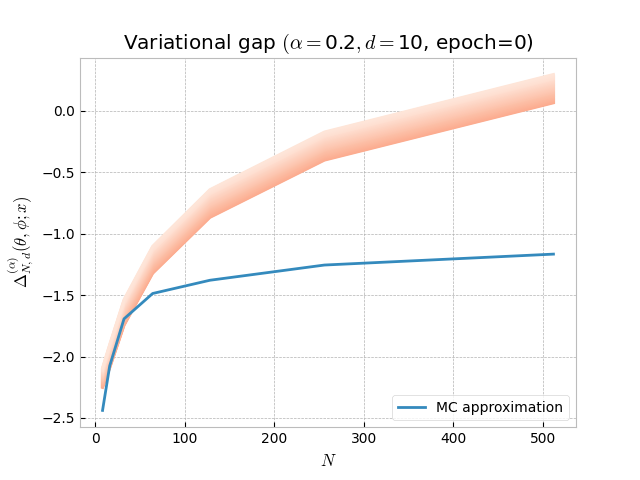} &
    \includegraphics[scale=0.3]{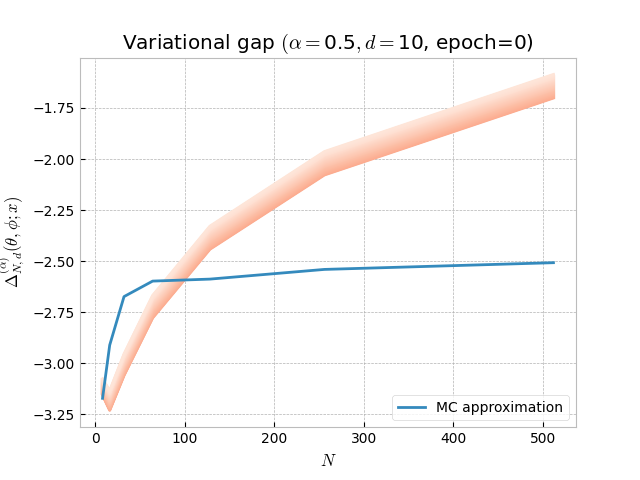} \\
    \includegraphics[scale=0.3]{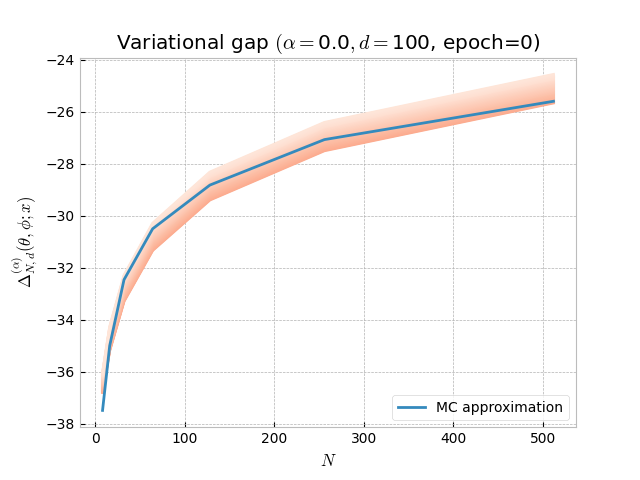} &
    \includegraphics[scale=0.3]{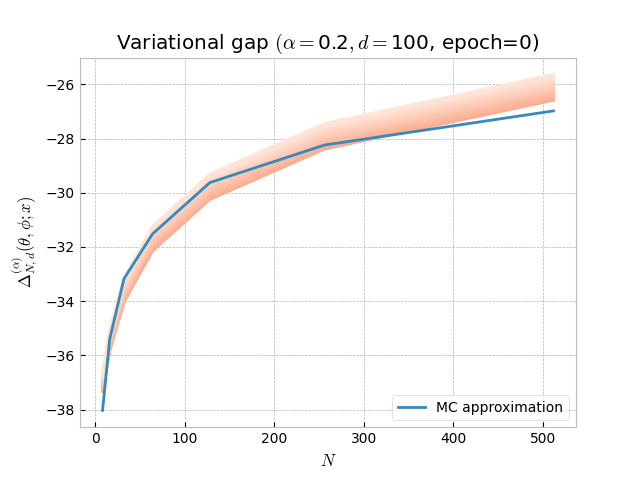} &
    \includegraphics[scale=0.3]{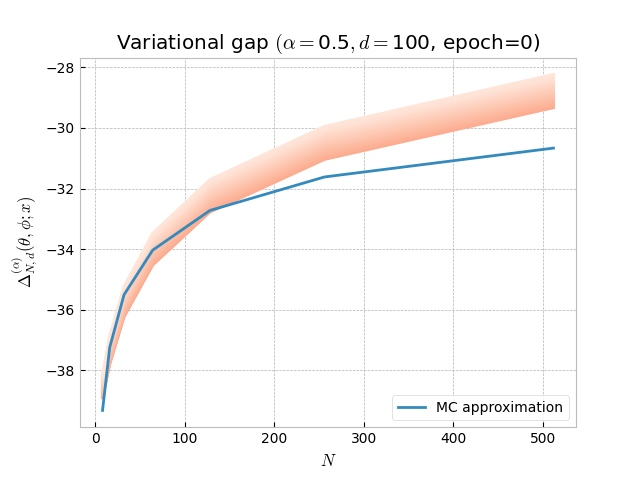} \\
    \includegraphics[scale=0.3]{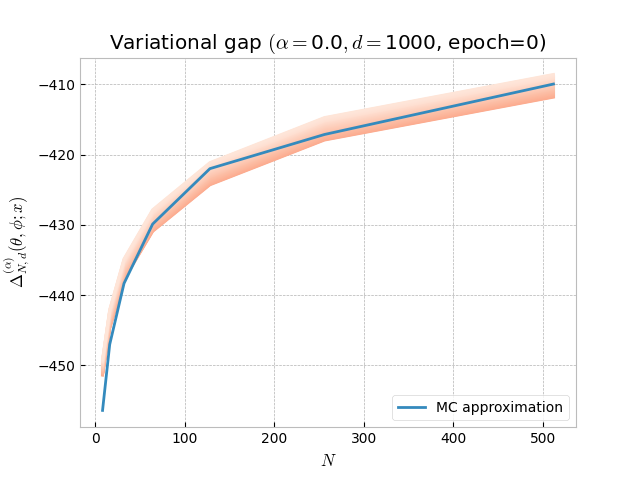} &    \includegraphics[scale=0.3]{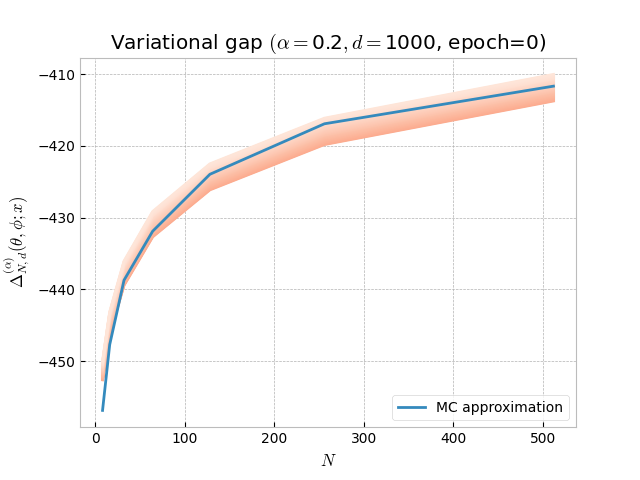} &
    \includegraphics[scale=0.3]{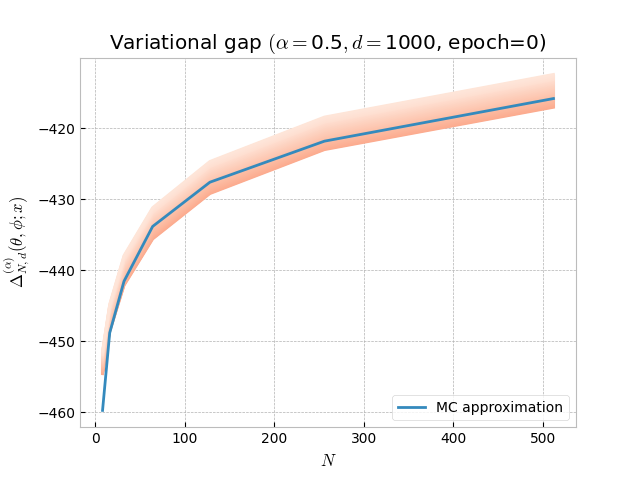} \\
  \end{tabular}
  \caption{Plotted in blue is the MC estimate of the variational gap $\Delta_{N,d}^{(\alpha)}(\theta, \phi; x)$ (averaged over 1000 MC samples) for the toy example described in \Cref{subsec:Toy} as a function of $N$, for varying values of $\alpha$ and of $d$ and with $(\theta, \phi) = (0 \cdot \boldsymbol{u_d}, \boldsymbol{u}_d)$ so that $B_d = \sqrt{d}$. Plotted in orange are curves of the form \eqref{eq:functionFormLogNormal} with tailored values of $c_2$.} \label{fig:ToyLogNormal}
\end{figure}

We next train the parameter $\phi$ in order to measure the impact of the training procedure on the validity of our asymptotic results. Here, this impact is reflected in the quantity $B_d$ through the simple relation $B_d = \| \theta - \phi\|$. In case the training is successful, $B_d / \sqrt{d}$ is then anticipated to decrease from $1$ to $0$ (having set $\theta = 0 \cdot \boldsymbol{u}_d$ and initialized with $\phi = \boldsymbol{u_d}$). 

Hence, as the training progresses, we will be less and less able to find $\sigma_- > 0$ such that $B_d \geq \sigma_- \sqrt{d}$, which will contradict the assumption we make in \Cref{thm:MainGauss2}. At the same time, the $1/N$ term from \eqref{eq:OneOverNGenDomkeToy} will decrease thanks to its dependency in $B_d$, meaning that~\eqref{eq:OneOverNGenDomkeToy} may become a better approximation than \eqref{eq:MainResultGaussToy} during the training procedure. 

This behavior is empirically confirmed in Figure~\ref{fig:ToyLogNormal2} (and we also check in Figure~\ref{fig:ToyLogNormal3} of \Cref{subseb:app:toy} that $B_d/\sqrt{d}$ indeed goes from $1$ to $0$ during the training procedure). In those plots, the parameter $\phi$ was optimised via stochastic gradient descent using the reparameterized gradient estimator~\eqref{reparam:liren:MC} with $N = 100$ and we set $\alpha = 0.2$ and $d = 1000$ (and a similar trend can be observed for other values of $\alpha$ and $d$). 

\begin{figure}[!ht]
  \begin{tabular}{ccc}
    \includegraphics[scale=0.3]{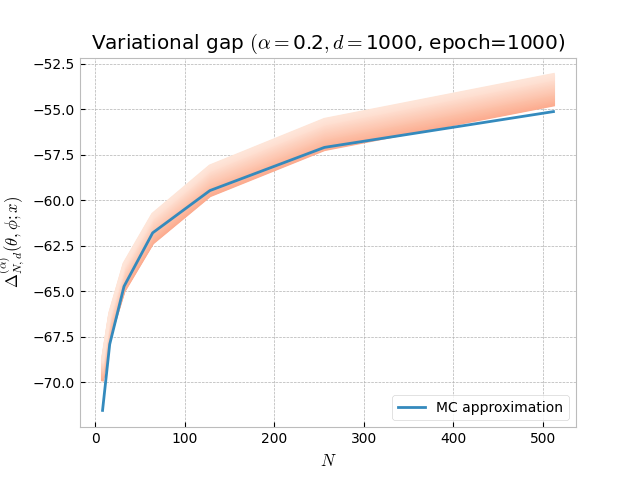}  
  &  \includegraphics[scale=0.3]{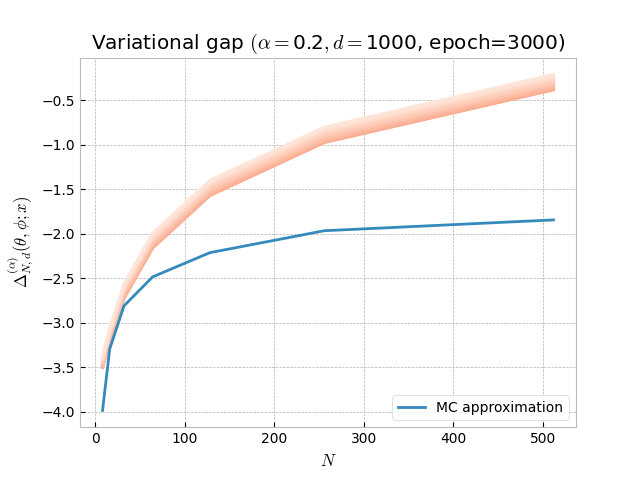}
   &  \includegraphics[scale=0.3]{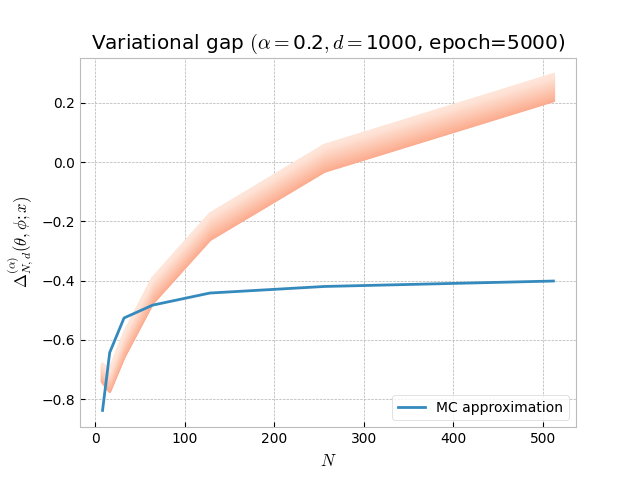} \\
 \includegraphics[scale=0.3]{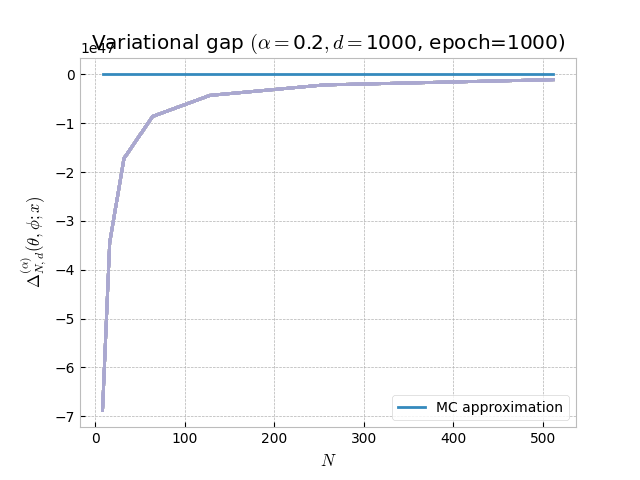} & \includegraphics[scale=0.3]{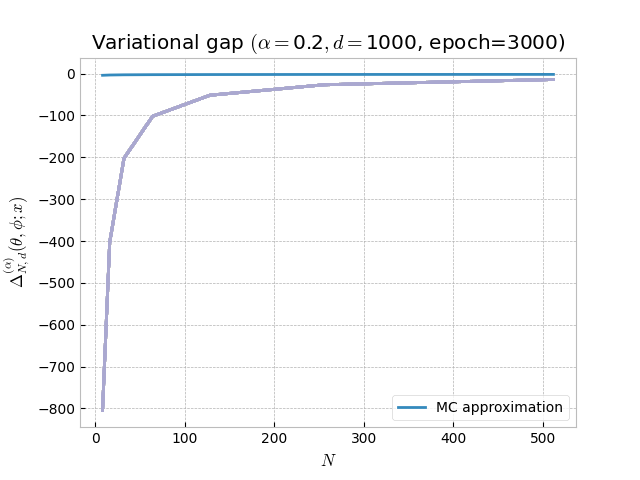}
 & \includegraphics[scale=0.3]{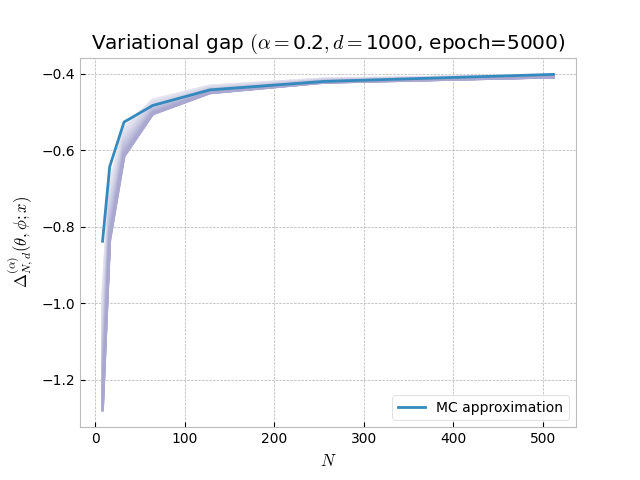}  
  \end{tabular}
  \caption{Plotted in blue is the MC estimate of the variational gap $\Delta_{N,d}^{(\alpha)}(\theta, \phi; x)$ (averaged over 1000 MC samples) at epochs $\lrcb{1000, 3000, 5000}$ for the toy example described in \Cref{subsec:Toy} as a function of $N$, for $\alpha = 0. 2$ and $d = 1000$. Plotted in orange (resp. in purple) are curves of the form \eqref{eq:functionFormLogNormal} with tailored values of $c_2$ (resp. of the form \eqref{eq:functionFormDomke} with tailored values of $c_1$).}
  \label{fig:ToyLogNormal2}
\end{figure}

The main insight we get from our first numerical experiment is then that: as the dimension $d$ increases and $N$ does not grow faster than exponentially with $d$, we should not expect much empirically from the VR-IWAE bound as a lower bound to the marginal log-likelihood when the distribution of the weights is log-normal. This is true unless the encoder and decoder distributions become very close to one another, in which case \Cref{prop:GenDomke} does apply instead of \Cref{thm:MainGauss2}.  

Thus, while this limitation of the VR-IWAE bound holds for all $\alpha \in [0,1)$, it may be mitigated by (i) proposing successful training procedures (further shedding light on the importance of finding gradient estimators with good SNR properties) and (ii) selecting suitable variational families which can capture the complexity within the target posterior density. Furthermore, the analysis provided by \Cref{prop:GenDomke} may also apply in lower dimensional settings, under the condition that the variance term appearing in \Cref{prop:GenDomke} is well-behaved and that the value of $\alpha$ is properly tuned. We next present a second numerical experiment, where this time the weights are not exactly log-normal.

\subsection{Linear Gaussian example}
\label{subsec:linGaussEx}

We are interested in the linear Gaussian example from \cite{rainforth2018tighter}, which we already highlighted in Examples \ref{ex:LinGaussThm3} and \ref{ex:linGauss}. 
The dataset $\data = \{x_1, \ldots, x_T \}$ is generated by sampling $T=1024$ datapoints from $\mathcal{N}(0, 2 \boldsymbol{I}_d)$ and we will consider three initializations for the parameters $(\theta, \phi)$ involving a Gaussian perturbation of standard deviation $\sigmapert$ of the ground truth values $(\theta^\star, \phi^\star)$: \newline

(i) ~~$\sigmapert=0.5$: the parameters are initialized far from $(\theta^\star, \phi^\star)$,

(ii) ~$\sigmapert=0.01$: the parameters are initialized close to $(\theta^\star, \phi^\star)$,

(iii) $\sigmapert = 0.$: the parameters are equal to $(\theta^\star, \phi^\star)$. \newline

\noindent The first two initializations follow from \cite{rainforth2018tighter} and should notably permit us to approximately characterize the behavior of the linear model before and after training.

Our first step is to check that, as written in \Cref{ex:linGauss}, the distribution of the weights is approximately log-normal as $d$ increases for the initializations above. To do so, we randomly select a datapoint $x$, draw $N = 1 000 000$ weight samples in dimension $d = \lrcb{20, 100, 1000}$ for $\sigmapert \in \lrcb{0.5, 0.01, 0.}$, before plotting for each $d$ a histogram of the resulting log-weight distribution as well as a Q-Q plot to test the normality assumption of those log-weights.

The results are shown on \Cref{fig:linear_gaussian_weight_distr}  and we see that while the log-normality phenomenon happens in dimension $d = 100$ when a large perturbation is being considered, even a small perturbation to no perturbation at all can induce some log-normality of the weights as $d$ further increases, which is in line with the theory (and similar plots can be observed for other randomly selected datapoints).

\begin{figure}[!ht]
  \begin{tabular}{ccc}
   \includegraphics[scale=0.27]{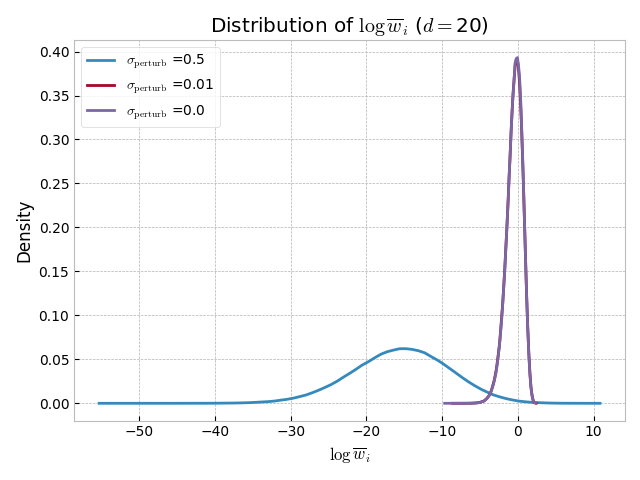} 
   & \includegraphics[scale=0.27]{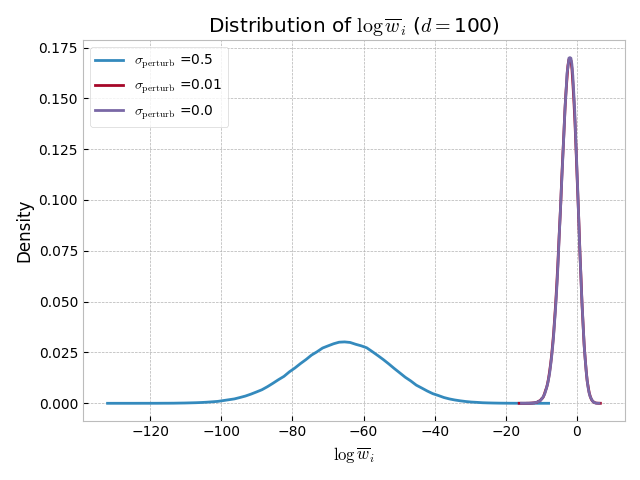} 
   & \includegraphics[scale=0.27]{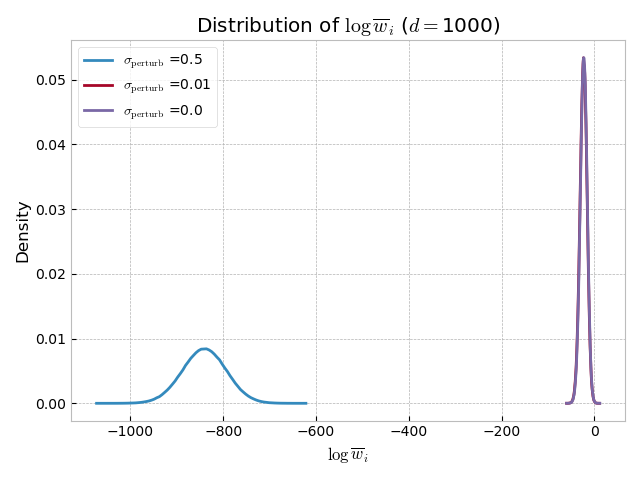} \\
   \includegraphics[scale=0.3]{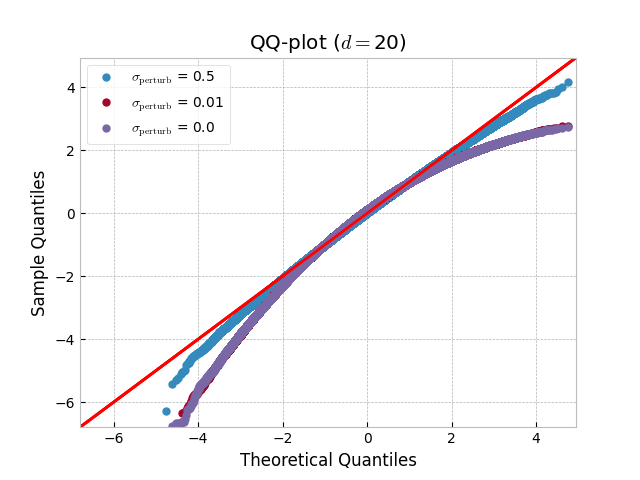} & \includegraphics[scale=0.3]{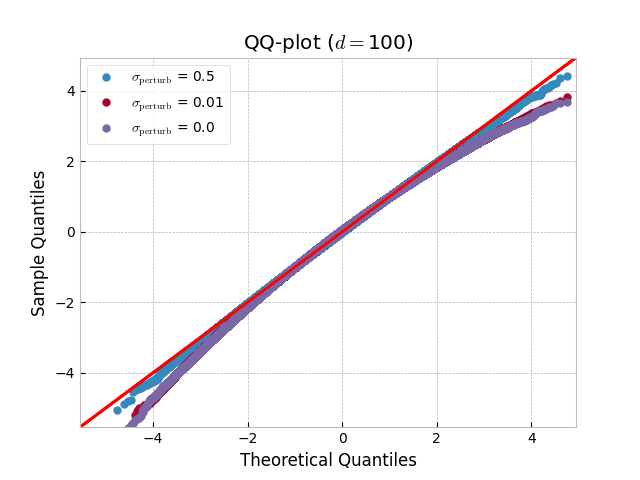} & \includegraphics[scale=0.3]{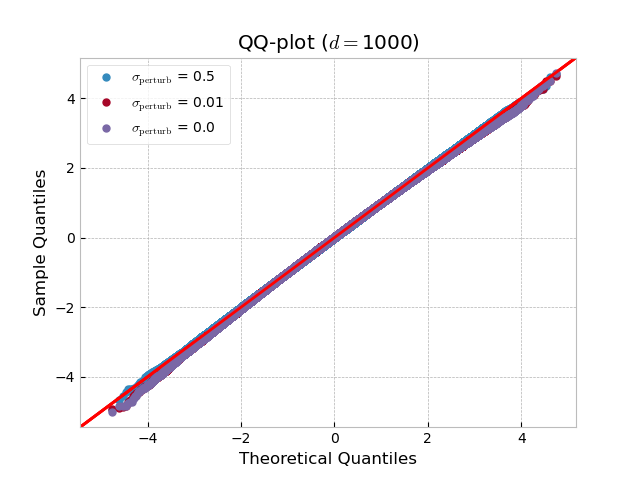}
  \end{tabular}
  \caption{Plotted is the distribution of $\log \overline{w}_i$ and the corresponding QQ-plot for the linear Gaussian example described in \Cref{subsec:linGaussEx} for a randomly selected datapoint $x$, for varying values of $d$ and for three different initializations of the parameters $(\theta, \phi)$. \label{fig:linear_gaussian_weight_distr}}
\end{figure}
We next want to test the validity of our asymptotic results. On the one hand, \Cref{prop:GenDomke} predicts that: for all $\alpha \in [0,1)$, 
\begin{align} 
\Delta^{(\alpha)}_{N,d}(\theta, \phi;x) =   \mathcal{L}_d^{(\alpha)}(\theta, \phi; x) - \ell_d(\theta; x) - \frac{\gamma_{\alpha,d}^2}{2N} +o\left(\frac{1}{N}\right), \label{eq:OneOverNGenDomkeLinGauss}
\end{align}
where $\mathcal{L}_d^{(\alpha)}(\theta, \phi; x) - \ell_d$ and $\gamma_{\alpha, d}^2$ can be analytically computed using \Cref{ex:LinGaussThm3} (and we have emphasized the dependency in $d$ in each of those terms). On the other hand, \Cref{thm:iidRv} predicts under \ref{hypBickel} that: for all $\alpha \in [0,1)$,
\begin{align*}
\lim_{\substack{N,d \to \infty \\ \log N / d^{1/3} \to 0}} \Delta^{(\alpha)}_{N,d}(\theta, \phi;x) + d a \lr{1 - \frac{\sigma}{a} \sqrt{\frac{2 \log N}{ d}} + O \lr{\frac{\log \log N}{\sqrt{d \log N}}}} = 0,
\end{align*}
where $\sigma^2$ and $a$ can be computed analytically according to \Cref{ex:linGauss}. Hence, to check whether these results apply, we want to look at functions of the form
\begin{align}
  \mbox{(\Cref*{prop:GenDomke})} \quad & c_1 \mapsto \mathcal{L}_d^{(\alpha)}(\theta, \phi; x) - \ell_d(\theta;x) - \frac{\gamma_{\alpha,d}^2}{2 N}+ \frac{c_1}{N} \label{eq:funcThm3LinGauss} \\
  \mbox{(\Cref*{thm:iidRv})} \quad & c_2 \mapsto - d a + \sqrt{d} \sigma {\sqrt{2 \log N}} + {\frac{c_2 \sqrt{d} \log \log N}{\sqrt{\log N}}} \label{eq:funcThm6LinGauss}
\end{align}
and see how well they approximate the behavior of the variational gap $\Delta^{(\alpha)}_{N,d}(\theta, \phi;x)$. Based on \Cref{ex:linGauss}, we are expecting the regime predicted by \Cref{thm:iidRv} to apply as $d$ increases if $N$ does not grow faster than $d^{1/3}$ and this is indeed what we observe in \Cref{fig:linGaussThm6}. 

\begin{figure}[t]
  \begin{tabular}{ccc} 
    \includegraphics[scale=0.29]{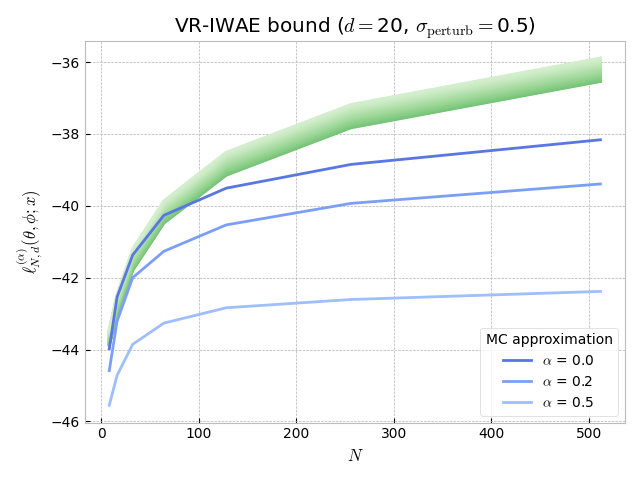} &    
    \includegraphics[scale=0.29]{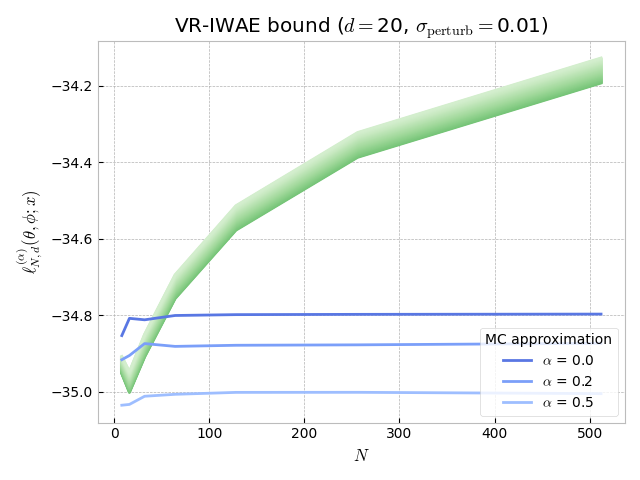} &
    \includegraphics[scale=0.29]{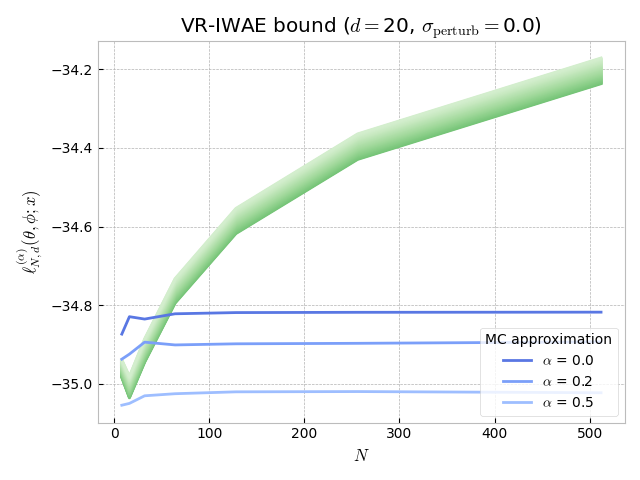} \\
    \includegraphics[scale=0.29]{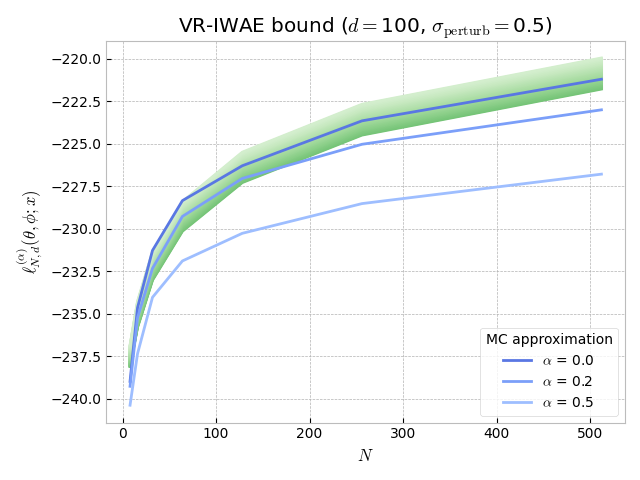} &   
    \includegraphics[scale=0.29]{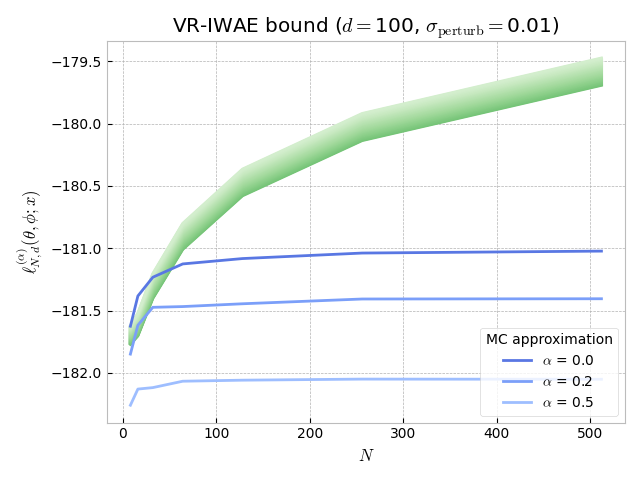} &
    \includegraphics[scale=0.29]{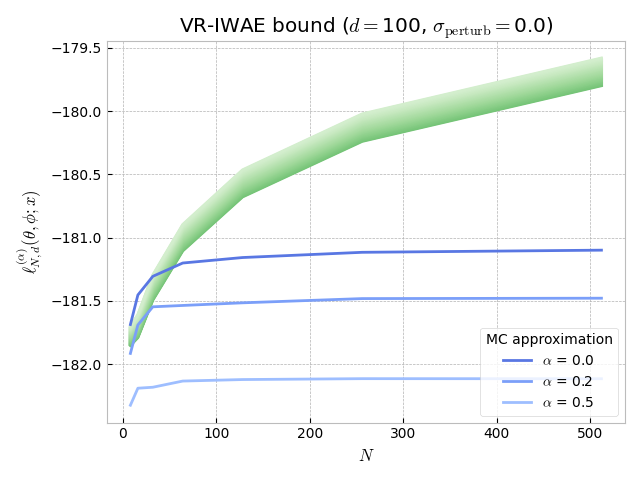} \\
    \includegraphics[scale=0.29]{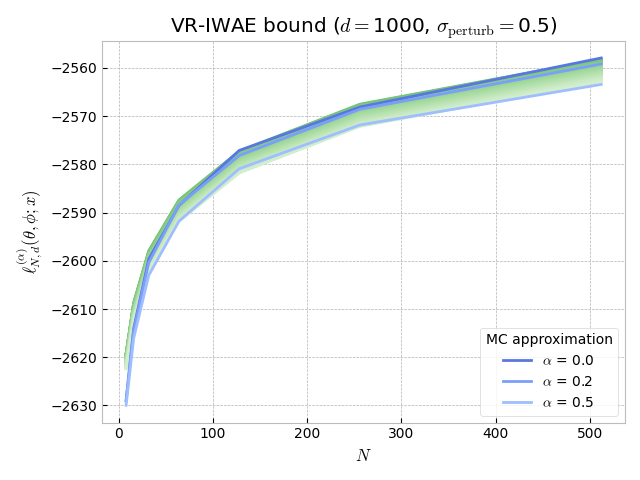} &   
    \includegraphics[scale=0.29]{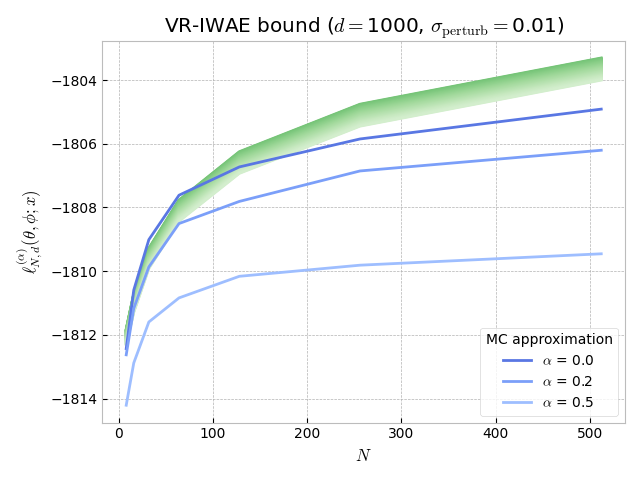} &
    \includegraphics[scale=0.29]{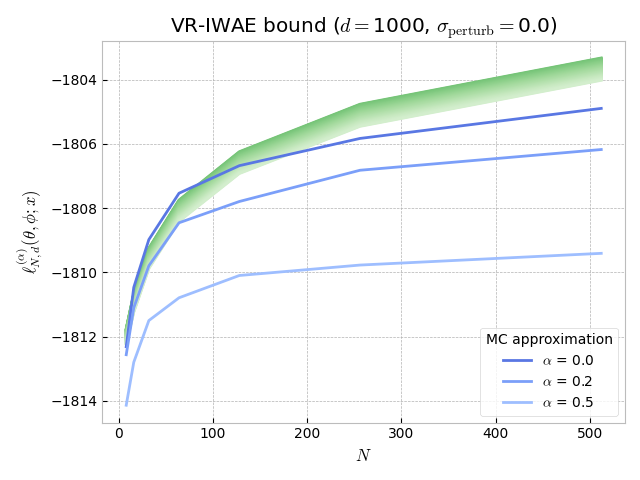} 
  \end{tabular}
  \caption{Plotted in blue is the MC estimate of the VR-IWAE bound $\liren[\alpha][N,d](\theta, \phi; x)$ (averaged over 1000 MC samples) for the linear Gaussian example described in \Cref{subsec:linGaussEx} as a function of $N$, for varying values of $\alpha$ and of $d$ and with three different initializations of $(\theta, \phi)$. Plotted in green are curves of the form \eqref{eq:funcThm6LinGauss} with tailored values of $c_2$.} \label{fig:linGaussThm6}
\end{figure}

While this process is noticeably quicker for an initialization that is far from the optimum $(\sigmapert = 0.5)$, all three initializations considered here eventually exhibit the behavior predicted by \Cref{thm:iidRv} as $d$ further increases. As already mentioned in \Cref{subsec:BeyondLogNormal}, this sends the important message that the VR-IWAE bound can strongly deteriorate as $d$ increases due to a mismatch between the targeted density and its variational approximation, even though the parameters themselves are optimal. 

As for \Cref{prop:GenDomke}, we obtain that this theorem applies in low to medium dimensions when the value of $\alpha$ is well-chosen and/or the parameters are close to being optimal, but fails as $d$ increases unless we use an unpractical amount of samples $N$ (see \Cref{subsub:DomkeLinGaussExApp}). \newline

We now want to get insights regarding the training of the VR-IWAE bound in practice. We follow the methodology used in \cite{rainforth2018tighter}, which looked into the convergence of the SNR for the numerical example considered here in the specific case of the IWAE bound ($\alpha = 0$). Our goal is thus to check whether we can observe the SNR advantages when $\alpha > 0$ predicted by \Cref{prop:SNRconvergence} in the reparameterized case. 

Let us decompose $\theta$ as $(\theta_\ell)_{1 \leq \ell \leq d}$ and $\phi$ as $(\phi_{\ell'})_{1 \leq \ell' \leq d+1}$. We then look at the reparameterized estimated gradients of the VR-IWAE bound $(\delta_{1, N}^{(\alpha)}(\theta_{\ell}))_{1 \leq \ell \leq d}$ and $(\delta_{1, N}^{(\alpha)}(\phi_{\ell'}))_{1 \leq \ell' \leq d +1}$ defined in \eqref{eq:deltaMNthetaell} and \eqref{eq:deltaMNphiell} respectively as a function of $N$, for varying values of $\alpha$, varying values of $d$ and for the two initializations $\sigmapert = 0.01$ and $\sigmapert = 0.5$. The results are shown in Figure \ref{fig:linear_gaussian_vr_iwae_grad_snr_theta} (resp. Figure \ref{fig:linear_gaussian_vr_iwae_grad_snr_phi_rep}) and they have been obtained by randomly selecting $10$ indexes $\ell$ ranging between $1$ and $d$ and averaging over the resulting SNR$(\delta_{1, N}^{(\alpha)}(\theta_{\ell}))$ values (resp. by randomly selecting $10$ indexes $\ell'$ ranging between $1$ and $d+1$ and averaging over the resulting SNR$(\delta_{1, N}^{(\alpha)}(\phi_{\ell'}))$ values). Theoretical lines have also been added to Figures \ref{fig:linear_gaussian_vr_iwae_grad_snr_theta} and \ref{fig:linear_gaussian_vr_iwae_grad_snr_phi_rep} in order to reflect the asymptotic regimes predicted by \Cref{prop:SNRconvergence}.
\begin{figure}
  \begin{tabular}{ccc}
    \includegraphics[scale=0.29]{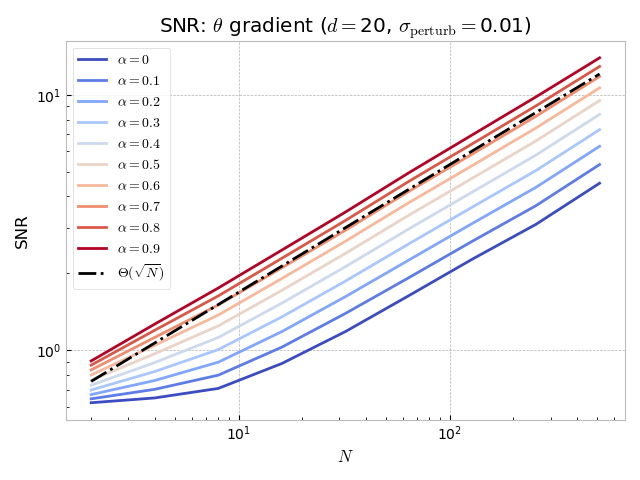} & 
    \includegraphics[scale=0.29]{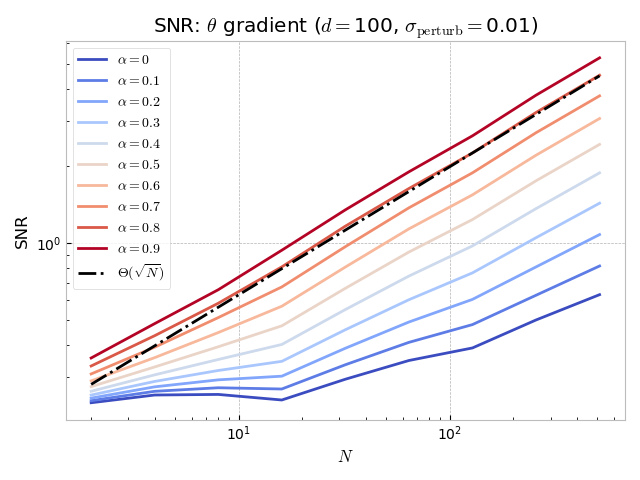} & 
    \includegraphics[scale=0.29]{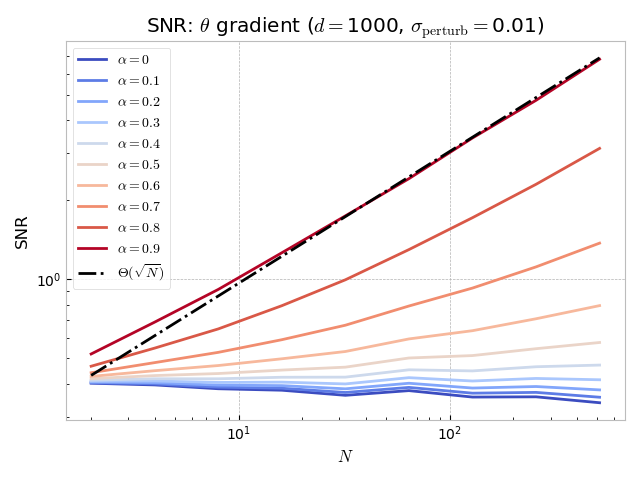} \\
    \includegraphics[scale=0.29]{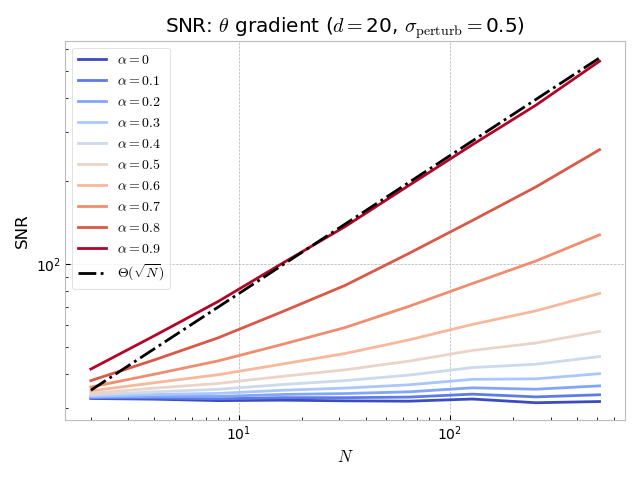} & 
    \includegraphics[scale=0.29]{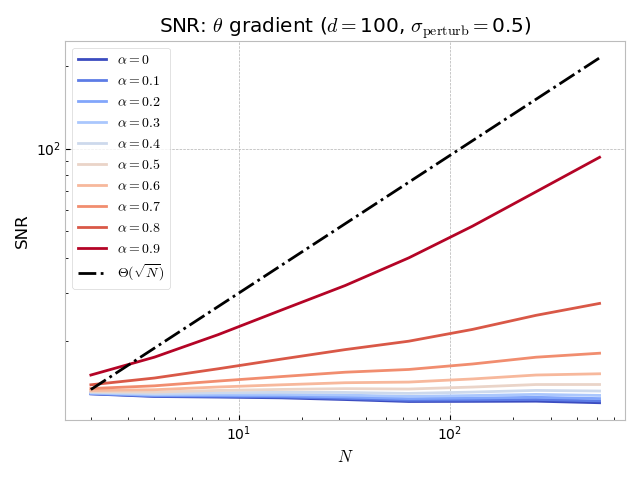} & 
    \includegraphics[scale=0.29]{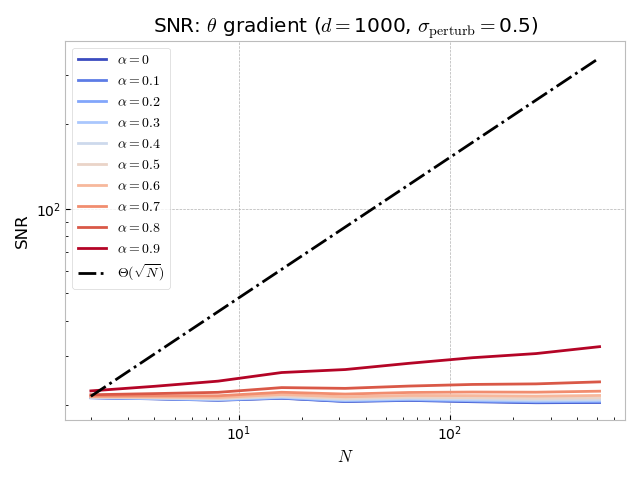}
  \end{tabular}
  \caption{Plotted is the SNR of the generative network ($\theta$) gradients in the reparameterized case (computed over 1000 MC samples) for the linear Gaussian example described in \Cref{subsec:linGaussEx} as a function of $N$, for varying values of $\alpha$ and of $d$, for a randomly selected datapoint $x$ and for 10 different initializations of the parameters $(\theta, \phi)$. \label{fig:linear_gaussian_vr_iwae_grad_snr_theta}}
\end{figure}

\begin{figure}[ht!]
  \begin{tabular}{ccc}
    \includegraphics[scale=0.29]{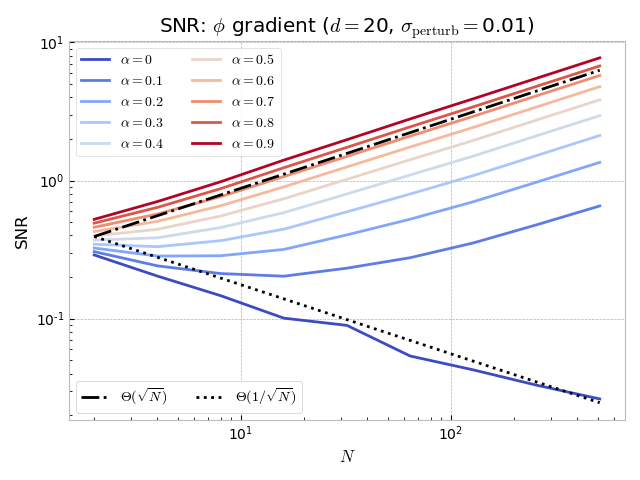} & 
    \includegraphics[scale=0.29]{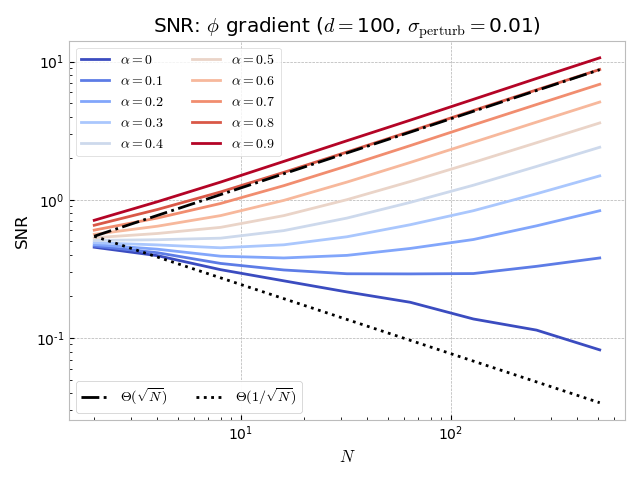} & 
    \includegraphics[scale=0.29]{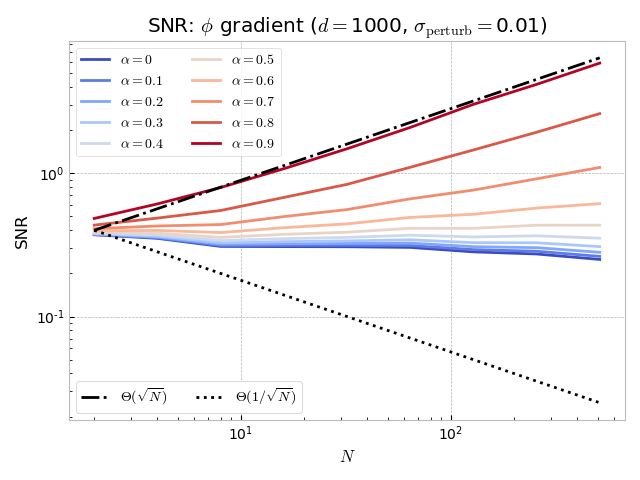} \\
    \includegraphics[scale=0.29]{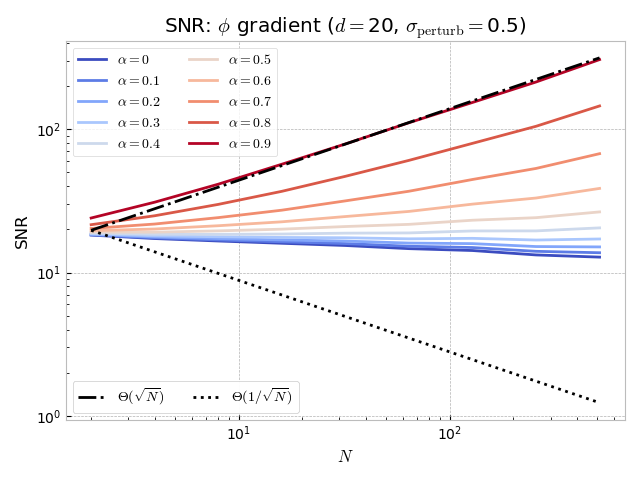} & 
    \includegraphics[scale=0.29]{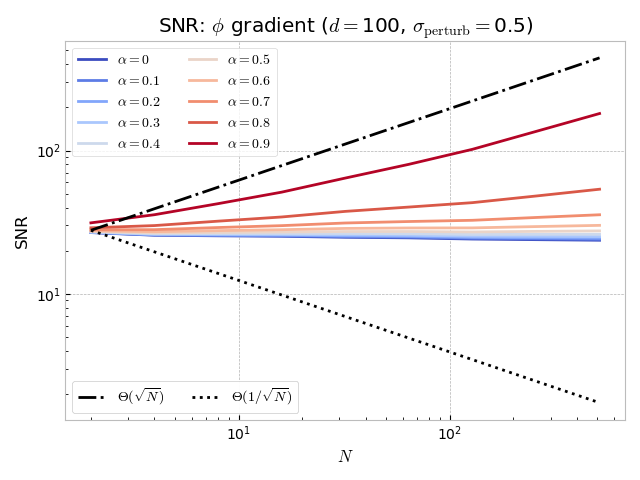} & 
    \includegraphics[scale=0.29]{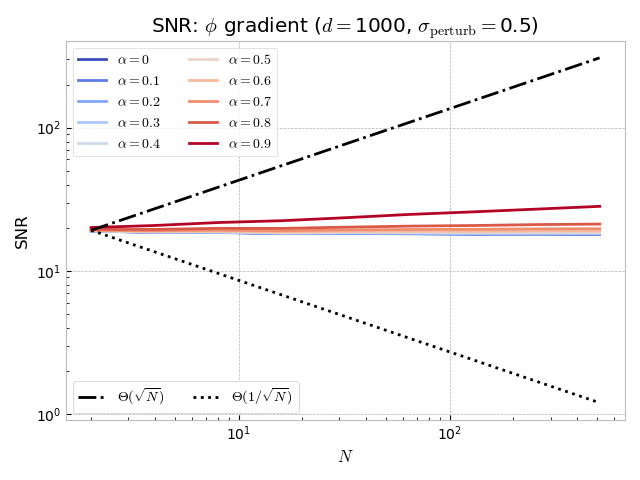}
  \end{tabular}
  \caption{Plotted is the SNR of the inference network ($\phi$) gradients in the reparameterized case (computed over 1000 MC samples) for the linear Gaussian example described in \Cref{subsec:linGaussEx} as a function of $N$, for varying values of $\alpha$ and of $d$, for a randomly selected datapoint $x$ and for 10 different initializations of the parameters $(\theta, \phi)$. \label{fig:linear_gaussian_vr_iwae_grad_snr_phi_rep}}
\end{figure}
Observe that, in the favourable setting of low to medium dimensions with a small perturbation near the optimum (that is $d \in \lrcb{20, 100}$ with $\sigmapert = 0.01$), the asymptotic rates predicted by \Cref{prop:SNRconvergence} for the SNR match the observed rates. In particular, the SNR of the inference network gradients vanishes for $\alpha = 0$ while it does not for $\alpha > 0$, which showcases the potential benefits of using the VR-IWAE bound with $\alpha>0$ instead of the IWAE bound. More generally, increasing $\alpha$ increases the SNR of both the generative and the inference networks, with what seems to be a monotonic increase with $\alpha$. 

However, the improvement in SNR for both the generative and inference networks becomes less pronounced as we get further away from the optimum ($\sigmapert = 0.5$) and/or increase $d$ ($d = 1000$). We relate this behavior to the weight collapse effect established in \Cref{thm:iidRv} and anticipate that observing the asymptotic rates predicted by \Cref{prop:SNRconvergence} requires an unpractical amount of samples $N$ as $d$ increases, regardless of the value of $\alpha \in [0,1)$. Note that the use of doubly-reparameterized gradient estimators for $\phi$ mitigates the decay in SNR (see Figure \ref{fig:linear_gaussian_vr_iwae_grad_snr_phi_drep} of \Cref{subsub:AddExpLinGauss}).

Lastly, the behavior of the VR-IWAE bound as well as the SNR behavior of its gradient estimators are not the only way to measure the success of gradient-based methods involving the VR-IWAE bound. For example, we observe that while increasing $\alpha$ does not lower the Mean Squared Error (MSE) for log-likelihood estimation, it can be useful in lowering the MSE of the $\theta$ gradient estimates (see Figures \ref{fig:linear_gaussian_vr_iwae_p_grad_mse_against_N} and \ref{fig:linear_gaussian_vr_iwae_mse_against_N} of \Cref{subsub:AddExpLinGauss}). We now move on to our third and final numerical experiment, in which we examine a real-data scenario.

\subsection{Variational auto-encoder}
\label{subsec:RealData}

We consider the case of a variational auto-encoder (VAE) model designed to generate MNIST digits with a $d$-dimensional latent space, where $p_\theta(z)$ is a fixed standard Gaussian distribution, $p_\theta(x | z)$ is a product over the output dimensions of independent Bernoulli random variables with logits $\pi_\theta(z)$, $q_\phi(z | x) = \mathcal{N}(z; \mu_\phi(x), \sigma_\phi(x))$ and the functions $\pi_\theta(z)$ and $(\mu_\phi(x), \sigma_\phi(x))$ are parameterized by neural networks. More precisely, both the encoding and decoding networks are MLPs with two hidden layers of size 200 and $\tanh$ nonlinearities.

We first want to investigate whether the distribution of the weights appears to become log-normal in this setting as the dimension of the latent space $d$ increases. 

To verify this claim empirically, we randomly select a datapoint $x$ in the testing set and for $d \in \lrcb{5, 10, 50, 100, 1000, 5000}$ we randomly generate some model parameters $(\theta, \phi)$, before drawing $N=1 000 000$ (unnormalized) weight samples. For each $d$, we then normalize the weights and plot a histogram of the resulting log-weight distribution, alongside with a QQ-plot to test the normality assumption of those log-weights. The results are shown in \Cref{fig:lognormalweightsdistribution} and they illustrate the fact that the weights tend to become log-normal as $d$ increases (and similar plots can be obtained for other randomly selected datapoints and other initializations of the parameters $(\theta, \phi)$).

\begin{figure}[!ht]
  \begin{tabular}{ccc}
    \includegraphics[scale=0.3]{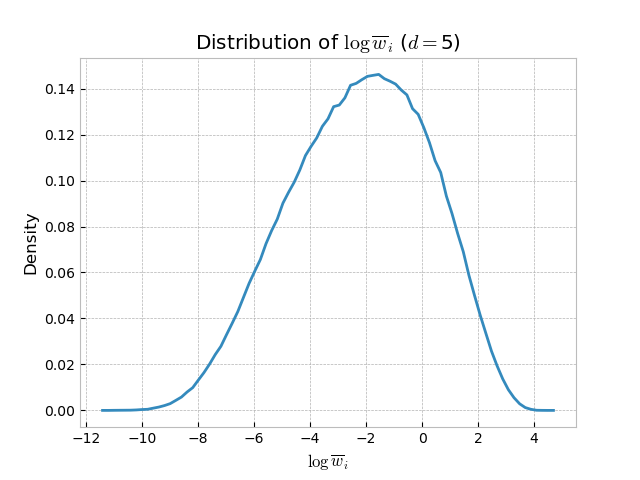} & \includegraphics[scale=0.3]{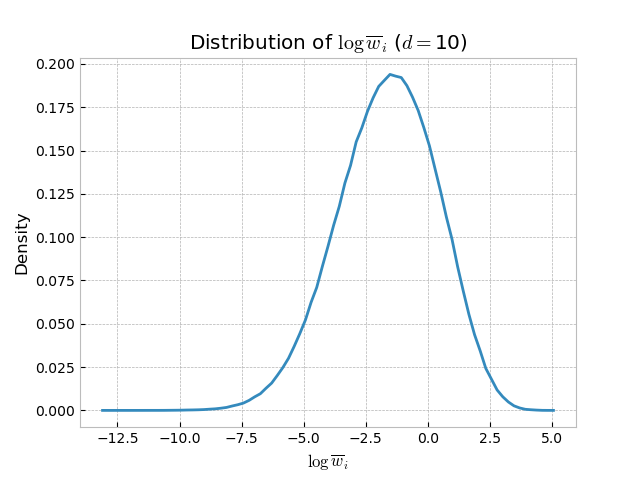} & \includegraphics[scale=0.3]{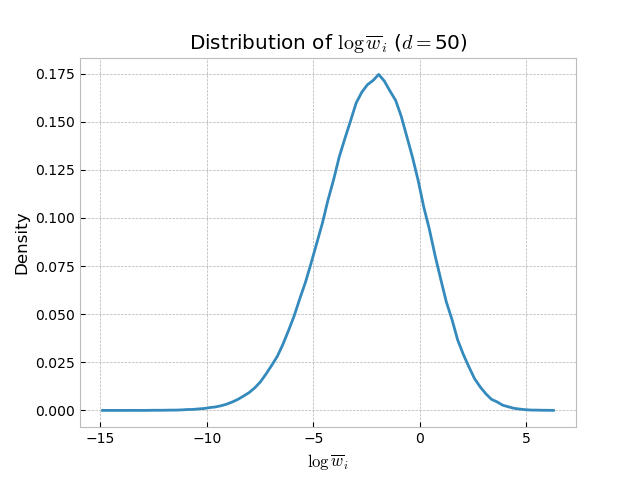} \\ 
    \includegraphics[scale=0.3]{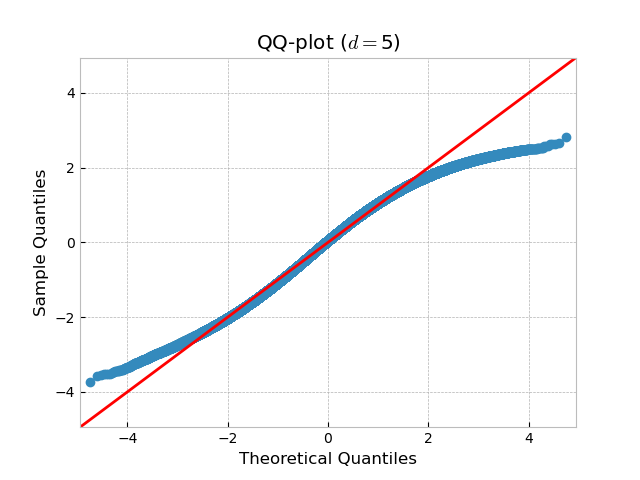} & \includegraphics[scale=0.3]{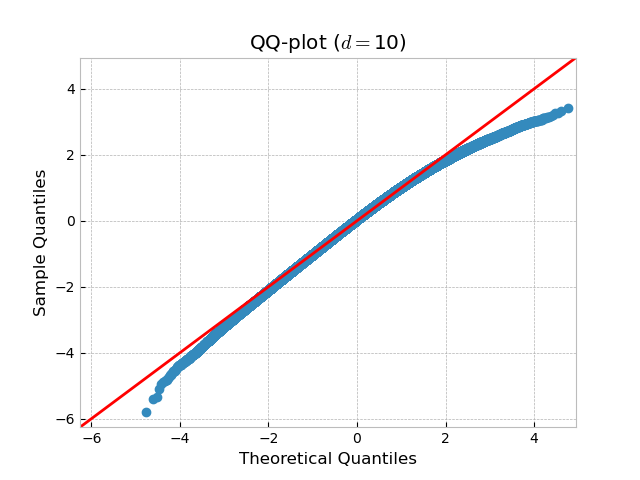} & \includegraphics[scale=0.3]{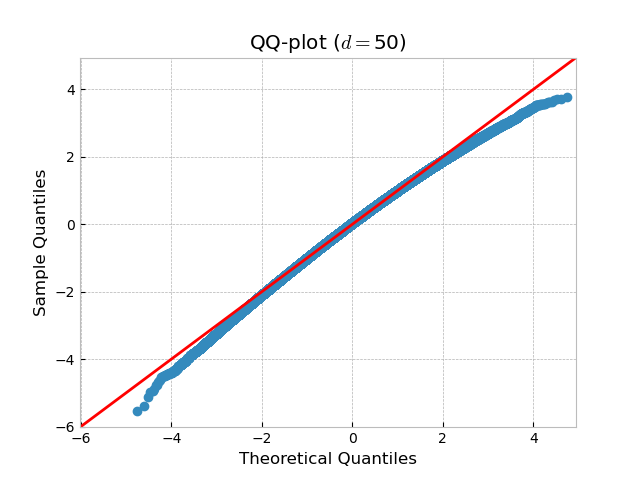} \\ 
   \includegraphics[scale=0.3]{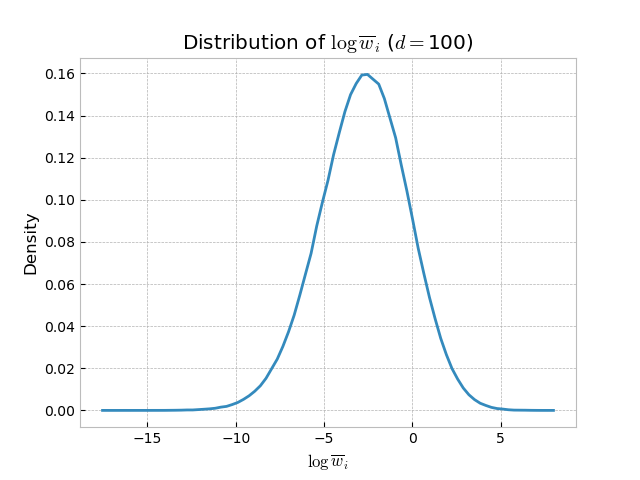} & \includegraphics[scale=0.3]{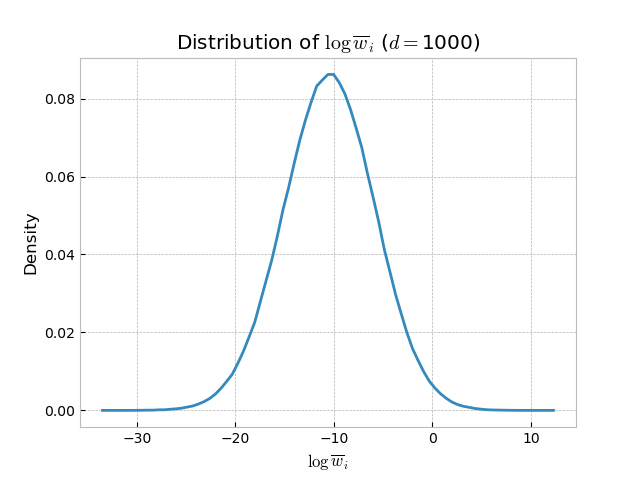} & \includegraphics[scale=0.3]{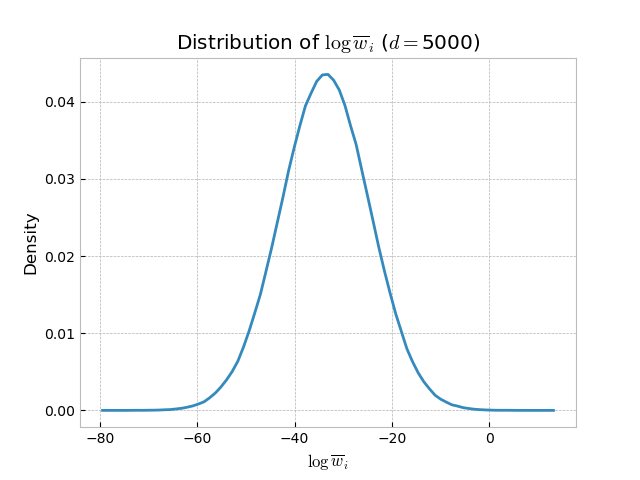}  \\
    \includegraphics[scale=0.3]{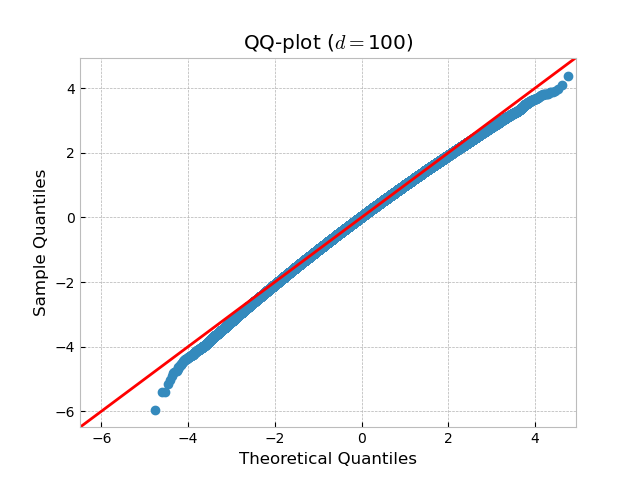} & \includegraphics[scale=0.3]{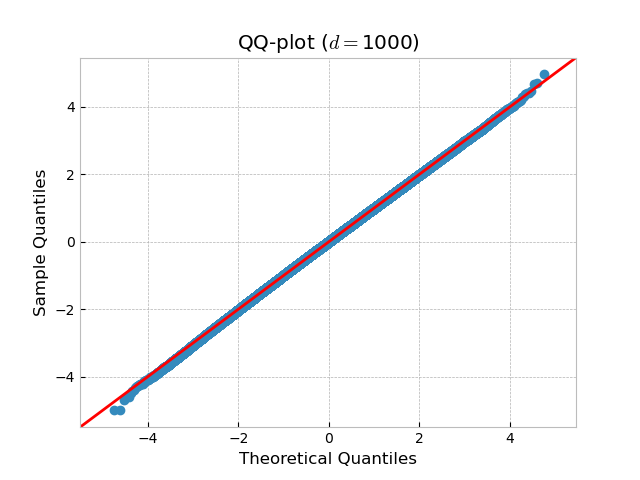} & \includegraphics[scale=0.3]{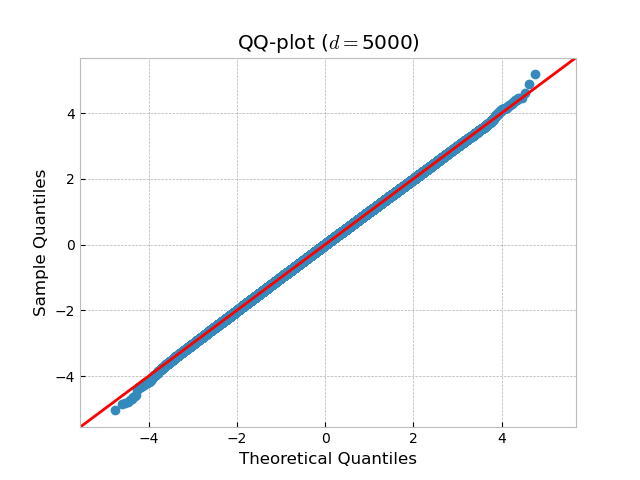}
  \end{tabular}
  \caption{Plotted is the distribution of $\log \overline{w}_i$ and the corresponding QQ-plot for the VAE considered in \Cref{subsec:RealData}, for a randomly selected datapoint $x$ in the testing set, randomly generated model parameters $(\theta, \phi)$ and for varying values of $d$.}
  \label{fig:lognormalweightsdistribution}
\end{figure}

From there, we want to check the validity of our asymptotic results. To do so, a first comment is that, regardless of the distribution of the weights, \Cref{prop:GenDomke} predicts the following: for all $\alpha \in [0,1)$,
\begin{align} 
  &\liren[\alpha][N,d](\theta, \phi;x) = \mathcal{L}_d^{(\alpha)}(\theta, \phi; x) - \frac{\gamma_{\alpha,d}^2}{2 N}+o\left(\frac{1}{N}\right),  \label{eq:OneOverNGenDomkeVAE2}
  \end{align}
where, $\liren[\alpha][N,d](\theta, \phi;x)$ denotes the VR-IWAE bound, $\mathcal{L}_d^{(\alpha)}(\theta, \phi; x)$ the VR-bound and $\gamma_{\alpha,d}^2 = (1-\alpha)^{-1} \mathbb{V}_{Z \sim q_\phi} (\Rtalpha(Z))$ (and we have emphasized the dependency in $d$ in each of those terms). If we further make the assumption that the weights are of the form \eqref{eq:approxlognormalweights} (which appears to approximately be the case as the dimension $d$ increases as per Figure~\ref{fig:lognormalweightsdistribution}), then \Cref{thm:iidRv} predicts under \ref{hypBickel} that: for all $\alpha \in [0,1)$,
\begin{align}
  \lim_{\substack{N,d \to \infty \\ \log N / d^{1/3} \to 0}} \liren[\alpha][N,d](\theta, \phi;x) - \mathrm{ELBO}_d(\theta, \phi;x) - \sqrt{d} \sigma \sqrt{2 \log N} + O\lr{\frac{\sqrt{d} \log \log N}{\sqrt{\log N}}} = 0. \label{eq:LogNormGenDomkeVAE2}
\end{align}
Here we have emphasized the dependency in $d$ in $\mathrm{ELBO}_d (\theta, \phi; x)$ and we have also used the fact that $\mathrm{ELBO}_d(\theta, \phi;x) - \ell_d(\theta;x) = - d a$ (as previously stated, this follows from taking the expectation in \eqref{eq:WNGen} from the proof of \Cref{prop:DeltaGen} in \Cref{subsec:proof:prop:deltaGen}). Hence, to check whether these results apply, we want to look at functions of the form
\begin{align}
  \mbox{(\Cref*{prop:GenDomke})} \quad & c_1 \mapsto \mathcal{L}_d^{(\alpha)}(\theta, \phi; x) - \frac{\gamma_{\alpha,d}^2}{2 N}+ \frac{c_1}{N} \quad \label{eq:funcThm3VAE} \\
  \mbox{(\Cref*{thm:iidRv})} \quad & c_2 \mapsto \mathrm{ELBO}_d (\theta, \phi; x) + \sqrt{d} \sigma {\sqrt{2 \log N}} + {\frac{c_2 \sqrt{d} \log \log N}{\sqrt{\log N}}} \label{eq:funcThm5VAE}
\end{align}
and see how well they approximate the behavior of the VR-IWAE bound $\liren[\alpha][N,d](\theta, \phi; x)$. Although the functions above contain unknown terms, those terms can all be estimated: the VR bound can be estimated using MC sampling as in \eqref{eq:biasedVRbound} and so can the ELBO. As for $\sigma$, it can be estimated from the sample standard deviation of the log-weights (and $\gamma_{\alpha,d}^2$ can be estimated in a similar fashion).

Note as a side remark that we are considering the VR-IWAE bound as the quantity of interest whose behavior shall be mimicked by \eqref{eq:OneOverNGenDomkeVAE2} or \eqref{eq:LogNormGenDomkeVAE2} (through \eqref{eq:funcThm3VAE} 
or \eqref{eq:funcThm5VAE}). Indeed, while we were working with the variational gap in our previous numerical experiments, computing this quantity requires us to estimate both the VR-IWAE bound and the log-likelihood here, which would have incurred an additional source of randomness that we have been able to avoid in both \eqref{eq:OneOverNGenDomkeVAE2} and \eqref{eq:LogNormGenDomkeVAE2}. \newline

\begin{figure}
  \begin{tabular}{ccc}
  \includegraphics[scale=0.3]{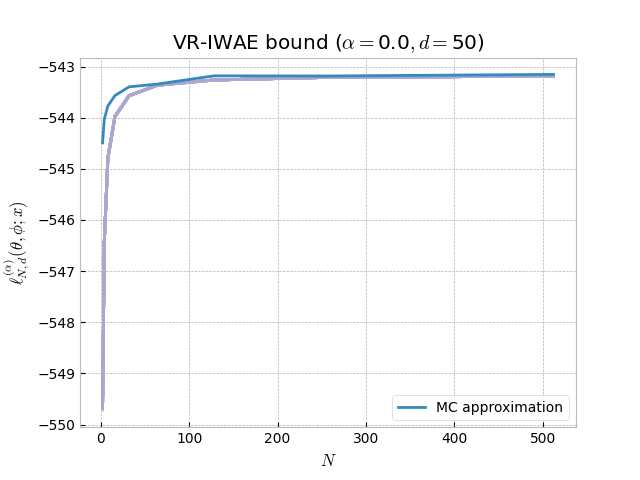} &   \includegraphics[scale=0.3]{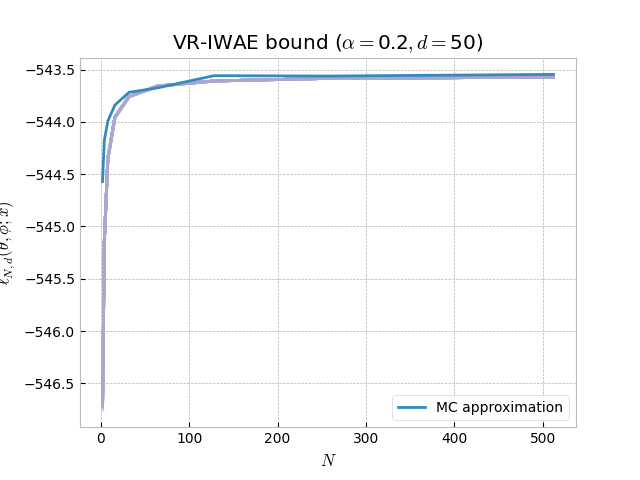} & \includegraphics[scale=0.3]{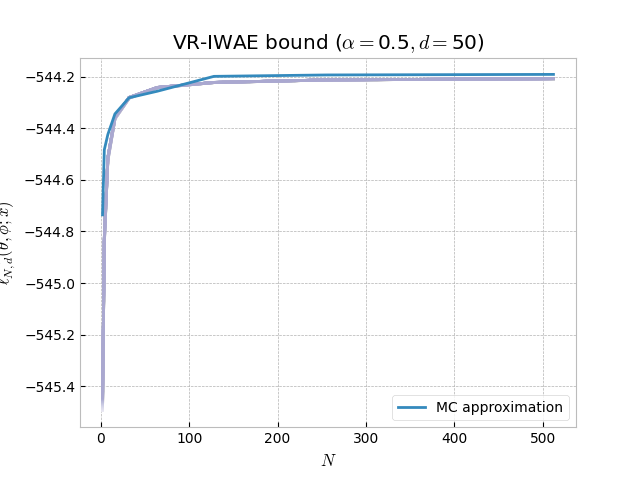} \\
  \includegraphics[scale=0.3]{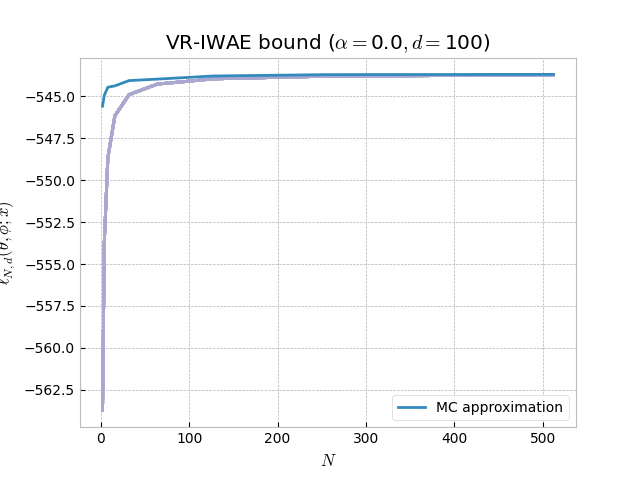} &   \includegraphics[scale=0.3]{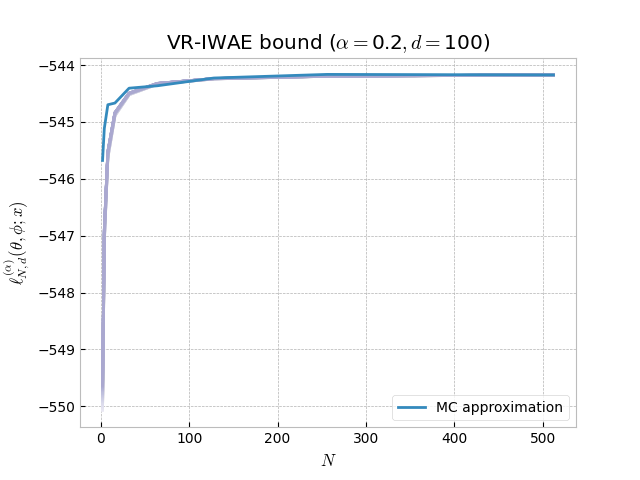} & \includegraphics[scale=0.3]{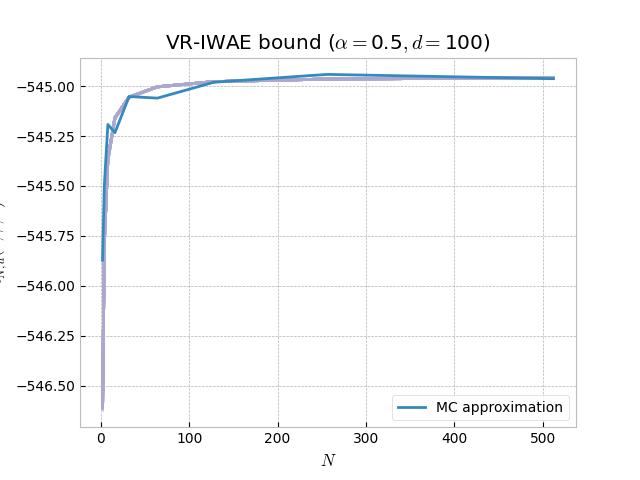} \\
  \includegraphics[scale=0.3]{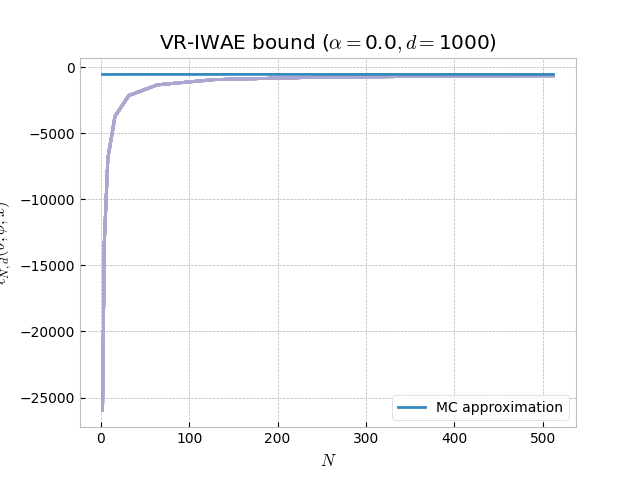} &   \includegraphics[scale=0.3]{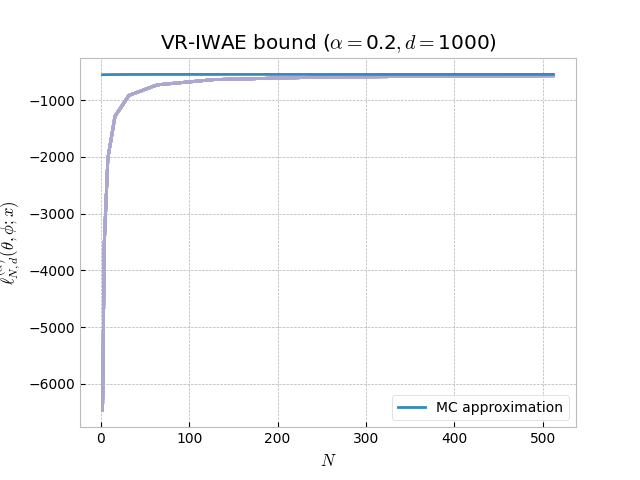} & \includegraphics[scale=0.3]{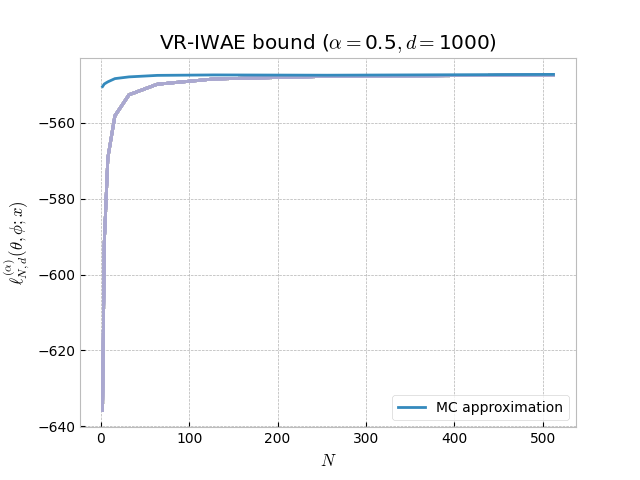} \\
  \end{tabular}
  \caption{Plotted in blue is the MC estimate of the VR-IWAE bound $\liren[\alpha][N,d](\theta, \phi; x)$ (averaged over 100 MC samples) for the VAE considered in \Cref{subsec:RealData}, for a randomly selected datapoint $x$ in the testing set, randomly generated model parameters $(\theta, \phi)$ and for varying values of $\alpha$ and of $d$. Plotted in purple are curves of the form \eqref{eq:funcThm3VAE} with tailored values of $c_1$.}
  \label{fig:vrIWAEinVAElow}
\end{figure}

Based on \Cref{fig:lognormalweightsdistribution}, we expect two situations to arise at this stage: (i) the asymptotic regime suggested by \Cref{prop:GenDomke} captures the behavior of the VR-IWAE bound in low to medium dimensions and (ii) the asymptotic regime predicted by \Cref{thm:iidRv} is accurate as $d$ increases and $N$ does not grow faster than $d^{1/3}$. 

This is exactly what we observe in Figures \ref{fig:vrIWAEinVAElow} and \ref{fig:vrIWAEinVAEhigh}, in which $\sigma$ is estimated with the $1 000 000$ (unnormalized) weight samples used to build \Cref{fig:lognormalweightsdistribution} and so are the VR bound and the ELBO (additional plots are also available in Figures \ref{fig:vrIWAEinVAElowApp} and \ref{fig:vrIWAEinVAEhighApp} of \Cref{subsec:VAEexApp}). 
In particular, we see in Figure \ref{fig:vrIWAEinVAElow} that the asymptotic regime of \Cref{prop:GenDomke} mimics the behavior of the VR-IWAE bound in low to medium dimensions as long as $\gamma_{\alpha, d}^2$ does not grow too quickly with $d$ (and we already observe a mismatch between the two for $\alpha = 0$ and $d = 100$). 

Nevertheless, the VR-IWAE bound ends up straying away from the behavior predicted by \Cref{prop:GenDomke} as $d$ increases unless $N$ becomes impractically large (with the particularity that this process happens slower as $\alpha$ increases). We then see on Figure \ref{fig:vrIWAEinVAEhigh} that, as $d$ increases to reach high-dimensional settings so that the ratio $\log N/d^{1/3}$ becomes small for the values of $N$ considered here, the behavior predicted by \Cref{thm:iidRv} starts to emerge. \newline

\begin{figure}
  \begin{tabular}{ccc}
    \includegraphics[scale=0.29]{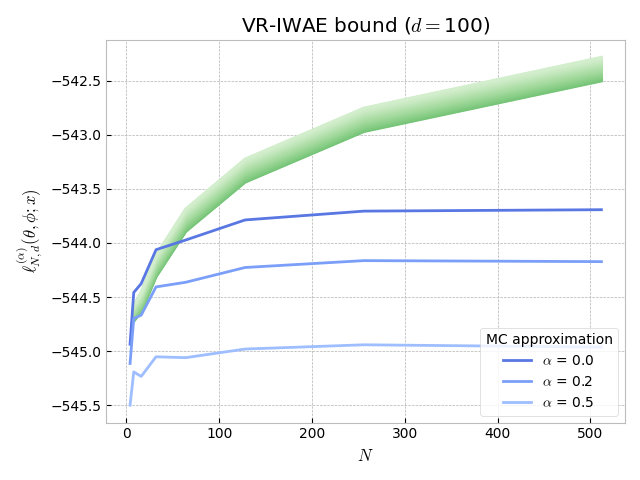}
    & \includegraphics[scale=0.29]{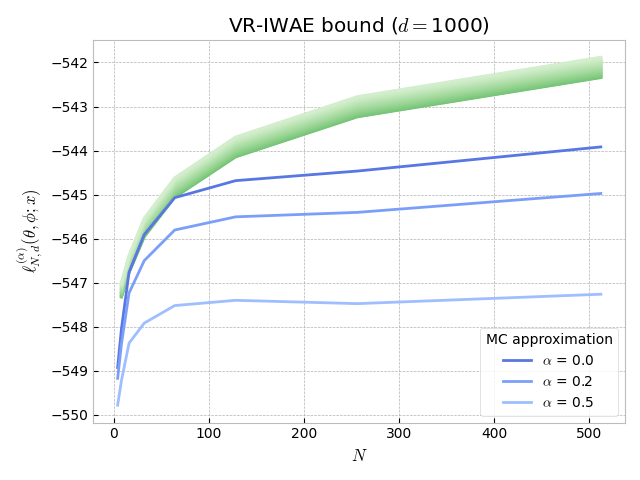}
    & \includegraphics[scale=0.29]{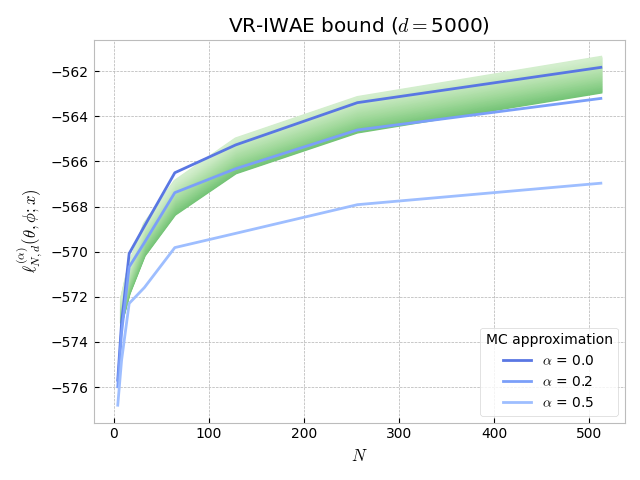} \\
  \end{tabular}
  \caption{Plotted in blue is the MC estimate of the VR-IWAE bound $\liren[\alpha][N,d](\theta, \phi; x)$ (averaged over 100 MC samples) for the VAE considered in \Cref{subsec:RealData}, for a randomly selected datapoint $x$ in the testing set, randomly generated model parameters $(\theta, \phi)$ and for varying values of $\alpha$ and of $d$. Plotted in green are curves of the form \eqref{eq:funcThm5VAE} with tailored values of $c_2$.}
  \label{fig:vrIWAEinVAEhigh}
\end{figure}

We now look into the training of the VR-IWAE bound and more specifically into the SNR in medium to high dimensions at initialization, since this scenario corresponds to situations where the VR-IWAE bound seems to resemble more and more the behavior predicted by \Cref{thm:iidRv} (as observed in Figure \ref{fig:vrIWAEinVAEhigh}). Following the methodology from the previous subsection, the results are presented in Figures~\ref{fig:vae_vr_iwae_grad_snr_theta} and \ref{fig:vae_vr_iwae_grad_snr_phi_rep}, in which we have plotted the SNR for the generative network and for the inference network respectively in the reparameterized case alongside theoretical lines that reflect the asymptotic regimes predicted by \Cref{prop:SNRconvergence}. 

\begin{figure}[t]
  \begin{tabular}{ccc}
    \includegraphics[scale=0.29]{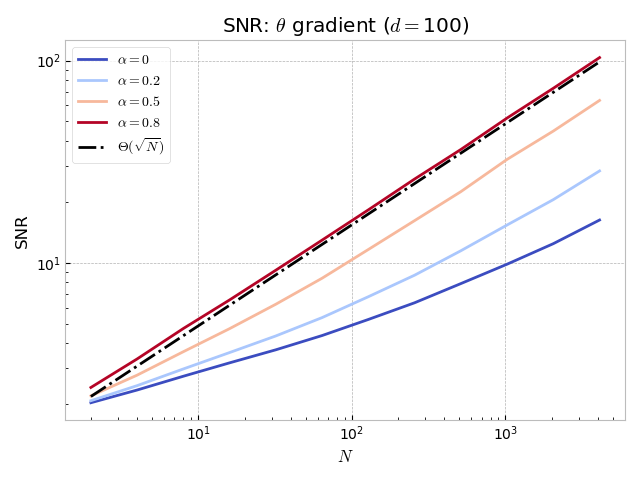}  &
    \includegraphics[scale=0.29]{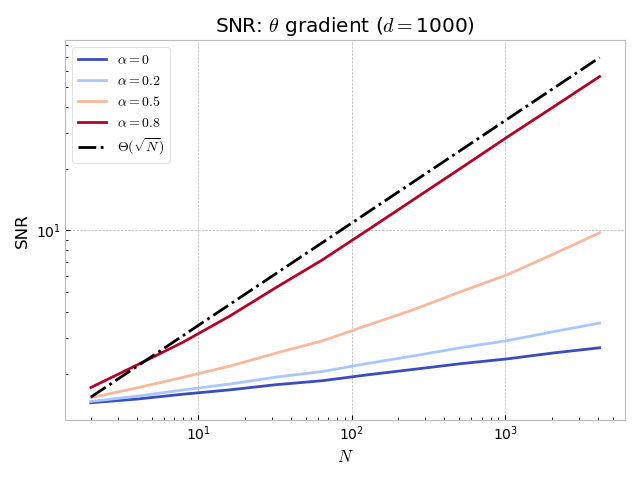}  &
    \includegraphics[scale=0.29]{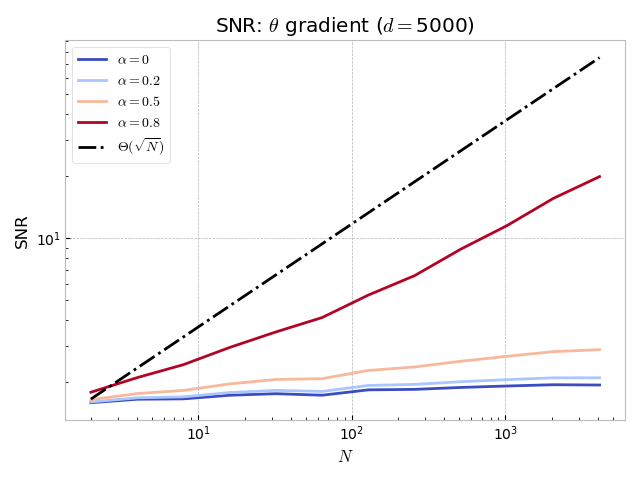} \\
  \end{tabular}
  \caption{Plotted is the SNR of the generative network ($\theta$) gradients in the reparameterized case (computed over 10000 MC samples) for the VAE considered in \Cref{subsec:RealData} as a function of $N$, for a randomly selected datapoint $x$ in the testing set, randomly generated model parameters $(\theta, \phi)$ and for varying values of $\alpha$ and of $d$. \label{fig:vae_vr_iwae_grad_snr_theta}}
\end{figure}

\begin{figure}[h]
  \begin{tabular}{ccc}
    \includegraphics[scale=0.29]{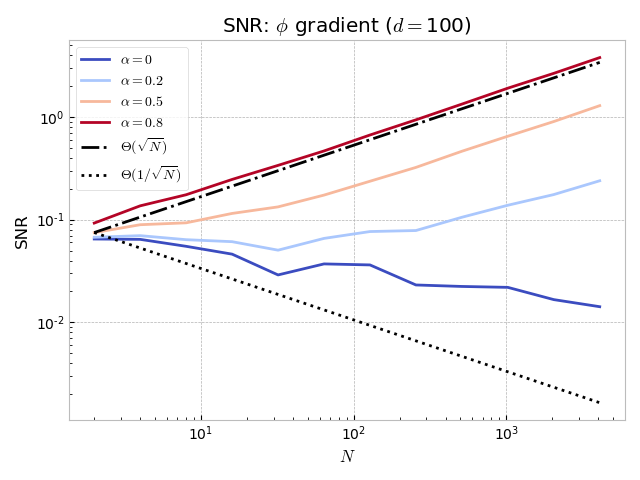} &
    \includegraphics[scale=0.29]{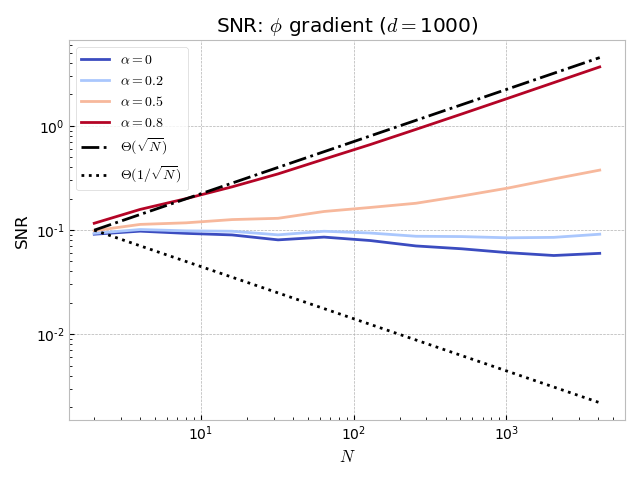} &
    \includegraphics[scale=0.29]{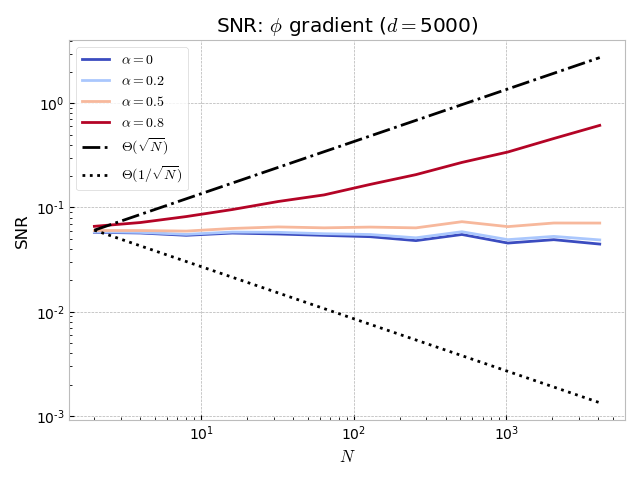}  \\
  \end{tabular}
  \caption{Plotted is the SNR of the inference network ($\phi$) gradients in the reparameterized case (computed over 10000 MC samples) for the VAE considered in \Cref{subsec:RealData} as a function of $N$, for a randomly selected datapoint $x$ in the testing set, randomly generated model parameters $(\theta, \phi)$ and for varying values of $\alpha$ and of $d$. \label{fig:vae_vr_iwae_grad_snr_phi_rep}}
\end{figure}

\begin{figure}[h]
  \begin{tabular}{ccc}
    \includegraphics[scale=0.29]{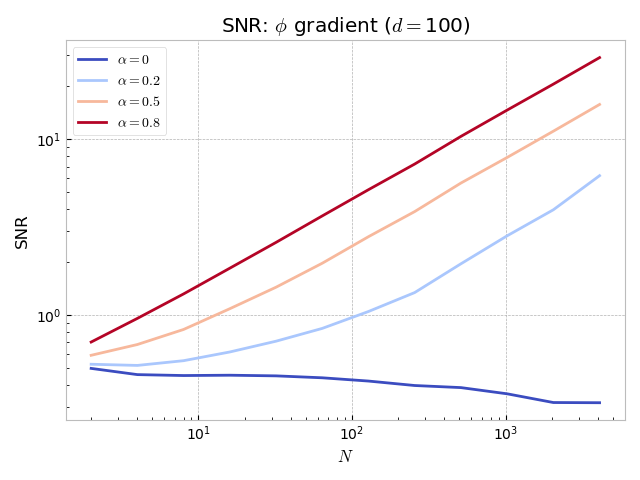} &
    \includegraphics[scale=0.29]{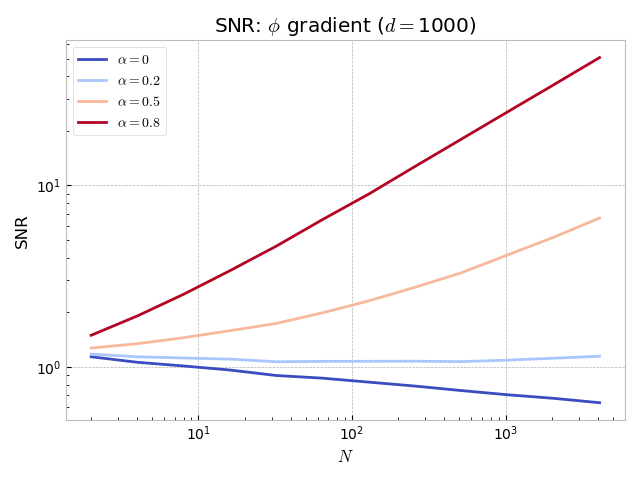} &
    \includegraphics[scale=0.29]{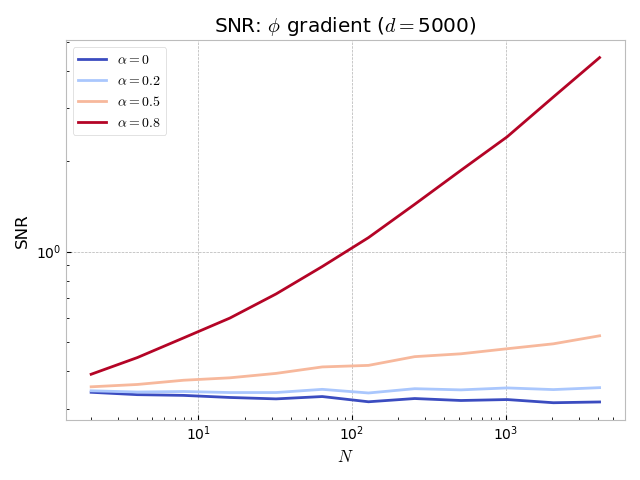}  \\
  \end{tabular}
  \caption{Plotted is the SNR of the inference network ($\phi$) gradients in the doubly-reparameterized case (computed over 10000 MC samples) for the VAE considered in \Cref{subsec:RealData} as a function of $N$, for a randomly selected datapoint $x$ in the testing set, randomly generated model parameters $(\theta, \phi)$ and for varying values of $\alpha$ and of $d$. \label{fig:vae_vr_iwae_grad_snr_phi_drep}}
\end{figure}

As already observed in \Cref{subsec:linGaussEx}, the SNR benefits from setting $\alpha > 0$ and the asymptotic rates predicted by \Cref{prop:SNRconvergence} do not capture the SNR behavior as $d$ increases (unless $N$ is unpractically large and/or we appeal to higher values of $\alpha$). Furthermore, and as we can see in Figure \ref{fig:vae_vr_iwae_grad_snr_phi_drep}, resorting to doubly-reparameterized estimators improves the SNR. \newline

We thus confirmed that approximately log-normal weights can arise in real data scenarios as $d$ increases and that our theoretical study provides a useful framework to capture the impact of the weights on the VR-IWAE bound as a function of $N$, $d$ and~$\alpha$. In line with our empirical findings for the SNR, we also obtained that the asymptotic rates predicted by \Cref{prop:SNRconvergence} match the observed rates in low to medium dimensions and we postulated that the weight collapse occuring in the VR-IWAE bound as $d$ increases may deteriorate the SNR too. \looseness=-1 

One aspect that remains unexplored empirically is the role of $M$ in the VR-IWAE bound methodology, and in particular the interplay between $M$ and $N$ in the learning outcome. Indeed, the total number of samples needed per iteration in the gradient descent procedure is $N \times M$, with $M$ being responsible for the usual $1/M$ variance reduction in gradient estimators such as \eqref{eq:deltaMNthetaell}. Intuitively, and following a similar line of reasoning as for $\alpha$, we expect to see a bias-variance tradeoff between increasing $M$ or $N$ while keeping $M \times N$ fixed (we refer to \Cref{subsec:VAEexApp} for details). 
Furthermore, if the weight collapse appearing in the VR-IWAE bound as $d$ increases ends up badly impacting the associated gradient descent, our results indicate that practitioners should either (i) set $N = 1$ and allocate the maximum computational budget to $M$, which in fact corresponds to setting $\alpha = 1$ in the VR-IWAE bound, (ii) find more suitable variational approximations that can capture the complexity within the posterior density or (iii) resort to/construct better gradient estimates (e.g. doubly-reparameterized gradient estimators). 

\section{Conclusion}

In this paper, we formalized the VR-IWAE bound, a variational bound depending on an hyperparameter $\alpha \in [0,1)$ which generalizes the standard IWAE bound ($\alpha = 0$). We showed that the VR-IWAE bound provides theoretical guarantees behind various VR bound-based schemes proposed in the alpha-divergence variational inference literature and identified other additional desirable properties of this bound. 

We then provided two complementary analyses of the variational gap, that is of the difference between the VR-IWAE bound and the marginal log-likelihood. The first analysis shed light on how $\alpha$ may play a role in reducing the variational gap. We then proposed a second analysis to better capture the behavior of the variational gap in high-dimensional  scenarios, establishing that the variational gap suffers in this case from a damaging weight collapse phenomenon for all $\alpha \in [0,1)$. Lastly, we illustrated our theoretical results over several toy and real-data examples.

Overall, our work provides foundations for improving the IWAE and VR methodologies and we now state potential directions of research to extend it. Firstly, one may investigate whether the weight collapse behavior applies beyond the cases we highlighted. Looking into how this weight collapse affects the gradient descent procedures associated to the VR-IWAE bound could be a second direction of research. Thirdly, and in order to improve on the VR-IWAE bound methodology beyond the weight collapse phenomenon, one may seek to further build on the fact that the VR-IWAE bound is the theoretically-sound extension of the IWAE bound that originates from the Alpha-Divergence Variational Inference methodology.

\section*{Acknowledgments} 

Kam\'{e}lia Daudel and Arnaud Doucet acknowledge support of the UK Defence Science and Technology Laboratory (Dstl) and Engineering and Physical Research Council (EPSRC) under grant EP/R013616/1. This is part of the collaboration between US DOD, UK MOD and UK EPSRC under the Multidisciplinary University Research Initiative. Joe Benton was supported by the EPSRC Centre for Doctoral Training in Modern Statistics and Statistical Machine Learning (EP/S023151/1). Arnaud Doucet also acknowledges support from the EPSRC grant EP/R034710/1.

\bibliography{references}

\appendix

\section{Deferred proofs of \Cref{sec:VR-IWAEbound}}

\subsection{Extension of $\liren$ to the case $\alpha = 1$}
\label{subsec:extAlpha1}

We prove that under common differentiability assumptions, the following limit holds:
    $$
    \lim_{\alpha \to 1} \liren  (\theta,\phi;x) = \mathrm{ELBO}(\theta, \phi;x).
    $$

\begin{proof}
Setting $f(\alpha) = \int \int \prod_{i = 1}^N q_\phi (z_i) \log \lr{ \frac{1}{N} \sum_{j = 1}^N \w(z_j) ^{1-\alpha} } \rmd z_{1:N}$, the bound $\liren  (\theta,\phi;x)$ can be rewritten as
    $$
    \liren  (\theta,\phi;x) = - \frac{f(\alpha) - f(1)}{\alpha-1}
    $$
    and hence, $\lim_{\alpha \to 1} \liren  (\theta,\phi;x) = - f'(1)$. We then get the desired result by observing that 
    $$
    f'(\alpha) = \int \int \prod_{i = 1}^N q_\phi (z_i) \lr{ \frac{\frac{1}{N} \sum_{j = 1}^N - \log( \bar{w}_{\phi, \theta}(z_j)) \bar{w}_{\phi, \theta}(z_j) ^{1-\alpha}     }{\frac{1}{N} \sum_{j = 1}^N \bar{w}_{\phi, \theta}(z_j) ^{1-\alpha} }} \rmd z_{1:N}
    $$
    and letting $\alpha \to 1$ in the quantity above.
\end{proof}

\subsection{Proof of \Cref{lem:generalBound}}
\label{sec:prooflemgeneralbound}

\begin{proof}[Proof of \Cref{lem:generalBound}] The results for the case $\alpha = 0$ follow from \cite{burda2015importance} and we focus on the case $\alpha \in (0,1)$ in the proof below.
    \begin{enumerate}
        \item One the one hand, \cite[Theorem 1]{li2016renyi} implies that: for all $\alpha \in (0,1)$,
        \begin{align} \label{eq:boundVRELBO}
        \mathcal{L}^{(\alpha)}(\theta, \phi; x) \leq \ell(\theta;x).
        \end{align}
        On the other hand, we obtain from \cite[Theorem 2]{li2016renyi} that:
        \begin{itemize}
            \item For all $N \in \mathbb{N}^\star$ and all $\alpha < 1$
            \begin{align*}
            \liren(\theta, \phi;x) \leq \liren[N+1](\theta, \phi;x) \leq \mathcal{L}^{(\alpha)}(\theta, \phi; x)
            \end{align*}
            which gives \eqref{eq:ineqELBOLnAlpha} when paired with \eqref{eq:boundVRELBO}.
            \item If the function $z \mapsto \w(z)$ is bounded, then $\liren(\theta, \phi;x)$ approaches the VR bound $\mathcal{L}^{(\alpha)}(\theta, \phi; x)$ as $N$ goes to infinity.
        \end{itemize}
    \item Let $1> \alpha_1 > \alpha_2 > 0$. Then, the functions $u \mapsto u^{\frac{1-\alpha_1}{1-\alpha_2}}$ and $u \mapsto u^{1-\alpha_2}$ are concave for all $u > 0$ and hence Jensen's inequality implies
    \begin{align*}
        \liren[\alpha_1](\theta, \phi; x) & = \frac{1}{1-\alpha_1} \int \int \prod_{i=1}^N q_\phi(z_i|x) \log \lr{ \frac{1}{N} \sum_{j=1}^N \lrb{\w(z_i)^{1-\alpha_2}}^{\frac{1-\alpha_1}{1-\alpha_2}}} \rmd z_{1:N} \\
        & \leq \liren[\alpha_2](\theta, \phi; x) \\
        & \leq \liren[0](\theta, \phi; x) 
    \end{align*}
    The desired result \eqref{eq:looseness} follows by using that $\liren[0](\theta, \phi; x) = \liwae (\theta, \phi; x) $. As for the case of equality, it is obtained as the case of equality of Jensen's inequality.

    \item Under the reparameterization trick,
\begin{align*} 
  \liren (\theta,\phi;x)  & =  \frac{1}{1-\alpha} \int \int \prod_{i = 1}^N q(\varepsilon_i) \log \lr{ \frac{1}{N} \sum_{j = 1}^N \w(f(\varepsilon_j, \phi))^{1-\alpha}} \rmd \varepsilon_{1:N}
  \end{align*} 
leading, under common differentiability assumptions, to
\begin{align*} 
 \frac{\partial}{\partial \phi} \liren  (\theta,\phi;x)  &= \int \int \prod_{i = 1}^N q(\varepsilon_i) \lr{\sum_{j = 1}^N  \frac{\w(f(\varepsilon_j, \phi))^{-\alpha}  \frac{\partial}{\partial \phi} \w(f(\varepsilon_j, \phi)) }{\sum_{k = 1}^N \w(f(\varepsilon_k, \phi))^{1-\alpha}}}   \rmd \varepsilon_{1:N}. 
\end{align*}
The desired result \eqref{reparam:liren} is then obtained using the REINFORCE trick 
$$  \frac{\partial}{\partial \phi} \w(f(\varepsilon_j, \phi)) = \w(f(\varepsilon_j, \phi))  \frac{\partial}{\partial \phi} \log \w(f(\varepsilon_j, \phi))$$ and the unbiased estimator \eqref{reparam:liren:MC} follows immediately.

\end{enumerate}
\end{proof}

\subsection{Proof of \Cref{prop:SNRconvergence}}
\label{subsec:SNRanalysis:app}

The proof of \Cref{prop:SNRconvergence} is based on the proof of the corresponding result in \cite[Theorem 1]{rainforth2018tighter} (arxiv version of 5 Mar 2019). First, we prove the following useful lemma, which is an extension of \cite[Lemma 1]{rainforth2018tighter}.

\begin{lem}
    \label{lem:keySNRlemma}
    Suppose we have random variables $X_{i,j}$ for all $i = 1 \dots r$ and $j = 1 \dots N$ satisfying
    \begin{enumerate}[label=(\roman*)]
        \item \label{ass:meanzero} $\PE\lr{X_{i,j}} = 0$ for all $i = 1 \dots r$ and $j = 1 \dots N$;
        \item \label{ass:rthmoment} $\PE\lr{|X_{i,j}|^r} < \infty$ for all $i = 1 \dots r$ and $j = 1 \dots N$;
        \item \label{ass:iid} for each $i = 1 \dots r$, the random variables $X_{i,1}, \dots, X_{i,N}$ are i.i.d.;
        \item \label{ass:independence} for each $i = 1 \dots r$ and $j = 1 \dots N$, the random variables $X_{i,j}$ and $\{X_{i', j'}\}_{i' = 1 \dots r; \; j' \neq j}$ are independent.
    \end{enumerate}
    Then
    \begin{equation*}
        \PE\lrb{\lr{\frac{1}{N} \sum_{j=1}^N X_{1,j}} \dots \lr{\frac{1}{N} \sum_{j=1}^N X_{r,j}}} = \begin{cases} O(N^{-r/2}) & \text{if $r$ is even,} \\ O(N^{-(r+1)/2}) & \text{if $r$ is odd.} \end{cases}
    \end{equation*}
\end{lem}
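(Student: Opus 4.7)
The plan is to expand the product inside the expectation by linearity to obtain
\begin{equation*}
\PE\lrb{\prod_{i=1}^r \frac{1}{N}\sum_{j=1}^N X_{i,j}} = \frac{1}{N^r} \sum_{(j_1, \ldots, j_r) \in \{1,\ldots,N\}^r} \PE(X_{1,j_1} X_{2,j_2} \cdots X_{r,j_r}),
\end{equation*}
and then to classify the multi-indices $(j_1,\ldots,j_r)$ according to which of their entries coincide. For a given multi-index, let $G_k = \{i : j_i = k\}$ and let $K$ be the set of $k$ with $G_k \neq \emptyset$. Assumption \ref{ass:independence}, which forces the blocks of variables indexed by different $j$-values to be mutually independent, allows the joint expectation to factor as
\begin{equation*}
\PE(X_{1,j_1}\cdots X_{r,j_r}) = \prod_{k \in K} \PE\lr{\prod_{i \in G_k} X_{i,k}}.
\end{equation*}
Assumption \ref{ass:meanzero} then kills this product whenever some group $G_k$ has cardinality exactly one, so the surviving multi-indices are precisely those for which every distinct value among $j_1,\ldots,j_r$ appears with multiplicity at least two.

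The second step is to count these surviving multi-indices. If $r$ is even, the maximum possible number of distinct $j$-values is $r/2$, yielding at most $O(N^{r/2})$ surviving multi-indices; if $r$ is odd, the maximum is $(r-1)/2$, yielding $O(N^{(r-1)/2})$. For each surviving term, the generalized H\"older inequality together with assumptions \ref{ass:rthmoment} and \ref{ass:iid} produces the uniform bound
\begin{equation*}
|\PE(X_{1,j_1}\cdots X_{r,j_r})| \leq \prod_{i=1}^r \PE(|X_{i,1}|^r)^{1/r} < \infty.
\end{equation*}
Combining the count of surviving terms with this uniform bound and dividing by the overall $N^{-r}$ prefactor delivers the claimed rates $O(N^{-r/2})$ in the even case and $O(N^{-(r+1)/2})$ in the odd case.

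The main obstacle is the factorization step: assumption \ref{ass:independence} is phrased as independence between a single variable $X_{i,j}$ and the family $\{X_{i',j'}\}_{j' \neq j}$, whereas the factorization requires mutual independence of the $\sigma$-algebras $\sigma(X_{i,k} : i \in G_k)$ across $k \in K$. This will be handled by unpacking \ref{ass:independence} to show that the joint law of $\{X_{i,j}\}_{i,j}$ factors across the $j$-coordinate, so that distinct blocks in $\{G_k\}_{k \in K}$ are indeed mutually independent. Once this justification is in place, the combinatorial counting and the H\"older moment estimate are routine and yield the result.
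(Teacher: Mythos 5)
Your overall route is the same as the paper's: expand the product, discard the multi-indices containing a value of multiplicity one via mean-zero plus independence, bound the surviving terms uniformly by the generalized H\"older inequality, and count that at most $O(N^{\lfloor r/2\rfloor})$ multi-indices survive. The counting and the H\"older step are fine.

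The one genuine problem is the factorization
$\PE(X_{1,j_1}\cdots X_{r,j_r}) = \prod_{k \in K} \PE\lr{\prod_{i \in G_k} X_{i,k}}$
together with your proposed repair. Assumption \ref{ass:independence} does \emph{not} imply that the joint law factors across the $j$-coordinate, so the blocks $\lrcb{X_{i,k} : i \in G_k}$ need not be mutually independent: take $r = 2$, $N = 2$ and four $\pm 1$ variables uniform on the set where their product equals $1$; then each single variable is independent of the opposite column (so \ref{ass:independence} holds), yet $X_{1,1}X_{2,1} = X_{1,2}X_{2,2}$ almost surely, so the columns are not independent as blocks and the displayed factorization fails for a term with two groups of size two. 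Fortunately you do not need the full factorization anywhere. To kill a term containing a singleton group $G_k = \{i^\ast\}$ you only need to peel off that one factor: $X_{i^\ast,k}$ is, by \ref{ass:independence}, independent of the product of all the remaining variables (they all live in columns $j' \neq k$), so the expectation splits as $\PE(X_{i^\ast,k}) \cdot \PE(\text{rest}) = 0$. This is exactly what the paper does. For the surviving terms you never factor at all — the H\"older bound applies to the whole product directly. With the factorization display replaced by this one-factor peeling, your argument is complete and coincides with the paper's proof.
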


\begin{proof}
We have
\begin{equation} \label{eq:rewritingProdOfSums}
    \lr{\frac{1}{N} \sum_{j=1}^N X_{1,j}} \dots \lr{\frac{1}{N} \sum_{j=1}^N X_{r,j}} = \frac{1}{N^r} \sum_{\lrcb{A_1, \dots, A_t}} \sum_{(j_1, \dots, j_t)} \lr{\prod_{i \in A_1} X_{i, j_1}} \dots \lr{\prod_{i \in A_t} X_{i, j_t}}
\end{equation}
where the first sum is over all partitions $\lrcb{A_1, \dots, A_t}$ of the set $\{1, \dots, r\}$ (so that $t$ is an integer between $1$ and $r$ for each partition) and the second sum is over all tuples $(j_1, \dots, j_t)$ with each element being a distinct integer between $1$ and $N$ (e.g. for the partition $\{A_1 \}$ with $A_1 = \{1, \dots, r\}$ and $t = 1$, the second sum reduces to the sum over $j_1 = 1 \ldots N$ and for the partition $\lrcb{A_1, \ldots, A_r}$ with $A_p = \{p\}$ for all $p = 1 \ldots r$ and $t = r$, the second sum corresponds to the sum over all permutations over the subsets of length $r$ of $(1, \ldots, N)$).

Now consider the case where $|A_p| = 1$ for some integer $p$ between $1$ and $t$ in a certain partition $\lrcb{A_1, \dots, A_t}$. Without any loss of generality, we let $|A_1| = 1$. Then, by the independence of condition \ref{ass:independence} followed by condition \ref{ass:meanzero}, we have
\begin{align*}
    \PE\lrb{\lr{\prod_{i \in A_1} X_{i, j_1}} \dots \lr{\prod_{i \in A_t} X_{i, j_t}}} & = \PE\lrb{X_{i^\ast, j_1}} \PE\lrb{\lr{\prod_{i \in A_2} X_{i, j_2}} \dots \lr{\prod_{i \in A_t} X_{i, j_t}}} \\
    & = 0
\end{align*}
where $i^\ast$ is the single element of $A_1$. Hence, we can restrict the sum over $\lrcb{A_1, \dots, A_t}$ to only consider partitions where every partition has at least two elements. Furthermore, by the generalized H\"older's inequality (i.e. given the $r$ random variables $X_1, \ldots, X_r$, it holds that $\PE(|\prod_{p = 1}^r X_p|) \leq \prod_{p=1}^r \PE(|X_p|^r)^{1/r}$) and conditions \ref{ass:rthmoment} and \ref{ass:iid}, we also have
\begin{equation*}
    \PE\lrb{\Bigg|\lr{\prod_{i \in A_1} X_{i, j_1}} \dots \lr{\prod_{i \in A_t} X_{i, j_t}}\Bigg|} \leq \prod_{i=1}^r \PE\lr{|X_{i,1}|^r}^{1/r} < \infty,
\end{equation*}
where we have used that the product on the l.h.s. of the equation above contains exactly $r$ terms since $\lrcb{A_1, \ldots, A_t}$ is a partition of $\lrcb{1, \ldots,r}$. Putting this together with \eqref{eq:rewritingProdOfSums} yields:
\begin{align*}
    \Bigg| \PE\lrb{\lr{\frac{1}{N} \sum_{j=1}^N X_{1,j}} \dots \lr{\frac{1}{N} \sum_{j=1}^N X_{r,j}}} \Bigg| & \leq \frac{1}{N^r} \sum_{\substack{\lrcb{A_1, \dots, A_t} \\ \text{all } |A_p| \geq 2}} \sum_{(j_1, \dots, j_t)} \lr{\prod_{i=1}^r \PE\lr{|X_{i,1}|^r}^{1/r}} \\
    & \leq \frac{1}{N^r} \sum_{\substack{\lrcb{A_1, \dots, A_t} \\ \text{all } |A_p| \geq 2}} N^t \lr{\prod_{i=1}^r \PE\lr{|X_{i,1}|^r}^{1/r}}.
\end{align*}
Finally, note that (i) any partition $\lrcb{A_1, \dots, A_t}$ of $\{1, \dots, r\}$ where each part has size at least 2 can have at most $\lfloor r/2 \rfloor$ parts, so $t \leq \lfloor r/2 \rfloor$ and (ii) we can crudely bound the number of partitions by $r^r$. Hence
\begin{align*}
    \Bigg| \PE\lrb{\lr{\frac{1}{N} \sum_{j=1}^N X_{1,j}} \dots \lr{\frac{1}{N} \sum_{j=1}^N X_{r,j}}} \Bigg| & \leq \frac{r^r}{N^{r-\lfloor r/2 \rfloor}} \lr{\prod_{i=1}^r \PE\lr{|X_{i,1}|^r}^{1/r}} \\
    & = \begin{cases} O(N^{-r/2}) & \text{if $r$ is even} \\ O(N^{-(r+1)/2}) & \text{if $r$ is odd.} \end{cases}
\end{align*}
\end{proof}

We next prove a second lemma. 

\begin{lem}
    \label{lem:equivlimsupconditionGen}
    Let $k$ be a positive integer. Set $\RalphaN = N^{-1} \sum_{i=1}^N \w(Z_i)^{1-\alpha}$, where $Z_1, \ldots, Z_N$ are i.i.d. samples generated according to $q_\phi$. Then, the condition 
  \begin{align} \label{ass:limsupZ}
    \limsup_{N \rightarrow \infty} \PE\lr{(1/\RalphaN)^k} < \infty
    \end{align}
    is equivalent to the statement that there exists some $N \in \mathbb{N}^\star$ for which $\PE((1/\RalphaN)^k) < \infty$.
  \end{lem}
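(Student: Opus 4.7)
The plan is to show a two-sided implication. The easy direction is immediate: if $\limsup_N \PE((1/\RalphaN)^k) < \infty$, then by definition of $\limsup$ the sequence $\PE((1/\RalphaN)^k)$ must be finite for all sufficiently large $N$, so in particular there exists some $N \in \mathbb{N}^\star$ for which $\PE((1/\RalphaN)^k) < \infty$.

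For the nontrivial direction, suppose $\PE((1/\hat{Z}_{1,N_0,\alpha})^k) < \infty$ for some $N_0 \in \mathbb{N}^\star$. For any $N \geq N_0$, write $N = mN_0 + r$ with $m = \lfloor N/N_0 \rfloor$ and $0 \leq r < N_0$. Dropping the last $r$ nonnegative terms in the sum defining $\RalphaN$ and grouping the first $mN_0$ terms into $m$ consecutive blocks of length $N_0$ yields
\begin{align*}
\RalphaN \;\geq\; \frac{mN_0}{N} \cdot \frac{1}{m}\sum_{j=1}^{m} \hat{Z}^{(j)}_{1,N_0,\alpha},
\end{align*}
where $\hat{Z}^{(1)}_{1,N_0,\alpha}, \ldots, \hat{Z}^{(m)}_{1,N_0,\alpha}$ are i.i.d.\ copies of $\hat{Z}_{1,N_0,\alpha}$ (constructed from disjoint blocks of the $Z_i$'s).

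Since $x \mapsto x^{-k}$ is convex on $(0,\infty)$, Jensen's inequality applied to the inner average gives
\begin{align*}
\lr{\frac{1}{m}\sum_{j=1}^{m} \hat{Z}^{(j)}_{1,N_0,\alpha}}^{-k} \;\leq\; \frac{1}{m}\sum_{j=1}^{m} \bigl(\hat{Z}^{(j)}_{1,N_0,\alpha}\bigr)^{-k},
\end{align*}
so taking expectations and using that the blocks are i.i.d.\ yields
\begin{align*}
\PE\lrb{\lr{\frac{1}{m}\sum_{j=1}^{m} \hat{Z}^{(j)}_{1,N_0,\alpha}}^{-k}} \;\leq\; \PE\lrb{(\hat{Z}_{1,N_0,\alpha})^{-k}} \;<\; \infty.
\end{align*}
Combining with the lower bound on $\RalphaN$ gives $\PE((1/\RalphaN)^k) \leq (N/(mN_0))^k\, \PE((\hat{Z}_{1,N_0,\alpha})^{-k})$. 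As $N \to \infty$ the prefactor $N/(mN_0) = 1 + r/(mN_0)$ tends to $1$ (and is in any case bounded by $2$ for $N\geq N_0$), yielding $\limsup_{N\to\infty} \PE((1/\RalphaN)^k) \leq \PE((\hat{Z}_{1,N_0,\alpha})^{-k}) < \infty$, as required.

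The only subtle point is handling $N$ not divisible by $N_0$, which is cleanly dealt with by the truncation-and-block-grouping step above; once that is set up, the whole argument reduces to a one-line Jensen application to the convex function $x\mapsto x^{-k}$.
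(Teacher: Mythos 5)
Your proof is correct, and it takes a genuinely different route from the paper's. You establish the nontrivial direction by truncating to $mN_0$ samples, grouping them into $m$ i.i.d.\ blocks, and applying Jensen's inequality to the convex map $x \mapsto x^{-k}$ on the average of the block means; this yields $\PE\lr{(1/\RalphaN)^k} \leq \lr{1+1/m}^{k}\, \PE\lr{(1/R_{\alpha,N_0})^k}$ for all $N \geq N_0$, which is enough to bound the $\limsup$. The paper instead proves the sharper statement that $N \mapsto \PE\lr{(1/\RalphaN)^k}$ is \emph{monotonically decreasing}: it uses the tangent-line (first-order convexity) inequality for $x \mapsto (1-x)^{-k}$ at $x = 1/N$, sums over the normalized weights $x_i = w_i^{1-\alpha}/\sum_j w_j^{1-\alpha}$, and recognizes each leave-one-out term as a copy of $(1/R_{\alpha,N-1})^k$, giving $\PE\lr{(1/R_{\alpha,N-1})^k} \geq \PE\lr{(1/\RalphaN)^k}$ exactly. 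The paper's argument buys monotonicity (so the $\limsup$ is in fact a limit and the bound degrades nowhere), whereas your blocking argument only controls the sequence up to the factor $(1+1/m)^k$; on the other hand, your approach is arguably more standard and would transfer more readily to settings where the leave-one-out symmetry is unavailable. Both suffice for every use of the lemma in the paper, which only ever invokes the stated equivalence.
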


\begin{proof}[Proof of \Cref{lem:equivlimsupconditionGen}]
    Fix a positive integer $N \geq 2$. For all $x \in [0,1)$, we have by convexity of the function $x \mapsto (1-x)^{-k}$ that
    \begin{equation*}
        \lr{\frac{1}{1-x}}^k \geq \lr{\frac{N}{N-1}}^k + k \lr{\frac{N}{N-1}}^{k+1} \lr{x - \frac{1}{N}}.
    \end{equation*}
    It follows that if $x_1, \dots, x_N \in (0,1)$ are such that $\sum_{i=1}^{N} x_i = 1$, then
    \begin{equation*}
        \frac{1}{N} \sum_{i=1}^N \lr{\frac{1}{1 - x_i}}^k \geq \lr{\frac{N}{N-1}}^k.
    \end{equation*}
    Given $\alpha \in [0,1)$ and $N$ positive reals $w_1, \dots, w_N$, we may set $x_i = w_i^{1-\alpha} / \lr{\sum_{i=1}^n w_i^{1-\alpha}}$ in the above to get
    \begin{equation*}
        \frac{1}{N} \sum_{i=1}^N \lr{\dfrac{N-1}{{\sum_{\substack{j = 1 \\ j \neq i}}^N w_j^{1-\alpha}}}}^k \geq \lr{\frac{N}{{\sum_{j=1}^N w_j^{1-\alpha}}}}^k.
    \end{equation*}
    Now consider setting $w_i = \w(Z_i)$ where the $Z_i$ are i.i.d. samples generated according to $q_\phi$. We see that the r.h.s. of the above expression is distributed as $(1/\RalphaN)^k$, while each term in the sum on the l.h.s. is distributed as $(1/R_{\alpha, N-1})^k$. We conclude that
    \begin{equation*}
        \PE\lr{(1/R_{\alpha,N-1})^k} \geq \PE\lr{(1/\RalphaN)^k}
    \end{equation*}
    for all $N \geq 2$. Hence $\PE\lr{(1/\RalphaN)^k}$ is decreasing in $N$, so $\limsup_{N \rightarrow \infty} \PE\lr{(1/\RalphaN)^k} < \infty$ if and only if there exists some $N$ such that $\PE\lr{(1/\RalphaN)^k} < \infty$.
\end{proof}
We now move on to the proof of \Cref{prop:SNRconvergence}.

\begin{proof}[Proof of \Cref{prop:SNRconvergence}] We use the following shorthand notation
\begin{align*}
    & \tilde{w}_{m,j} = \w(f(\varepsilon_{m,j}, \phi)), \quad m = 1 \ldots M, ~ j = 1 \ldots N \\
    & Z_\alpha = \PE_{\varepsilon \sim q} (\w(f(\varepsilon, \phi))^{1-\alpha}) 
\end{align*} 
and we also recall the notation
\begin{align*}    
    & \hat{Z}_{1, N, \alpha} = \frac{1}{N} \sum_{j=1}^N \tilde{w}_{1,j}^{1-\alpha}.
\end{align*} 
We will first prove that 
\begin{align}
\mathrm{SNR}[\delta_{M,N}^{(\alpha)}(\theta_\ell)] & =  \sqrt{M} \frac{\left| \sqrt{N} \frac{\partial  Z_\alpha}{\partial \theta_\ell}  -\frac{Z_\alpha}{2 \sqrt{N}} \frac{\partial}{\partial \theta_\ell} \lrb{ \frac{\mathbb{V}(\tilde{w}_{1, 1}^{1-\alpha})}{Z_\alpha^2} } + O \lr{\frac{1}{N^{3/2}}} \right|}{ \sqrt{\PE \lr{ \tilde{w}_{1,1}^{2(1-\alpha)} \lrb{(1-\alpha) \frac{\partial \log \tilde{w}_{1,1}}{\partial \theta_\ell}  - \frac{\partial \log Z_\alpha}{\partial \theta_\ell} }^2}} + O \lr{\frac{1}{N}}} \label{eq:SNRexpression1} \\
\mathrm{SNR}[\delta_{M,N}^{(\alpha)}(\phi_{\ell'})] & =  \sqrt{M} \frac{\left| \sqrt{N} \frac{\partial Z_\alpha}{\partial \phi_{\ell'}}  -\frac{Z_\alpha}{2 \sqrt{N}} \frac{\partial}{\partial \phi_{\ell'}} \lrb{ \frac{\mathbb{V}(\tilde{w}_{1, 1}^{1-\alpha})}{Z_\alpha^2} } + O \lr{\frac{1}{N^{3/2}}} \right|}{ \sqrt{\PE \lr{ \tilde{w}_{1,1}^{2(1-\alpha)} \lrb{(1-\alpha) \frac{\partial \log \tilde{w}_{1,1}}{\partial \phi_{\ell'}}  - \frac{\partial \log Z_\alpha}{\partial \phi_{\ell'}} }^2}} + O \lr{\frac{1}{N}}}. \label{eq:SNRexpression2}
\end{align}
As the two expressions above follow the same form, it is in fact enough to only prove \eqref{eq:SNRexpression1}. We will do so by studying the asymptotic variance and expected value of $\tilde{\delta}_{M,N}^{(\alpha)}(\theta_\ell) \eqdef (1-\alpha)\delta_{M,N}^{(\alpha)}(\theta_\ell)$ separately, before combining them to deduce \eqref{eq:SNRexpression1}.
    \begin{itemize}
      \item \textbf{Study of $\mathbb{V}(\tilde{\delta}_{M,N}^{(\alpha)}(\theta_\ell))$.}
    \end{itemize}
    \noindent We start from the identity
    \begin{equation*}
    \frac{\partial \log \hat{Z}_{1, N, \alpha}}{\partial \theta_\ell}  = \frac{\partial \log Z_\alpha }{\partial \theta_\ell} + \frac{\partial}{\partial \theta_\ell} \lr{ \frac{\hat{Z}_{1, N, \alpha} - Z_\alpha}{Z_\alpha} } - \lr{\frac{\hat{Z}_{1, N, \alpha} - Z_\alpha}{\hat{Z}_{1, N, \alpha}}} \cdot \frac{\partial}{\partial \theta_\ell} \lr{ \frac{\hat{Z}_{1, N, \alpha} - Z_\alpha}{Z_\alpha} },
    \end{equation*}
    which can for example be verified by using the following identity (which is a version of the Taylor expansion to first order with an explicit form for the remainder) 
    \begin{equation*}
        \log (1+x) = x - \int_0^x \frac{t}{1+t} dt,
    \end{equation*}
    substituting $x = (\hat{Z}_{1, N, \alpha} - Z_\alpha)/Z_\alpha$, differentiating with respect to $\theta_\ell$ and using the chain rule where necessary. Hence,
    \begin{align*}
        M \cdot \mathbb{V} \lr{\tilde{\delta}_{M,N}^{(\alpha)} (\theta_\ell)} & = \mathbb{V} \lr{\tilde{\delta}_{1, N}^{(\alpha)}(\theta_\ell)} = \mathbb{V}\lr{  \frac{\partial \log(\hat{Z}_{1, N, \alpha})}{\partial \theta_\ell} } \\
        & = \mathbb{V} \lr{ \frac{\partial \log Z_\alpha }{\partial \theta_\ell} + \frac{\partial}{\partial \theta_\ell} \lr{ \frac{\hat{Z}_{1, N, \alpha} - Z_\alpha}{Z_\alpha} } - \lr{\frac{\hat{Z}_{1, N, \alpha} - Z_\alpha}{\hat{Z}_{1, N, \alpha}}} \cdot \frac{\partial}{\partial \theta_\ell} \lr{ \frac{\hat{Z}_{1, N, \alpha} - Z_\alpha}{Z_\alpha} }} \\
        & = \mathbb{V} \lr{ \frac{\partial}{\partial \theta_\ell} \lr{ \frac{\hat{Z}_{1, N, \alpha} - Z_\alpha}{Z_\alpha} } - \lr{\frac{\hat{Z}_{1, N, \alpha} - Z_\alpha}{\hat{Z}_{1, N, \alpha}}} \cdot \frac{\partial}{\partial \theta_\ell} \lr{ \frac{\hat{Z}_{1, N, \alpha} - Z_\alpha}{Z_\alpha} }} \\
        & = \mathbb{V} \lr{\frac{\partial}{\partial \theta_\ell} \lr{ \frac{\hat{Z}_{1, N, \alpha} - Z_\alpha}{Z_\alpha} }} \\
        & \hspace{5mm} + 2 \cov \lr{\frac{\partial}{\partial \theta_\ell} \lr{ \frac{\hat{Z}_{1, N, \alpha} - Z_\alpha}{Z_\alpha} } , \lr{\frac{\hat{Z}_{1, N, \alpha} - Z_\alpha}{\hat{Z}_{1, N, \alpha}}} \cdot \frac{\partial}{\partial \theta_\ell} \lr{ \frac{\hat{Z}_{1, N, \alpha} - Z_\alpha}{Z_\alpha} } } \\
        & \hspace{5mm} + \mathbb{V} \lr{\lr{\frac{\hat{Z}_{1, N, \alpha} - Z_\alpha}{\hat{Z}_{1, N, \alpha}}} \cdot \frac{\partial}{\partial \theta_\ell} \lr{ \frac{\hat{Z}_{1, N, \alpha} - Z_\alpha}{Z_\alpha} }}
    \end{align*}
    Furthermore, observe that 
    \begin{align}
        \frac{\partial}{\partial \theta_\ell} \lr{\frac{\hat{Z}_{1, N, \alpha} - Z_\alpha}{Z_\alpha}}
        & = \frac{1}{N} \sum_{j=1}^N \frac{\partial}{\partial \theta_\ell} \lr{\frac{\tilde{w}_{1,j}^{1-\alpha} - Z_\alpha}{Z_\alpha}} \nonumber \\
        & = \frac{1}{N} \sum_{j=1}^N \frac{Z_\alpha \frac{\partial  (\tilde{w}_{1,j}^{1-\alpha})}{\partial \theta_\ell} - \tilde{w}_{1,j}^{1-\alpha} \frac{\partial Z_\alpha}{\partial \theta_\ell} }{Z_\alpha^2}. \label{eq:GradientAsASum}
    \end{align}
    As a result, 
    \begin{align} \label{eq:PENablaZero}
    \PE\lrb{\frac{\partial}{\partial \theta_\ell} \lr{ \frac{\hat{Z}_{1, N, \alpha} - Z_\alpha}{Z_\alpha} }} = \frac{\PE[\frac{\partial  (\tilde{w}_{1,1}^{1-\alpha})}{\partial \theta_\ell}] - \frac{\partial Z_\alpha}{\partial \theta_\ell}}{Z_\alpha} = 0,
    \end{align}
    where we have used that under common differentiability assumptions $\PE[{\partial  (\tilde{w}_{1,1}^{1-\alpha})} / {\partial \theta_\ell}] = {\partial Z_\alpha} / {\partial \theta_\ell}$, that is we can interchange the order of integration and differentiation. Consequently, we can simplify the expression of $M \cdot \mathbb{V} \lr{\tilde{\delta}_{M,N}^{(\alpha)}(\theta_\ell)}$ to obtain that
      \begin{align}
        M \cdot \mathbb{V} \lr{\tilde{\delta}_{M,N}^{(\alpha)}(\theta_\ell)} & = \PE\lr{\lrb{\frac{\partial}{\partial \theta_\ell} \lr{ \frac{\hat{Z}_{1, N, \alpha} - Z_\alpha}{Z_\alpha} }}^2} \nonumber \\
        & \quad + 2 \PE\lr{\lr{\frac{\hat{Z}_{1, N, \alpha} - Z_\alpha}{\hat{Z}_{1, N, \alpha}}} \cdot \lrb{\frac{\partial}{\partial \theta_\ell} \lr{ \frac{\hat{Z}_{1, N, \alpha} - Z_\alpha}{Z_\alpha} }}^2} \nonumber \\
        & \quad + \mathbb{V} \lr{\lr{\frac{\hat{Z}_{1, N, \alpha} - Z_\alpha}{\hat{Z}_{1, N, \alpha}}} \cdot \frac{\partial}{\partial \theta_\ell} \lr{ \frac{\hat{Z}_{1, N, \alpha} - Z_\alpha}{Z_\alpha} }} \label{eq:SNRvariance}
    \end{align}
    We now control the three terms in the r.h.s. of \eqref{eq:SNRvariance} separately.

    \begin{enumerate}
        \item \textit{First term in the r.h.s. of \eqref{eq:SNRvariance}.} To control the first term in the r.h.s. of \eqref{eq:SNRvariance}, we use \eqref{eq:GradientAsASum} to get that
    \begin{align}
        \PE \lr{ \lrb{  \frac{\partial}{\partial \theta_\ell} \lr{\frac{\hat{Z}_{1, N, \alpha} - Z_\alpha}{Z_\alpha}}}^2} & = \frac{1}{N^2} \PE \lr{ \sum_{j=1}^N \lrb{\frac{\partial}{\partial \theta_\ell} \lr{\frac{\tilde{w}_{1,j}^{1-\alpha} - Z_\alpha}{Z_\alpha}}}^2} \nonumber \\
        & = \frac{1}{N^2} \PE \lr{\sum_{j=1}^N \lrb{\frac{Z_\alpha \frac{\partial (\tilde{w}_{1,j}^{1-\alpha})}{\partial \theta_\ell}  - \tilde{w}_{1,j}^{1-\alpha} \frac{\partial Z_\alpha}{\partial \theta_\ell} }{Z_\alpha^2} }^2} \nonumber \\
        & = \frac{1}{N Z_\alpha^4} \PE \lr{ \lrb{Z_\alpha \frac{\partial (\tilde{w}_{1,j}^{1-\alpha})}{\partial \theta_\ell}  - \tilde{w}_{1,j}^{1-\alpha} \frac{\partial Z_\alpha}{\partial \theta_\ell} }^2} \nonumber \\
        & = \frac{1}{N Z_\alpha^4} \PE \lr{ \lrb{\tilde{w}_{1,1}^{-\alpha} \lrcb{(1-\alpha) Z_\alpha \frac{\partial \tilde{w}_{1,1}}{\partial \theta_\ell}  - \tilde{w}_{1,1} \frac{\partial Z_\alpha}{\partial \theta_\ell} } }^2}, \label{eq:FirstTermDom}
    \end{align}
    where the cross-terms disappeared due to the independence of the  $(\varepsilon_{1,j})_{1 \leq j \leq N}$ paired up with \eqref{eq:PENablaZero}.

    \item \textit{Second term in the r.h.s of \eqref{eq:SNRvariance}.} We deal with the cross term in \eqref{eq:SNRvariance} by splitting it into two parts
    \begin{align}
        & \PE\lr{\lr{\frac{\hat{Z}_{1, N, \alpha} - Z_\alpha}{\hat{Z}_{1, N, \alpha}}} \cdot \lrb{\frac{\partial}{\partial \theta_\ell} \lr{ \frac{\hat{Z}_{1, N, \alpha} - Z_\alpha}{Z_\alpha} }}^2} \nonumber \\
        & \quad \quad = \PE\lr{\lr{\frac{\hat{Z}_{1, N, \alpha} - Z_\alpha}{Z_\alpha}} \cdot \lrb{\frac{\partial}{\partial \theta_\ell} \lr{ \frac{\hat{Z}_{1, N, \alpha} - Z_\alpha}{Z_\alpha} }}^2} \nonumber \\ 
        & \quad \quad + \PE\lr{\lr{\frac{\hat{Z}_{1, N, \alpha} - Z_\alpha}{Z_\alpha}} \cdot \lrb{\frac{\partial}{\partial \theta_\ell} \lr{ \frac{\hat{Z}_{1, N, \alpha} - Z_\alpha}{Z_\alpha} }}^2 \lr{\frac{Z_\alpha}{\hat{Z}_{1, N, \alpha}} - 1}} \label{eq:SNRvariancecross}.
    \end{align}
    Using the expression of $\frac{\partial}{\partial \theta_\ell} \lr{\frac{\hat{Z}_{1, N, \alpha} - Z_\alpha}{Z_\alpha}}$ given in \eqref{eq:GradientAsASum} and the fact that
    \begin{equation} \label{eq:DiffAsASum}
        \hat{Z}_{1, N, \alpha} - Z_\alpha = \frac{1}{N} \sum_{j=1}^N \lr{\tilde{w}_{1,j}^{1-\alpha} - Z_\alpha},
    \end{equation} 
    we set: for all $j = 1 \ldots J$,
    \begin{align*}
        & X_{1,j} = \tilde{w}_{1,j}^{1-\alpha} - Z_\alpha, \\
        & X_{2,j} = X_{3,j} = \frac{Z_\alpha \frac{\partial (\tilde{w}_{1,j}^{1-\alpha})}{\partial \theta_\ell}  - \tilde{w}_{1,j}^{1-\alpha} \frac{\partial Z_\alpha}{\partial \theta_\ell} }{Z_\alpha^2}
    \end{align*}
    and we can then apply \Cref{lem:keySNRlemma} with $r = 3$ (by noting in particular that the required moments are finite under our assumptions): we thus obtain that 
    \begin{equation*}
        \PE\lr{\lr{\hat{Z}_{1, N, \alpha} - Z_\alpha} \lrb{\frac{\partial}{\partial \theta_\ell} \lr{ \frac{\hat{Z}_{1, N, \alpha} - Z_\alpha}{Z_\alpha} }}^2} = O \lr{\frac{1}{N^2}}
    \end{equation*}
    which controls the first term in the r.h.s. of \eqref{eq:SNRvariancecross}. The second term in the r.h.s. of \eqref{eq:SNRvariancecross} can be bounded as follows
    \begin{align}
        & \hspace{4mm} \PE\lr{\lr{\frac{\hat{Z}_{1, N, \alpha} - Z_\alpha}{Z_\alpha}} \lrb{\frac{\partial}{\partial \theta_\ell} \lr{ \frac{\hat{Z}_{1, N, \alpha} - Z_\alpha}{Z_\alpha} }}^2 \lr{\frac{Z_\alpha}{\hat{Z}_{1, N, \alpha}} - 1}} \nonumber \\
        & \leq \PE\lr{\lr{\frac{\hat{Z}_{1, N, \alpha} - Z_\alpha}{Z_\alpha}}^2 \lrb{\frac{\partial}{\partial \theta_\ell} \lr{ \frac{\hat{Z}_{1, N, \alpha} - Z_\alpha}{Z_\alpha} }}^4 }^{1/2} \PE \lr{\lr{\frac{Z_\alpha}{\hat{Z}_{1, N, \alpha}} - 1}^2}^{1/2}. \label{eq:SNRvariancecrosssplit}
    \end{align}
    By taking this time: for all $j = 1 \ldots N$,
    \begin{align*}
        & X_{1,j} = X_{2,j} = \tilde{w}_{1,j}^{1-\alpha} - Z_\alpha, \\
        & X_{3,j} = X_{4,j} = X_{5,j} = X_{6,j} = \frac{Z_\alpha \frac{\partial (\tilde{w}_{1,j}^{1-\alpha})}{\partial \theta_\ell}  - \tilde{w}_{1,j}^{1-\alpha} \frac{\partial Z_\alpha}{\partial \theta_\ell} }{Z_\alpha^2} ,
    \end{align*}
    in \Cref{lem:keySNRlemma} with $r = 6$ (and noting once again that the required moments are finite under our assumptions), we see that the first term in the r.h.s. of \eqref{eq:SNRvariancecrosssplit} is $O(N^{-3/2})$. As for the second term of the r.h.s. of \eqref{eq:SNRvariancecrosssplit}, Cauchy-Schwarz implies that
    \begin{equation*}
        \PE \lr{\lr{\frac{Z_\alpha}{\hat{Z}_{1, N, \alpha}} - 1}^2} \leq \PE\lr{\frac{1}{\hat{Z}_{1, N, \alpha}^4}}^{1/2} \PE\lr{\lr{Z_\alpha - \hat{Z}_{1, N, \alpha}}^4}^{1/2}.
    \end{equation*}
    Now note that under our assumptions, \Cref{lem:equivlimsupconditionGen} can be applied with $k = 4$ so that \eqref{ass:limsupZ} with $k = 4$ holds and controls the first term in the r.h.s. above. Furthermore, applying \Cref{lem:keySNRlemma} with $r = 4$ and for all $j = 1 \ldots N$,
    \begin{equation*}
        X_{1,j} = X_{2,j} = X_{3,j} = X_{4,j} = \tilde{w}_{1,j}^{1-\alpha} - Z_\alpha
    \end{equation*}
    yields
    \begin{equation*}
        \PE\lr{\lr{Z_\alpha - \hat{Z}_{1, N, \alpha}}^4} = O\lr{\frac{1}{N^2}}.
    \end{equation*}
    We can then conclude that
    \begin{equation}
        \label{eq:SNR_CSbound}
        \PE \lr{\lr{\frac{Z_\alpha}{\hat{Z}_{1, N, \alpha}} - 1}^2}^{1/2} = O\lr{\frac{1}{N^{1/2}}},
    \end{equation}
    and so the r.h.s. of \eqref{eq:SNRvariancecrosssplit} is bounded above by $O(N^{-2})$. It follows that the second term in the r.h.s. of \eqref{eq:SNRvariancecross} is $O(N^{-2})$ too and we can conclude that
    \begin{align} \label{eq:secondTermON2}
        & \PE\lr{\lr{\frac{\hat{Z}_{1, N, \alpha} - Z_\alpha}{\hat{Z}_{1, N, \alpha}}} \cdot \lrb{\frac{\partial}{\partial \theta_\ell} \lr{ \frac{\hat{Z}_{1, N, \alpha} - Z_\alpha}{Z_\alpha} }}^2} =  O\lr{\frac{1}{N^2}}
    \end{align}
    that is the second term of the r.h.s. of \eqref{eq:SNRvariance} is $O(N^{-2})$. 
    
    \item \textit{Third term of the r.h.s. in \eqref{eq:SNRvariance}.} For the third term of the r.h.s. in \eqref{eq:SNRvariance}, note that
    \begin{align*}
       & \hspace{4mm} \mathbb{V} \lr{\lr{\frac{\hat{Z}_{1, N, \alpha} - Z_\alpha}{\hat{Z}_{1, N, \alpha}}} \cdot \frac{\partial}{\partial \theta_\ell} \lr{ \frac{\hat{Z}_{1, N, \alpha} - Z_\alpha}{Z_\alpha} }} \\
       & \leq \PE \lr{\lrb{\lr{\frac{\hat{Z}_{1, N, \alpha} - Z_\alpha}{\hat{Z}_{1, N, \alpha}}} \cdot \frac{\partial}{\partial \theta_\ell} \lr{ \frac{\hat{Z}_{1, N, \alpha} - Z_\alpha}{Z_\alpha} }}^2} \\
        & \leq \PE \lr{\lrb{\lr{\hat{Z}_{1, N, \alpha} - Z_\alpha} \cdot \frac{\partial}{\partial \theta_\ell} \lr{ \frac{\hat{Z}_{1, N, \alpha} - Z_\alpha}{Z_\alpha} }}^4}^{1/2} \PE\lr{\frac{1}{\hat{Z}_{1, N, \alpha}^4}}^{1/2}
    \end{align*}
    where the final line follows from Cauchy--Schwarz. As a result, using \eqref{eq:GradientAsASum} and \eqref{eq:DiffAsASum}, taking for all $j = 1 \ldots N$
    \begin{align*}
        & X_{1,j} = X_{2,j} = X_{3,j} = X_{4,j} = \tilde{w}_{1,j}^{1-\alpha} - Z_\alpha \\
        & X_{5,j} = X_{6,j} = X_{7,j} = X_{8,j} = \frac{Z_\alpha \frac{\partial (\tilde{w}_{1,j}^{1-\alpha})}{\partial \theta_\ell}  - \tilde{w}_{1,j}^{1-\alpha} \frac{\partial Z_\alpha}{\partial \theta_\ell} }{Z_\alpha^2},
    \end{align*}
    and since the required moments are finite under our assumptions, \Cref{lem:keySNRlemma} with $r = 8$ implies that
    \begin{equation*}
        \PE \lr{\lrb{\lr{\hat{Z}_{1, N, \alpha} - Z_\alpha} \cdot \frac{\partial}{\partial \theta_\ell} \lr{ \frac{\hat{Z}_{1, N, \alpha} - Z_\alpha}{Z_\alpha} }}^4} = O\lr{\frac{1}{N^4}}.
    \end{equation*}
    Combined with \eqref{ass:limsupZ} with $k = 4$ (which holds under our assumptions by \Cref{lem:equivlimsupconditionGen} with $k = 4$), this implies that
    \begin{equation} \label{eq:thirdTermON2}
        \mathbb{V} \lr{\lr{\frac{\hat{Z}_{1, N, \alpha} - Z_\alpha}{\hat{Z}_{1, N, \alpha}}} \cdot \frac{\partial}{\partial \theta_\ell} \lr{ \frac{\hat{Z}_{1, N, \alpha} - Z_\alpha}{Z_\alpha} }} = O\lr{\frac{1}{N^2}}.
    \end{equation}

\end{enumerate}
Putting \eqref{eq:SNRvariance}, \eqref{eq:FirstTermDom}, \eqref{eq:secondTermON2} and \eqref{eq:thirdTermON2} together, we see that
    \begin{align*}
        \mathbb{V} \lr{\tilde{\delta}_{M,N}^{(\alpha)}(\theta_\ell)} & = \frac{1}{MN Z_\alpha^4} \PE \lr{ \lrb{\tilde{w}_{1,1}^{-\alpha} \lrcb{(1-\alpha) Z_\alpha \frac{\partial \tilde{w}_{1,1}}{\partial \theta_\ell}  - \tilde{w}_{1,1} \frac{\partial Z_\alpha}{\partial \theta_\ell} }  }^2} + O\lr{\frac{1}{MN^2}} \\
        & = \frac{1}{M N Z_\alpha^2} \PE \lr{ \tilde{w}_{1,1}^{2(1-\alpha)} \lrb{(1-\alpha)  \frac{\partial \log \tilde{w}_{1,1}}{\partial \theta_\ell} - \frac{\partial \log Z_\alpha}{\partial \theta_\ell}}^2} + O \lr{\frac{1}{MN^2}}
    \end{align*}
    and it follows that
    \begin{align}
        \sqrt{\mathbb{V}(\tilde{\delta}_{M,N}^{(\alpha)}(\theta_\ell))} & = \frac{1}{\sqrt{M N} Z_\alpha} \sqrt{\PE \lr{\tilde{w}_{1,1}^{2(1-\alpha)} \lrb{(1-\alpha)  \frac{\partial \log \tilde{w}_{1,1}}{\partial \theta_\ell} - \frac{\partial \log Z_\alpha}{\partial \theta_\ell}}^2} + O \lr{\frac{1}{N}}} \nonumber \\
        & = \frac{1}{\sqrt{M N} Z_\alpha} \sqrt{\PE \lr{\tilde{w}_{1,1}^{2(1-\alpha)} \lrb{(1-\alpha)  \frac{\partial \log \tilde{w}_{1,1}}{\partial \theta_\ell} - \frac{\partial \log Z_\alpha}{\partial \theta_\ell}}^2}} \sqrt{1 + O \lr{\frac{1}{N}}} \nonumber \\
        & = \frac{1}{\sqrt{M N} Z_\alpha} \lr{ \sqrt{\PE \lr{ \tilde{w}_{1,1}^{2(1-\alpha)} \lrb{(1-\alpha)  \frac{\partial \log \tilde{w}_{1,1}}{\partial \theta_\ell} - \frac{\partial \log Z_\alpha}{\partial \theta_\ell}}^2}} + O \lr{\frac{1}{N}}}. \label{eq:VarDeltaMN}
      \end{align}
    
    \begin{itemize}
      \item \textbf{Study of $\PE(\tilde{\delta}_{M,N}^{(\alpha)}(\theta_\ell))$.}
    \end{itemize}
    
    \noindent We start from the identity
    \begin{multline*}
        \frac{\partial \log \hat{Z}_{1, N, \alpha}}{\partial \theta_\ell}  = \frac{\partial \log Z_\alpha}{\partial \theta_\ell}  + \frac{\partial}{\partial \theta_\ell} \lr{ \frac{\hat{Z}_{1, N, \alpha} - Z_\alpha}{Z_\alpha} } - \frac{1}{2} \frac{\partial}{\partial \theta_\ell} \lr{ \lrb{ \frac{\hat{Z}_{1, N, \alpha} - Z_\alpha}{Z_\alpha} }^2 } \\ + \lr{\frac{(\hat{Z}_{1, N, \alpha} - Z_\alpha)^2}{Z_\alpha \cdot \hat{Z}_{1, N, \alpha}}} \frac{\partial}{\partial \theta_\ell} \lr{ \frac{\hat{Z}_{1, N, \alpha} - Z_\alpha}{Z_\alpha}},
    \end{multline*}
    which can for example be proved using the following identity (which is a version of the Taylor expansion to second order with an explicit form for the remainder)
    \begin{equation*}
        \log (1+x) = x - \frac{x^2}{2} + \int_0^x \frac{t^2}{1+t}\rmd t,
    \end{equation*}
    substituting $x = (\hat{Z}_{1, N, \alpha} - Z_\alpha)/Z_\alpha$, differentiating with respect to $\theta_\ell$ and using the chain rule where necessary. It follows that
    \begin{align}
        \PE \lr{\tilde{\delta}_{M, N}^{(\alpha)}(\theta_\ell)} & = \PE \lr{\tilde{\delta}_{1, N}^{(\alpha)}(\theta_\ell)} = \PE \lr{ \frac{\partial \log \hat{Z}_{1, N, \alpha}}{\partial \theta_\ell}} \nonumber \\
        & = \PE \Bigg( \frac{\partial \log Z_\alpha}{\partial \theta_\ell} + \frac{\partial}{\partial \theta_\ell} \lr{ \frac{\hat{Z}_{1, N, \alpha} - Z_\alpha}{Z_\alpha} } - \frac{1}{2} \frac{\partial}{\partial \theta_\ell} \lr{ \lrb{ \frac{\hat{Z}_{1, N, \alpha} - Z_\alpha}{Z_\alpha} }^2 } \nonumber \\
        & \hspace{10mm} + \lr{\frac{(\hat{Z}_{1, N, \alpha} - Z_\alpha)^2}{Z_\alpha \cdot \hat{Z}_{1, N, \alpha}}} \frac{\partial}{\partial \theta_\ell} \lr{ \frac{\hat{Z}_{1, N, \alpha} - Z_\alpha}{Z_\alpha}} \Bigg) \nonumber \\
        & =  \frac{\partial \log Z_\alpha}{\partial \theta_\ell}  - \frac{1}{2} \PE\lr{\frac{\partial}{\partial \theta_\ell} \lr{ \lrb{ \frac{\hat{Z}_{1, N, \alpha} - Z_\alpha}{Z_\alpha} }^2 }} + R_2(\hat{Z}_{1, N, \alpha}) \nonumber \\
        & = \frac{\partial \log Z_\alpha}{\partial \theta_\ell}  - \frac{1}{2 N} \frac{\partial}{\partial \theta_\ell} \lrb{ \frac{\mathbb{V}(\tilde{w}_{1, 1}^{1-\alpha})}{Z_\alpha^2} } + R_2(\hat{Z}_{1, N, \alpha}) \label{eq:PEdeltaInSNR}
    \end{align}
    where we denote
    \begin{equation*}
        R_2(\hat{Z}_{1, N, \alpha}) = \PE\lr{\lr{\frac{(\hat{Z}_{1, N, \alpha} - Z_\alpha)^2}{Z_\alpha \cdot \hat{Z}_{1, N, \alpha}}}  \frac{\partial}{\partial \theta_\ell} \lr{ \frac{\hat{Z}_{1, N, \alpha} - Z_\alpha}{Z_\alpha}}}
    \end{equation*}
    and where we have used \eqref{eq:GradientAsASum}, \eqref{eq:PENablaZero}, \eqref{eq:DiffAsASum} and the fact that under common differentiability assumptions, we have that
    \begin{align*}
    \PE\lr{ \frac{\partial}{\partial \theta_\ell} \lr{ \lrb{ \frac{\hat{Z}_{1, N, \alpha} - Z_\alpha}{Z_\alpha} }^2 }} & = 2 \PE \lr{  \lrb{ \frac{\hat{Z}_{1, N, \alpha} - Z_\alpha}{Z_\alpha} }   \frac{\partial}{\partial \theta_\ell} \lr{ \lrb{ \frac{\hat{Z}_{1, N, \alpha} - Z_\alpha}{Z_\alpha} }}  } \\
    & = \frac{2}{N^2}  \PE \lr{ \sum_{j = 1}^N \frac{\tilde{w}_{1,j}^{1-\alpha} - Z_\alpha}{Z_\alpha} \cdot \frac{Z_\alpha  \frac{\partial (\tilde{w}_{1,j}^{1-\alpha})}{\partial \theta_\ell}  - \tilde{w}_{1,j}^{1-\alpha} \frac{\partial Z_\alpha}{\partial \theta_\ell} }{Z_\alpha^2}} \\
    & = \frac{1}{N} \PE \lr{  \frac{\partial}{\partial \theta_\ell}  \lr{\lrb{\frac{\tilde{w}_{1,j}^{1-\alpha} - Z_\alpha}{Z_\alpha}}^2}} \\
    & = \frac{1}{N} \frac{\partial}{\partial \theta_\ell} \lrb{ \frac{\mathbb{V}(\tilde{w}_{1, 1}^{1-\alpha})}{Z_\alpha^2} }
    \end{align*}
    (here the cross-terms disappear due to the independence of the $(\varepsilon_{1,j})_{1 \leq j \leq N}$ paired up with \eqref{eq:PENablaZero}). Notice then that we can split up $R_2(\hat{Z}_{1, N, \alpha})$ as
    \begin{multline}
        \label{eq:SNReverror}
        R_2(\hat{Z}_{1, N, \alpha}) = \PE\lr{\lr{\frac{(\hat{Z}_{1, N, \alpha} - Z_\alpha)^2}{Z_\alpha^2}} \frac{\partial}{\partial \theta_\ell} \lr{ \frac{\hat{Z}_{1, N, \alpha} - Z_\alpha}{Z_\alpha}}} \\ + \PE\lr{\lr{\frac{(\hat{Z}_{1, N, \alpha} - Z_\alpha)^2}{Z_\alpha^2}} \lr{\frac{Z_\alpha}{\hat{Z}_{1, N, \alpha}} - 1} \frac{\partial}{\partial \theta_\ell} \lr{ \frac{\hat{Z}_{1, N, \alpha} - Z_\alpha}{Z_\alpha}}}
    \end{multline}
    The first term in \eqref{eq:SNReverror} can be bounded by applying \Cref{lem:keySNRlemma} with $r = 3$ and for all $j = 1 \ldots N$, 
    \begin{align*}
        & X_{1,j} = X_{2,j} = \tilde{w}_{1,j}^{1-\alpha} - Z_\alpha, \\
        & X_{3,j} = \frac{Z_\alpha  \frac{\partial (\tilde{w}_{1,j}^{1-\alpha})}{\partial \theta_\ell}  - \tilde{w}_{1,j}^{1-\alpha} \frac{\partial Z_\alpha}{\partial \theta_\ell} }{Z_\alpha^2},
    \end{align*}
    noting the required moments are finite under our assumptions, so that
    \begin{equation*}
        \PE\lr{\lr{\hat{Z}_{1, N, \alpha} - Z_\alpha}^2 \frac{\partial}{\partial \theta_\ell}  \lr{ \frac{\hat{Z}_{1, N, \alpha} - Z_\alpha}{Z_\alpha}}} = O\lr{\frac{1}{N^2}}.
    \end{equation*}
    The second term in \eqref{eq:SNReverror} can be bounded using Cauchy-Schwarz as follows
    \begin{align*}
        & \hspace{5mm} \PE\lr{\lr{\frac{(\hat{Z}_{1, N, \alpha} - Z_\alpha)^2}{Z_\alpha^2}} \lr{\frac{Z_\alpha}{\hat{Z}_{1, N, \alpha}} - 1} \frac{\partial}{\partial \theta_\ell}  \lr{ \frac{\hat{Z}_{1, N, \alpha} - Z_\alpha}{Z_\alpha}}} \\
        & \leq \PE\lr{\lr{\frac{(\hat{Z}_{1, N, \alpha} - Z_\alpha)^2}{Z_\alpha^2}}^2 \lrb{\frac{\partial}{\partial \theta_\ell}  \lr{ \frac{\hat{Z}_{1, N, \alpha} - Z_\alpha}{Z_\alpha}}}^2}^{1/2} \PE \lr{\lr{\frac{Z_\alpha}{\hat{Z}_{1, N, \alpha}} - 1}^2 }^{1/2}
    \end{align*}
    The second term is $O(N^{-1/2})$ by \eqref{eq:SNR_CSbound}, while the first can be bounded by $O(N^{-3/2})$ using \Cref{lem:keySNRlemma} with $r = 6$ and for all $j = 1 \ldots N$:
    \begin{align*}
        & X_{1,j} = X_{2,j} = X_{3,j} = X_{4,j} = \tilde{w}_{1,j}^{1-\alpha} - Z_\alpha, \\
        & X_{5,j} = X_{6,j} = \frac{Z_\alpha  \frac{\partial (\tilde{w}_{1,j}^{1-\alpha})}{\partial \theta_\ell}  - \tilde{w}_{1,j}^{1-\alpha} \frac{\partial Z_\alpha}{\partial \theta_\ell} }{Z_\alpha^2}.
    \end{align*}
    Hence by combining with \eqref{eq:PEdeltaInSNR}, we have
    \begin{equation} \label{eq:PEdeltaMN}
        \PE \lr{\tilde{\delta}_{M, N}^{(\alpha)}} = \frac{\partial \log Z_\alpha}{\partial \theta_\ell}   - \frac{1}{2 N} \frac{\partial}{\partial \theta_\ell}  \lrb{ \frac{\mathbb{V}(\tilde{w}_{1, 1}^{1-\alpha})}{Z_\alpha^2} } + O\lr{\frac{1}{N^2}}
    \end{equation}
    
    \begin{itemize}
    \item \textbf{Deducing $\mathrm{SNR}[\delta_{M,N}^{(\alpha)}(\theta_\ell)]$.}
    \end{itemize}
    \noindent Finally, putting \eqref{eq:VarDeltaMN} and \eqref{eq:PEdeltaMN} together, we get
    \begin{align*}
        \mathrm{SNR}[\delta_{M,N}^{(\alpha)}(\theta_\ell)] & = \frac{ \left|\frac{\partial \log Z_\alpha}{\partial \theta_\ell}   - \frac{1}{2 N} \frac{\partial}{\partial \theta_\ell}  \lrb{ \frac{\mathbb{V}(\tilde{w}_{1, 1}^{1-\alpha})}{Z_\alpha^2} } + O\lr{\frac{1}{N^2}}\right|}{ \frac{1}{\sqrt{M N} Z_\alpha} \lr{ \sqrt{\PE \lr{ \tilde{w}_{1,1}^{2(1-\alpha)} \lrb{(1-\alpha)  \frac{\partial \log \tilde{w}_{1,1}}{\partial \theta_\ell} - \frac{\partial \log Z_\alpha}{\partial \theta_\ell}}^2}} + O \lr{\frac{1}{N}}}} \\
      & = \sqrt{M} \frac{\left| \sqrt{N} \frac{\partial Z_\alpha}{\partial \theta_\ell}   -\frac{Z_\alpha}{2 \sqrt{N}} \frac{\partial}{\partial \theta_\ell}  \lrb{ \frac{\mathbb{V}(\tilde{w}_{1, 1}^{1-\alpha})}{Z_\alpha^2} } + O \lr{\frac{1}{N^{3/2}}} \right|}{{ \sqrt{\PE \lr{ \tilde{w}_{1,1}^{2(1-\alpha)} \lrb{(1-\alpha)  \frac{\partial \log \tilde{w}_{1,1}}{\partial \theta_\ell} - \frac{\partial \log Z_\alpha}{\partial \theta_\ell}}^2}} + O \lr{\frac{1}{N}}}}
    \end{align*}
which is exactly \eqref{eq:SNRexpression1}. Since we have assumed that $\frac{\partial Z_\alpha}{\partial \theta_\ell}$ is non-zero and since this term corresponds to the leading order term, we then deduce that 
$$
\mathrm{SNR}[\delta_{M,N}^{(\alpha)}(\theta_\ell)] = \Theta(\sqrt{M N})
$$
and we thus recover \eqref{eq:SNRbehavior1}.

Similarly, \eqref{eq:SNRexpression2} holds for $\mathrm{SNR}[\delta_{M,N}^{(\alpha)}(\phi_{\ell'})]$ and we obtain the desired result \eqref{eq:SNRbehavior2} by splitting the cases $\alpha \in (0,1)$ and $\alpha = 0$. In the former, we have ${\partial Z_\alpha}/{\partial \phi_{\ell'}} = \partial \PE(\tilde{w}_{1,1}^{1-\alpha})/\partial \phi_{\ell'} \neq 0$ by \eqref{eq:leadingOrder}, so the leading order term is $\Theta(\sqrt{MN})$, while in the latter case we have ${\partial Z_\alpha}/{\partial \phi_{\ell'}} = 0$ while ${\partial} \mathbb{V}(\tilde{w}_{1,1}^{1-\alpha}) / {\partial \phi_{\ell'}} > 0$ and so the leading order term is $\Theta(\sqrt{M/N})$
\end{proof}

\subsection{Proof of \Cref{prop:drepIwaeAlpha}}
\label{subsec:proof:drepIwaeAlpha}

\begin{proof}[Proof of \Cref{prop:drepIwaeAlpha}]
    Recall from \Cref{lem:generalBound} that
    \begin{align*}
    \frac{\partial}{\partial \phi} \liren  (\theta,\phi; x) = & \int \int \prod_{i = 1}^N q(\varepsilon_i) \lr{\sum_{j = 1}^N  \frac{\w(z_j)^{1-\alpha} }{\sum_{k = 1}^N \w(z_k)^{1-\alpha}} \frac{\partial}{\partial \phi} \log \w(f(\varepsilon_j, \phi))}   \rmd \varepsilon_{1:N}. 
    \end{align*}
    We will now follow the reasoning of \cite{Tucker2019DoublyRG}. To do so, we expand the total derivative of $ \liren$ with respect to $\phi$ by using that 
    \begin{align*}
    \frac{\partial}{\partial \phi} \log \w(f(\varepsilon_j, \phi)) = - \frac{\partial}{\partial \phi} \log q_\phi(f(\varepsilon_j, \phi'))|_{\phi' = \phi} + \frac{\partial}{\partial \phi} f(\varepsilon_j, \phi) ~ \frac{\partial}{\partial z_j} \log \w(z_j)
    \end{align*}
    which gives
    \begin{align}
        \frac{\partial}{\partial \phi} \liren  (\theta,\phi;x) & = - \int \int \prod_{i = 1}^N q(\varepsilon_i) \lr{\sum_{j = 1}^N  \frac{\w(z_j)^{1-\alpha} }{\sum_{k = 1}^N \w(z_k)^{1-\alpha}}  \frac{\partial}{\partial \phi} \log q_\phi(f(\varepsilon_j, \phi'))|_{\phi' = \phi}}   \rmd \varepsilon_{1:N} \nonumber \\
        & \quad + \int \int \prod_{i = 1}^N q(\varepsilon_i) \lr{\sum_{j = 1}^N  \frac{\w(z_j)^{1-\alpha} }{\sum_{k = 1}^N \w(z_k)^{1-\alpha}} \frac{\partial}{\partial \phi} f(\varepsilon_j, \phi) ~ \frac{\partial}{\partial z_j} \log \w(z_j)
        }   \rmd \varepsilon_{1:N} \nonumber \\
        & \eqdef - A + B. \label{eq:partialVRIWAEdrepAB}
    \end{align}
    Notice now that 
    \begin{align}
      A & = \sum_{j = 1}^N  \int \int \prod_{i = 1}^N q(\varepsilon_i) \lr{ \frac{\w(z_j)^{1-\alpha} }{\sum_{k = 1}^N \w(z_k)^{1-\alpha}}  \frac{\partial}{\partial \phi} \log q_\phi(f(\varepsilon_j, \phi'))|_{\phi' = \phi}}   \rmd \varepsilon_{1:N} \nonumber \\ 
      & = \sum_{j = 1}^N  \int \int \prod_{i = 1}^N q_\phi(z_i) \lr{ \frac{\w(z_j)^{1-\alpha} }{\sum_{k = 1}^N \w(z_k)^{1-\alpha}}  \frac{\partial}{\partial \phi} \log q_\phi(z'_j)|_{z_j = z'_j}} \rmd z_{1:N}, \label{eq:partialVRIWAEdrepA}
    \end{align}
    where we have set $z'_j = f(\varepsilon_j, \phi')$. Observe in addition that for all $j = 1 \ldots N$, the reparameterization trick implies:
    \begin{multline*}
    \int q_\phi(z_j) ~{ \frac{\w(z_j)^{1-\alpha} }{\sum_{k = 1}^N \w(z_k)^{1-\alpha}}  \frac{\partial}{\partial \phi} \log q_\phi(z'_j)|_{z_j = z'_j}}   ~\rmd z_{j} \\ =  \int q(\varepsilon_j) ~{\frac{\partial}{\partial z_j} \lr{ \frac{\w(z_j)^{1-\alpha} }{\sum_{k = 1}^N \w(z_k)^{1-\alpha}}}  \frac{\partial }{\partial \phi} f(\varepsilon_j, \phi)} ~ \rmd \varepsilon_{j} \\
    \end{multline*}
    and hence
    \begin{multline*}
        \int q_\phi(z_j) ~{ \frac{\w(z_j)^{1-\alpha} }{\sum_{k = 1}^N \w(z_k)^{1-\alpha}}  \frac{\partial}{\partial \phi} \log q_\phi(z'_j)|_{z_j = z'_j}}   ~\rmd z_{j} \\ = (1- \alpha) \int q(\varepsilon_j) {  \lrb{\frac{\w(z_j)^{1-\alpha} }{\sum_{k = 1}^N \w(z_k)^{1-\alpha}} -  \lr{\frac{\w(z_j)^{1-\alpha} }{\sum_{k = 1}^N \w(z_k)^{1-\alpha}}}^2} \frac{\partial }{\partial \phi} f(\varepsilon_j, \phi) \frac{\partial}{\partial z_j} \log \w(z_j)}  ~\rmd \varepsilon_{j}.
    \end{multline*}
    The desired equality \eqref{eq:partialVRIWAEdrep} is then obtained by combining the last equality above with \eqref{eq:partialVRIWAEdrepAB} and \eqref{eq:partialVRIWAEdrepA}.
    \end{proof} 

\section{Deferred proofs and results of \Cref{sec:HighDim}}

\subsection{Proof of \Cref{prop:GenDomke}}
\label{subsec:proofGenDomke}

\begin{proof}[Proof of \Cref{prop:GenDomke}] 
    For convenience in the proof, let us first introduce the notation 
    $$
    \Ralpha = \w(Z)^{1-\alpha}
    $$ 
    with $Z \sim q_\phi$ and let us observe that under \ref{hyp:VRIWAEwell-defined} we have that $\PE(\Ralpha) > 0$. Furthermore,  \ref{hyp:VRIWAEwell-defined} and Jensen's inequality applied to the concave function $u \mapsto u^{1-\alpha}$ yield 
    \begin{align} \label{eq:PERalphawell-defined}
    \PE(\Ralpha) \leq p_\theta(x)^{1-\alpha} < \infty,
    \end{align}
    meaning that \eqref{eq:boundedExpectationWeights} holds. Now decompose the variational gap into the two following terms:
    \begin{align*}
      \Delta_{N}^{(\alpha)}(\theta, \phi; x) = \lrb{\liren  (\theta,\phi;x) - \mathcal{L}^{(\alpha)}(\theta, \phi; x)} + \lrb{\mathcal{L}^{(\alpha)}(\theta, \phi; x) - \ell(\theta;x)}. 
    \end{align*}
    To get the desired result \eqref{eq:OneOverNGenDomke}, we only need to study the behavior of the term inside the first bracket. This will be done via an adaptation of the proof of \cite[Theorem 3]{domke2018} to our more general framework, which is provided here for the sake of completeness. We write 
    $$
    \liren  (\theta,\phi;x) - \mathcal{L}^{(\alpha)}(\theta, \phi; x) = \frac{1}{1-\alpha} \PE\lr{ \log \lr{ 1 + \delta_{\alpha, N}}}
    $$
    where for all $z \in \rset^d$,
    \begin{align*}
    \delta_{\alpha, N} = \frac{\RalphaN}{\PE(\Ralpha)} - 1 \in (-1, \infty).
    \end{align*}
     The second-order Taylor expansion of $\log \lr{ 1 + \delta_{\alpha, N}}$ gives
    \begin{align*}
    \log \lr{ 1 + \delta_{\alpha, N}} = \delta_{\alpha, N} - \frac{1}{2} \delta_{\alpha, N}^2 + \int_0^{\delta_{\alpha, N}} \frac{x^2}{1+x} \rmd x.
    \end{align*}
    Now using that $\PE(\delta_{\alpha, N}) = 0$ and that $\PE(\delta_{\alpha, N}^2) = \mathbb{V}_{Z \sim q_\phi}(\Rtalpha(Z)) / N$, we deduce 
    \begin{align*}
    \liren  (\theta,\phi;x) - \mathcal{L}^{(\alpha)}(\theta, \phi; x) =  - \frac{\gamma_\alpha^2}{2N} + \frac{1}{1-\alpha}  \PE\lr{\int_0^{\delta_{\alpha, N}} \frac{x^2}{1+x} \rmd x}.
    \end{align*}
    All that is left to prove is then that 
    \begin{align}\label{eq:leftToProveDomke}
    \lim_{N \to \infty} N \left| \PE\lr{\int_0^{\delta_{\alpha, N}} \frac{x^2}{1+x} \rmd x} \right| = 0.
    \end{align}
    By \cite[Lemma 7]{domke2018}, we have that for all $\varepsilon > 0$ and all $\beta \in (0,1]$ there exist positive constants $C_\varepsilon$ and $D_\beta$ such that
    $$
    \left|\int_0^{\delta_{\alpha, N}} \frac{x^2}{1+x} \rmd x \right| \leq C_\varepsilon \left| \frac{1}{1+ \delta_{\alpha, N}}\right|^{\frac{\varepsilon}{1 + \varepsilon}} \left| \delta_{\alpha, N} \right|^{\frac{2 + 3 \varepsilon}{1 + \varepsilon}} + D_\beta |\delta_{\alpha, N}|^{2 + \beta}
    $$
    and as a result
    \begin{align}\label{eq:Ngap}
    N \left| \PE\lr{\int_0^{\delta_{\alpha, N}} \frac{x^2}{1+x} \rmd x} \right| \leq C_\varepsilon N \PE \lr{\left| \frac{1}{1+ \delta_{\alpha, N}}\right|^{\frac{\varepsilon}{1 + \varepsilon}} \left| \delta_{\alpha, N} \right|^{\frac{2 + 3 \varepsilon}{1 + \varepsilon}}} + D_\beta N \PE \lr{ |\delta_{\alpha, N}|^{2 + \beta}}.
    \end{align}
    Recall that under our assumptions, there exists $\beta > 0$ such that \eqref{eq:conditionVariance} holds. Without loss of generality, one can assume that $\beta \in (0,1]$. [Indeed, assuming that $\beta > 1$, we can find $0<\beta' \leq 1 < \beta$ so that
    $$
    \PE_{Z \sim q_\phi}(|\Rtalpha(Z) - 1|^{2+\beta'}) < \infty,
    $$
    which follows from Jensen's inequality applied to the concave function $u \mapsto u ^{(2 + \beta')/(2+\beta)}$ and from \eqref{eq:conditionVariance}.] Let us now show that the two terms in \eqref{eq:Ngap} go to $0$ as $N \to \infty$ for a suitable choice of $\varepsilon$ ($\varepsilon = \beta/3$).
    
    \begin{itemize}
      \item First term of \eqref{eq:Ngap}. Observe first that Hölder's inequality with $p = (1+\varepsilon)/\varepsilon$ and $q = 1+\varepsilon$ implies the following:
      $$
      \PE \lr{\left| \frac{1}{1+ \delta_{\alpha, N}}\right|^{\frac{\varepsilon}{1 + \varepsilon}} \left| \delta_{\alpha, N} \right|^{\frac{2 + 3 \varepsilon}{1 + \varepsilon}}} \leq \PE \lr{\left| \frac{1}{1+ \delta_{\alpha, N}}\right|} ^{\frac{\varepsilon}{1 + \varepsilon}} \PE\lr{ \left| \delta_{\alpha, N} \right|^{2 + 3 \varepsilon}}^{\frac{1}{1 + \varepsilon}}.
      $$
      From there, we deduce that
      \begin{multline*}
        \limsup_{N \to \infty} N \PE \lr{\left| \frac{1}{1+ \delta_{\alpha, N}}\right|^{\frac{\varepsilon}{1 + \varepsilon}} \left| \delta_{\alpha, N} \right|^{\frac{2 + 3 \varepsilon}{1 + \varepsilon}}} \\
        \leq \limsup_{N \to \infty} \PE \lr{\left| \frac{1}{1+ \delta_{\alpha, N}}\right|} ^{\frac{\varepsilon}{1 + \varepsilon}} \limsup_{N \to \infty} \lrb{ N \PE\lr{ \left| \delta_{\alpha, N} \right|^{2 + 3 \varepsilon}}^{\frac{1}{1 + \varepsilon}}},
      \end{multline*}
      having used that for any two sequences of non-negative real numbers $(a_N)_{N \in \mathbb{N}^*}$ and $(b_N)_{N \in \mathbb{N}^*}$, $\limsup_{N \to \infty} (a_N b_N) \leq \limsup_{N \to \infty} a_N \cdot \limsup_{N \to \infty} b_N$. 
      
      We then obtain that the first limit is bounded by a constant by appealing to \eqref{eq:conditionLimsup}. [Indeed, \eqref{eq:conditionLimsup} means that for sufficiently large $N$, $\PE(1/\RalphaN)$ is bounded by a constant, and hence so is $\PE(|1/(1 + \delta_{\alpha, N})|)$ by combining the boundedness of $\PE(1/\RalphaN)$ with \eqref{eq:PERalphawell-defined}]. 
      
      As for the second limit, \cite[Lemma 5]{domke2018} with $s = 2+ 3 \varepsilon \geq 2$ and $U_i = \Rtalpha(Z_i) - 1$ implies that there exists a constant $B_\varepsilon > 0$ such that
      $$
      \PE \lr{ |\delta_{\alpha, N}|^{2 +  3 \varepsilon}} \leq B_\varepsilon N^{-(2+ 3 \varepsilon)/2} \PE_{Z \sim q_\phi} \lr{\left|\Rtalpha(Z) -1\right|^{2+ 3 \varepsilon}}.
      $$
      Setting $\varepsilon = \beta /3$, we can rewrite the term on the r.h.s. as 
      $$
      B_{\beta/3} N^{-(2+ \beta)/2} \PE_{Z \sim q_\phi} \lr{\left|\Rtalpha(Z) -1\right|^{2+ \beta}},
      $$
      leading in particular to the inequality
      \begin{align}\label{eq:usefulForSecondterm}
      \PE \lr{ |\delta_{\alpha, N}|^{2 +  \beta}} \leq  B_{\beta/3} N^{-(2+ \beta)/2} \PE_{Z \sim q_\phi} \lr{\left|\Rtalpha(Z) -1\right|^{2+ \beta}}.
      \end{align}
    
      Hence, by \eqref{eq:conditionVariance} and since $ N^{-(2+\beta)/2} = o(N^{-1})$, we obtain
      $$
      \limsup_{N \to \infty} \lrb{ N \PE\lr{ \left| \delta_{\alpha, N} \right|^{2 + 3 \varepsilon}}^{\frac{1}{1 + \varepsilon}}} = 0 \quad \mbox{when $\varepsilon = \beta/3$}.
      $$
      As a consequence
      \begin{align} \label{eq:leftToProveDomke1}
        \limsup_{N \to \infty} N \PE \lr{\left| \frac{1}{1+ \delta_{\alpha, N}}\right|^{\frac{\varepsilon}{1 + \varepsilon}} \left| \delta_{\alpha, N} \right|^{\frac{2 + 3 \varepsilon}{1 + \varepsilon}}} = 0 \quad \mbox{when $\varepsilon = \beta/3$}.
      \end{align}
    
      \item Second term of \eqref{eq:Ngap}. Using \eqref{eq:usefulForSecondterm} combined with \eqref{eq:conditionVariance} and since $ N^{-(2+\beta)/2} = o(N^{-1})$, we deduce:
      \begin{align}\label{eq:leftToProveDomke2}
      \lim_{N \to \infty} D_\beta N \PE \lr{ |\delta_{\alpha, N}|^{2 + \beta}} = 0.
      \end{align}
    \end{itemize}
    Combining \eqref{eq:Ngap} with \eqref{eq:leftToProveDomke1} and \eqref{eq:leftToProveDomke2} yields \eqref{eq:leftToProveDomke} and the proof is concluded.
    \end{proof}

\subsection{Proof of \Cref{lem:discussConditions}}

\label{subsec:prooflem:discussConditions}

\begin{proof}[Proof of \Cref{lem:discussConditions}] We prove the two assertions separately.
    \begin{enumerate}
\item Assume that \eqref{eq:conditionVariance} holds with $\alpha = \alpha_2$, that is, there exists $\beta > 0$ such that
    \begin{align*}
    \PE_{Z \sim q_\phi} \lr{\left|\Rtalpha[\alpha_2](Z)- 1\right|^{2+\beta}} < \infty
    \end{align*}
    or equivalently using \eqref{eq:boundedExpectationWeights} with $\alpha = \alpha_2$ and setting $a_2 \eqdef \PE_{Z \sim q_\phi}(\w(Z)^{1-\alpha_2})$ so that $a_2 \in (0, \infty)$, 
    \begin{align} \label{eq:conditionAlphaZero}
    \PE_{Z \sim q_\phi} \lr{\left|\w(Z)^{1-\alpha_2}- a_2\right|^{2+\beta}} < \infty.
    \end{align}
    We now want to prove that \eqref{eq:conditionAlphaZero} implies \eqref{eq:conditionVariance} with $\alpha = \alpha_1$. Using that $|u^{\eta} - 1| \leq |u-1|$ for all $u \geq 0$ and all $\eta \in (0,1)$, we have: for all $z \in \rset^d$,
    \begin{align*}
    \left|\Rtalpha[\alpha_1](z)- 1 \right| \leq \left|\frac{\w(z)^{1-\alpha_2}}{{\PE_{Z \sim q_\phi}(\w(Z)^{1-\alpha_1})}^{\frac{1-\alpha_2}{1-\alpha_1}}}- 1\right|
    \end{align*}
    where we have set $\eta = (1-\alpha_1)/(1-\alpha_2)$. Hence, 
    \begin{align} \label{eq:boundConditionBeforeMinkowski}
    \PE_{Z \sim q_\phi}\lr{\left|\Rtalpha[\alpha_1](Z)- 1\right|^{2+\beta}} \leq \tilde{a}_1^{-1} \PE_{Z \sim q_\phi}\lr{\left|\w(Z)^{1-\alpha_2}- \tilde{a}_1\right|^{2+\beta}},
    \end{align}
    where $\tilde{a}_1 = a_1^{(1-\alpha_2)/(1-\alpha_1)}$ with $a_1 \eqdef {\PE_{Z \sim q_\phi}(\w(Z)^{1-\alpha_1})}$. Note in particular that $a_1$ belongs to $(0, \infty)$ as a consequence of \eqref{eq:boundedExpectationWeights} with $\alpha = \alpha_1$ and thus so does $\tilde{a}_1$. Now observe that, setting $p = 2 + \beta > 1$, Minkowski's inequality implies that 
    \begin{align*}
    \PE_{Z \sim q_\phi}\lr{\left|\w(Z)^{1-\alpha_2}- \tilde{a}_1\right|^p}^{\frac{1}{p}} \leq \PE_{Z \sim q_\phi}\lr{\left|\w(Z)^{1-\alpha_2} - a_2 \right|^{p}}^{\frac{1}{p}}  + \PE_{Z \sim q_\phi}\lr{\left|a_2 - \tilde{a}_1\right|^{p}}^{\frac{1}{p}} 
    \end{align*}
    that is 
    $$
    \PE_{Z \sim q_\phi}\lr{\left|\w(Z)^{1-\alpha_2}- \tilde{a}_1\right|^{2+\beta}}^{\frac{1}{2+\beta}}  \leq \PE_{Z \sim q_\phi}\lr{\left|\w(Z)^{1-\alpha_2} - a_2\right|^{2+ \beta}}^{\frac{1}{2+\beta}}  + \left|a_2 - \tilde{a}_1\right|
    $$
    We then deduce that \eqref{eq:conditionVariance} holds with $\alpha = \alpha_1$ by combining \eqref{eq:boundConditionBeforeMinkowski} with the inequality above and the fact that (i) $\tilde{a}_1^{-1} < \infty$, (ii) $\PE_{Z \sim q_\phi}(|\w(Z)^{1-\alpha_2} - a_2|^{2+ \beta})^{{1}/(2+\beta)} < \infty$ by \eqref{eq:conditionAlphaZero} and (iii) $\left|a_2 - \tilde{a}_1\right| < \infty$.

    \item Assume \eqref{eq:conditionLimsup} holds for $\alpha = \alpha_2$. Then by \Cref{lem:equivlimsupconditionGen} with $k = 1$ we may pick $N$ such that $\PE(1/\RalphaN[\alpha_2]) < \infty$. Observe now that
    \begin{align*}
        \PE\lr{1/\RalphaN[\alpha_1]} & = \PE\lr{\frac{N}{\sum_{i=1}^N \w(Z_i)^{1-\alpha_1}}} \\
        & \leq \PE\lr{\frac{N}{\sum_{i=1}^N \w(Z_i)^{1-\alpha_1}} \;\Bigg| \; \w(Z_i) \leq p_\theta(x) \text{ for all } i = 1, \dots, N}
    \end{align*}
    where we have used that $N/(\sum_{i=1}^N w_{\theta, \phi}(Z_i)^{1-\alpha_1})$ is a decreasing function of each $\w(Z_i)$, with $\w(Z_i)$ being independent random variables. Since $\alpha_1 >  \alpha_2$, it follows that
        \begin{align*}
        \PE\lr{1/\RalphaN[\alpha_1]} & \leq \PE\lr{\frac{N p_\theta(x)^{\alpha_1 - \alpha_2}}{\sum_{i=1}^N \w(Z_i)^{1-\alpha_2}} \;\Bigg| \; \w(Z_i) \leq p_\theta(x) \text{ for all } i = 1 \dots N} \\
        & \leq p_\theta(x)^{\alpha_1 - \alpha_2} \PE\lr{\frac{1}{\RalphaN[\alpha_2]} \;\bigg|\; \w(Z_i) \leq p_\theta(x) \text{ for all } i = 1 \dots N} \\
        & \leq p_\theta(x)^{\alpha_1 - \alpha_2} \frac{\PE(1/\RalphaN[\alpha_2])}{\PP\lr{\w(Z) \leq p_\theta(x)}^N} \\
        & < \infty
    \end{align*}
    since $\PE(\w(Z)) = p_\theta(x)$ which implies that $\PP\lr{\w(Z) \leq p_\theta(x)} > 0$. We see that there exists a choice of $N$ for which $\PE\lr{1/\RalphaN[\alpha_1]} < \infty$, from which \eqref{eq:conditionLimsup} follows by \Cref{lem:equivlimsupconditionGen} with $k = 1$.
\end{enumerate}
\end{proof}

\subsection{Behavior of $\gamma_\alpha^2$}
\label{subsec:behaviorGammaAlpha}

\begin{lem} \label{lem:gammaAlphaToZero} Let $\alpha \in [0,1)$. Then, under common integrability and differentiability assumptions
    $$
    \lim_{\alpha \to 1} \gamma_\alpha^2 = 0.
    $$
    \end{lem}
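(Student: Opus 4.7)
The plan is to reduce the claim to a Taylor expansion of the moment generating function of $\log w_{\theta,\phi}(Z)$ around the origin. Setting $\beta \eqdef 1-\alpha \in (0,1]$ and writing $w \eqdef w_{\theta,\phi}(Z)$, the definition of $\Rtalpha$ gives
\begin{align*}
\gamma_\alpha^2 = \frac{1}{\beta}\mathbb{V}_{Z\sim q_\phi}\!\lr{\frac{w^{\beta}}{\PE(w^{\beta})}} = \frac{1}{\beta}\lrb{\frac{\PE(w^{2\beta})}{\PE(w^{\beta})^2} - 1},
\end{align*}
so the task reduces to showing that the bracketed quantity is $O(\beta^2)$ as $\beta \to 0^{+}$.

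Next, I would introduce $g(\beta) \eqdef \PE(w^{\beta}) = \PE(e^{\beta \log w})$ and, under the common integrability/differentiability assumptions alluded to in the statement (essentially, enough integrability to differentiate twice under the expectation in a neighbourhood of $0$, so that $\PE(|\log w|^{k} w^{\beta})$ is finite for $k=1,2$ and $\beta$ near $0$), obtain that $g$ is $C^{2}$ near $0$ with
\begin{align*}
g(0) = 1, \qquad g'(0) = \PE(\log w), \qquad g''(0) = \PE(\log^{2} w).
\end{align*}
A second-order Taylor expansion then yields $g(\beta) = 1 + \beta\, g'(0) + \tfrac{1}{2}\beta^{2} g''(0) + o(\beta^{2})$. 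Squaring, and using that $\PE(w^{2\beta}) = g(2\beta)$ admits the analogous expansion with $\beta$ replaced by $2\beta$, a short algebraic calculation gives
\begin{align*}
\PE(w^{2\beta}) - \PE(w^{\beta})^{2} = \beta^{2}\lrb{g''(0) - g'(0)^{2}} + o(\beta^{2}) = \beta^{2}\,\mathbb{V}(\log w) + o(\beta^{2}).
\end{align*}
Dividing by $\beta\, g(\beta)^{2}$ and using $g(\beta)^{2}\to 1$, I conclude that $\gamma_\alpha^{2} = \beta\, \mathbb{V}(\log w) + o(\beta) \to 0$ as $\alpha \to 1$.

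The main obstacle is purely technical: justifying the interchange of differentiation and expectation defining $g'(0)$ and $g''(0)$, i.e.\ ensuring that $\PE(|\log w|^{2})<\infty$ and that dominated convergence applies uniformly for $\beta$ in a small neighbourhood of $0$. This is exactly the content of the ``common integrability and differentiability assumptions'' referenced in the lemma, and under them the Taylor expansion above — and hence the vanishing of $\gamma_\alpha^{2}$ — follows immediately. No subtlety involving the tail behaviour of $w$ for large $\alpha$ arises, because $w^{\beta} \to 1$ pointwise and is controlled by $1+w$ for $\beta\in[0,1]$.
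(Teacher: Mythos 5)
Your proposal is correct, and it takes a genuinely different route from the paper. The paper's proof keeps the variance in the form $\frac{1}{1-\alpha}\PE\lr{\lrb{\w^{1-\alpha} - \PE(\w^{1-\alpha})}^2}$ and resolves the $0/0$ limit by differentiating the squared deviation once in $\alpha$ inside the expectation (an l'H\^opital-type argument), observing that the surviving factor $\w^{1-\alpha} - \PE(\w^{1-\alpha})$ vanishes as $\alpha \to 1$; the interchange of limit and expectation is then absorbed into the "convenient differentiability assumptions." You instead package everything into the moment generating function $g(\beta) = \PE(e^{\beta \log \w})$ of the log-weight and expand $g(2\beta) - g(\beta)^2$ to second order at $\beta = 0$. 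Both arguments rest on exactly the same unstated regularity (differentiating twice under the expectation near $\beta = 0$, i.e.\ control of $\PE(\w^{\beta}\log^2 \w)$ uniformly for small $\beta$ -- note that this is slightly more than $\PE(\log^2 \w) < \infty$ alone, though your "dominated convergence uniformly in a neighbourhood of $0$" covers it). What your route buys is a sharper conclusion: the explicit first-order asymptotic $\gamma_\alpha^2 = (1-\alpha)\,\mathbb{V}(\log \w) + o(1-\alpha)$, rather than just the vanishing of the limit. This is consistent with the paper's Example~1, where $\gamma_\alpha^2 = \lr{\exp\lrb{(1-\alpha)^2\sigma^2 d} - 1}/(1-\alpha) \sim (1-\alpha)\sigma^2 d$ and $\mathbb{V}(\log \overline{w}) = \sigma^2 d$, so your constant is the right one.
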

    
    \begin{proof}
    By definition of $\gamma_\alpha^2$, we have that 
    $$
    \gamma_\alpha^2 = \frac{1}{\PE(\w^{1-\alpha})^2} \cdot \frac{1}{1-\alpha} \PE\lr{\lrb{\w^{1-\alpha} - \PE(\w^{1-\alpha})}^2}.
    $$
    On the one hand, we have that $\PE(\w^{1-\alpha}) \to 1$ as $\alpha \to 1$ under convenient integrability assumptions. On the other hand, for all $z \in \rset^d$,
    \begin{multline*}
      \lim_{\alpha \to 1} \frac{1}{1-\alpha} \lrb{\w(z)^{1-\alpha} - \PE(\w^{1-\alpha})}^2 \\ = \lim_{\alpha \to 1} \lrcb{{2 \lrb{\w(z)^{1-\alpha} - \PE(\w^{1-\alpha})}\cdot \lrb{-\w(z)^{1-\alpha}\log \w(z) + \PE\lr{\w^{1-\alpha}\log \w}}}}
    \end{multline*}
    so that under convenient differentiability assumptions,
    \begin{align*}
      \lim_{\alpha \to 1} \frac{1}{1-\alpha} \PE \lr{\lrb{\w^{1-\alpha} - \PE(\w^{1-\alpha})}^2} = 0
    \end{align*}
    thus implying that $\lim_{\alpha \to 1} \gamma_\alpha^2 = 0$.
    \end{proof}

    \subsection{Proof of \Cref{lem:VarianceExponentialWithD}}
    \label{subsec:lem:VarianceExponentialWithD}
    
    \begin{proof}[Proof of \Cref{lem:VarianceExponentialWithD}]
    Using \eqref{eq:logNormalInLem}, we first deduce that: for all $m \in \rset$,
    \begin{align*}
        \mathbb{E}_{Z \sim q_\phi} \lr{\barw(Z)^m} & = \mathbb{E}_{S \sim \mathcal{N}(0,1)} \left[\exp \left(-\frac{m\sigma^2 d}{2}- m \sigma \sqrt{d} {S} \right) \right] \\
        & = \exp\left(-\frac{m\sigma^2 d}{2}\right) \mathbb{E}_{S \sim \mathcal{N}(0,1)}  \left[\exp\left(- m \sigma \sqrt{d} {S} \right) \right]  \\
       &= \exp\left(-\frac{m\sigma^2 d}{2}\right) \exp \left( \frac{m^2 \sigma^2 d}{2} \right) \\
        &= \exp\left( \frac{m(m-1) \sigma^2 d}{2} \right). 
    \end{align*}
    Therefore, plugging in $m=1-\alpha$, 
    \begin{align*}
            \mathcal{L}^{(\alpha)}(\theta, \phi; x) - \ell(\theta;x) & = \frac{1}{1-\alpha} \log \mathbb{E}_{Z \sim q_\phi} \lr{\barw(Z)^{1-\alpha}} = - \frac{\alpha \sigma^2 d}{2},
            \end{align*}
    which gives the desired result for $\mathcal{L}^{(\alpha)}(\theta, \phi; x) - \ell(\theta;x)$. In addition: for all $m \in \rset$, 
    \begin{align} \label{eq:computeExpectationLogNormal}
        \PE_{Z \sim q_\phi}(\w(Z)^m) = \exp\left( \frac{m(m-1) \sigma^2 d}{2} \right) p_\theta(x)^m.
    \end{align}
    Now note that \eqref{eq:logNormalInLem} can be rewritten as: for all $i = 1 \ldots N$,
    \begin{align*} 
    \log \w(z_i)  ={-\frac{\sigma^2 d}{2}- \sigma \sqrt{d} {S_i}} + \log p_\theta(x), \quad S_i \sim \mathcal{N}(0,1).
    \end{align*}  
    Hence, we get that: for all $i = 1 \ldots N$, 
        \begin{align*}
        \log \Rtalpha(z_i) & = (1-\alpha) \log {\w(z_i)} - \log {\PE_{Z \sim q_\phi}(\w(Z)^{1-\alpha})} \\
        & = - (1-\alpha)\sigma \sqrt{d} {S_i} -  \frac{(1-\alpha)^2 \sigma^2 d}{2}
        \end{align*}
where we have used \eqref{eq:computeExpectationLogNormal} with $m = 1 - \alpha$. As a result,
        \begin{align*}
        \mathbb{V}_{Z \sim q_\phi}(\Rtalpha(Z)) & = \mathbb{V}_{S \sim \mathcal{N}(0,1)}\lr{\exp \lrcb{-(1-\alpha)\sigma \sqrt{d} {S}} \exp\lrcb{- \frac{(1-\alpha)^2 \sigma^2 d}{2}} } \\
        & = \exp \lr{(1-\alpha)^2 \sigma^2 d }   \lr{\exp \lr{(1-\alpha)^2 \sigma^2 d} - 1}  \exp\lr{- (1-\alpha)^2 \sigma^2 d}   \\
        & = \exp \lr{(1-\alpha)^2 \sigma^2 d} - 1,
        \end{align*} 
    which yields the desired result for $\gamma_\alpha^2$.   In addition, we show that \eqref{eq:conditionVariance} and \eqref{eq:conditionLimsup} both hold, meaning that we can apply \Cref{prop:GenDomke}. To see this, note that $\mathbb{E}_{Z \sim q_\phi} \lr{\w(Z)^m}$ and $\mathbb{E}_{Z \sim q_\phi} (\Rtalpha(Z)^m)$ are well-defined and finite for all $\alpha,m\in\mathbb{R}$. 
    Furthermore, observe that by convexity of the function $u \mapsto |u|^{2 + \beta}$ with $\beta >0$, it holds that
\begin{align*}
    \PE_{Z \sim q_\phi}(|\Rtalpha(Z)  - 1|^{2+\beta}) \leq 2^{1+\beta} \lr{ \PE_{Z \sim q_\phi}(\Rtalpha(Z)^{2+\beta}) + 1}
\end{align*}
thus \eqref{eq:conditionVariance} holds for all $\beta>0$. Lastly, $\PE(1/\RalphaN)\leq\PE(N^{-1} \sum_{i=1}^N \w(Z_i)^{\alpha-1})$ by the HM-AM inequality and the r.h.s. is finite for all $\alpha\in\mathbb{R}$ thus \eqref{eq:conditionLimsup} also holds. 

In the particular case $p_\theta(z|x) = \mathcal{N}(z;\theta, \boldsymbol{I}_d)$ and $q_\phi(z|x) = \mathcal{N}(z; \phi, \boldsymbol{I}_d)$: for all $i = 1 \ldots N$, 
\begin{align}
  \log \barw(z_i) & = - \frac{1}{2} \lr{ \| z_i - \theta \|^2 - \| z_i - \phi\|^2 } \nonumber \\
  & = - \frac{1}{2} \lr{ \|\theta \|^2 - \|\phi\|^2 + 2 \langle z_i, \phi - \theta \rangle} \nonumber \\
  & = - \frac{1}{2} \lr{ \langle \theta - \phi, \theta + \phi \rangle  + 2 \langle z_i, \phi - \theta \rangle} \nonumber \\
  & = - \frac{1}{2} \lr{ \langle \theta - \phi, \theta - \phi + 2 \phi \rangle  + 2 \langle z_i, \phi - \theta \rangle}  \nonumber \\
  & = - \frac{1}{2} \lr{ B_d^2 + 2 \langle z_i - \phi, \phi - \theta \rangle}   \nonumber \\
  & = - \frac{B_d^2}{2} - B_d S_i \quad \mbox{with $S_i = \frac{1}{B_d} \langle z_i - \phi, \phi- \theta \rangle $,} \label{eq:logNormalBdProof}
\end{align}
where we have set $B_d = \|\phi - \theta\|$. Since $z_i - \phi \sim \mathcal{N}(0,\boldsymbol{I}_d)$, it follows that $S_i\sim \mathcal{N}(0,1)$ as required. Lastly, when $\theta = 0 \cdot \boldsymbol{u}_d$ and $\phi = \boldsymbol{u}_d$, we have that $B_d = \sqrt{d}$.
\end{proof}

\subsection{Proof of \Cref{ex:LinGaussThm3}}

\label{subsec:linGaussExApp2}

\begin{proof}
Let us first prove that $p_\theta(x)=\mathcal{N}(x;\theta, 2\boldsymbol{I}_d)$ and $p_\theta(z|x)=\mathcal{N}(z; (\theta+x)/2, 1/2 ~ \boldsymbol{I}_d)$. To see this, note that
    \begin{align*}
    p_\theta(x, z) & = \lr{\frac{1}{(2\pi)^{{d}/{2}}}}^2 \exp \lr{- \frac{1}{2} \lrcb{\| z - \theta \|^2 + \| x - z \|^2}  }.
    \end{align*}
    As a result, only considering the dependency in $z$, we have that
    \begin{align*}
      p_\theta(x, z) & \propto \exp \lr{- \frac{1}{2} \cdot 2 \Big\| z - \frac{\theta + x}{2} \Big\|^2   },
    \end{align*}
    which implies that $p_\theta(z|x)= \mathcal{N}(z; (\theta+x)/2, 1/2 ~ \boldsymbol{I}_d)$. Furthermore, 
    \begin{align*}
      p_\theta(x) & = \int p_\theta(x, z) \rmd z \\
      & = \frac{1}{(2\pi \cdot 2)^{{d}/{2}}}\int \frac{1}{(2\pi \cdot 1/2)^{{d}/{2}}} \exp \lr{ - \frac{1}{2} \cdot 2 \Big\| z - \frac{\theta + x}{2} \Big\|^2 } \rmd z \cdot \exp \lr{ - \frac{1}{2} \cdot \frac{1}{2} \| \theta - x \|^2  } \\
      & =\mathcal{N}(x;\theta, 2\boldsymbol{I}_d).
    \end{align*}
    Hence, for all $z \in \rset^d$
    \begin{align*}
    \log \barw(z) & = \log \lr{ \frac{p_\theta(z|x)}{q_\phi(z|x)} } \\
    & =  \log \lr{ \frac{(2 \pi \cdot 2/3  )^{d/2}}{(2 \pi \cdot 1/2  )^{d/2} } \exp \lrb{  - \Big\| z - \frac{\theta + x}{2} \Big\|^2  + \frac{3}{4} \|z-Ax-b\|^{2} }  } 
    \end{align*}
    and from there, we can straightforwardly deduce that: for all $i = 1 \ldots N$,
    \begin{align} \label{eq:LinGaussWeights}
        \log \barw(z_i) = \frac{d}{2} \log \lr{ \frac{4}{3}} - \Big\| z_i - \frac{\theta + x}{2} \Big\|^2 + \frac{3}{4} \|z_i-Ax-b\|^{2}. 
      \end{align}
Using \eqref{eq:LinGaussWeights} we can write that 
\begin{align*}
\mathcal{L}^{(\alpha)}(\theta, \phi; x) - \ell(\theta; x) & = \frac{1}{1-\alpha} \log \lr{\int q_\phi(z|x) \barw(z)^{1-\alpha} \rmd z    } \\
& = \frac{d}{2} \log \lr{\frac{4}{3}} + \frac{1}{1-\alpha} \log \lr{ I }
\end{align*}
where
\begin{align*}
I & = \int \frac{1}{(4 \pi / 3)^{d/2}} \exp \lrb{- \frac{3}{4}  \|z-Ax-b\|^{2} + (1-\alpha) \lr{ - \Big\| z - \frac{\theta + x}{2} \Big\|^2 + \frac{3}{4} \|z-Ax-b\|^{2} } } \rmd z \\
  & = \int \frac{1}{(4 \pi / 3)^{d/2}} \exp \lrb{- \frac{3 \alpha}{4}  \|z-Ax-b\|^{2} - (1-\alpha) \Big\| z - \frac{\theta + x}{2} \Big\|^2 } \rmd z. 
\end{align*}
$I$ is well-defined and finite for all $\alpha<4$. Completing the square leads to 
\begin{align*}
    I = \lr{\frac{3}{4-\alpha}}^{d/2} \exp \lr{ \lr{\frac{4}{4 -\alpha}} \Big\|  \frac{3 \alpha}{4}  (Ax+b) + (1-\alpha) \frac{\theta + x}{2} \Big\|^2  - \frac{3 \alpha}{4}  \|Ax+b\|^{2} - (1-\alpha) \Big\|\frac{\theta + x}{2} \Big\|^2}.
\end{align*}
As a result,
\begin{multline*}
    \mathcal{L}^{(\alpha)}(\theta, \phi; x) - \ell(\theta; x) =    \frac{d}{2} \lrb{\log \lr{\frac{4}{3}} + \frac{1}
    {1-\alpha} \log \lr{\frac{3}{4-\alpha}}} \\ 
    +  \frac{4}{(4 -\alpha)(1-\alpha)} \Big\|  \frac{3 \alpha}{4}  (Ax+b) + (1-\alpha) \frac{\theta + x}{2} \Big\|^2 - \frac{3 \alpha}{4(1-\alpha)}  \|Ax+b\|^{2} - \Big\|\frac{\theta + x}{2} \Big\|^2 .
\end{multline*}
It can then be checked that the second line simplifies to $- \frac{3\alpha}{4-\alpha}\Big\|Ax+b-\frac{\theta+x}{2}\Big\|^2$, from which we deduce the desired result for $\mathcal{L}^{(\alpha)}(\theta, \phi; x) - \ell(\theta; x)$. [Notice in particular that $\mathcal{L}^{(\alpha)}(\theta, \phi; x) - \ell(\theta; x)$ is well-defined for all $\alpha\neq1,\alpha<4$ with continuous extension at $\alpha=1$.] On the other hand, we have that
\begin{equation*}
    \gamma_{\alpha}^2=\frac{1}{1-\alpha}\mathbb{V}_{Z\sim q_{\phi}}(\overline{w}_{\theta,\phi}^{(\alpha)}(Z))=\frac{1}{1-\alpha}\left(\frac{\mathbb{E}_{Z \sim q_\phi} \lr{\barw(Z)^{2-2\alpha}}}{\mathbb{E}_{Z \sim q_\phi} \lr{\barw(Z)^{1-\alpha}}^{2}}-1\right). 
\end{equation*}
Furthermore, for all $\alpha ' \in \rset \setminus \lrcb{1}$, it holds that
\begin{equation}
    \mathbb{E}_{Z \sim q_\phi} \lr{\barw(Z)^{1-\alpha'}} = \exp \left( (1-\alpha')  \lrb{\mathcal{L}^{(\alpha')}(\theta, \phi; x) - \ell(\theta;x) } \right).
    \label{eq:linear_gaussian_wbar_moment}
\end{equation}
Now using \eqref{eq:linear_gaussian_wbar_moment} with $\alpha' = \alpha$ and $\alpha' = 2 \alpha -1$ and combining with the expression of $\mathcal{L}^{(\alpha)}(\theta, \phi; x) - \ell(\theta; x)$, we obtain
\begin{align*}
    \gamma_{\alpha}^2 = \frac{1}{1-\alpha} \lr{ \exp (A) - 1}
\end{align*}
with 
\begin{align*}
    A & = 2(1-\alpha) \lrcb{  \frac{d}{2} \lrb{\log \lr{\frac{4}{3}} + \frac{1}
    {2(1-\alpha)} \log \lr{\frac{3}{5-2\alpha}}} -  \frac{3 (2 \alpha-1)}{5-2\alpha}\Big\|Ax+b-\frac{\theta+x}{2}\Big\|^2} \\
    & \quad - 2(1-\alpha) \lrcb{\frac{d}{2} \lrb{\log \lr{\frac{4}{3}} + \frac{1}
    {1-\alpha} \log \lr{\frac{3}{4-\alpha}}}   - \frac{3\alpha}{4-\alpha}\Big\|Ax+b-\frac{\theta+x}{2}\Big\|^2} \\
    & = \frac{d}{2} \log \lr{\frac{(4-\alpha)^2}{5-2\alpha}} + \frac{24(1-\alpha)^2}{(5-2\alpha)(4-\alpha)}\Big\|Ax+b-\frac{\theta+x}{2}\Big\|^2
\end{align*}
from which we deduce the desired result for $\gamma_{\alpha}^2$ [notice in particular that $\gamma_{\alpha}^2$ is well-defined for all $\alpha < 5/2$]. 

In addition, the assumptions made in \Cref{prop:GenDomke} are satisfied for all $\alpha \in [0,1)$. To see this, set $m = 1 -\alpha'$ in \eqref{eq:linear_gaussian_wbar_moment} and use that $\mathcal{L}^{(\alpha)}(\theta, \phi; x) - \ell(\theta; x)$ is well-defined for all $\alpha\neq1,\alpha<4$ with continuous extension at $\alpha=1$, so that $\mathbb{E}_{Z \sim q_\phi} (\barw(Z)^m)$ and thus $\mathbb{E}_{Z \sim q_\phi} \lr{\w(Z)^m}$ and $\mathbb{E}_{Z \sim q_\phi} (\Rtalpha(Z)^m)$ are well-defined and finite for all $m>-3$. Furthermore, the convexity of the function $u \mapsto |u|^{2 + \beta}$ with $\beta >0$ implies that
\begin{align*}
    \PE_{Z \sim q_\phi}(|\Rtalpha(Z)  - 1|^{2+\beta}) \leq 2^{1+\beta} \lr{ \PE_{Z \sim q_\phi}(\Rtalpha(Z)^{2+\beta}) + 1}
\end{align*}
thus \eqref{eq:conditionVariance} holds for all $\beta>0$. Lastly, $\PE(1/\RalphaN)\leq\PE(N^{-1} \sum_{i=1}^N \w(Z_i)^{\alpha-1})$ by the HM-AM inequality and the r.h.s. is finite for all $\alpha>-2$ thus \eqref{eq:conditionLimsup} also holds. 
\end{proof}

\subsection{Proof of \Cref{lem:rewritingrenyi}}
\label{sec:prooflemrewriting}
  
\begin{proof}[Proof of \Cref{lem:rewritingrenyi}]
For all $\alpha \in [0,1)$, we can rewrite the variational gap $\Delta_{N,d}^{(\alpha)}(\theta, \phi)$ as 
\begin{align*}
  \Delta_{N,d}^{(\alpha)}(\theta, \phi;x) & = \frac{1}{1-\alpha} \int\int \prod_{i = 1}^N q_\phi (z_i) \log \lr{ \frac{1}{N} \sum_{j = 1}^N \overline{w}_j ^{1-\alpha} }  \rmd\overline{w}_{1:N} \\ 
  & = \frac{1}{1-\alpha} \int \int\prod_{i=1}^{N}q_{\phi}(z_{i})\log\lr{\frac{1}{N}\sum_{j=1}^{N} (\overline{w}^{(j)})^{1-\alpha}}\rmd\overline{w}_{1:N} \nonumber \\ 
  & = \frac{1}{1-\alpha} \left[\int\int\prod_{i=1}^{N}q_{\phi}(z_{i})\log\left(\frac{1}{N}(\overline{w}^{\left(N\right)})^{1-\alpha}\right)\rmd \overline{w}_{1:N} \right. \nonumber \\
  &\quad \quad \left.+ \int\int\prod_{i=1}^{N}q_{\phi}(z_i)\log\left(1+  \sum_{j = 1}^{N-1}\lr{\frac{\overline{w}^{(j)}}{\overline{w}^{(N)}}}^{1-\alpha} \right)\rmd\overline{w}_{1:N} \right]\nonumber \\
  & = \Delta_{N,d}^{(\alpha, MAX)}(\theta, \phi;x) + R_{N,d}^{(\alpha)}(\theta, \phi;x)
\end{align*}
where we have used \eqref{eq:DeltaMaxNDExt} and where we have set
\begin{align*}
& R_{N,d}^{(\alpha)} (\theta, \phi;x) \eqdef \frac{1}{1-\alpha}\int \int \prod_{i = 1}^N q_\phi (z_i) \log \lr{1 + \sum_{j = 1}^{N-1} \lr{\frac{\overline{w}^{(j)}}{\overline{w}^{(N)}}}^{1-\alpha} }  \rmd\overline{w}_{1:N}.
\end{align*}
All that is left to do is now to prove \eqref{eq:BoundRNdExt}. Observe that by definition of $T_{N,d}^{(\alpha)}$ in \eqref{eq:expressionT_NdExt} and since $\alpha \in [0,1)$, we can write
\begin{align*}
0 \leq R_{N,d}^{(\alpha)} (\theta, \phi;x) & = \frac{1}{1-\alpha}\int \int \prod_{i = 1}^N q_\phi (z_i) \log \lr{1 + T_{N,d}^{(\alpha)}}  \rmd\overline{w}_{1:N}\\
& \leq  \frac{1}{1-\alpha} \int \prod_{i = 1}^N q_\phi (z_i) ~ T_{N,d}^{(\alpha)} ~  \rmd\overline{w}_{1:N} \\
& = \frac{1}{1-\alpha}  \PE (T_{N,d}^{(\alpha)}),
\end{align*}
which concludes the proof.
\end{proof}
    
\subsection{Deferred proofs of \Cref{subsec:lognormalExporegime}}

\subsubsection{Proof of \Cref{lem:S1approxGauss}}
\label{subsec:proofS1approxGauss}

\begin{proof}[Proof of \Cref{lem:S1approxGauss}]
    First, note that since $S_1, \ldots, S_N$ are i.i.d. normal random variables, so are $-S_1, \ldots, -S_N$. Setting $M_N = \max_{1 \leq i \leq N} -S_i$, we also have $S^{(1)} = - M_N$. A standard result \cite[obtained, for example, by combining Theorem 1.1.2 and Example 1.1.7 in][]{de2007extreme} is that for all $x \in \rset$,
    \begin{equation}\label{eq:cvDistributionGumbel}
        \lim_{N \to \infty } P\lr{ a_N^{-1} \left(M_N - b_N\right) \leq x} = \exp(-e^{-x})
    \end{equation}
    with $a_N = 1/\sqrt{2\log N}$ and $b_N = \sqrt{2\log N}-\frac{1}{2}(\log\log N+\log4\pi)/(\sqrt{2\log N})$. Since $\PE(|M_N|) \leq \PE(M_N^2)^{1/2} \leq \PE(\sum_{i=1}^N S_i^2)^{1/2} \leq N^{1/2} < \infty$ for all $N$, it follows by \cite[Theorem 2.1]{pickands1968momentconvergence} that
    \begin{equation*}
        \lim_{N \to \infty } a_N^{-1} \lr{\PE\lr{M_N} - b_N} = \PE(U),
    \end{equation*}
    where $U$ is a Gumbel random variable and $\mathbb{E}(U)$ is given by the Euler--Masceroni constant. Using that $S^{(1)} = - M_N$, we deduce
    \begin{equation*}
        \lim_{N \to \infty } -a_N^{-1} \lr{\PE(S^{(1)}) + b_N} = \PE(U).
    \end{equation*}
    Finally, plugging in the definition of $a_N$ and $b_N$, we obtain
    \begin{align*}
        \PE(S^{(1)}) &= -\sqrt{2\log N}+\frac{\log\log N+\log4\pi}{2\sqrt{2\log N}} - \frac{\PE(U)}{\sqrt{2 \log N}} + o \lr{\frac{1}{\sqrt{2 \log N}}} \\
         & = -\sqrt{2\log N} + O \lr{\frac{\log \log N}{\sqrt{\log N}}}
    \end{align*}
    and we have thus recovered \eqref{eq:ExpS1approxGen}.
\end{proof}

\subsubsection{Proof of \Cref{prop:limDeltaGaussianRenyi}}
\label{subsec:prooflimDeltaGaussian}

\begin{proof}[Proof of \Cref{prop:limDeltaGaussianRenyi}]
    First note that 
    \begin{equation*}
        \log \overline{w}^{(N)} = -\frac{d \sigma^2}{2}- \sqrt{d}\sigma S^{(1)}.
    \end{equation*}
    Combining this result with the definition of $\Delta_{N,d}^{MAX}(\theta, \phi;x)$ in \eqref{eq:DeltaMaxNDExt} yields
    \begin{equation*}
        \Delta_{N,d}^{MAX}(\theta, \phi) = - \frac{d \sigma^2}{2} - \sqrt{d} \sigma \PE(S^{(1)}) + \frac{\log N}{\alpha -1}.
    \end{equation*}
    Now using \eqref{eq:ExpS1approxGen}, we deduce
    \begin{align*}
        \Delta_{N,d}^{MAX}(\theta, \phi) & = - \frac{d \sigma^2}{2} + \sqrt{d} \sigma \lr{ \sqrt{2 \log N} + O \lr{\frac{\log \log N}{\sqrt{\log N}}}} + \frac{\log N}{\alpha -1} \\
        & = - \frac{d \sigma^2}{2} \lrcb{1 - 2 \sqrt{\frac{2 \log N}{ d\sigma^2}} + \frac{1}{1-\alpha} \frac{2\log N}{d \sigma^2} + O \lr{\frac{\log \log N}{\sqrt{d \log N}}}},
    \end{align*}
    which concludes the proof.    
\end{proof}

\subsubsection{Proof of \Cref{prop:limitTGaussianRenyi}}   
\label{sec:Proofgaussian}

We first prove a useful intermediate lemma regarding the concentration of $S^{(1)}$.

\begin{lem}\label{lem:exactnormalminconcentration}
    Let $S_1, \ldots, S_N$ be i.i.d. normal random variables, set $S^{(1)} = \min_{1 \leq i \leq N} S_i$, and define $I_N = [-4 \sqrt{\log N}, - \sqrt{\log N}]$. Then as $N \rightarrow \infty$, we have
    \begin{equation*}
        \PP(S^{(1)} \not \in I_N) = O\lr{\frac{1}{N^4}}.
    \end{equation*}
\end{lem}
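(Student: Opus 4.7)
The plan is to exploit the symmetry of the Gaussian distribution and reduce the problem to two standard tail estimates for the maximum of $N$ i.i.d. standard normals. Since $-S_1, \ldots, -S_N$ are also i.i.d. standard normal, we have $S^{(1)} = -M_N$ where $M_N = \max_{1 \leq i \leq N} (-S_i)$, and the event $\{S^{(1)} \notin I_N\}$ becomes $\{M_N \notin [\sqrt{\log N}, 4\sqrt{\log N}]\}$. The proof then splits into controlling $\PP(M_N > 4\sqrt{\log N})$ (upper tail) and $\PP(M_N < \sqrt{\log N})$ (lower tail) separately.

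For the upper tail, I would apply a union bound together with the classical Mills ratio bound $\PP(Z > t) \leq \frac{1}{t\sqrt{2\pi}} e^{-t^2/2}$ valid for $t > 0$. Plugging in $t = 4\sqrt{\log N}$, we get
\begin{align*}
\PP(M_N > 4\sqrt{\log N}) \leq N \cdot \PP(Z > 4\sqrt{\log N}) \leq \frac{N}{4\sqrt{2\pi \log N}} \cdot N^{-8} = O(N^{-7}),
\end{align*}
which is more than enough.

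For the lower tail, I would use independence to write $\PP(M_N < t) = \PP(Z < t)^N = (1 - \PP(Z \geq t))^N \leq \exp(-N\PP(Z \geq t))$. With $t = \sqrt{\log N}$, the standard lower bound $\PP(Z \geq t) \geq \frac{t}{\sqrt{2\pi}(1+t^2)} e^{-t^2/2}$ (for $t > 0$) gives $\PP(Z \geq \sqrt{\log N}) \geq c N^{-1/2}/\sqrt{\log N}$ for some constant $c > 0$ and $N$ large enough. Hence
\begin{align*}
\PP(M_N < \sqrt{\log N}) \leq \exp\!\left(-\frac{c\sqrt{N}}{\sqrt{\log N}}\right),
\end{align*}
which decays faster than any polynomial in $N$ and so is $O(N^{-4})$.

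There is no real obstacle here — the only care needed is to use a two-sided bound on the Gaussian tail (the upper one for the union bound, the lower one for the product bound) and to handle constants uniformly in $N$ for $N$ large enough. Combining the two estimates via a union bound yields $\PP(S^{(1)} \notin I_N) = O(N^{-4})$, completing the proof.
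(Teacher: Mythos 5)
Your proof is correct and follows essentially the same route as the paper's: both split $\{S^{(1)}\notin I_N\}$ into the two tail events, handle the far tail by a union bound with the Gaussian tail estimate (giving $O(N^{-7})$), and handle the near tail via $(1-\PP(Z\geq\sqrt{\log N}))^N\leq\exp(-N\,\PP(Z\geq\sqrt{\log N}))=\exp(-\Theta(\sqrt{N}/\sqrt{\log N}))$. The only cosmetic difference is that you use explicit non-asymptotic Mills-ratio bounds where the paper uses the asymptotic equivalence $\overline{\Phi}(x)=\phi(x)x^{-1}(1+o(1))$.
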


\begin{proof}
    We control the probability of the events $\{S^{(1)} > - \sqrt{\log N} \}$ and $\{S^{(1)} < - 4 \sqrt{\log N}\}$ separately. First, note that
    \begin{align}
        \log \PP(S^{(1)} > - \sqrt{\log N}) & = N \log (\overline{\Phi}(- \sqrt{\log N})) \nonumber \\
        & = N \log \left(1 - \overline{\Phi}(\sqrt{\log N}) \right) \nonumber \\
        & = - N \overline{\Phi}(\sqrt{\log N})(1 + o(1)) \nonumber \\
        & = - N \frac{\phi(\sqrt{\log N})}{\sqrt{\log N}}(1 + o(1)) \label{eq:lowerboundapprox}
    \end{align}
    where in the final line we have used the standard approximation
    \begin{equation}
        \label{eq:NormalCDFAsymptotics}
        \overline{\Phi}(x) = \frac{\phi(x)}{x}(1+o(1))
    \end{equation}
    as $x \rightarrow \infty$. We deduce that
    \begin{equation}
        \label{eq:lowerboundasymp}
        \PP(S^{(1)} > -\sqrt{\log N}) = \exp\left\{ -N \frac{N^{-1/2}}{\sqrt{2\pi}\sqrt{\log N}}(1 + o(1)) \right\} = O\left(\frac{1}{N^4}\right)
    \end{equation}
    as $N \rightarrow \infty$. Second, as $N \rightarrow \infty$ we have, by a union bound, that
    \begin{align}
        \PP(S^{(1)} < - 4 \sqrt{\log N}) & \leq N \Phi(- 4 \sqrt{\log N}) \nonumber \\
        & = N \frac{\phi(4 \sqrt{\log N})}{4 \sqrt{\log N}}(1 + o(1)) \nonumber \\
        &= \frac{N^{-7}}{4 \sqrt{2\pi} \sqrt{\log N}}(1 + o(1)) \nonumber \\
        &= O\left(\frac{1}{N^4}\right) \label{eq:upperboundasymp}
    \end{align}
    and so the result follows.
\end{proof}
We now prove \Cref{prop:limitTGaussianRenyi} by building on the proof from \cite{ObstaclestoHighDimensionalParticleFiltering} and on \Cref{lem:exactnormalminconcentration}. 

\begin{proof}[Proof of \Cref{prop:limitTGaussianRenyi}]
Denote $\sigma_\alpha = (1-\alpha) \sigma$ for all $\alpha \in [0,1)$. A first remark is that, conditional upon $S^{(1)}$, we can think of the sum in \eqref{eq:expressionT_NdExt} as the sum over $N-1$ i.i.d. random variables
    \begin{align*}
        \PE(T_{N,d}^{(\alpha)} | S^{(1)}) = (N-1) \PE \lr{ \exp \lr{- \sigma_\alpha \sqrt{d} ( S - S^{(1)})}  }
    \end{align*}    where the expectation is w.r.t. the density of $S$ given by
    \begin{align*}
        p(z)=\frac{\phi(z)}{\overline{\Phi}(S^{(1)})} \mathbb{I}(z\geq S^{(1)}),
    \end{align*}
    with $\phi(z)$ denoting the standard normal density and $\overline{\Phi}(x)=\int_{x}^{\infty}\phi(z)\rmd z$ denoting the normalizing constant. Then,
    \begin{equation*}
        \mathbb{E}(T_{N,d}^{(\alpha)}|S^{(1)}) = \frac{(N-1)\int_{S^{(1)}}^\infty \exp \lr{- \sigma_\alpha \sqrt{d} \lr{z - S^{(1)}}} \phi(z) \rmd z}{\overline{\Phi}(S^{(1)})}.
    \end{equation*}
    We can then calculate explicitly
    \begin{align*}
        & \hspace{5mm} \int_{S^{(1)}}^\infty \exp \lr{- \sigma_\alpha \sqrt{d} \lr{z - S^{(1)}}} \phi(z) \rmd z \\
        & = \exp(\sigma_\alpha\sqrt{d}S^{(1)}+\sigma_\alpha^2 d/2) \int_{S^{(1)}}^\infty (\sqrt{2\pi})^{-1} \exp \lr{-\frac{1}{2} (z + \sigma_\alpha \sqrt{d})^2 } \rmd z \\
        & = \exp(\sigma_\alpha\sqrt{d}S^{(1)}+\sigma_\alpha^2 d/2) \overline{\Phi}(\sigma_\alpha\sqrt{d}+S^{(1)}).
    \end{align*}
    Denoting $I_N = [- 4 \sqrt{\log N}, - \sqrt{\log N}]$, on the event $\{S^{(1)} \in I_N\}$, as $N,d \rightarrow \infty$ with $\log N/d \rightarrow 0$, we have
    $$\sigma_\alpha \sqrt{d} + S^{(1)} = \sigma_\alpha \sqrt{d}(1 + o_{N,d}(1)),$$
    where we use the notation $o_{N,d}(1)$ to denote that the implicit constant, which goes to zero as $N,d \rightarrow \infty$ with $\log N/d \rightarrow 0$, does not depend on $S^{(1)}$. Using the approximation \eqref{eq:NormalCDFAsymptotics} for $\Phi(x)$ as $x \to \infty$,
    \begin{equation*}
        \overline{\Phi}(\sigma_\alpha\sqrt{d}+S^{(1)})=\frac{\phi(\sigma_\alpha \sqrt{d} +S^{(1)})}{\sigma_\alpha \sqrt{d} +S^{(1)}}(1+o_{N,d}(1)).
    \end{equation*}
    Hence, observing that $\exp(\sigma_\alpha\sqrt{d}S^{(1)}+\sigma_\alpha^2 d/2) \phi(\sigma_\alpha \sqrt{d} +S^{(1)}) = \phi(S^{(1)})$, it follows that
    \begin{equation*}
        \int_{S^{(1)}}^\infty \exp \lr{- \sigma_\alpha \sqrt{d} \lr{z - S^{(1)}}} \phi(z) \rmd z = \frac{\phi(S^{(1)})}{\sigma_\alpha \sqrt{d}} (1+ o_{N,d}(1))
    \end{equation*}
    on the event $\{S^{(1)} \in I_N\}$. Using \eqref{eq:NormalCDFAsymptotics}, we can also write
    \begin{equation*}
        \Phi(S^{(1)}) = \overline{\Phi}(-S^{(1)}) = \frac{\phi(-S^{(1)})}{-S^{(1)}} (1 + o_{N,d}(1)).
    \end{equation*}
    Combined with $\phi(-S^{(1)}) = \phi(S^{(1)})$, this allows us to deduce that
    \begin{align}
        \int_{S^{(1)}}^\infty \exp \lr{- \sigma_\alpha \sqrt{d} \lr{z - S^{(1)}}} \phi(z) \rmd z & = \frac{(-S^{(1)})\Phi(S^{(1)})}{\sigma_\alpha \sqrt{d}}(1 + o_{N,d}(1)) \nonumber \\
        & \leq \Phi(S^{(1)}) \frac{4 \sqrt{\log N}}{\sigma_\alpha \sqrt{d}}(1 + o_{N,d}(1)), \label{eq:conditionalexpbound}
    \end{align}
    all on the event $\{S^{(1)} \in I_N\}$. Finally, since $S^{(1)} < - \sqrt{\log N}$ implies
    \begin{equation*}
        \overline{\Phi}(S^{(1)}) = 1 + o_{N,d}(1),
    \end{equation*}
    we have
    \begin{equation*}
        \mathbb{E}(T_{N,d}^{(\alpha)}|S^{(1)}) \leq (N-1)\Phi(S^{(1)}) \frac{4 \sqrt{\log N}}{\sigma_\alpha \sqrt{d}}(1 + o_{N,d}(1)).
    \end{equation*}
    We conclude by using the tower law of expectation and splitting according to whether $S^{(1)} \in I_N$, giving
    \begin{align*}
        \mathbb{E}(T_{N,d}^{(\alpha)}) &= \mathbb{E} \left( \mathbb{E}(T_{N,d}^{(\alpha)}|S^{(1)}) \right) \\
        & \leq \mathbb{E} \left( \mathbbm{1}_{\{S^{(1)} \in I_N\}}(N-1) \Phi(S^{(1)}) \frac{4 \sqrt{\log N}}{\sigma_\alpha \sqrt{d}}(1 + o_{N,d}(1)) \right) + \mathbb{E} \left( \mathbbm{1}_{\{S^{(1)} \not \in I_N\}} \right) \\
        & \leq (N-1) \frac{4 \sqrt{\log N}}{\sigma_\alpha \sqrt{d}} \mathbb{E}\left(\Phi(S^{(1)})\right)(1 + o(1)) + \PP(S^{(1)} \not \in I_N) \\
        & \leq \frac{4 \sqrt{\log N}}{\sigma_\alpha \sqrt{d}}(1 + o(1)) + O\left(\frac{1}{N^4}\right) \rightarrow 0
    \end{align*}
    where in the final line we have used \Cref{lem:exactnormalminconcentration} and that $\Phi(S^{(1)})$ is distributed as the minimum of $N$ independent uniform random variables on $[0,1]$ and so $\mathbb{E}(\Phi(S^{(1)}))=\frac{1}{N+1}$.

\end{proof}

\subsubsection{Proof of \Cref{thm:MainGauss2}}
\label{subsub:proof:thm:MainGauss2}

    First note that \Cref{lem:S1approxGauss} and \Cref{lem:exactnormalminconcentration} are not affected by the change in the distribution of the weights we have made in \eqref{eq:lognormalweights2}. As for \Cref{prop:limDeltaGaussianRenyi} and \Cref{prop:limitTGaussianRenyi}, they are modified according to \Cref{prop:limDeltaGaussianRenyi2} and \Cref{prop:limitTGaussianRenyi2} below.

    \begin{prop}\label{prop:limDeltaGaussianRenyi2} Let $S_1, \ldots, S_N$ be i.i.d. normal random variables. 
      Further assume that the weights $\overline{w}_1, \ldots, \overline{w}_N$ satisfy \eqref{eq:lognormalweights2} and that there exists $\sigma_- > 0$ such that $B_d \geq \sigma_- \sqrt{d}$. Then, for all $\alpha \in [0,1)$, 
      \begin{align*}
      \lim_{N,d \to \infty} \Delta_{N,d}^{(\alpha, MAX)}(\theta, \phi;x) + \frac{B_d^2}{2} \lrcb{ 1 -  2 ~ \frac{\sqrt{2 \log N}}{B_d} + \frac{1}{1-\alpha} ~\frac{2\log N}{B_d^2} + O \lr{\frac{\log \log N}{B_d \sqrt{\log N}}} } = 0.
      \end{align*}
    \end{prop}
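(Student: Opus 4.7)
The proof will closely mirror that of \Cref{prop:limDeltaGaussianRenyi} given in \Cref{subsec:prooflimDeltaGaussian}, with $B_d$ playing the role of $\sigma\sqrt{d}$ throughout. Since $B_d \geq \sigma_- \sqrt{d} > 0$, the map $s \mapsto -B_d^2/2 - B_d s$ is strictly decreasing, so $\log \overline{w}^{(N)} = -B_d^2/2 - B_d S^{(1)}$, i.e.\ the maximal weight corresponds to the minimal $S_i$. Plugging this into the definition \eqref{eq:DeltaMaxNDExt} and taking expectations gives
$$\Delta_{N,d}^{(\alpha, MAX)}(\theta,\phi;x) = -\frac{B_d^2}{2} - B_d \, \PE(S^{(1)}) + \frac{\log N}{\alpha - 1}.$$

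Next, I would invoke \Cref{lem:S1approxGauss}, which depends only on $N$ and hence applies verbatim, to substitute $\PE(S^{(1)}) = -\sqrt{2\log N} + O(\log\log N/\sqrt{\log N})$. This yields
$$\Delta_{N,d}^{(\alpha,MAX)}(\theta,\phi;x) = -\frac{B_d^2}{2} + B_d\sqrt{2\log N} - \frac{\log N}{1-\alpha} + B_d \cdot O\!\left(\frac{\log\log N}{\sqrt{\log N}}\right).$$
Factoring $-B_d^2/2$ out of the first three terms and rewriting the final remainder as $(B_d^2/2)\cdot O(\log\log N/(B_d \sqrt{\log N}))$ produces exactly the claimed identity.

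The argument is essentially a mechanical substitution and I do not expect any substantive obstacle. The lower bound $B_d \geq \sigma_-\sqrt{d}$ enters only to ensure that $B_d \to \infty$ as $d \to \infty$, so that the terms in the expansion are correctly ordered (the leading $-B_d^2/2$ dominates the $B_d\sqrt{2\log N}$ correction in the regime $\log N/d \to 0$) and the $O$-residual genuinely represents a lower-order correction rather than the dominant behavior.
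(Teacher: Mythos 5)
Your proposal is correct and follows essentially the same route as the paper's proof: both use the exact identity $\log \overline{w}^{(N)} = -B_d^2/2 - B_d S^{(1)}$, substitute into the definition \eqref{eq:DeltaMaxNDExt} to get $\Delta_{N,d}^{(\alpha,MAX)} = -B_d^2/2 - B_d\,\PE(S^{(1)}) + \log N/(\alpha-1)$, and then apply \Cref{lem:S1approxGauss} (which indeed depends only on $N$) before factoring out $-B_d^2/2$. Your closing remark on the role of $B_d \geq \sigma_-\sqrt{d}$ is a correct observation that the paper leaves implicit.
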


    \begin{proof}
      First note that 
      \begin{equation*}
          \log \overline{w}^{(N)} = -\frac{B_d^2}{2}- B_d S^{(1)}.
      \end{equation*}
      Combining this result with the definition of $\Delta_{N,d}^{MAX}(\theta, \phi;x)$ in \eqref{eq:DeltaMaxNDExt} yields
      \begin{equation*}
          \Delta_{N,d}^{MAX}(\theta, \phi) = - \frac{B_d^2}{2} - B_d \PE(S^{(1)}) + \frac{\log N}{\alpha -1}.
      \end{equation*}
      Now using \eqref{eq:ExpS1approxGen} of \Cref{lem:S1approxGauss}, we deduce
      \begin{align*}
          \Delta_{N,d}^{MAX}(\theta, \phi) & = - \frac{B_d^2}{2} + B_d \lr{ \sqrt{2 \log N} + O \lr{\frac{\log \log N}{\sqrt{\log N}}}} + \frac{\log N}{\alpha -1},
      \end{align*}
      which concludes the proof.    
  \end{proof}
  
  \begin{prop}\label{prop:limitTGaussianRenyi2} Let $S_1, \ldots, S_N$ be i.i.d. normal random variables. 
  Further assume that the weights $\overline{w}_1, \ldots, \overline{w}_N$ satisfy  \eqref{eq:lognormalweights2} and that there exists $\sigma_- > 0$ such that $ B_d \geq \sigma_- \sqrt{d} $. Then, for all $\alpha \in [0,1)$, we have
  \begin{align*}
        \lim_{\substack{N, d \to \infty \\ \log N / d \to 0}} \PE(T^{(\alpha)}_{N,d}) = 0.
  \end{align*}
  \end{prop}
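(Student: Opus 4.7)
The plan is to mimic the proof of \Cref{prop:limitTGaussianRenyi}, replacing the exponent $\sigma\sqrt{d}$ by the more general $B_d$ and exploiting the lower bound $B_d \geq \sigma_-\sqrt{d}$ whenever we need a quantity to grow with $d$. Conditional on $S^{(1)}$, the remaining $N-1$ variables $S_{i}$ (those not attaining the minimum) are i.i.d.\ with density $\phi(z)/\overline{\Phi}(S^{(1)}) \mathbb{I}(z \geq S^{(1)})$, so that
\begin{align*}
\mathbb{E}(T_{N,d}^{(\alpha)} \mid S^{(1)}) = \frac{(N-1)\int_{S^{(1)}}^\infty \exp\bigl(-(1-\alpha) B_d (z - S^{(1)})\bigr)\phi(z)\,\rmd z}{\overline{\Phi}(S^{(1)})}.
\end{align*}
Completing the square in the numerator yields $\exp((1-\alpha)B_d S^{(1)} + (1-\alpha)^2 B_d^2/2)\,\overline{\Phi}((1-\alpha)B_d + S^{(1)})$, and the prefactor combines with $\phi((1-\alpha)B_d + S^{(1)})$ to give exactly $\phi(S^{(1)})$, as in the proof of \Cref{prop:limitTGaussianRenyi}.

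Next I would restrict to the event $\{S^{(1)} \in I_N\}$ with $I_N = [-4\sqrt{\log N}, -\sqrt{\log N}]$ from \Cref{lem:exactnormalminconcentration}. The assumption $B_d \geq \sigma_-\sqrt{d}$ together with $\log N/d \to 0$ gives
\begin{align*}
\frac{|S^{(1)}|}{(1-\alpha)B_d} \leq \frac{4\sqrt{\log N}}{(1-\alpha)\sigma_-\sqrt{d}} \longrightarrow 0,
\end{align*}
so $(1-\alpha)B_d + S^{(1)} = (1-\alpha)B_d(1 + o_{N,d}(1))$ uniformly on $I_N$, where $o_{N,d}(1)$ does not depend on $S^{(1)}$. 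Applying \eqref{eq:NormalCDFAsymptotics} to both $\overline{\Phi}((1-\alpha)B_d + S^{(1)})$ and $\Phi(S^{(1)}) = \overline{\Phi}(-S^{(1)})$ (using $-S^{(1)} \geq \sqrt{\log N} \to \infty$), and invoking $\phi(-S^{(1)}) = \phi(S^{(1)})$, leads to
\begin{align*}
\int_{S^{(1)}}^\infty \exp\bigl(-(1-\alpha)B_d(z-S^{(1)})\bigr)\phi(z)\,\rmd z \leq \Phi(S^{(1)})\,\frac{4\sqrt{\log N}}{(1-\alpha)B_d}(1 + o_{N,d}(1))
\end{align*}
on $I_N$. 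Since $\overline{\Phi}(S^{(1)}) = 1 + o_{N,d}(1)$ on the same event, we conclude $\mathbb{E}(T_{N,d}^{(\alpha)} \mid S^{(1)}) \leq (N-1)\Phi(S^{(1)})\,4\sqrt{\log N}/((1-\alpha)B_d)(1+o_{N,d}(1))$.

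Finally, by the tower property and splitting according to whether $S^{(1)} \in I_N$,
\begin{align*}
\mathbb{E}(T_{N,d}^{(\alpha)}) \leq (N-1)\,\frac{4\sqrt{\log N}}{(1-\alpha)B_d}\,\mathbb{E}(\Phi(S^{(1)}))(1+o(1)) + \PP(S^{(1)} \notin I_N).
\end{align*}
Using $\mathbb{E}(\Phi(S^{(1)})) = 1/(N+1)$ (since $\Phi(S^{(1)})$ is the minimum of $N$ i.i.d.\ uniforms on $[0,1]$), \Cref{lem:exactnormalminconcentration}, and once more $B_d \geq \sigma_-\sqrt{d}$ with $\log N/d \to 0$, we obtain
\begin{align*}
\mathbb{E}(T_{N,d}^{(\alpha)}) \leq \frac{4\sqrt{\log N}}{(1-\alpha)B_d}(1+o(1)) + O(N^{-4}) \leq \frac{4}{(1-\alpha)\sigma_-}\sqrt{\frac{\log N}{d}}(1+o(1)) + O(N^{-4}) \longrightarrow 0,
\end{align*}
which is the claim. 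The only nontrivial step is verifying that the uniform $o_{N,d}(1)$ approximation in the asymptotics of $\overline{\Phi}$ still holds after loosening $\sigma\sqrt{d}$ to $B_d$, but this is guaranteed precisely by $B_d \geq \sigma_-\sqrt{d}$; everything else is a direct transcription of the argument behind \Cref{prop:limitTGaussianRenyi}.
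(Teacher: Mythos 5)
Your proof is correct and rests on the same ingredients as the paper's argument (conditioning on $S^{(1)}$, the Gaussian tail asymptotics \eqref{eq:NormalCDFAsymptotics}, and \Cref{lem:exactnormalminconcentration}); the only difference is that you re-run the entire computation of \Cref{prop:limitTGaussianRenyi} with $B_d$ in place of $\sigma\sqrt{d}$, whereas the paper dispatches this case in one line by bounding $\exp(-(1-\alpha)B_d(S-S^{(1)})) \le \exp(-(1-\alpha)\sigma_-\sqrt{d}(S-S^{(1)}))$ pointwise (valid since $S-S^{(1)}\ge 0$ and $B_d \ge \sigma_-\sqrt{d}$) and then invoking \Cref{prop:limitTGaussianRenyi} verbatim with $\sigma_-$ in place of $\sigma$. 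Your longer route is perfectly sound and even yields the marginally more explicit rate $O(\sqrt{\log N}/B_d)$.
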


  \begin{proof}
Conditional upon $S^{(1)}$, we can think of the sum in \eqref{eq:expressionT_NdExt} as the sum over $N-1$ i.i.d. random variables
    \begin{align*}
        \PE(T_{N,d}^{(\alpha)} | S^{(1)}) = (N-1) \PE \lr{ \exp \lr{- (1-\alpha) B_d ( S - S^{(1)})}  }
    \end{align*}    where the expectation is w.r.t. the density of $S$ given by
    \begin{align*}
        p(z)=\frac{\phi(z)}{\overline{\Phi}(S^{(1)})} \mathbb{I}(z\geq S^{(1)}),
    \end{align*}
    with $\phi(z)$ denoting the standard normal density and $\overline{\Phi}(x)=\int_{x}^{\infty}\phi(z)\rmd z$ denoting the normalizing constant. Now denoting $\sigma_\alpha = (1-\alpha) \sigma_-$ for all $\alpha \in [0,1)$ and using that 
    $$
    \PE(T_{N,d}^{(\alpha)} | S^{(1)}) \leq (N-1) \PE \lr{ \exp \lr{- \sigma_\alpha \sqrt{d} ( S - S^{(1)})}  }
    $$
    we obtain by following the proof of \Cref{prop:limitTGaussianRenyi} that the term on the r.h.s. above goes to $0$ as $\log N/d \to 0$ with $N,d \to \infty$. We deduce the desired result by combining this with the fact that $\PE(T_{N,d}^{(\alpha)} | S^{(1)}) \geq 0$.
    \end{proof}
  The proof of \Cref{thm:MainGauss2} then follows immediately from \Cref{prop:limDeltaGaussianRenyi2} and \Cref{prop:limitTGaussianRenyi2}.

\subsection{Deferred proofs and results of \Cref{subsec:BeyondLogNormal}} 

\subsubsection{Large deviations for sums of independent random variables}
\label{app:subsub:Saulis}

We start by recalling some useful results from \cite{saulis2000} regarding large deviations for sums of independent random variables. The random variable $\xi$ is said to satisfy the assumption \ref{hyp:Gammak} if the following holds.
\begin{hypxi}{A}{$\xi$}
    \item \label{hyp:Gammak} There exists $\Delta > 0$ such that $|\Gamma_k(\xi)| \leq \frac{k!}{\Delta^{k-2}}$ for all integer $k \geq 3$, where $\Gamma_k(\xi)$ denotes the $k$-th cumulant of $\xi$.
\end{hypxi}
We now state without proof \cite[Lemma 2.3]{saulis2000} and \cite[Theorem 3.1]{saulis2000} in the particular case $\gamma = 0$.

\begin{lem}[{\cite[Lemma 2.3]{saulis2000}} with $\gamma = 0$] \label{lem:saulislargeDeviations}
    Let $\xi$ be a random variable with $\PE(\xi) = 0$ and $\PE(\xi^2) = 1$. Denote by $G(\cdot)$ the cdf of $\xi$. Assume that \ref{hyp:Gammak} holds and set 
    \begin{equation*}
        \Delta_0 = \frac{\sqrt{2}}{36} \Delta.
    \end{equation*}
    Then, in the interval $0 \leq x < \Delta_0$, the relations of large deviations
    \begin{align*}
        1 - G(x) & = (1 - \Phi(x)) \exp(P(x)) \lr{1+ \theta_1 f(x) \frac{x+1}{\Delta_0}} \\
        G(-x) & = \Phi(-x) \exp(P(-x))\lr{1+ \theta_2 f(x) \frac{x+1}{\Delta_0}}
    \end{align*}
    are valid, with $\Phi$ denoting the standard normal distribution. Here, $P$ and $f$ are defined by
    \begin{align*}
        &P(x) = \sum_{k=3}^\infty \lambda_k x^k + \theta \lr{x/\Delta_0}^3 \nonumber \\
        &f(x) = \frac{60(1+10 \Delta_0^2 \exp\lrcb{- \lr{1 - x/\Delta_0} \sqrt{\Delta_0} })}{1 - x/\Delta_0},
    \end{align*}
    where $\theta, \theta_1, \theta_2$ are some variables not exceeding $1$ in absolute value and where for all $k \geq 3$
    \begin{align*}
    |\lambda_k| \leq \frac{2}{k} \lr{16/ \Delta}^{k-2}
    \end{align*}
    so that
    \begin{align*}
    P(x) \leq \frac{x^3}{2(x + 8 \Delta_0)} \quad \mbox{and} \quad P(-x) \geq - \frac{x^3}{3 \Delta_0}.
    \end{align*}
\end{lem}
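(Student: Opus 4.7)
The statement is a classical Cramér-type local large deviation expansion; the authors cite it from \cite{saulis2000} without proof, so my plan is to sketch the standard Esscher-transform argument that underlies it. Write $\Lambda(t) = \log \PE(\exp(t\xi))$ for the cumulant generating function. The hypothesis \ref{hyp:Gammak} together with $\PE(\xi)=0$, $\PE(\xi^2)=1$ implies that the cumulants $\Gamma_k(\xi)$ grow at most factorially, which is precisely the condition making $\Lambda$ analytic in the disk $|t| < \Delta$, with the convergent series
\begin{equation*}
\Lambda(t) \;=\; \frac{t^2}{2} + \sum_{k \geq 3} \frac{\Gamma_k(\xi)}{k!} t^k.
\end{equation*}

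For $0 \leq x < \Delta_0$, the plan is to introduce the conjugate (Esscher) measure $\rmd G_h/\rmd G = \exp(h\xi - \Lambda(h))$, with the saddle-point $h = h(x)$ chosen so that $\Lambda'(h) = x$. Solving this equation by inverting the power series $\Lambda'(h) = h + \sum_{k\geq 2}(\Gamma_{k+1}(\xi)/k!) h^k$ near $h=0$ gives $h(x) = x + \sum_{k \geq 2} c_k x^k$ with coefficients controlled by the cumulants; the Bernstein bound then gives a geometric bound $|c_k| \lesssim (C/\Delta)^{k-1}$, yielding convergence on $[0,\Delta_0)$ for $\Delta_0 = \sqrt{2}\,\Delta/36$. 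Substituting into the identity
\begin{equation*}
1 - G(x) \;=\; \exp\bigl(\Lambda(h)-hx\bigr) \int_0^\infty e^{-h u}\, \rmd G_h(x+u)
\end{equation*}
and computing the exponent via $\Lambda(h(x)) - h(x) x = -x^2/2 + P(x)$, one recovers the Cramér series $P(x) = \sum_{k\geq 3} \lambda_k x^k$, where the formulas expressing $\lambda_k$ in terms of $\Gamma_3,\ldots,\Gamma_k$ combined with \ref{hyp:Gammak} give the stated bound $|\lambda_k|\leq (2/k)(16/\Delta)^{k-2}$ and hence the two explicit estimates on $P(\pm x)$.

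The main obstacle, as in Saulis--Statulevi\v{c}ius, is controlling the remainder integral $\int_0^\infty e^{-hu}\,\rmd G_h(x+u)$ against the Gaussian analogue $\int_0^\infty e^{-xu} \phi(x+u)\,\rmd u$, which by Mill's ratio is $(1-\Phi(x))(1+o(1))$ as $x\to\infty$. Under the tilted law $G_h$, the random variable $\xi - \Lambda'(h)$ has variance $\Lambda''(h) = 1 + O(h)$ and higher cumulants of order $O((k-1)!/\Delta^{k-2})$, so one can approximate $G_h$ by a Gaussian with a quantitative Berry--Esseen-type bound that produces the multiplicative error $f(x)(x+1)/\Delta_0$ uniformly on $[0,\Delta_0)$. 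The lower-tail statement $G(-x) = \Phi(-x)\exp(P(-x))(1+\theta_2 f(x)(x+1)/\Delta_0)$ is obtained symmetrically by tilting with parameter $-h(x)$. Combining the three ingredients (analyticity of $\Lambda$, inversion for $h(x)$ with explicit coefficient bounds, and the quantitative Gaussian approximation under $G_h$) yields the claimed expansion.
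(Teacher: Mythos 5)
This lemma is quoted verbatim by the paper from \cite[Lemma 2.3]{saulis2000} and is explicitly stated \emph{without proof}, so there is no internal argument to compare yours against; the right benchmark is whether your sketch would actually reconstruct the cited result. Your outline does identify the correct classical machinery -- the Esscher/conjugate-measure identity $1-G(x)=\exp(\Lambda(h)-hx)\int_0^\infty e^{-hu}\,\rmd G_h(x+u)$, the saddle-point equation $\Lambda'(h)=x$, the Cram\'er series arising from $\Lambda(h(x))-h(x)x=-x^2/2+\sum_{k\ge3}\lambda_k x^k$, and a quantitative normal approximation for the tilted law -- and this is indeed the strategy of Saulis--Statulevi\v{c}ius.

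However, as a proof there is a genuine gap: every quantitatively specific claim in the statement is asserted rather than derived, and those specifics \emph{are} the content of the lemma. Concretely, you never show why the expansion is valid precisely on $[0,\Delta_0)$ with $\Delta_0=\tfrac{\sqrt2}{36}\Delta$ (the constant $\sqrt2/36$ comes from a delicate comparison of the inversion radius of $\Lambda'$ with the region where the tilted Berry--Esseen bound is nontrivial); you do not derive the coefficient bound $|\lambda_k|\le \tfrac{2}{k}(16/\Delta)^{k-2}$ from \ref{hyp:Gammak} (this requires explicitly inverting the series for $h(x)$ and bounding the resulting compositions of cumulants, not just noting that the coefficients are ``controlled''); you do not obtain the explicit error factor $f(x)=60(1+10\Delta_0^2\exp\{-(1-x/\Delta_0)\sqrt{\Delta_0}\})/(1-x/\Delta_0)$, whose particular form encodes the uniform Gaussian-approximation error under $G_h$; and you do not verify the two closing inequalities $P(x)\le x^3/(2(x+8\Delta_0))$ and $P(-x)\ge -x^3/(3\Delta_0)$, which require summing the geometric bound on the $\lambda_k$ and absorbing the extra remainder. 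On that last point, note also that the $P$ of the statement contains an additional term $\theta(x/\Delta_0)^3$ beyond the pure power series $\sum_{k\ge3}\lambda_k x^k$ that your saddle-point computation produces; your sketch does not account for where this remainder comes from. In short, your proposal is a correct road map to the proof in the cited reference, but it establishes only that ``a result of this general shape holds,'' not the lemma with its stated constants.
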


\begin{theorem}[{\cite[Theorem 3.1]{saulis2000} with $\gamma =0$}] \label{thm:saulisBernstein}
    Let $\xi_1, \ldots,$ $\xi_d$ be independent random variables with $\PE(\xi_j) = 0$ and $\sigma_j^2 = \mathbb{V}(\xi_j)  < \infty$. Set 
    \begin{equation*}
        \sd = \frac{1}{B_d} \lr{ \xi_1 + \ldots + \xi_d},
    \end{equation*}
    where $B_d^2 = \sum_{j=1}^d \sigma_j^2$. Assume that there exists $K > 0$ such that: for all $j = 1 \ldots d$, 
    \begin{align}\label{eq:bernsteinCond}
        |\PE(\xi_j^k)| \leq k! K^{k-2} \sigma_j^2, \quad k\geq 3. 
    \end{align}
    Then,
    \begin{equation*}
        |\Gamma_k(\sd)| \leq \frac{k!}{\Delta_d^{k-2}}, \quad k \geq 3
    \end{equation*}
    with
    \begin{equation*}
        \Delta_d = \frac{B_d}{K_d}, \quad \mbox{where} \quad K_d = 2 \max \lrcb{K, \max_{1 \leq j \leq d} \sigma_j},
    \end{equation*}
    that is, \ref{hyp:Gammak} holds with $\xi = \sd$ and $\Delta = \Delta_d$.    
\end{theorem}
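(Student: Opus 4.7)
The plan is to combine three standard ingredients: additivity and homogeneity of cumulants, the Leonov--Shiryaev moment-cumulant formula, and a combinatorial bound on partitions of $\lrcb{1,\ldots,k}$ with no singleton block. Together they reduce the claim to a per-summand cumulant estimate that follows directly from the Bernstein moment condition.

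First, I would use cumulant additivity under independence together with $k$-homogeneity to write $\Gamma_k(\sd)=B_d^{-k}\sum_j \Gamma_k(\xi_j)$; this reduces the claim to the per-summand bound $|\Gamma_k(\xi_j)|\leq k!\,K_d^{k-2}\sigma_j^2$ for all $k\geq 3$, since summing over $j$ and dividing by $B_d^k$ immediately yields $|\Gamma_k(\sd)|\leq k!/\Delta_d^{k-2}$. For the per-summand bound I would invoke the Leonov--Shiryaev formula
$$
\Gamma_k(\xi_j)=\sum_{\pi}(-1)^{|\pi|-1}(|\pi|-1)!\prod_{B\in\pi}\PE(\xi_j^{|B|}),
$$
where $\pi$ ranges over all partitions of $\lrcb{1,\ldots,k}$. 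The hypothesis $\PE(\xi_j)=0$ kills every partition with a singleton block, so on the surviving ones every $B$ satisfies $|B|\geq 2$; and \eqref{eq:bernsteinCond} (which also covers $|B|=2$ via the trivial inequality $\sigma_j^2\leq 2!\,K^0\sigma_j^2$) then gives $|\PE(\xi_j^{|B|})|\leq |B|!\,K^{|B|-2}\sigma_j^2$. Multiplying across blocks and using $\sigma_j^{2|\pi|}K^{k-2|\pi|}\leq \sigma_j^2(\max\{K,\sigma_j\})^{k-2}$ isolates one factor $\sigma_j^2$ and leaves the purely combinatorial quantity $c_k=\sum_{\pi}(|\pi|-1)!\prod_{B\in\pi}|B|!$, summed over partitions of $\lrcb{1,\ldots,k}$ into blocks of size at least $2$.

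I would then show $c_k\leq k!\,2^{k-2}$, for example from the exponential generating function identity $\sum_{k\geq 0}c_k z^k/k! = -\log\!\lr{1-z^2/(1-z)}$; the dominant singularity of the right-hand side at $z=(\sqrt5-1)/2$ in fact gives the sharper $c_k/k! = O\!\lr{((1+\sqrt5)/2)^k}$, comfortably within $2^{k-2}$. Combined with the previous estimates, this delivers $|\Gamma_k(\xi_j)|\leq k!\,(2\max\{K,\max_i\sigma_i\})^{k-2}\sigma_j^2 = k!\,K_d^{k-2}\sigma_j^2$, which is the required per-summand bound.

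The main obstacle is the combinatorial bound $c_k\leq k!\,2^{k-2}$: this is the only non-trivial step in the argument and it is precisely where the factor $2$ in $K_d=2\max\{K,\max_j\sigma_j\}$ originates. The presence of $\max_j\sigma_j$ (rather than just $K$) in $K_d$ traces back to the estimate $\sigma_j^{2|\pi|}K^{k-2|\pi|}\leq \sigma_j^2(\max\{K,\sigma_j\})^{k-2}$, which is forced on us because each block of $\pi$ contributes an extra factor of $\sigma_j^2$ to the moment product. Once these ingredients are in place, the remaining assembly of inequalities and the final division by $B_d^k$ are entirely mechanical.
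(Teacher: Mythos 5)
Your argument is correct, but note that the paper does not prove this statement at all: it is imported verbatim from Saulis and Statulevi\v{c}ius (Theorem 3.1 with $\gamma=0$) and stated without proof, so what you have written is a genuine self-contained derivation of the cited result rather than an alternative to an in-paper argument. Your route is the standard one for such ``Bernstein condition $\Rightarrow$ cumulant bound'' lemmas, and every step checks out: additivity and $k$-homogeneity give $\Gamma_k(\sd)=B_d^{-k}\sum_j\Gamma_k(\xi_j)$, the mean-zero hypothesis removes singleton blocks from the moment-cumulant expansion, the case $|B|=2$ is covered since $\sigma_j^2\le 2!\,K^0\sigma_j^2$, and the exponent bookkeeping $\sigma_j^{2|\pi|}K^{k-2|\pi|}\le\sigma_j^2(\max\{K,\sigma_j\})^{k-2}$ is valid because $1\le|\pi|\le k/2$ on the surviving partitions. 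The only place you should tighten is the combinatorial bound $c_k\le k!\,2^{k-2}$: the asymptotic $c_k/k!=O(((1+\sqrt5)/2)^k)$ read off the singularity does not by itself give the uniform inequality with the right constant for every $k\ge 3$. A clean fix from your own generating function $-\log(1-z^2/(1-z))=\sum_{t\ge1}t^{-1}(z^2/(1-z))^t$ is to note that $[z^k](z^2/(1-z))^t=\binom{k-t-1}{t-1}$ counts compositions of $k$ into $t$ parts of size at least $2$, so
\begin{equation*}
\frac{c_k}{k!}=\sum_{t=1}^{\lfloor k/2\rfloor}\frac{1}{t}\binom{k-t-1}{t-1}\le\sum_{t\ge1}\binom{k-t-1}{t-1}=F_{k-1}\le 2^{k-2},
\end{equation*}
with $F_{k-1}$ the $(k-1)$-th Fibonacci number; this closes the argument and explains exactly where the factor $2$ in $K_d$ comes from.
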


\subsubsection{Preliminary results}
\label{app:prelimResults}

Building on \Cref{lem:saulislargeDeviations} and \Cref{thm:saulisBernstein}, we can now state some preliminary results that will come in handy when proving the results from \Cref{subsec:BeyondLogNormal}.

\begin{lem}
    \label{lem:A1bickelcurated}
    Let $\xi_1, \ldots, \xi_d$ be i.i.d. random variables with $\PE(\xi_1) = 0$ and $\sigma^2 = \mathbb{V}(\xi_1)  < \infty$. Set 
    $$
    \sd = \frac{1}{B_d} \lr{ \xi_1 + \ldots + \xi_d},
    $$
    where $B_d = \sigma\sqrt{d}$. Assume that there exists $K > 0$ such that: 
    $$
    |\PE(\xi_1^k)| \leq k! K^{k-2} \sigma^2, \quad k \geq 3.
    $$
    Set $\Delta_d = B_d /K_d$ where $K_d = 2 \max \lrcb{K, \sigma}$. 
    Then, as $d \to \infty$, there exists an analytic function $P_d$ such that the cdf 
    of $\sd$, denoted $G_d(\cdot)$
    , satisfies
    \begin{align*}
    1 - G_d(x) & = (1 - \Phi(x)) \exp(P_d(x)) (1+o(1)) \\
    G_d(-x) & = \Phi(-x) \exp(P_d(-x))(1+o(1)) 
    \end{align*}
    uniformly for all $x \geq 0$ and $x = o(\sqrt{d})$.
    Here, $\Phi$ denotes the standard normal distribution, $P_d$ is such that
    \begin{align*}
        P_d(x) = \sum_{k=3}^\infty \lambda_{k,d} x^k
    \end{align*}
    with 
    \begin{align*}
    &|\lambda_{k,d}| \leq A (c / \sqrt{d})^{k-2}, \quad k\geq 3 
    \end{align*}
    for some constants $A, c> 0$. 
    \end{lem}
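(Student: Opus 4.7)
The plan is to derive this lemma as a direct application of the two large deviation results from \cite{saulis2000} that are reproduced as \Cref{thm:saulisBernstein} and \Cref{lem:saulislargeDeviations}. First, I would apply \Cref{thm:saulisBernstein} to the i.i.d. sum $S_d$: since the Bernstein moment condition \eqref{eq:bernsteinCond} holds with the common constant $K$, we obtain the cumulant bound $|\Gamma_k(S_d)| \leq k!/\Delta_d^{k-2}$ with $\Delta_d = B_d/K_d = \sigma \sqrt{d}/K_d$, where $K_d = 2 \max\{K,\sigma\}$ is independent of $d$. In particular, the assumption \ref{hyp:Gammak} holds for $\xi = S_d$ with $\Delta = \Delta_d$, and $\Delta_d$ grows like $\sqrt{d}$.

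Next, I would plug $\xi = S_d$ and $\Delta = \Delta_d$ into \Cref{lem:saulislargeDeviations}, which yields, for $0 \leq x < \Delta_{0,d} := \sqrt{2}\Delta_d/36$, the identities
\begin{align*}
1 - G_d(x) &= (1-\Phi(x)) \exp(P(x)) \lrb{1 + \theta_1 f(x) \tfrac{x+1}{\Delta_{0,d}}}, \\
G_d(-x) &= \Phi(-x) \exp(P(-x)) \lrb{1 + \theta_2 f(x) \tfrac{x+1}{\Delta_{0,d}}},
\end{align*}
with $P(x) = \sum_{k=3}^\infty \lambda_{k,d} x^k + \theta (x/\Delta_{0,d})^3$, $|\theta|, |\theta_1|, |\theta_2| \leq 1$, and coefficient bound $|\lambda_{k,d}| \leq (2/k)(16/\Delta_d)^{k-2}$. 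Defining $P_d(x) := \sum_{k=3}^\infty \lambda_{k,d} x^k$, the coefficient bound immediately gives $|\lambda_{k,d}| \leq A (c/\sqrt{d})^{k-2}$ with $A = 2$ and $c = 16 K_d/\sigma$, both independent of $d$; the series is analytic on $\{|x| < \sqrt{d}/c\}$, which covers the regime $x = o(\sqrt{d})$.

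The remaining step is to show that the cubic remainder $\theta(x/\Delta_{0,d})^3$ and the relative error $f(x)(x+1)/\Delta_{0,d}$ are both $o(1)$ uniformly in the range $0 \leq x = o(\sqrt{d})$. Since $\Delta_{0,d} \asymp \sqrt{d}$, for any sequence $x_d$ with $x_d / \sqrt{d} \to 0$ we have $x_d / \Delta_{0,d} \to 0$, hence $(x_d/\Delta_{0,d})^3 \to 0$; absorbing this into the exponential turns $\exp(P)$ into $\exp(P_d)(1+o(1))$. For $f(x)$, the factor $1 - x/\Delta_{0,d} \to 1$, while $\Delta_{0,d}^2 \exp(-(1-x/\Delta_{0,d}) \sqrt{\Delta_{0,d}}) = O(d \exp(-c' d^{1/4})) \to 0$ for some $c' > 0$, so $f(x) = O(1)$ uniformly, and therefore $f(x)(x+1)/\Delta_{0,d} = O((x+1)/\sqrt{d}) \to 0$ uniformly. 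Combining these two negligible contributions yields the stated asymptotic identities with the $(1+o(1))$ factor.

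The main obstacle is this last uniformity argument: one must check carefully that the exponential decay inside $f(x)$ really dominates the $\Delta_{0,d}^2$ prefactor uniformly over $x = o(\sqrt{d})$ (so that $1 - x/\Delta_{0,d}$ stays bounded away from $0$), and that the cubic remainder folded into the exponential does not interact with $P_d(x)$ in a way that spoils the multiplicative $(1+o(1))$. Both of these are bookkeeping issues rather than substantive obstacles, since $\Delta_{0,d}$ scales like $\sqrt{d}$ and the hypothesis $x = o(\sqrt{d})$ keeps us safely inside the zone of normal convergence provided by Saulis's lemma.
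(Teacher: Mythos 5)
Your proposal is correct and follows essentially the same route as the paper: apply \Cref{thm:saulisBernstein} to verify \ref{hyp:Gammak} for $\sd$ with $\Delta = \Delta_d$, invoke \Cref{lem:saulislargeDeviations}, and absorb the cubic remainder $\theta(x/\Delta_{0,d})^3$ and the relative error $f(x)(x+1)/\Delta_{0,d}$ into the $(1+o(1))$ factor using $\Delta_{0,d} \asymp \sqrt{d}$ and $x = o(\sqrt{d})$. Your uniformity bookkeeping for $f(x)$ and the explicit constants $A$ and $c$ are slightly more detailed than the paper's, but the argument is the same.
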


    \begin{proof}
    Observe first that $\sd$ satisfies $\PE(\sd) = 0$ and $\PE(\sd^2) = 1$ with $B_d = \sigma \sqrt{d}$ and $K_d = 2 \max (K, \sigma)$. Furthermore, \ref{hyp:Gammak} holds with $\xi = \sd$ and $\Delta = \Delta_d$ by \Cref{thm:saulisBernstein}. Then, we can apply \Cref{lem:saulislargeDeviations} with $\xi = \sd$ and $\Delta = \Delta_d$ to obtain that in the interval $0 \leq x < \Delta_{0,d}$, the relations of large deviations
        \begin{align*}
            1 - G_d(x) & = (1 - \Phi(x)) \exp(P(x)) \lr{1+ \theta_1 f(x) \frac{x+1}{\Delta_{0,d}}}, \\
            G_d(-x) & = \Phi(-x) \exp(P(-x))\lr{1+ \theta_2 f(x) \frac{x+1}{\Delta_{0,d}}}
        \end{align*}
        are valid. Here, $\Delta_{0,d} = \frac{\sqrt{2}}{36} \Delta_d$ and $P$, $f$ are defined by
        \begin{align*}
        &P(x) = P_d(x) + \theta \lr{x/\Delta_{0,d}}^3 \\
        &f(x) = \frac{60(1+10 \Delta_{0,d}^2 \exp\lrcb{- \lr{1 - x/\Delta_{0,d}} \sqrt{\Delta_{0,d}} })}{1 - x/\Delta_{0,d}} \\
        &P_d(x) = \sum_{k=3}^\infty \lambda_{k,d} x^k,
        \end{align*}
        where $\theta, \theta_1, \theta_2$ are some variables not exceeding $1$ in absolute value and
        $$
        |\lambda_{k,d}| \leq  \frac{2}{k} (16 /\Delta_{d})^{k-2} \leq A (c/ \sqrt{d})^{k-2}, \quad k \geq 3
        $$
        for some constants $A, c > 0$. 
        Under the assumption $x = o(\sqrt{d})$, $P(x) = P_d(x) + o(1)$, $f(x) \frac{x+1}{\Delta_{0,d}} = o(1)$ and we can thus deduce that as $d \to \infty$ the relations of large deviations become
        \begin{align*}
        1 - G_d(x) & = (1 - \Phi(x)) \exp(P_d(x)) \lr{1+ o(1)} \\
        G_d(-x) & = \Phi(-x) \exp(P_d(-x))\lr{1+ o(1)}
        \end{align*}
        uniformly for all $x \geq 0$ and $x = o(\sqrt{d})$.
    \end{proof}
The corollary below then follows from \Cref{lem:A1bickelcurated}. 

\begin{coro}\label{key:coro}
 Under the assumptions of \Cref{lem:A1bickelcurated}, as $d \to \infty$, 
 \begin{align*}
     1 - G_d(x) & = (1 - \Phi(x))(1+o(1)) \\
     G_d(-x) & = \Phi(-x)(1+o(1)) 
 \end{align*}
uniformly for all $x \geq 0$ and $x = o(d^{1/6})$.   
\end{coro}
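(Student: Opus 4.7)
The plan is to deduce the corollary directly from \Cref{lem:A1bickelcurated} by showing that the perturbation factor $\exp(P_d(x))$ collapses to $1 + o(1)$ uniformly on the smaller range $0 \leq x = o(d^{1/6})$. Since $\Phi(-x) = 1 - \Phi(x)$ for all $x$, both asymptotic relations follow from the same bound on $P_d(\pm x)$, so only one computation is needed.

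First, I will bound $|P_d(x)|$ using the coefficient estimate $|\lambda_{k,d}| \leq A (c/\sqrt{d})^{k-2}$ given in \Cref{lem:A1bickelcurated}. Writing $r \eqdef c|x|/\sqrt{d}$, I get
\begin{equation*}
|P_d(x)| \;\leq\; \sum_{k=3}^{\infty} |\lambda_{k,d}|\, |x|^k \;\leq\; A |x|^2 \sum_{k=3}^{\infty} r^{k-2} \;=\; \frac{A |x|^3}{\sqrt{d}} \cdot \frac{c}{1 - r},
\end{equation*}
valid whenever $r < 1$. Under the assumption $x = o(d^{1/6})$, we have $r = o(d^{1/6}/d^{1/2}) = o(d^{-1/3}) \to 0$ uniformly, so the geometric tail is $1 + o(1)$, and the leading factor satisfies $|x|^3/\sqrt{d} = o(d^{1/2})/\sqrt{d} = o(1)$, also uniformly in the stated range. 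Hence $P_d(x) = o(1)$ uniformly for $0 \leq x = o(d^{1/6})$, and the same argument applied to $|{-x}| = |x|$ gives $P_d(-x) = o(1)$ uniformly.

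Next, I will combine this with the large-deviation relations from \Cref{lem:A1bickelcurated}. Since $\exp(P_d(\pm x)) = 1 + o(1)$ uniformly on $0 \leq x = o(d^{1/6}) \subset \{x \geq 0 : x = o(\sqrt{d})\}$, substituting into
\begin{align*}
1 - G_d(x) &= (1 - \Phi(x)) \exp(P_d(x)) (1 + o(1)), \\
G_d(-x) &= \Phi(-x) \exp(P_d(-x)) (1 + o(1))
\end{align*}
and multiplying the two $(1+o(1))$ factors yields the corollary.

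The main (minor) obstacle is simply keeping the uniformity bookkeeping honest: one must check that the constants $A$ and $c$ in the coefficient bound do not depend on $x$ or $d$, and that the geometric sum and the factor $|x|^3/\sqrt{d}$ both go to zero uniformly, not merely pointwise, over the regime $x = o(d^{1/6})$. Both facts are immediate from the explicit $k$-independent bounds provided by \Cref{lem:A1bickelcurated}, so no further analytic input is needed beyond that lemma.
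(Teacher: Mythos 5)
Your proposal is correct and follows essentially the same route as the paper's own proof: apply \Cref{lem:A1bickelcurated} on the sub-range $x = o(d^{1/6}) \subset \{x = o(\sqrt{d})\}$, then use the coefficient bound $|\lambda_{k,d}| \leq A(c/\sqrt{d})^{k-2}$ to sum the geometric series and show $|P_d(\pm x)| = O(|x|^3/\sqrt{d}) = o(1)$ uniformly, so that $\exp(P_d(\pm x)) = 1 + o(1)$. The only difference is cosmetic (how the geometric tail is indexed), so nothing further is needed.
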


\begin{proof} Since we consider the case $x \geq 0$ and $x = o(d^{1/6})$, we can apply \Cref{lem:A1bickelcurated} to get: as $d \to \infty$,
\begin{align*}
        1 - G_d(x) & = (1 - \Phi(x)) \exp(P_d(x)) (1+o(1)) \\
        G_d(-x) & = \Phi(-x) \exp(P_d(-x))(1+o(1)) 
\end{align*}
where $P_d$ is defined in \Cref{lem:A1bickelcurated}. In addition, using successively that (i) $|\lambda_{k,d}| \leq A (c/\sqrt{d})^{k-2}$ by \Cref{lem:A1bickelcurated} (ii) $x = o (\sqrt{d})$ and (iii) $x^3 = o(\sqrt{d})$, we have that:
\begin{align*}
|P_d(x)| &\leq \sum_{k = 3}^\infty |\lambda_{k,d}| x^k \\
& \leq Ac {x^3} d^{-1/2} \sum_{k = 3}^\infty \lr{c x d^{-1/2}}^{k-3} \\
& \leq A c x^3 d^{-1/2} (1 + o(1)) \\
& = o(1).
\end{align*}
Similarly, $|P_d(-x)| = o(1)$ and consequently, 
\begin{align*}
    1 - G_d(x) & = (1 - \Phi(x))(1+o(1)) \\
    G_d(-x) & = \Phi(-x)(1+o(1)) 
\end{align*}
uniformly for all $x \geq 0$ and $x = o(d^{1/6})$. 
\end{proof}
We also prove the following concentration result, which parallels the corresponding result \cref{lem:exactnormalminconcentration} from the exact log-normal case and which will be useful in subsequent proofs.

\begin{lem}\label{lem:approxnormalminconcentration}
    Let $S_1, \ldots, S_N$ be i.i.d. distributed according to \eqref{eq:expressionSi}, set $S^{(1)} = \min_{1 \leq i \leq N}S_i$ and define $I_N = [- 4 \sqrt{\log N}, - \sqrt{\log N}]$. Then as $N, d \rightarrow \infty$ with $\log N / d^{1/3} \rightarrow 0$, we have
    \begin{equation*}
        \PP(S^{(1)} \not \in I_N) = O\lr{\frac{1}{N^4}}.
    \end{equation*}
\end{lem}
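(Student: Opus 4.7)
The plan is to mimic the proof of \Cref{lem:exactnormalminconcentration} from the exact log-normal setting, but replace all direct invocations of the standard normal tail by the large-deviation approximation for $G_d$, the cdf of $S_i$, provided by \Cref{key:coro}. The crucial point is that the assumption $\log N / d^{1/3} \to 0$ is exactly what guarantees $\sqrt{\log N}$ (and any constant multiple of it) satisfies $x = o(d^{1/6})$, which is precisely the zone of validity of \Cref{key:coro}.

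First I would split
\begin{align*}
\PP(S^{(1)} \notin I_N) \leq \PP(S^{(1)} > -\sqrt{\log N}) + \PP(S^{(1)} < -4\sqrt{\log N}).
\end{align*}
For the first term, I would write $\PP(S^{(1)} > -\sqrt{\log N}) = (1 - G_d(-\sqrt{\log N}))^N$, then use \Cref{key:coro} to get $G_d(-\sqrt{\log N}) = \overline{\Phi}(\sqrt{\log N})(1+o(1))$, and then apply the classical Mill's ratio asymptotics \eqref{eq:NormalCDFAsymptotics} to conclude $G_d(-\sqrt{\log N}) = N^{-1/2}/\sqrt{2\pi \log N}\cdot(1+o(1))$. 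Plugging this into $\log(1 - G_d(-\sqrt{\log N})) = -G_d(-\sqrt{\log N})(1+o(1))$ yields a doubly-exponential decay of the form $\exp(-c N^{1/2}/\sqrt{\log N})$, which is certainly $O(N^{-4})$.

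For the second term, I would use the union bound $\PP(S^{(1)} < -4\sqrt{\log N}) \leq N\, G_d(-4\sqrt{\log N})$, and again invoke \Cref{key:coro} (applicable since $4\sqrt{\log N} = o(d^{1/6})$) together with Mill's ratio to get
\begin{align*}
G_d(-4\sqrt{\log N}) = \frac{\phi(4\sqrt{\log N})}{4\sqrt{\log N}}(1+o(1)) = \frac{N^{-8}}{4\sqrt{2\pi \log N}}(1+o(1)),
\end{align*}
from which $\PP(S^{(1)} < -4\sqrt{\log N}) = O(N^{-7}) = O(N^{-4})$. Combining the two bounds gives the result.

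The main (and only real) obstacle is verifying that the ranges $x = \sqrt{\log N}$ and $x = 4\sqrt{\log N}$ do fall in the zone of normal convergence stated in \Cref{key:coro} uniformly as $N, d \to \infty$ with $\log N / d^{1/3} \to 0$; this is immediate from the identity $\sqrt{\log N}/d^{1/6} = (\log N/d^{1/3})^{1/2} \to 0$. Everything else is a mechanical rerun of the proof of \Cref{lem:exactnormalminconcentration} with $\Phi$ replaced by $G_d$ up to the $(1+o(1))$ correction supplied by \Cref{key:coro}.
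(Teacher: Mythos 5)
Your proposal is correct and follows essentially the same route as the paper's proof: the same two-event split, the same invocation of \Cref{key:coro} (justified by $\sqrt{\log N}/d^{1/6} = (\log N/d^{1/3})^{1/2} \to 0$) to reduce $G_d$ to the standard normal tail, and the same Mill's-ratio asymptotics yielding the doubly-exponential bound for the upper event and the $O(N^{-7})$ union bound for the lower event.
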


\begin{proof}
    The proof follows the same structure as the proof of \cref{lem:exactnormalminconcentration}, using \Cref{key:coro} to relate the approximately log-normal case to the exact case.
    
    We control the probability of the events $\{S^{(1)} > - \sqrt{\log N}\}$ and $\{S^{(1)} < - 4 \sqrt{\log N}\}$ separately. First, since $\sqrt{\log N} = o(d^{1/6})$ as $N,d \rightarrow \infty$ with $\log N / d^{1/3} \rightarrow 0$, by \Cref{key:coro} we have
    \begin{align*}
        \log \PP(S^{(1)} > -\sqrt{\log N}) & = N \log \left( 1 - G_d(-\sqrt{\log N}) \right) \\
        & = N \log \left( 1 - (1 + o(1))\overline{\Phi}(\sqrt{\log N}) \right) \\
        & = - N \frac{\phi(\sqrt{\log N})}{\sqrt{\log N}}(1 + o(1)) 
    \end{align*}
    using the same method as in \eqref{eq:lowerboundapprox}. Following \eqref{eq:lowerboundasymp}, we deduce that 
    \begin{equation*}
        \PP(S^{(1)} > -\sqrt{\log N}) = O\left(\frac{1}{N^4}\right)
    \end{equation*}
    Second, we can write
    \begin{align*}
        \PP(S^{(1)} < - 4 \sqrt{\log N}) & \leq N G_d(- 4 \sqrt{\log N}) \\
        & = N \overline{\Phi}(4 \sqrt{\log N})(1 + o(1)) \\
        &= O\left(\frac{1}{N^4}\right)
    \end{align*}
    using the same method as in \eqref{eq:upperboundasymp}, from which the result follows.
\end{proof}

\subsubsection{Proof of \Cref{lem:S1approxGen}}
\label{subsec:prooflemS1approxGen}

\begin{proof}[Proof of \Cref{lem:S1approxGen}]
    The idea of the proof will be to relate it to the case where $S_1, \ldots, S_N$ are exactly normally distributed, which was proved in Section \ref{subsec:proofS1approxGauss}. Recall that we have $S_1, \ldots, S_N$ with cdf $G_d$ and $S^{(1)} = \min_{1 \leq i \leq N} S_i$. Also, let $\tilde S_1, \ldots, \tilde S_N$ be auxiliary i.i.d. standard Gaussian random variables, and set $\tilde S^{(1)} = \min_{1 \leq i \leq N} S_i$.

    By the assumption that the $\xi_{i,j}$ are absolutely continuous with respect to the Lebesgue measure, $G_d$ is continuous and hence we can construct $S_1, \ldots, S_N$ and $\tilde S_1, \ldots, \tilde S_N$ on a common probability space by drawing $N$ uniform random variables $U_1, \ldots, U_N \sim U[0,1]$ and setting $S_i = G_d^{-1}(U_i)$, $\tilde S_i = \Phi^{-1}(U_i)$. We then have that $S^{(1)} = G_d^{-1}(U^{(1)})$ and $\tilde S^{(1)} = \Phi^{-1}(U^{(1)})$.

    From \Cref{lem:S1approxGauss} we know that
    \begin{equation*}
        \PE(\tilde S^{(1)}) = - \sqrt{2 \log N} + O \left( \frac{\log \log N}{\sqrt{\log N}} \right)
    \end{equation*}
    so it suffices to prove that
    \begin{equation}\label{eq:S1proofapproxerror}
        \PE(|\tilde S^{(1)} - S^{(1)}|) = O \left( \frac{\log \log N}{\sqrt{\log N}} \right).
    \end{equation}
    Letting $I_N = [- 4\sqrt{\log N}, - \sqrt{\log N}]$, we will split the above expectation according to whether $\tilde S^{(1)} \in I_N$. 
    
    \begin{itemize}
        \item
    Assuming first that $\tilde S^{(1)} \in I_N$, so that $\tilde S^{(1)} = o(d^{1/6})$, if we let $h \in \mathbb{R}$ be an arbitrary real satisfying $h = o_{N,d}(1)$, then using \Cref{key:coro} we can write
    \begin{align*}
        G_d(\tilde S^{(1)} + h) & = \Phi(\tilde S^{(1)} + h)(1 + o_{N,d}(1)) \\
        & = - \frac{\phi(\tilde S^{(1)} + h)}{\tilde S^{(1)} + h}(1 + o_{N,d}(1)) \\
        & = \Phi(\tilde S^{(1)}) \frac{\phi(\tilde S^{(1)} + h)}{\phi(\tilde S^{(1)})}(1 + o_{N,d}(1)) \\
        & = U^{(1)} \exp\left\{ - h \tilde S^{(1)} - h^2 / 2 \right\} (1 + o_{N,d}(1))
    \end{align*}

    and so it follows by the continuity of $G_d$ that there is a choice of $h$, satisfying $h = O_{N,d}(1/\sqrt{\log N})$, such that $G_d(\tilde S^{(1)} + h) = U^{(1)}$. We conclude that
    \begin{equation*}
        |\tilde S^{(1)} - S^{(1)}| \leq O_{N,d}\left(\frac{1}{\sqrt{\log N}}\right)
    \end{equation*}
    and so 
    \begin{equation} \label{eq:diffInIN}
        \PE\left(|\tilde S^{(1)} - S^{(1)}| \mathbbm{1}_{\{\tilde S^{(1)} \in I_N\}}\right) \leq O\left(\frac{1}{\sqrt{\log N}}\right).
    \end{equation}

    \item On the other hand, we may also write
    \begin{align*}
        \PE\left(|S_1| \mathbbm{1}_{\{\tilde S^{(1)} \not \in I_N\}}\right) & \leq \PE\left(|S_1| \mathbbm{1}_{\{|S_1| \geq N^2\}}\right) + \PE\left(|S_1| \mathbbm{1}_{\{|S_1| < N^2\} \cap \{\tilde S^{(1)} \not \in I_N\}}\right) \\
        & \leq \frac{1}{N^2} \PE(|S_1|^2) + N^2 \PP\left(\tilde S^{(1)} \not \in I_N \right) \\
        & \leq O\left(\frac{1}{N^2}\right)
    \end{align*}
    where we have used $\PE(|S_1|^2) = 1$ and \Cref{lem:exactnormalminconcentration} to bound the second term.

    The same result also holds with $\tilde S_1$ in place of $S_1$ (e.g. by considering taking the $\xi_i$ to be i.i.d. Gaussians), and so we see that
    \begin{equation} \label{eq:diffNotInIN}
        \PE\left(|\tilde S^{(1)} - S^{(1)}| \mathbbm{1}_{\{\tilde S^{(1)} \not \in I_N\}}\right) \leq \sum_{i=1}^N \PE\left(|\tilde S_i - S_i| \mathbbm{1}_{\{\tilde S^{(1)} \not \in I_N\}}\right) = O\left(\frac{1}{N}\right)
    \end{equation}
\end{itemize}
    Combining \eqref{eq:diffInIN} and \eqref{eq:diffNotInIN} yields \eqref{eq:S1proofapproxerror} and the proof is concluded.
\end{proof}

\subsubsection{Proof of \Cref{prop:DeltaGen}}
\label{subsec:proof:prop:deltaGen}

\begin{proof}[Proof of \Cref{prop:DeltaGen}]
    First, note that since the weights satisfy (\ref{eq:approxlognormalweights}), we may write
    $$
    \log \overline{w}^{(N)} = - \log \lr{ \PE(\exp(-\sigma \sqrt{d} S_1))} - \sigma \sqrt{d} S^{(1)}.
    $$
    In addition, using the definition of $S_1$ written in \eqref{eq:expressionSi}, that is
    $$
    S_1 = \frac{1}{\sigma \sqrt{d}} \sum_{j = 1}^d \xi_{1,j},
    $$
    where the $\xi_{1,1}, \ldots, \xi_{1,d}$ are i.i.d. random variables, we have that
    \begin{align*}
        \PE(\exp(-\sigma \sqrt{d} S_1)) & = \prod_{j=1}^d \PE(\exp(-\xi_{1,j})) \\
        & = \lr{\PE(\exp(-\xi_{1,1}))}^d.
    \end{align*}
    Thus,
    \begin{align} \label{eq:checkALognormal}
        - \log \lr{\PE(\exp(-\sigma \sqrt{d} S_1))} & = - d \log \PE(\exp(-\xi_{1,1})) = - d a
    \end{align}
    By Jensen's inequality applied to the strictly convex function $u \mapsto - \log(u)$, we have that
    $$
    a < \PE(\xi_{1,1}) = 0.
    $$
    Hence,
    \begin{align}\label{eq:WNGen}
        \log \overline{w}^{(N)} = -d a - \sigma \sqrt{d}S^{(1)}
    \end{align}
    with $a > 0$. Following the proof of \Cref{prop:limDeltaGaussianRenyi} in \Cref{subsec:prooflimDeltaGaussian}, we can then conclude by combining \eqref{eq:WNGen} with the definition of $\Delta_{N,d}^{MAX}(\theta, \phi)$ in \eqref{eq:DeltaMaxNDExt}. Indeed,
    $$
    \Delta_{N,d}^{MAX}(\theta, \phi;x) = - da - \sigma \sqrt{d} \PE(S^{(1)}) + \frac{\log N}{\alpha -1}
    $$
    and using \eqref{eq:ExpS1approxGen}, we deduce:
    \begin{align*}
        \Delta_{N,d}^{MAX}(\theta, \phi;x) & = - da + \sigma \sqrt{d} \lr{ \sqrt{2 \log N} + O \lr{\frac{\log \log N}{\sqrt{\log N}}}} + \frac{\log N}{\alpha -1} \\
        & = - da \lrcb{1 - \frac{\sigma}{a} \sqrt{\frac{2 \log N}{ d}} + \frac{1}{1-\alpha} \frac{\log N}{d a} + O \lr{\frac{\log \log N}{\sqrt{d \log N}}}},
    \end{align*}
    Hence, using now that $\frac{1}{1-\alpha} \frac{\log N}{d a} = O \lr{\frac{\log \log N}{\sqrt{d \log N}}}$ under the assumption $\log N/d^{1/3} \to 0$, we can deduce
    \begin{align*}
     \lim_{\substack{N,d \to \infty \\ \log N / d^{1/3} \to 0}} \Delta_{N,d}^{(\alpha, MAX)}(\theta, \phi) + d a \lrcb{1 -  \frac{\sigma}{a} \sqrt{\frac{2 \log N}{ d}} + O \lr{\frac{\log \log N}{\sqrt{d \log N}}}} = 0,
    \end{align*}
    which yields the desired result.    
\end{proof}
    
\subsubsection{Proof of \Cref{Prop:limitT}}
\label{subsec:ProofLimitT}

\begin{proof}[Proof of \Cref{Prop:limitT}]
    The proof will build on the proof of \Cref{prop:limitTGaussianRenyi}. As in that proof, we denote $\sigma_\alpha = (1-\alpha) \sigma$ for all $\alpha \in [0,1)$ and observe that, conditional upon $S^{(1)}$, we can think of the sum in \eqref{eq:expressionT_NdExt} as the sum over $N-1$ i.i.d. random variables
    \begin{align*}
        \PE(T_{N,d}^{(\alpha)} | S^{(1)}) = (N-1) \PE \lr{ \exp \lr{- \sigma_\alpha \sqrt{d} ( S - S^{(1)})}  }
    \end{align*} where the expectation is w.r.t. the density of $S$ given by
    \begin{align*}
        p(z)=\frac{g_d(z)}{\overline{G_d}(S^{(1)})} \mathbb{I}(z\geq S^{(1)}),
    \end{align*}
    with $g_d$ denoting the pdf of $S_1$ and $\overline{G_d}(x) = \int_x^\infty g_d(z) \rmd z$ for all $x \in \rset$, that is
    \begin{equation*}\label{eq:condExp}
        \mathbb{E}(T_{N,d}|S^{(1)})=(N-1) \frac{\int_{S^{(1)}}^{\infty} \exp(-\sigma_\alpha \sqrt{d}(z-S^{(1)})) g_d(z) \rmd z}{\overline{G_d}(S^{(1)})}.
    \end{equation*}
    We are thus required to show that
    \begin{equation}\label{eq:expandedT}
        (N-1) \mathbb{E} \left[ \frac{\int_{S^{(1)}}^{\infty} \exp(-\sigma_\alpha \sqrt{d}(z-S^{(1)})) g_d(z) \rmd z}{\overline{G_d}(S^{(1)})} \right] \rightarrow 0
    \end{equation}
    as $N,d \rightarrow 0$ with $\log N / d^{1/3} \rightarrow 0$. First, we show that contributions due to extreme values of $S^{(1)}$ are negligible. To see this, note that
    \begin{equation*}
        \left| \frac{\int_{S^{(1)}}^{\infty} \exp(-\sigma_\alpha \sqrt{d}(z-S^{(1)})) g_d(z) \rmd z}{\overline{G_d}(S^{(1)})} \right| \leq 1,
    \end{equation*}
    so that \Cref{lem:approxnormalminconcentration} implies
    \begin{equation*}
        (N-1) \mathbb{E} \left[ \mathbbm{1}_{\{S^{(1)} \not \in I_N\}} \frac{\int_{S^{(1)}}^{\infty} \exp(-\sigma_\alpha \sqrt{d}(z-S^{(1)})) g_d(z) \rmd z}{\overline{G_d}(S^{(1)})}  \right] \leq (N-1)\PE\left(\mathbbm{1}_{\{S^{(1)} \not \in I_N\}}\right) \rightarrow 0.
    \end{equation*}
    Hence it suffices to show that 
    \begin{equation*}
        (N-1) \mathbb{E} \left[ \mathbbm{1}_{\{S^{(1)} \in I_N\}} \frac{\int_{S^{(1)}}^{\infty} \exp(-\sigma_\alpha \sqrt{d}(z-S^{(1)})) g_d(z) \rmd z}{\overline{G_d}(S^{(1)})} \right] \rightarrow 0.
    \end{equation*}
    Note that by \Cref{key:coro}, we have $\overline{G}_d(S^{(1)}) \geq \overline{G}_d(0) = 1 - \Phi(0)(1 + o(1))$ on the event $\{S^{(1)} \in I_N\}$ as $N,d \rightarrow \infty$ with $\log N / d^{1/3} \rightarrow 0$, so $\overline{G}_d(S^{(1)})$ is uniformly bounded below. It thus suffices to prove
    \begin{equation*}
        (N-1) \mathbb{E} \left[ \mathbbm{1}_{\{S^{(1)} \in I_N\}} \int_{S^{(1)}}^{\infty} \exp(-\sigma_\alpha \sqrt{d}(z-S^{(1)})) g_d(z) \rmd z \right] \rightarrow 0.
    \end{equation*}
    We will in fact show that 
    \begin{equation}\label{eq:expandedTmain}
        (N-1) \mathbb{E} \left[ \mathbbm{1}_{\{S^{(1)} \in I_N\}} \int_{S^{(1)}}^{\infty} \exp(-\sigma_\alpha \sqrt{d}(z-S^{(1)})) \phi(z) \rmd z \right] \rightarrow 0
    \end{equation}
    and
    \begin{equation}\label{eq:expandedTerror}
        (N-1) \mathbb{E} \left[ \mathbbm{1}_{\{S^{(1)} \in I_N\}} \left | \int_{S^{(1)}}^{\infty} \exp(-\sigma_\alpha \sqrt{d}(z-S^{(1)})) \left( g_d(z) - \phi(z) \right) \rmd z \right | \right] \rightarrow 0.
    \end{equation}

\begin{itemize}
\item \textbf{Proof of \eqref{eq:expandedTmain}.} Following the proof of \Cref{prop:limitTGaussianRenyi}, we see that \eqref{eq:conditionalexpbound} holds whenever $S^{(1)} \in I_N$. Restricting to the event $\{S^{(1)} \in I_N\}$ and taking expectations over $S^{(1)}$, we get
    \begin{align*}
        & (N-1) \mathbb{E} \left[ \mathbbm{1}_{\{S^{(1)} \in I_N\}} \int_{S^{(1)}}^{\infty} \exp(-\sigma_\alpha \sqrt{d}(z-S^{(1)})) \phi(z) \rmd z \right] \\ 
        \leq \; & (N-1) \frac{4 \sqrt{\log N}}{\sigma_\alpha \sqrt{d}} \PE \left[ \mathbbm{1}_{\{S^{(1)} \in I_N\}} \Phi(S^{(1)}) \right] (1 + o(1)).
    \end{align*}
    Since $S^{(1)} = o(d^{1/6})$, \Cref{key:coro} implies that
    \begin{equation*}
        \Phi(S^{(1)}) = G_d(S^{(1)})(1 + o_{N,d}(1)),
    \end{equation*}
    where the uniformity over $x$ in the statement of the theorem implies that the implicit constant is independent of $S^{(1)}$. Restricting to $\{S^{(1)} \in I_N\}$ and taking expectations again, noting that $G_d(S^{(1)})$ is distributed as the minimum of $N$ uniform random variables on $[0,1]$, we see
    \begin{equation}\label{eq:expectedphiapprox}
        \PE \left[ \mathbbm{1}_{\{S^{(1)} \in I_N\}} \Phi(S^{(1)}) \right] = \PE \left[ \mathbbm{1}_{\{S^{(1)} \in I_N\}} G_d (S^{(1)}) \right](1 + o(1)) \leq \frac{1}{N+1}(1 + o(1)).
    \end{equation}
    We conclude that
    \begin{equation*}
        (N-1) \mathbb{E} \left[ \mathbbm{1}_{\{S^{(1)} \in I_N\}} \int_{S^{(1)}}^{\infty} \exp(-\sigma_\alpha \sqrt{d}(z-S^{(1)})) \phi(z) \rmd z \right] \leq \frac{4 \sqrt{\log N}}{\sigma_\alpha \sqrt{d}}(1 + o(1)) \rightarrow 0,
    \end{equation*}
    proving \eqref{eq:expandedTmain}.

    \item \textbf{Proof of \eqref{eq:expandedTerror}.} Two applications of integration by parts give
    \begin{multline*}
        \int_{S^{(1)}}^\infty \exp\{-\sigma_\alpha \sqrt{d}(z - S^{(1)})\} g_d(z) \rmd z = -G_d(S^{(1)}) \\ + \int_{S^{(1)}}^\infty \sigma_\alpha \sqrt{d} \exp\{-\sigma_\alpha \sqrt{d}(z - S^{(1)})\} G_d(z) \rmd z
    \end{multline*}
    and 
    \begin{multline*}
        \int_{S^{(1)}}^\infty \exp\{-\sigma_\alpha \sqrt{d}(z - S^{(1)})\} \phi(z) \rmd z = -\Phi(S^{(1)}) \\ + \int_{S^{(1)}}^\infty \sigma_\alpha \sqrt{d} \exp\{-\sigma_\alpha \sqrt{d}(z - S^{(1)})\} \Phi(z) \rmd z.
    \end{multline*}
    It follows that
    \begin{multline*}
        \left | \int_{S^{(1)}}^{\infty} \exp(-\sigma_\alpha \sqrt{d}(z-S^{(1)})) \left( g_d(z) - \phi(z) \right) \rmd z \right | \\
        \leq \; \left|\Phi(S^{(1)}) - G_d(S^{(1)})\right| + \left| \int_{S^{(1)}}^\infty \sigma_\alpha \sqrt{d} \exp\{-\sigma_\alpha \sqrt{d}(z - S^{(1)})\} (G_d(z) - \Phi(z)) \rmd z \right|.
    \end{multline*}
    We now deal with each of these terms separately. On the event $\{S^{(1)} \in I_N\}$, we know $\Phi(S^{(1)}) = G_d(S^{(1)})(1 + o_{N,d}(1))$ so we have
    \begin{equation*}
        \left|\Phi(S^{(1)}) - G_d(S^{(1)})\right| \leq o_{N,d}(1) G_d(S^{(1)})
    \end{equation*}
    and so by restricting to $\{S^{(1)} \in I_N\}$ and taking expectations
    \begin{equation*}
        (N-1)\mathbb{E}\left( \mathbbm{1}_{\{S^{(1)} \in I_N\}} \left|\Phi(S^{(1)}) - G_d(S^{(1)})\right|\right) \leq (N-1) \cdot o(1) \mathbb{E}\left( \mathbbm{1}_{\{S^{(1)} \in I_N\}} G_d(S^{(1)})\right) \rightarrow 0.
    \end{equation*}
    along the same lines as \eqref{eq:expectedphiapprox}.

    We split the second term as an integral from $S^{(1)}$ to $0$ and an integral from $0$ to $\infty$ and we write:
    \begin{align*}
        \left| \int_{S^{(1)}}^\infty \sigma_\alpha \sqrt{d} \exp\{-\sigma_\alpha \sqrt{d}(z - S^{(1)})\} (G_d(z) - \Phi(z)) \rmd z \right| \leq A_1 + A_2 
    \end{align*}
    with
    \begin{align*}
        & A_1 = \int_{S^{(1)}}^0 \sigma_\alpha \sqrt{d} \exp\{-\sigma_\alpha \sqrt{d}(z - S^{(1)})\} \left| G_d(z) - \Phi(z) \right| \rmd z \\
        & A_2 = \int_0^\infty \sigma_\alpha \sqrt{d} \exp\{-\sigma_\alpha \sqrt{d}(z - S^{(1)})\} \rmd z.
    \end{align*}
    Again, we bound each term individually. For $A_2$, assuming that $S^{(1)} \in I_N$ we have the bound
    \begin{align*}
        A_2 = \exp\{\sigma_\alpha \sqrt{d} S^{(1)} \} \leq \exp\{- \sigma_\alpha \sqrt{d \log N}\} \leq \exp\{- \sigma_\alpha (\log N)^2 \}
    \end{align*}
    for sufficiently large $N,d$ with $\log N / d^{1/3} \to \infty$ and so $(N-1)\mathbb{E}(\mathbbm{1}_{\{S^{(1)} \in I_N\}} A_2) \rightarrow 0$. To bound $A_1$, note that by \Cref{key:coro}
    \begin{equation*}
        \left| G_d(z) - \Phi(z) \right| \leq o_{N,d}(1) \Phi(z)
    \end{equation*}
    for all $z \in [S^{(1)},0]$ so long as $S^{(1)} \in I_N$, and hence under this assumption we can write
    \begin{align*}
        A_1 &\leq o_{N,d}(1) \int_{S^{(1)}}^0 \sigma_\alpha \sqrt{d} \exp\{-\sigma_\alpha \sqrt{d}(z - S^{(1)})\} \Phi(z) \rmd z. \\
    \intertext{Changing the upper limit from $0$ to $\infty$, which can only weaken the bound, and then integrating by parts gives}
        &\leq o_{N,d}(1) \int_{S^{(1)}}^\infty \sigma_\alpha \sqrt{d} \exp\{-\sigma_\alpha \sqrt{d}(z - S^{(1)})\} \Phi(z) \rmd z \\
        &\leq o_{N,d}(1) \left\{ \Phi(S^{(1)}) + \int_{S^{(1)}}^\infty \exp\{-\sigma_\alpha \sqrt{d}(z - S^{(1)})\} \phi(z) \rmd z \right\}.
    \end{align*}
    We conclude that 
    \begin{multline*}
        (N-1)\PE\lr{\mathbbm{1}_{\{S^{(1)} \in I_N\}} A_1} \leq o(1) \Big\{ (N-1 )\PE\lr{\mathbbm{1}_{\{S^{(1)} \in I_N\}} \Phi(S^{(1)})} \\ 
         + (N-1) \PE \lr{\mathbbm{1}_{\{S^{(1)} \in I_N\}} \int_{S^{(1)}}^\infty \exp\{-\sigma_\alpha \sqrt{d}(z - S^{(1)})\} \phi(z) \rmd z} \Big\}
    \end{multline*}
    The former term tends to zero by \eqref{eq:expectedphiapprox} and the latter tends to zero by \eqref{eq:expandedTmain}. We thus see that \eqref{eq:expandedTerror} holds, completing the proof.
\end{itemize}
\end{proof}

\subsubsection{Proof of \Cref{ex:linGauss}}
\label{subsec:ex:linGaussApp}

\begin{proof}[Proof of \Cref{ex:linGauss}] 
Recall that by \eqref{eq:LinGaussWeights}: for all $i = 1 \ldots N$,
    \begin{align*}
        \log \overline{w}_i = \frac{d}{2} \log \lr{ \frac{4}{3}} - \Big\| z_i - \frac{\theta + x}{2} \Big\|^2 + \frac{3}{4} \|z_i-Ax-b\|^{2}. 
      \end{align*}
We want to show that if $z_i \sim q_\phi(\cdot|x) = \mathcal{N}(Ax+b, 2/3 ~ \boldsymbol{I}_d)$, 
then $\log \overline{w}_i$ can be written in the form of \eqref{eq:approxlognormalweights}. 
For this purpose, denote $\mathbf{1} = (1, \dots, 1)^T$ and observe that there exists an orthogonal matrix $U$ such that $U\lr{\frac{\theta + x}{2} - Ax - b} = \lambda \mathbf{1}$. We can then sample $z_i \sim \mathcal{N}(Ax+b, 2/3 ~ \boldsymbol{I}_d)$ by setting $z_i = U^{-1}y_i + Ax + b$ where $y_i \sim \mathcal{N}(0, 2/3 ~ \boldsymbol{I}_d)$. With this parameterization, \eqref{eq:LinGaussWeights} becomes
\begin{align*}
    \log \overline{w}_i & = - \Big\|U^{-1}y_i + Ax + b - \frac{\theta + x}{2} \Big\|^2 + \frac{3}{4} \|U^{-1} y_i\|^2 + const. \\
    & = - \|y_i - \lambda \mathbf{1}\|^2 + \frac{3}{4} \|y_i\|^2 + const. \\
    & = - \sum_{j=1}^d \lrcb{(y_{ij} - \lambda)^2 - \frac{3}{4}y_{ij}^2} + const.
\end{align*}
where $const.$ denotes a fixed constant which depends only on $d, \theta, A, b$ and $x$.

Let us now set $\zeta_{ij} = (y_{ij} -\lambda)^2 - {3} y_{ij}^2/ {4}$ and $\xi_{i,j} = \zeta_{ij} - \PE\lr{\zeta_{ij}}$. Since $y_i \sim \mathcal{N}(0, 2/3 \boldsymbol{I}_d)$ it follows that $\xi_{i,1}, \dots, \xi_{i,d}$ are i.i.d. random variables with $\PE\lr{\xi_{i,j}} = 0$. Now defining $\sigma^2 = \mathbb{V}(\xi_{1,1}) < \infty$, we have that 
$\sigma^2$ can be computed analytically by observing that 
\begin{align*}
    \PE(\zeta_{ij}) = \PE\lr{(y_{ij} -\lambda)^2} - \PE\lr{\frac{3}{4}y_{ij}^2} = \frac{1}{4} \PE(y_{ij}^2) + \lambda^2  = \frac{1}{6} + \lambda^2
\end{align*}
from which we can deduce that
\begin{align*}
    \sigma^2 & = \PE\lr{\lrb{\zeta_{ij} - \PE(\zeta_{ij})}^2} = \PE\lr{\lrb{\frac{1}{4}y_{ij}^2 - 2 \lambda y_{ij} - \frac{1}{6}}^2} \\
    & = \frac{1}{16} \PE(y_{ij}^4) + \lr{4 \lambda^2 - \frac{1}{12}} \PE(y_{ij}^2) + \frac{1}{36} \\
    & =  \frac{1}{18} + \frac{8}{3} \lambda^2 < \infty.
\end{align*}
Hence, \eqref{eq:approxlognormalweights} holds by defining $S_i$ as in \eqref{eq:expressionSi} (noting that the constant terms must match since $\overline{w}_i$ is normalised and so has expected value 1). In addition, we can also analytically compute the quantity $a$ defined in \eqref{eq:defALognormal} by noting that
\begin{align*}
    \PE\lr{\exp\lr{- \xi_{1,1}}} & = \int_{-\infty}^\infty \exp\lr{-\lrb{\frac{1}{4}u^2 - 2 \lambda u - \frac{1}{6}}} \cdot \frac{1}{\sqrt{2 \pi \cdot \frac{2}{3}}} e^{-\frac{3}{4}u^2} \; \rmd u \\
    & = \sqrt{\frac{3}{4}} \exp\lr{\lambda^2 + \frac{1}{6}}
\end{align*}
so that 
\begin{equation*}
    a = \lambda^2 + \frac{1}{6} + \frac{1}{2} \log \lr{\frac{3}{4}}.
\end{equation*} 
Finally, we check that \ref{hypBickel} holds in the case where $(\theta, \phi) = (\theta^\star, \phi^\star)$. For this choice of parameters, $\lambda = 0$, hence $\sigma$ is independent of $d$. Furthermore, $\xi_{i,j}$ is clearly absolutely continuous with respect to the Lebesgue measure, and the distribution of $\xi_{i,j}$ is independent of $d$. It follows that \ref{hypBickel:a} holds using our previous observations.

To now check \ref{hypBickel:b}, we let $k \geq 3$. In that case, $u \mapsto |u|^k$ is convex and for all real-valued $u_1, u_2$ and $u_3$, we have that: 
\begin{align*}
\Big| \frac{1}{3} u_1 + \frac{1}{3} u_2 + \frac{1}{3} u_3 \Big|^k & \leq \Big| \frac{1}{3} |u_1| + \frac{1}{3} |u_2| + \frac{1}{3} |u_3| \Big|^k  \\
& \leq \frac{1}{3} \lr{|u_1|^k + |u_2|^k + |u_3|^k}
\end{align*}
so that, setting $u_1 = (y_{ij} -\lambda)^2$, $u_2 = -\frac{3}{4}y_{ij}^2$ and $u_3 = -\PE\lr{\zeta_{ij}}$, it holds that
$$
\Big|(y_{ij} -\lambda)^2 - \frac{3}{4}y_{ij}^2 - \PE\lr{\zeta_{ij}}\Big|^k \leq 3^{k-1} \lr{  (y_{ij} -\lambda)^{2k} + \lr{\frac{3}{4}y_{ij}}^{2k} + |\PE\lr{\zeta_{ij}}|^k }.
$$
Using a similar argument applied to $(y_{ij} -\lambda)^{2k}$, we then deduce that
$$
\Big|(y_{ij} -\lambda)^2 - \frac{3}{4}y_{ij}^2 - \PE\lr{\zeta_{ij}}\Big|^k \leq 3^{k-1} \lr{2^{2k-1}\lr{y_{ij}^{2k} + \lambda^{2k}} + \lr{\frac{3}{4}y_{ij}}^{2k} + |\PE\lr{\zeta_{ij}}|^k }.
$$
Hence,
\begin{align*}
    \PE\lr{|\xi_{i,1}|^k} & = \PE\lr{\Big|(y_{ij} -\lambda)^2 - \frac{3}{4}y_{ij}^2 - \PE\lr{\zeta_{ij}}\Big|^k} \\
    & \leq 3^{k-1} \lr{2^{2k-1}\lr{\PE(y_{ij}^{2k}) + \lambda^{2k}} + \frac{3^{2k}}{4^{2k}} \PE(y_{ij}^{2k}) + |\PE\lr{\zeta_{ij}}|^k } \\
    & \leq 3^{k-1} \lr{ (2^{2k-1} + \frac{3^{2k}}{4^{2k}}) \cdot (2k-1)!! \cdot \frac{2^k}{3^k} + 2^{2k-1} \lambda^{2k} + \big|\PE\lr{\zeta_{ij}}\big|^k} \\
    & \leq 3^{k-1} \lr{ (2^{2k-1} + \frac{3^{2k}}{4^{2k}}) \cdot \frac{2^{2k}}{3^k} \cdot k! + 2^{2k-1} \lambda^{2k} + \lr{\frac{1}{6} + \lambda^2}^k}
\end{align*}
where we have used that the $(2k)$-th moment of a standard Gaussian random variable is $(2k-1)!!$. Finally, since $\lambda = 0$ in our case, we obtain that
\begin{align*}
\PE\lr{|\xi_{i,1}|^k}  & \leq k! K^{k-2} \sigma^2
\end{align*}
for some sufficiently large choice of $K$ which is independent of $d$, and \ref{hypBickel} thus holds.
\end{proof}

\begin{rem}
We have obtained that \ref{hypBickel} holds when $(\theta, \phi)$ are equal to the optimal parameters. If we now assume that $x$, $\theta$ and $\phi$ are initially drawn from Gaussian distributions with bounded covariance matrices \cite[like it is the case in][]{rainforth2018tighter}, we anticipate that \ref{hypBickel} should approximately hold even for values of the parameters other than the optimal choice. Notice indeed that in that case we inuitively expect $\big\| \frac{\theta + x}{2} - Ax - b \big\| = O(\sqrt{d})$ for most values of $x$, $\theta$ and $\phi$. It follows that we should expect $\lambda = O(1)$ and $\sigma = \Theta(1)$ in practice as $d \rightarrow \infty$, and so \ref{hypBickel} should approximately hold.
\end{rem}

\section{Futher details regarding related proof techniques}
\label{app:sec:relatedWork}

As mentioned in \Cref{sec:relatedWork}, a number of our proof techniques differs significantly from/alter parts of known proofs, which in some cases impacts the corresponding theoretical results. Namely,

\begin{itemize}
    \item \textbf{\Cref{prop:SNRconvergence}}. The proof of this result is based on the proof for the case $\alpha = 0$ written in the arxiv version of 5 Mar 2019 of \cite[Theorem 1]{rainforth2018tighter}, which was the latest version available to us. Nevertheless, and contrary to \cite{rainforth2018tighter}, we (i) use an explicit form for the remainder term in Taylor's theorem rather than the mean value form of the remainder, allowing us to get more precise control on the magnitude of the remainder and its gradients, and (ii) we consequently rely on \Cref{lem:keySNRlemma}, which is a non-immediate extension of \cite[Lemma 1]{rainforth2018tighter}. 
    
    This significantly impacts the proof technique and as a result, the main difference in terms of assumptions compared to \cite[Theorem 1]{rainforth2018tighter} is that, for a given $\alpha \in [0,1)$, we are requiring the eighth moments of $\tilde{w}_{1,1}^{1-\alpha}$, ${\partial} \tilde{w}_{1,1}^{1-\alpha} / {\partial \theta_\ell}$ and ${\partial} \tilde{w}_{1,1}^{1-\alpha} / {\partial \phi_{\ell'}}$ to be finite in \Cref{prop:SNRconvergence}, where \cite[Theorem 1]{rainforth2018tighter} asked for the fourth moments to be finite with $\alpha = 0$. In addition, we need to further assume that there exists some $N \in \mathbb{N}^\star$ for which $\PE((1/\hat{Z}_{1, N, \alpha})^4) < \infty$. 
    
    \item \textbf{\Cref{prop:limitTGaussianRenyi}}. The proof of this result mostly mirrors the proof written in \cite[Section 4.a]{ObstaclestoHighDimensionalParticleFiltering} which considers the case $\alpha = 0$ and is used in the context of particle filtering. The main difference is that we require an additional lemma (\Cref{lem:exactnormalminconcentration} of \Cref{sec:Proofgaussian}) to provide us with a more precise concentration result for $S^{(1)}$. This lemma will also have other uses in the rest of the paper. This does not result in any change of assumptions compared to \cite[Section 4.a]{ObstaclestoHighDimensionalParticleFiltering}.
    
    \item \textbf{\Cref{Prop:limitT}}. The proof of this result is arguably the one that required the most alterations out of the three discussed here. It borrows some ideas from the proofs written in the context of particle filtering in \cite{li2008Curse, bengtsson2008curse, li2008Curse}, which all aimed at establishing that $\PE(T_{N,d}^{(0)}) \to 0$ in the approximate log-normal case under some conditions on $N$ and $d$. However, we are significantly more thorough in our control of the error terms, which we bound precisely with the aid of two results from \cite{saulis2000} and \Cref{lem:approxnormalminconcentration}, rather than simply working under convergence in probability. 
    
    In terms of assumptions, it is closest to \cite{li2008Curse}, except for the fact that our result relies on a Bernstein condition while \cite{li2008Curse} uses Cramer's conditions. Both are in fact equivalent in the i.i.d. setting, and we choose to use Bernstein condition as we believe it might make it easier to generalize our result beyond the i.i.d. case using the results from \cite{saulis2000} recalled in \Cref{app:subsub:Saulis}.
  \end{itemize}
  
\section{Additional numerical experiments and derivation details}

\subsection{Gaussian example from \Cref{subsec:Toy}}

\label{subseb:app:toy}

\begin{figure}[h!]
    \begin{tabular}{ccc}
        \includegraphics[scale=0.3]{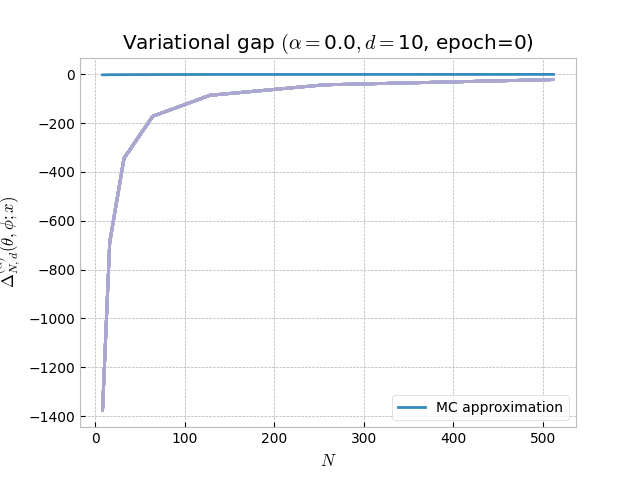} &
        \includegraphics[scale=0.3]{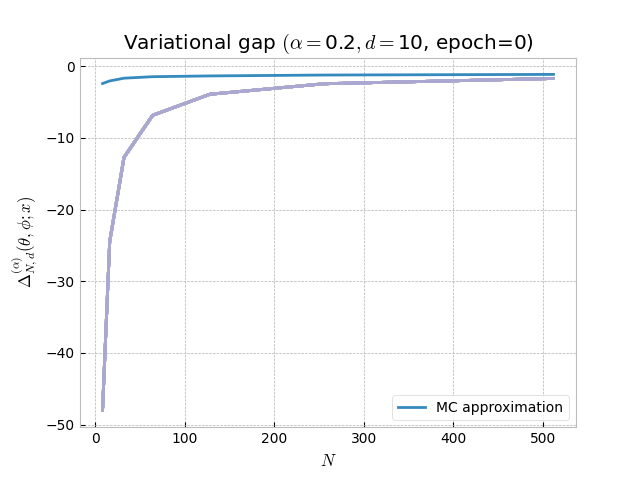} &
        \includegraphics[scale=0.3]{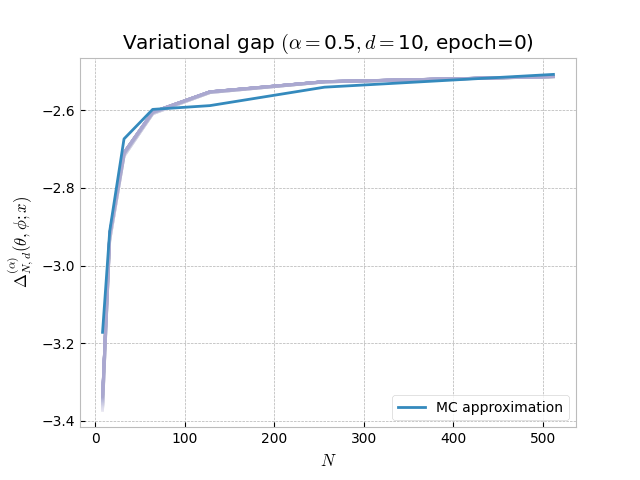} \\
      \includegraphics[scale=0.3]{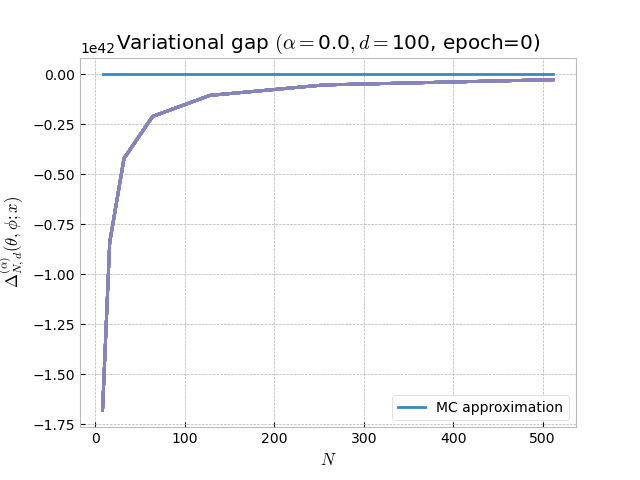} &
      \includegraphics[scale=0.3]{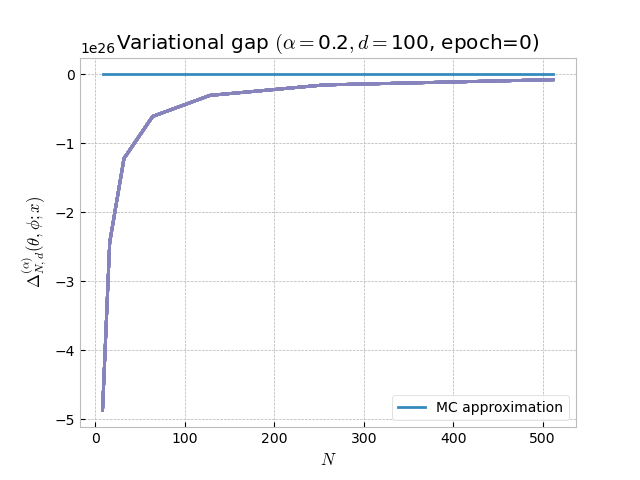} &
      \includegraphics[scale=0.3]{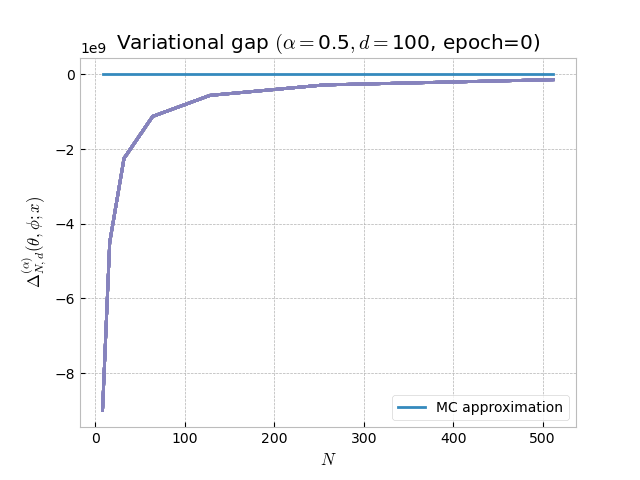}
    \end{tabular}
    \caption{Plotted in blue is the MC estimate of the variational gap $\Delta_{N,d}^{(\alpha)}(\theta, \phi; x)$ (averaged over 1000 MC samples) for the toy example described in \Cref{subsec:Toy} as a function of $N$, for varying values of $\alpha$ and of $d$ and with $(\theta, \phi) = (0 \cdot \boldsymbol{u_d}, \boldsymbol{u}_d)$ so that $B_d = \sqrt{d}$. Plotted in purple are curves of the form \eqref{eq:functionFormDomke} with tailored values of $c_1$.} \label{fig:ToyLogNormalApp}
  \end{figure}

 Figure \ref{fig:ToyLogNormalApp} empirically confirms that the asymptotic regime predicted by \Cref{prop:GenDomke} does not reflect what is happening in reality in the variational gap $\Delta_{N,d}^{(\alpha)}(\theta, \phi; x)$ when the dimension $d$ increases, $N$ is small and the distribution of the weight is log-normal. 
 
 Note that we only plotted the variational gap $\Delta_{N,d}^{(\alpha)}(\theta, \phi; x)$ for the cases $d = \lrcb{10, 100}$ in the figure above. This is due to the fact that when $d = 1000$, computing $\gamma_\alpha^2$ the $1/N$ term returns an overflow, further illustrating the limitations of the approach from \Cref{prop:GenDomke} in the specific setting considered here. Note also that since the variance term is exponential in $(1-\alpha)^2 d$, increasing $\alpha$ does play a role in decreasing $\gamma_\alpha^2$ so that the asymptotic regime predicted by \Cref{prop:GenDomke} applies in lower dimensions (e.g. $d = 10$ with $\alpha = 0.5$).

 \begin{figure}[!ht]
    \begin{center}
    \includegraphics[scale=0.33]{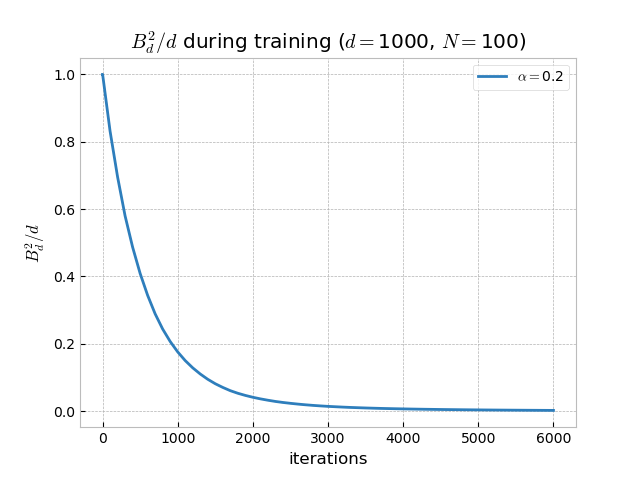}
    \end{center}
    \caption{Evolution of $B_d^2/d$ during the training of the $\phi$ parameter for the toy example described in \Cref{subsec:Toy}.}
    \label{fig:ToyLogNormal3}
  \end{figure}

 \subsection{Linear Gaussian example from \Cref{subsec:linGaussEx}}

\subsubsection{Empirical experiments for \Cref{prop:GenDomke} in the context of \Cref{subsec:linGaussEx}}
\label{subsub:DomkeLinGaussExApp}

Figure \ref{fig:Thm3LinGaussSigmapertZero} empirically confirms that we need an unpractical amount of samples $N$ for the asymptotic regime predicted by \Cref{prop:GenDomke} to capture the behavior of the variational gap as $d$ increases when $\sigmapert = 0$. Note that similar plots and conclusions can be obtained for $\sigmapert \in \lrcb{0.01, 0.5}$. Those are not given here for the sake of conciseness. 

\begin{figure}[ht!]
    \begin{tabular}{ccc}
        \includegraphics[scale=0.3]{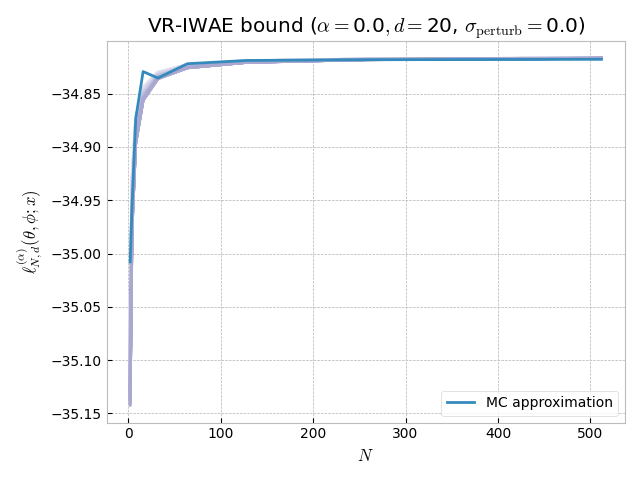} & \includegraphics[scale=0.3]{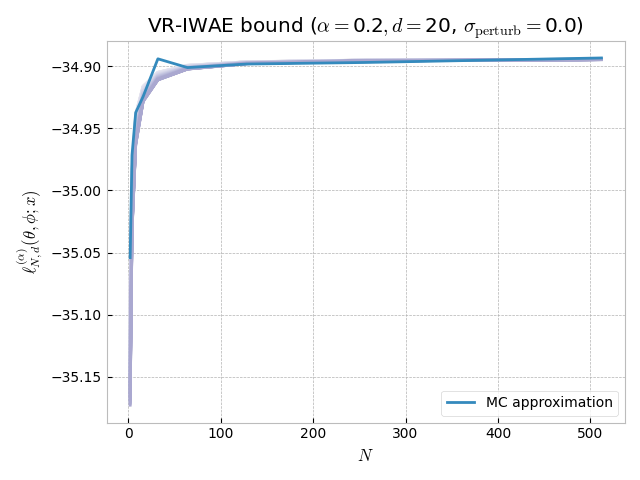} &  \includegraphics[scale=0.3]{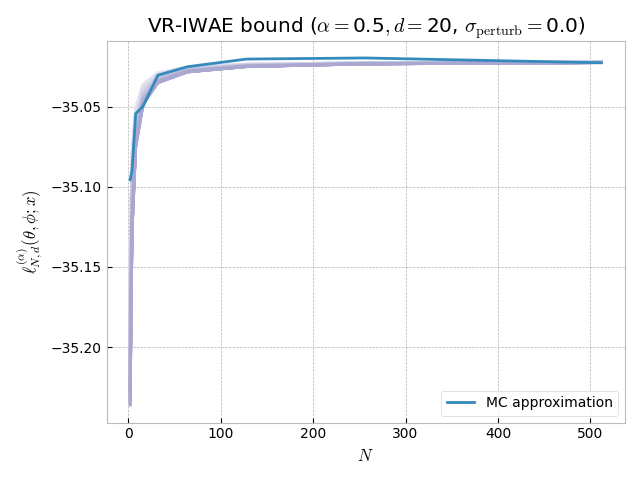} \\
        \includegraphics[scale=0.3]{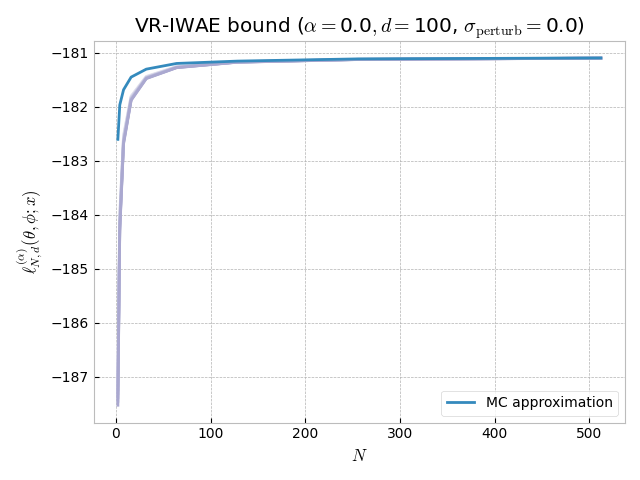} & \includegraphics[scale=0.3]{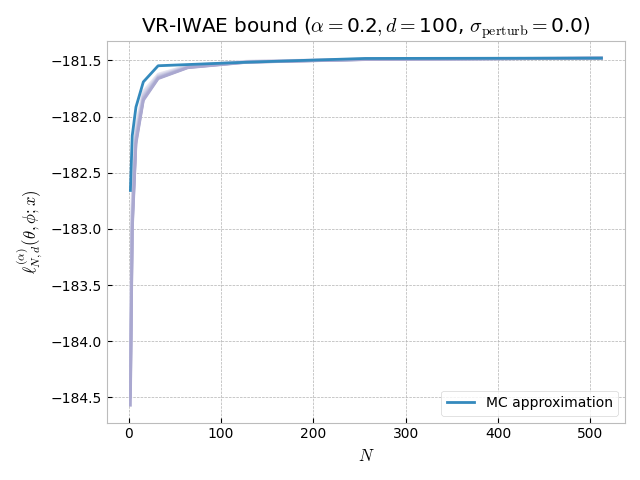} & \includegraphics[scale=0.3]{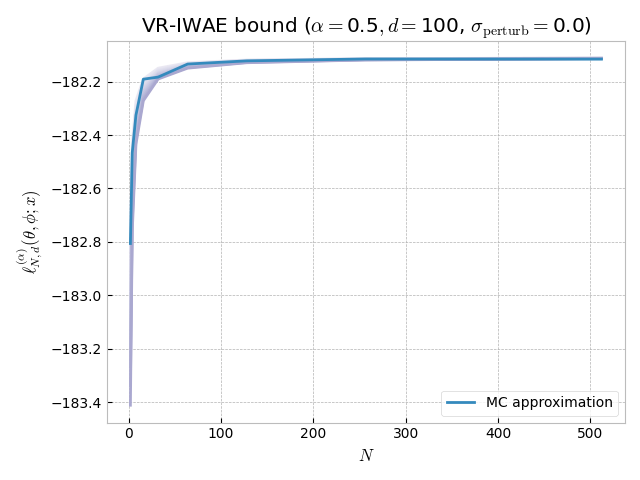} \\
        \includegraphics[scale=0.3]{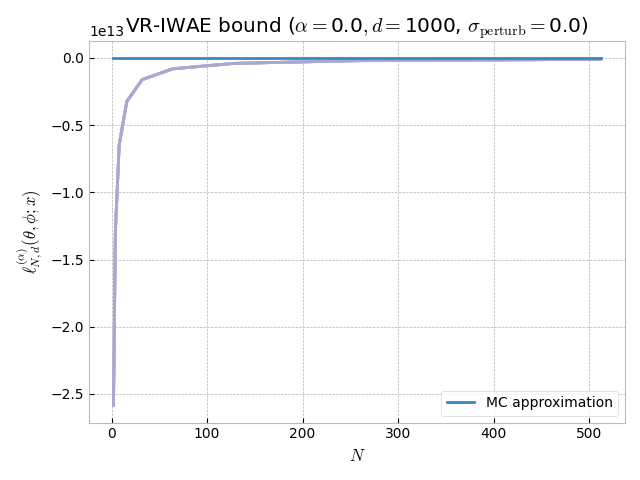} & \includegraphics[scale=0.3]{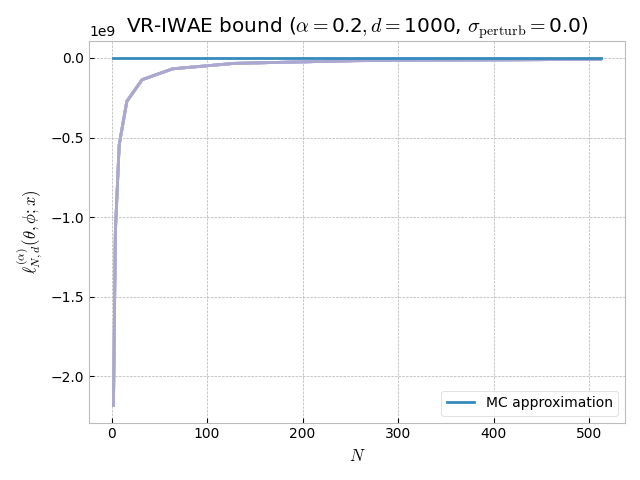} & \includegraphics[scale=0.3]{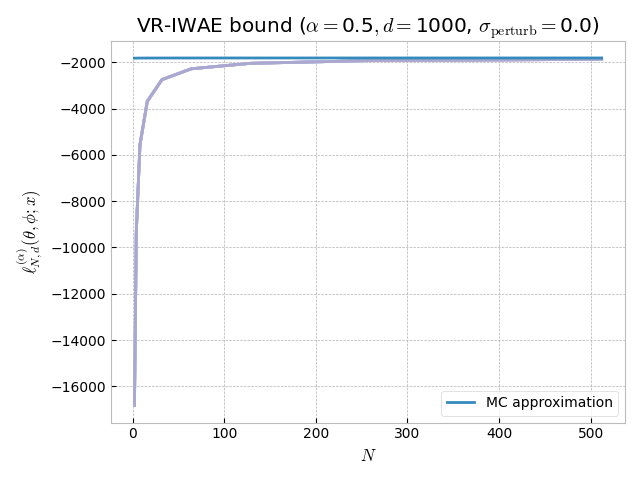}
    \end{tabular}
    \caption{Plotted in blue is the MC estimate of the VR-IWAE bound $\liren[\alpha][N,d](\theta, \phi; x)$ (averaged over 1000 MC samples) for the linear Gaussian example described in \Cref{subsec:linGaussEx} as a function of $N$, for varying values of $\alpha$ and of $d$. Plotted in purple are curves of the form \eqref{eq:funcThm3LinGauss} with tailored values of $c_1$.} \label{fig:Thm3LinGaussSigmapertZero}
\end{figure}

\subsubsection{Additional experimental results for \Cref{subsec:linGaussEx}}
\label{subsub:AddExpLinGauss}

We provide some additional results in the context of \Cref{subsec:linGaussEx} regarding the Signal-to-Noise Ratio (SNR) in the doubly-reparameterized case and the Mean Squared Error (MSE) for the VR-IWAE bound and its $\theta,\phi$ gradients.

\begin{itemize} 
    \item \textbf{SNR in the doubly-reparameterized case.} In line with \cite{Tucker2019DoublyRG}, we observe in \Cref{fig:linear_gaussian_vr_iwae_grad_snr_phi_drep} that using the doubly-reparameterized gradient estimator for $\phi$ increases the SNR when $\alpha = 0$. We in fact see that the SNR is increased for all values of $\alpha$, extending the conclusions from \cite{Tucker2019DoublyRG} to $\alpha \in [0,1)$ in the example considered here. 

However, as we get further away from the optimum ($\sigmapert = 0.5$) and/or increase the dimension ($d = 1000$), we observe that it still remains challenging to obtain an increasing SNR for the $\phi$ gradients for small values of $\alpha$, even when using doubly-reparameterized gradient estimators.

\item \textbf{MSE for the VR-IWAE bound and its $\theta,\phi$ gradients.} We observe on Figures~\ref{fig:linear_gaussian_vr_iwae_p_grad_mse_against_N} and \ref{fig:linear_gaussian_vr_iwae_mse_against_N} that while increasing $\alpha$ does not lower the MSE of the VR-IWAE estimator
\begin{equation*} 
    \frac{1}{1-\alpha} \log \lr{ \frac{1}{N} \sum_{j = 1}^N \w(Z_j;x)^{1-\alpha} }
\end{equation*}
for log-likelihood estimation, it can be useful in lowering the MSE of its $\theta$ gradients 
\begin{equation*} 
    \frac{1}{1-\alpha} \nabla_\theta \log \lr{ \frac{1}{N} \sum_{j = 1}^N \w(Z_j;x)^{1-\alpha} }
\end{equation*} 
compared to the $\theta$ gradients of the true log-likelihood $\nabla_\theta \ell_d(\theta;x)$.

In the low perturbation regime ($\sigmapert = 0.01$) and in medium to high dimensions ($d = 100, 1000$), we indeed see in \Cref{fig:linear_gaussian_vr_iwae_p_grad_mse_against_N} that 
every tested value of $\alpha>0$ achieves lower $\theta$ gradient MSE than $\alpha=0$ for low values of $N$. As we increase to $N=2^9$, the value of $\alpha$ achieving the lowest MSE is $\alpha=0.3$ for $d=100$, and $\alpha=0.8$ for $d=1000$. This sheds light on a bias-variance tradeoff between low bias at $\alpha=0$ and low variance at $\alpha=1$, and is in line with the findings of \Cref{prop:GenDomke}.  

In the high perturbation regime ($\sigmapert = 0.5$), we see in \Cref{fig:linear_gaussian_vr_iwae_p_grad_mse_against_N} that the choice of $\alpha$ appears to make less of a difference, especially when the dimension $d$ is high. 
This suggests that bias reduction may be more important when the inference distribution $q_\phi(z|x)$ is far from the optimum. 

\end{itemize}

\begin{figure}[ht!]
    \begin{tabular}{ccc}
      \includegraphics[scale=0.29]{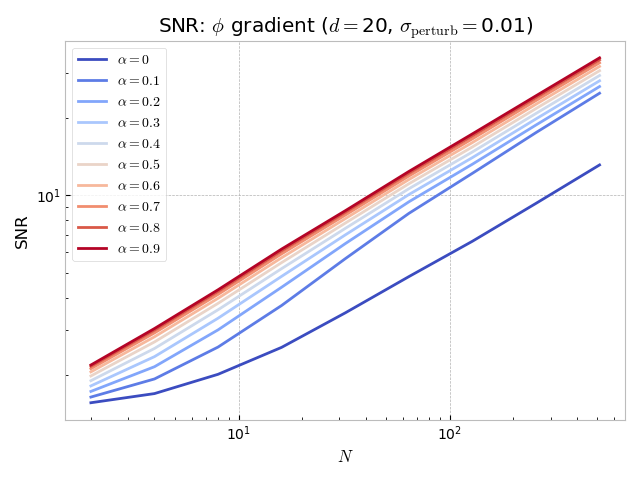} & 
      \includegraphics[scale=0.29]{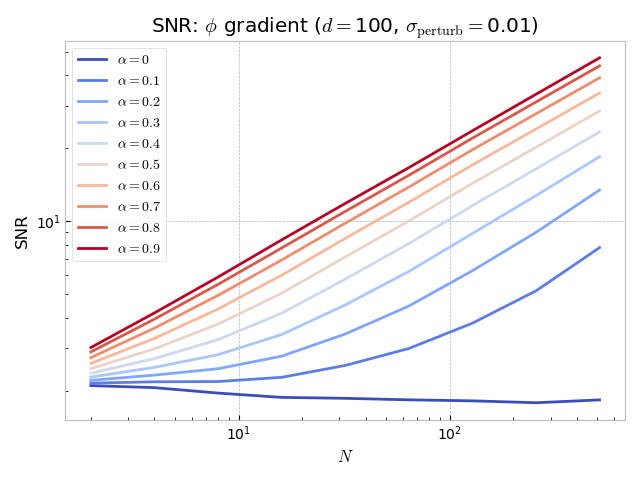} & 
      \includegraphics[scale=0.29]{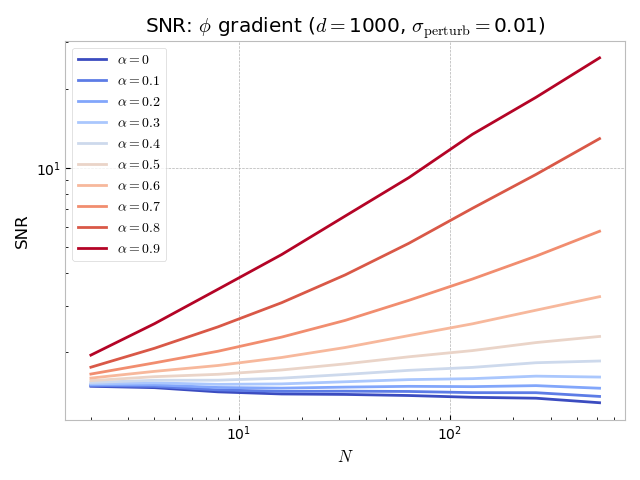} \\
      \includegraphics[scale=0.29]{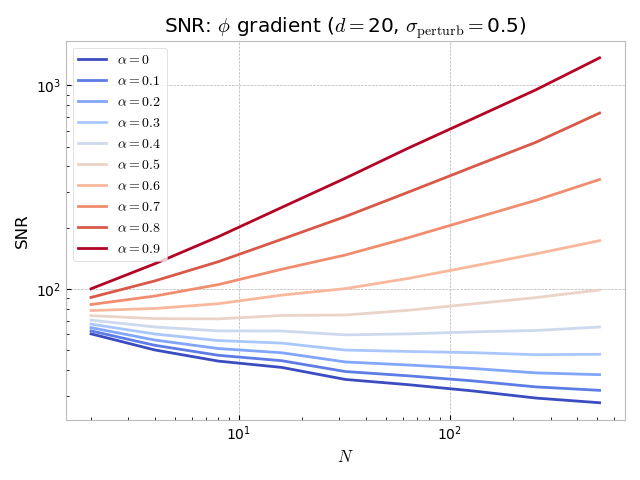} & 
      \includegraphics[scale=0.29]{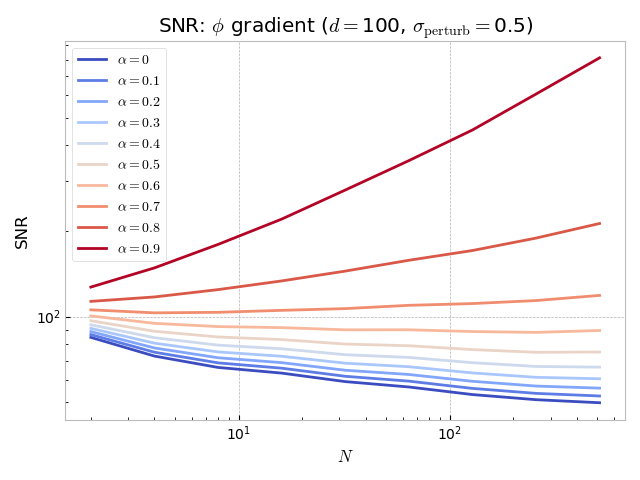} & 
      \includegraphics[scale=0.29]{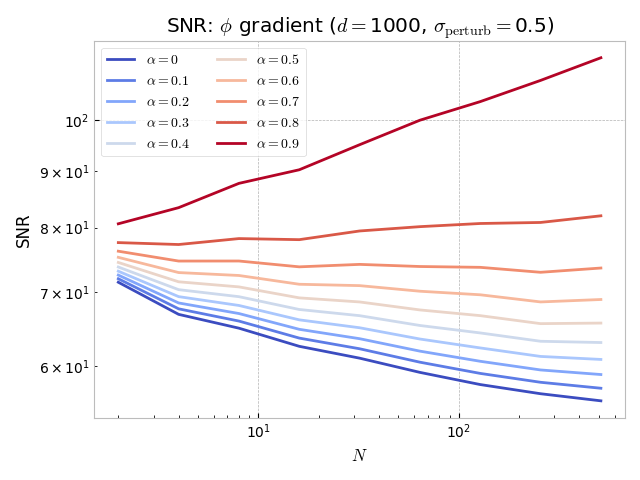}
    \end{tabular}
    \caption{Plotted is the SNR of the inference network ($\phi$) gradients in the doubly-reparameterized case (computed over 1000 MC samples) for the linear Gaussian example described in \Cref{subsec:linGaussEx} as a function of $N$, for varying values of $\alpha$ and of $d$, for a randomly selected datapoint $x$ and for 10 different initializations of the parameters $(\theta, \phi)$. 
    \label{fig:linear_gaussian_vr_iwae_grad_snr_phi_drep}}
  \end{figure}

  \begin{figure}
    \begin{tabular}{ccc}
      \includegraphics[scale=0.29]{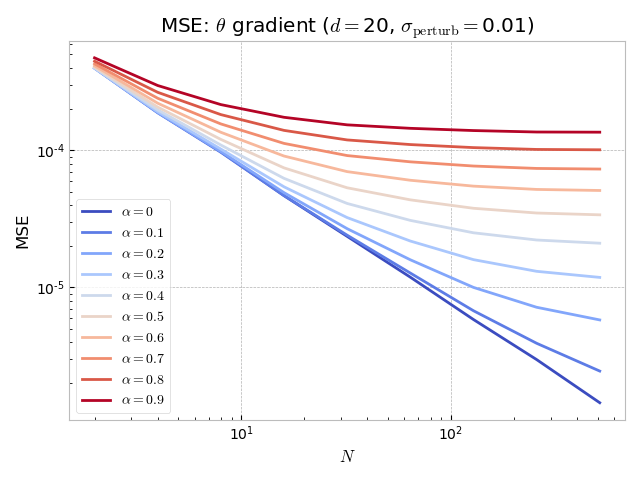} & 
      \includegraphics[scale=0.29]{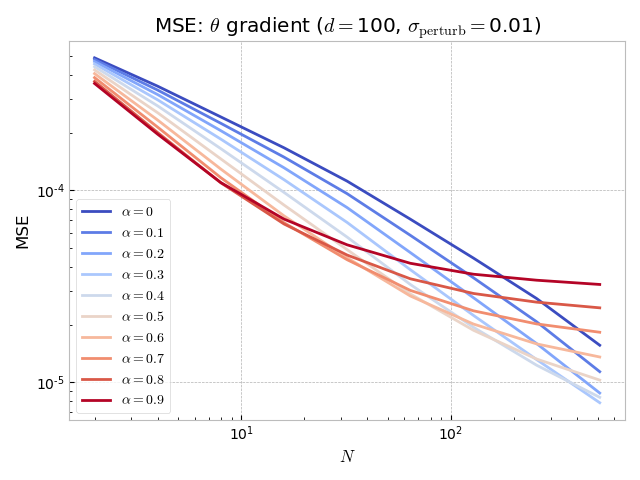} & 
      \includegraphics[scale=0.29]{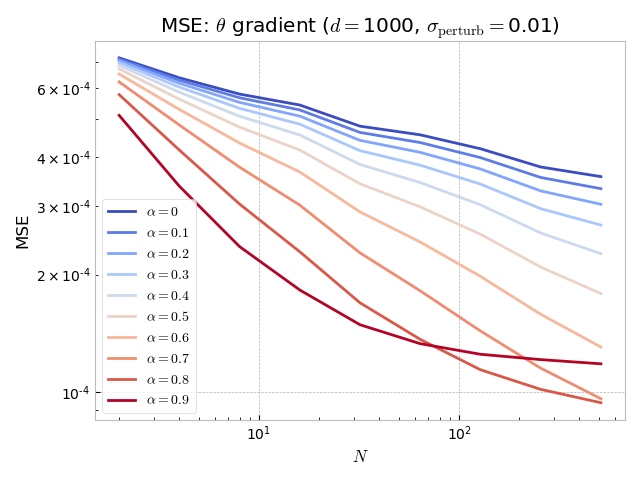} \\
      \includegraphics[scale=0.29]{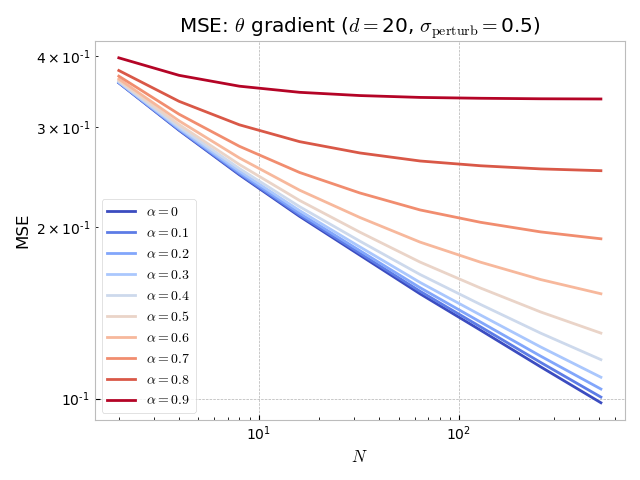} & 
      \includegraphics[scale=0.29]{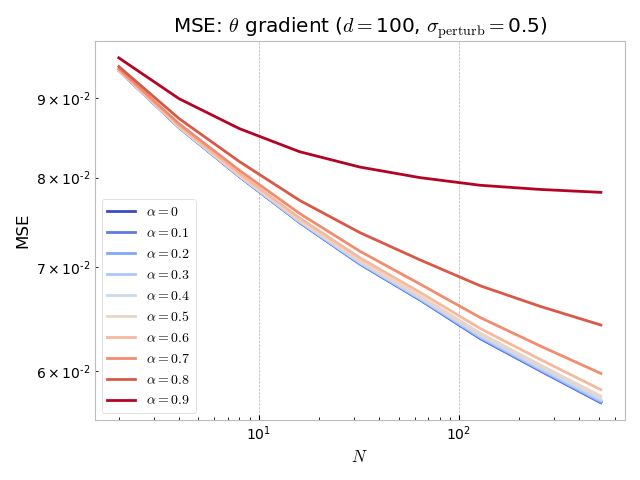} & 
      \includegraphics[scale=0.29]{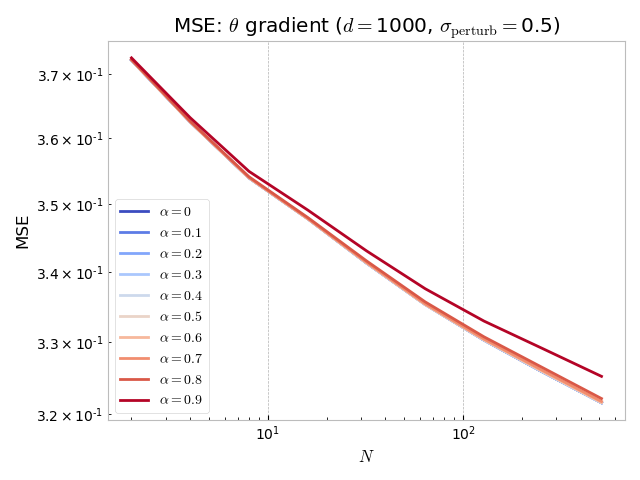}
    \end{tabular}
    \caption{Plotted is the MSE of the generative network ($\theta$) gradients (computed over 1000 MC samples) compared to the log-likelihood gradients for the linear Gaussian example described in \Cref{subsec:linGaussEx} as a function of $N$, for varying values of $\alpha$ and of $d$, for a randomly selected datapoint $x$ and for 10 different initializations of the parameters $(\theta, \phi)$. \label{fig:linear_gaussian_vr_iwae_p_grad_mse_against_N}}
  \end{figure}
  
  \begin{figure}
    \begin{tabular}{ccc}
      \includegraphics[scale=0.29]{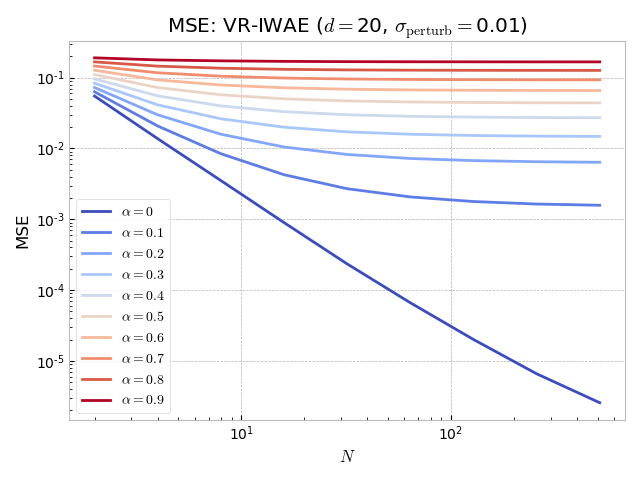} & 
      \includegraphics[scale=0.29]{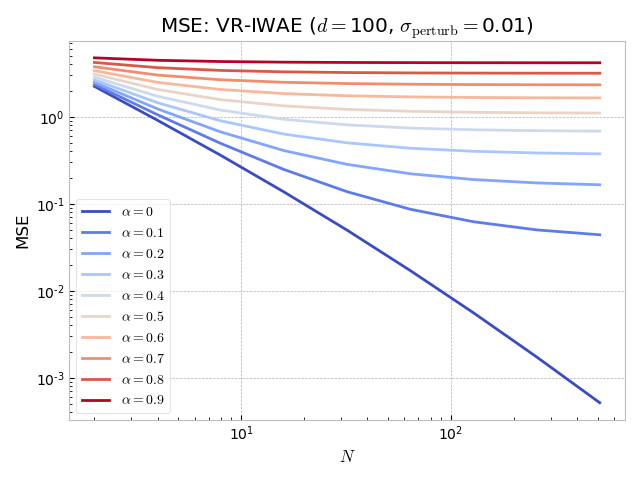} & 
      \includegraphics[scale=0.29]{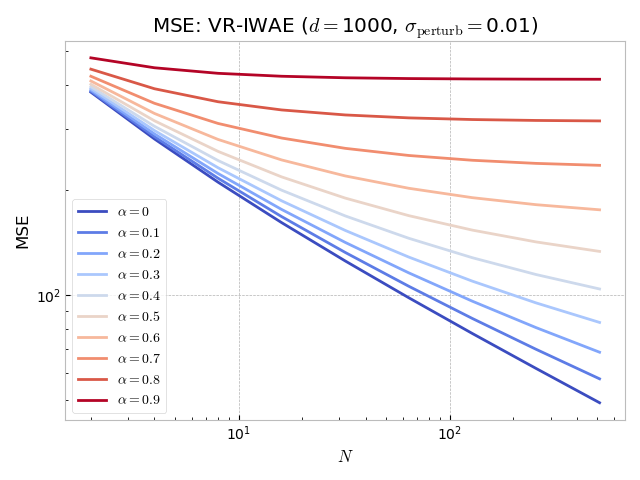} \\
      \includegraphics[scale=0.29]{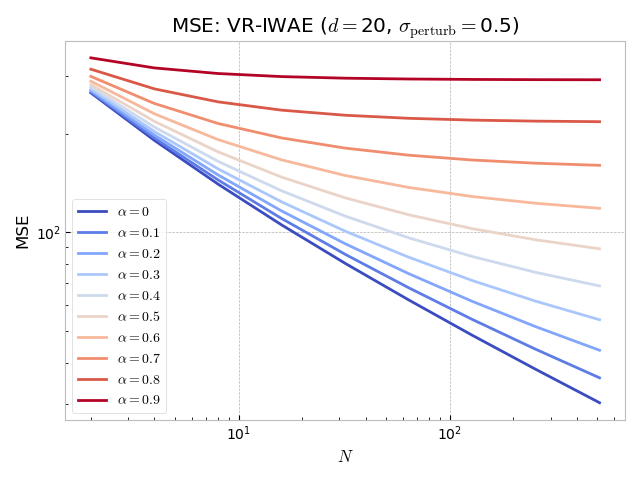} & 
      \includegraphics[scale=0.29]{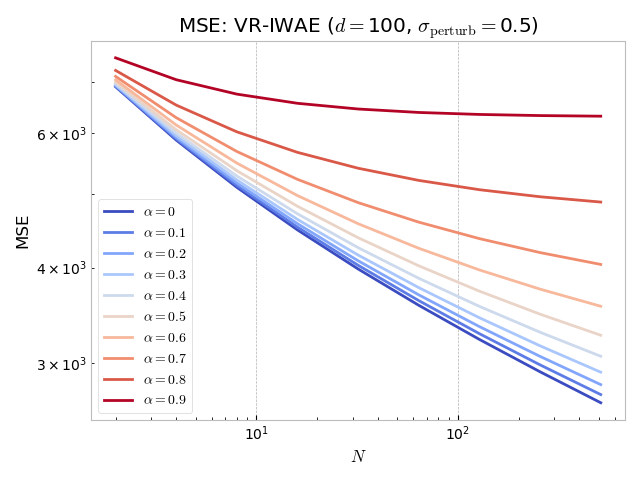} & 
      \includegraphics[scale=0.29]{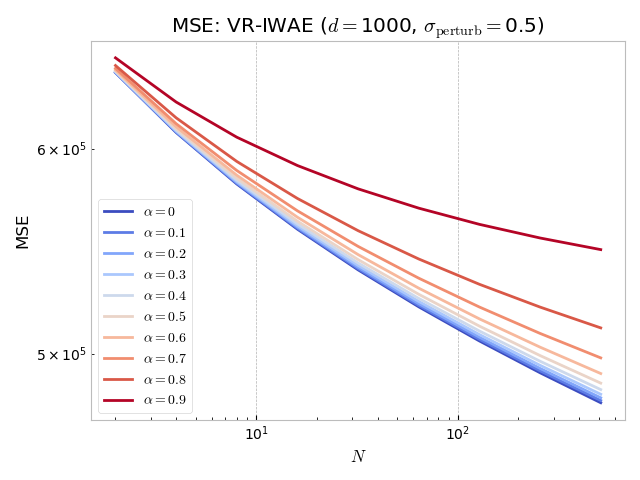}
    \end{tabular}
    \caption{Plotted is the MSE of the VR-IWAE estimate (computed over 1000 MC samples) compared to the log-likelihood gradients for the linear Gaussian example described in \Cref{subsec:linGaussEx} as a function of $N$, for varying values of $\alpha$ and of $d$, for a randomly selected datapoint $x$ and for 10 different initializations of the parameters $(\theta, \phi)$. \label{fig:linear_gaussian_vr_iwae_mse_against_N}}
  \end{figure}

  \subsection{Variational auto-encoder from \Cref{subsec:RealData}}
  We present additional results for the VAE example discussed in \Cref{subsec:RealData}. 

  \label{subsec:VAEexApp}
  \subsubsection{Complementary plots for the VR-IWAE bound}
  \label{subsec:VAEexApp}

Figures \ref{fig:vrIWAEinVAElowApp} and \ref{fig:vrIWAEinVAEhighApp} provide additional plots to Figures \ref{fig:vrIWAEinVAElow} and \ref{fig:vrIWAEinVAEhigh} reinforcing the conclusions drawn in \Cref{subsec:RealData}.
  \begin{figure}[ht!]
    \begin{tabular}{ccc}
      \includegraphics[scale=0.3]{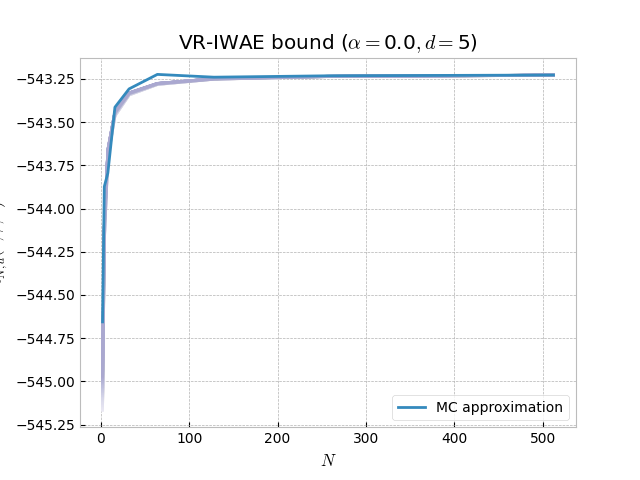} &   \includegraphics[scale=0.3]{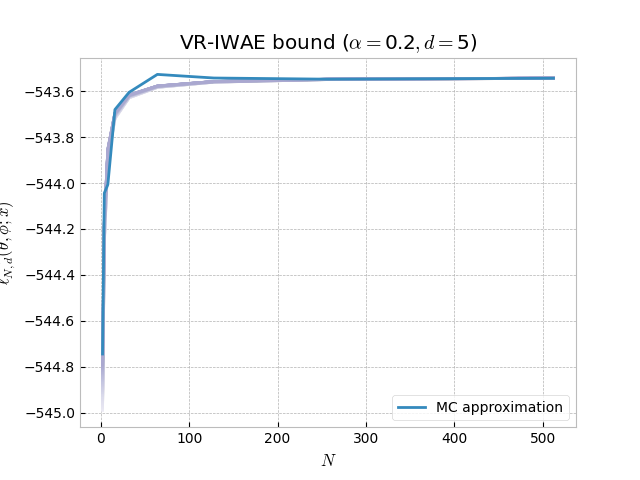} & \includegraphics[scale=0.3]{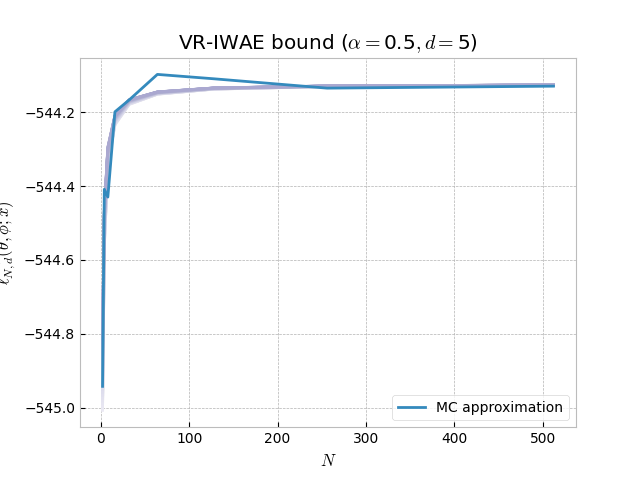} \\
      \includegraphics[scale=0.3]{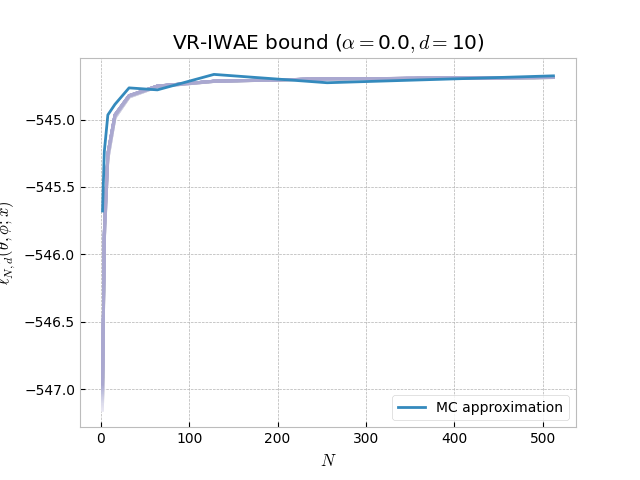} &   \includegraphics[scale=0.3]{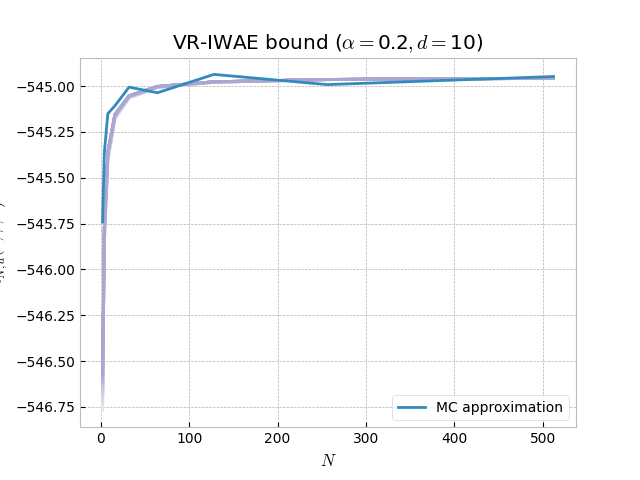} & \includegraphics[scale=0.3]{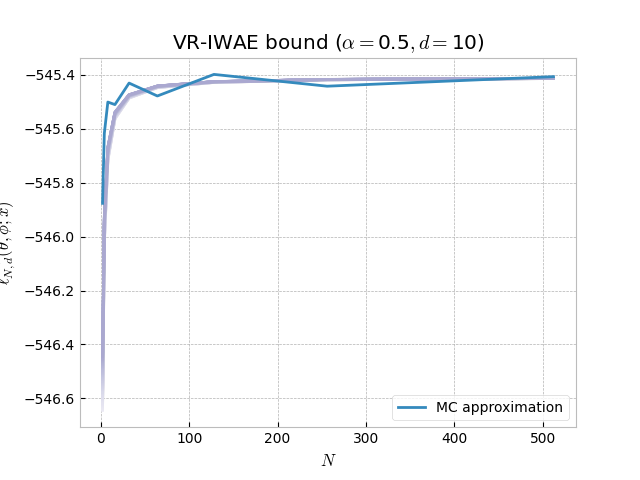} \\
      \includegraphics[scale=0.3]{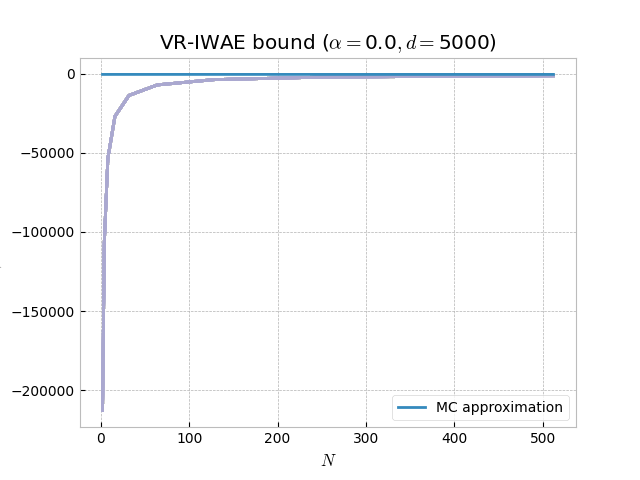} &   \includegraphics[scale=0.3]{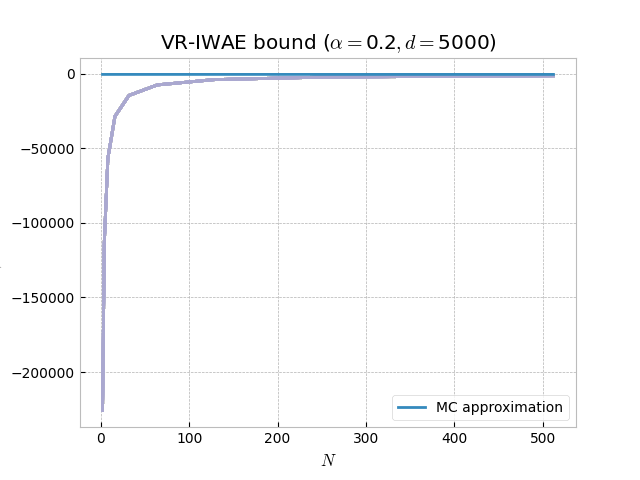} & \includegraphics[scale=0.3]{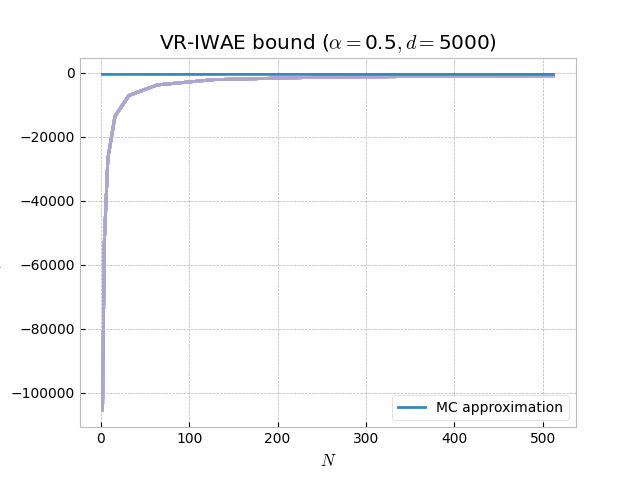}
    \end{tabular}
    \caption{Plotted in blue is the MC estimate of the VR-IWAE bound $\liren[\alpha][N,d](\theta, \phi; x)$ (averaged over 100 MC samples) for the VAE considered in \Cref{subsec:RealData}, for a randomly selected datapoint $x$ in the testing set, randomly generated model parameters $(\theta, \phi)$ and for varying values of $\alpha$ and of $d$. Plotted in purple are curves of the form \eqref{eq:funcThm3VAE} with tailored values of $c_1$.} \label{fig:vrIWAEinVAElowApp}
  \end{figure}

  \begin{figure}[ht!]
    \begin{tabular}{ccc}
      \includegraphics[scale=0.29]{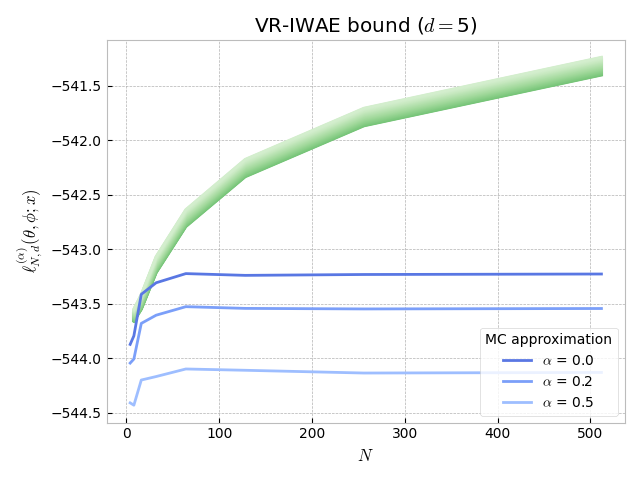}
      & \includegraphics[scale=0.29]{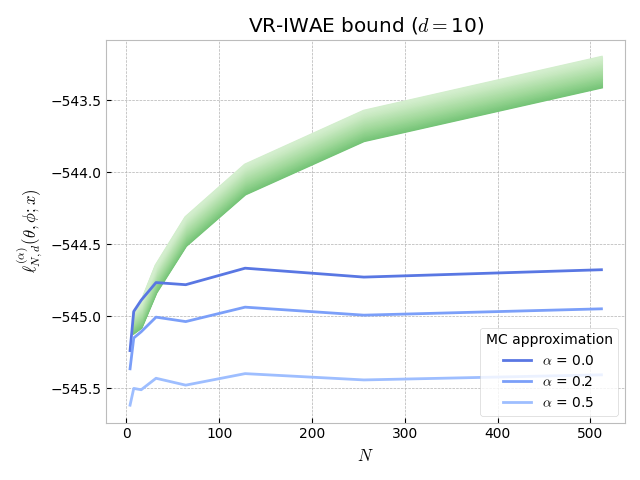}
      & \includegraphics[scale=0.29]{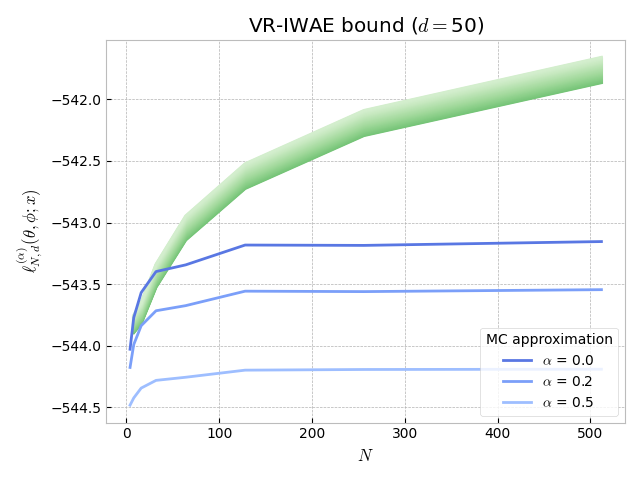} \\
    \end{tabular}
    \caption{Plotted in blue is the MC estimate of the VR-IWAE bound $\liren[\alpha][N,d](\theta, \phi; x)$ (averaged over 100 MC samples) for the VAE considered in \Cref{subsec:RealData}, for a randomly selected datapoint $x$ in the testing set, randomly generated model parameters $(\theta, \phi)$ and for varying values of $\alpha$ and of $d$. Plotted in green are curves of the form \eqref{eq:funcThm5VAE} with tailored values of $c_2$.}
    \label{fig:vrIWAEinVAEhighApp}
  \end{figure}

  \subsubsection{Impact of $\alpha$ and of $M$ on empirical performances}

  \label{subsub:appVAEImpact}

  We discuss here the impact of $\alpha$ and of $M$ on the empirical performances of the VR-IWAE bound metholodogy in the reparameterized and doubly-reparameterized cases.

  \begin{itemize}
    \item \textbf{Impact of $\alpha$ on the empirical performances.} We investigate how the choice of $\alpha$ impacts the Negative Log Likelihood (NLL) after training the VAE with the VR-IWAE bound. The NLL can indeed be used to evaluate the empirical performances of VAEs (since a lower NLL corresponds to a higher likelihood of the data under the VAE model, which indicates better training of the generative network $\theta$). 
    Furthermore, although the NLL is intractable, following \cite{burda2015importance} it can be approximated using the negative IWAE bound with $N=5000$. 
    
    We plot in \Cref{fig:vrIWAEinVAEalphatuning} the NLL estimate on the MNIST test set as a function of $\alpha$ after training VAEs on the MNIST training set using either the reparameterized (``rep") or the doubly-reparameterized (``drep") gradient estimators of the VR-IWAE objective with $N = 10,100$ and $d = 50$. Here, all the models are trained for 1000 epochs using the Adam optimizer with learning rate $1e-3$ and batch size 100. 
    
    We observe that the doubly-reparameterized gradient estimator generally achieves better NLL results than the reparameterized one when $\alpha$ is fixed. In addition, for both cases the value of $\alpha$ achieving the best NLL performance lies in the middle of $(0,1)$, around $\alpha=0.5$. In line with \Cref{prop:GenDomke}, this suggests that there is a bias-variance tradeoff to consider when choosing $\alpha$, and that the best setting can lie between the standard IWAE ($\alpha=0$, low bias) and ELBO ($\alpha=1$, low variance) objectives, with the optimal choice of $\alpha$ being dependent on the dataset, model architecture, as well as the stochastic gradient descent procedure used for training. 
  \end{itemize}

  \begin{figure}[t]
    \begin{center}
    \begin{tabular}{cc}
      \includegraphics[scale=0.38]{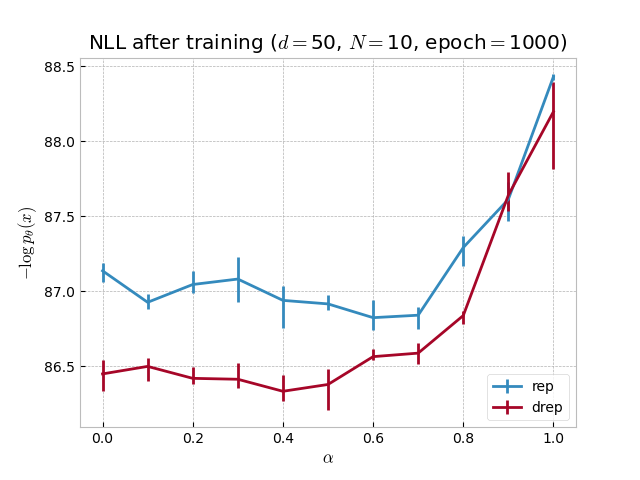}
      & \includegraphics[scale=0.38]{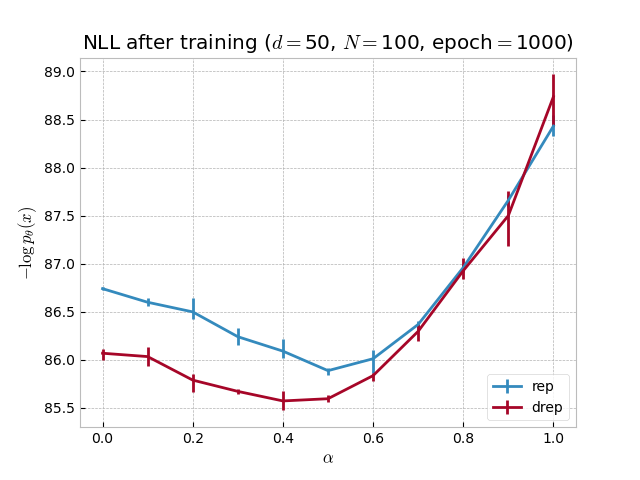}
    \end{tabular}
    \end{center}
    \caption{Plotted is the Negative Log 
    Likelihood (NLL) estimate on the test set of the MNIST dataset as described in \Cref{subsec:RealData} as a function of $\alpha$, after training on the train set for 1000 epochs with $N\in \{10,100\}$. The error bars are computed over 3 trials with different network initialisations and seeds during training. }
    \label{fig:vrIWAEinVAEalphatuning}
\end{figure}

\begin{figure}[t]
    \begin{center}
    \begin{tabular}{cc}
      \includegraphics[scale=0.38]{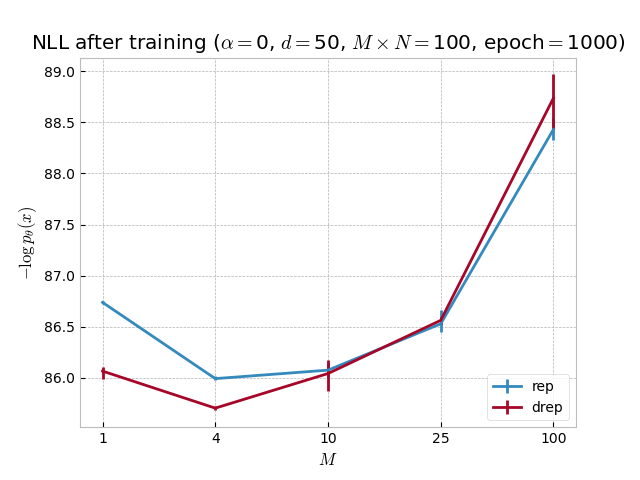}
      & \includegraphics[scale=0.38]{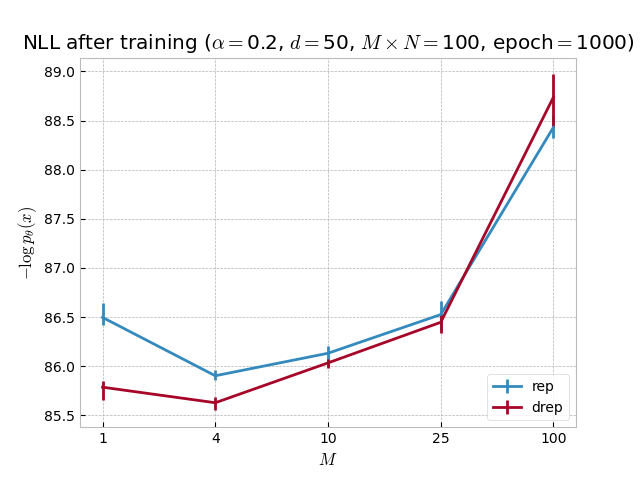}
    \end{tabular}
    \end{center}
    \caption{Plotted is the Negative Log 
    Likelihood (NLL) estimate on the test set of the MNIST dataset as described in \Cref{subsec:RealData} as a function of $M$ while fixing $M \times N = 100$, after training on the train set for 1000 epochs with $\alpha = 0,0.2$. The error bars are computed over 3 trials with different network initialisations and seeds during training. }
    \label{fig:vrIWAEinVAEMtuning}
\end{figure}

\begin{itemize}    

    \item \textbf{Impact of $M$ on the empirical performances.} We investigate how the choice of $M$ and $N$ affects the training of VAE when $M \times N$ is fixed in the VR-IWAE bound methodology. We plot in \Cref{fig:vrIWAEinVAEMtuning} the NLL on the MNIST test set after training the VAE on the MNIST training set for 1000 epochs, with $M \times N = 100$, $d = 50$ and $\alpha \in \{0, 0.2 \}$. \looseness=-1
    
    We observe a bias-variance tradeoff that is similar to the analysis above for the impact of $\alpha$. Indeed, the cases $M = 100$ and $M = 1$ have a particular meaning in the plots of \Cref{fig:vrIWAEinVAEMtuning}: $M = 100$ corresponds to the ELBO with maximum computational budget for $M$ (i.e. $\alpha = 1$, low variance), while $M = 1$ corresponds to the VR-IWAE bound with maximum computational budget for $N$ (i.e. low bias, with the lowest bias being achieved for $\alpha = 0$). Here, the best value of test NLL is obtained for $\alpha=0.2, M=4, N=25$ among our tested combinations. Note that one potential advantage of tuning $\alpha$ instead of $M$ and $N$ is that $\alpha$ resides on a one-dimensional continuous interval, whereas $M$ and $N$ are integer values and so their choice is more limited (that is, they can be more difficult to tune for a given computational budget). \looseness=-1

  \end{itemize}

\end{document}